\newcommand{\rkhs}{\mathcal{H}}
\newcommand{\HS}{\mathrm{HS}}
\newcommand{\intd}{\mathrm{d}}
\newcommand{\argmin}{\mathop{\rm argmin}\limits}
\newcommand{\Sig}{\mathbf{\Sigma}}
\newcommand{\Haus}{\mathrm{Haus}}
\newcommand{\KDE}{\mathrm{KDE}}
\newcommand{\LS}{\mathrm{LS}}
\newcommand{\X}{\mathcal{X}}
\newcommand{\esssup}{\mathop{\rm ess~sup}}
\def\g{{\bm g}}
\def\x{{\bm x}}
\def\y{{\bm y}}
\def\z{{\bm z}}
\def\<{\langle}
\def\>{\rangle}
\newcommand{\BlackBox}{\rule{1.5ex}{1.5ex}}  
\newenvironment{proof}{\par\noindent{\bf Proof\ }}{\hfill\BlackBox\\[2mm]}
\newtheorem{theorem}{Theorem}
\newtheorem{lemma}[theorem]{Lemma} 
\newtheorem{proposition}[theorem]{Proposition} 
\newtheorem{remark*}[theorem]{Remark}
\title{Mode-Seeking Clustering and Density Ridge Estimation\\ via Direct
Estimation of Density-Derivative-Ratios}
\author{
Hiroaki Sasaki \\ hsasaki@is.naist.jp \\
Graduate School of Information Science, \\
Nara Institute of Science and Technology, Nara, Japan \\\vspace{-1mm} \\
Takafumi Kanamori \\ kanamori@c.titech.ac.jp \\
Department of Mathematical and Computing Science, \\
Tokyo Institute of Technology, Tokyo, Japan\\
Center for Advanced Intelligence Project, \\
RIKEN, Tokyo, Japan \\\vspace{-1mm} \\
Aapo Hyv{\"a}rinen \\ a.hyvarinen@ucl.ac.uk \\
Gatsby Computational Neuroscience Unit, \\
University College London, London, United Kingdom \\
Department of Computer Science, \\ 
University of Helsinki, Helsinki, Finland\\
Canadian Institute for Advanced Research\\ \vspace{-1mm} \\
Gang Niu \\ 
gang@ms.k.u-tokyo.ac.jp \\
Graduate School of Frontier Sciences, \\ The University of Tokyo,
Chiba, Japan \\ Center for Advanced Intelligence Project, \\
RIKEN, Tokyo, Japan \\\vspace{-1mm} \\
Masashi Sugiyama \\ 
sugi@k.u-tokyo.ac.jp \\
Center for Advanced Intelligence Project, \\
RIKEN, Tokyo, Japan \\
Graduate School of Frontier Sciences, \\ The University of Tokyo,
Chiba, Japan
}
\date{}
\begin{document}
\maketitle
\begin{abstract}
 Modes and ridges of the probability density function behind observed
 data are useful geometric features. Mode-seeking clustering assigns
 cluster labels by associating data samples with the nearest modes, and
 estimation of density ridges enables us to find lower-dimensional
 structures hidden in data. A key technical challenge both in
 mode-seeking clustering and density ridge estimation is accurate
 estimation of the ratios of the first- and second-order density
 derivatives to the density. A naive approach takes a three-step
 approach of first estimating the data density, then computing its
 derivatives, and finally taking their ratios. However, this three-step
 approach can be unreliable because a good density estimator does not
 necessarily mean a good density derivative estimator, and division by
 the estimated density could significantly magnify the estimation
 error. To cope with these problems, we propose a novel estimator for
 the \emph{density-derivative-ratios}. The proposed estimator does not
 involve density estimation, but rather \emph{directly} approximates the
 ratios of density derivatives of any order. Moreover, we establish a
 convergence rate of the proposed estimator. Based on the proposed
 estimator, novel methods both for mode-seeking clustering and density
 ridge estimation are developed, and the respective convergence rates to
 the mode and ridge of the underlying density are also
 established. Finally, we experimentally demonstrate that the developed
 methods significantly outperform existing methods, particularly for
 relatively high-dimensional data.
\end{abstract}
 \section{Introduction} 
 Characterizing the probability density function underlying observed
 data is a fundamental problem in machine learning. One approach is to
 consider geometric properties of the density such as modes and
 ridges. Estimation of such geometric properties is a challenging task,
 yet offers a variety of applications~\citep{wasserman2018topological}.
  
 The \emph{modes} (i.e., local maxima) of probability density functions
 have received much attention over the years. A motivation of estimating
 the modes classically appeared in the seminal work on kernel density
 estimation~\citep{parzen1962estimation}.  More recently, the modes of
 density functions for random curves have been used in functional data
 analysis~\citep{gasser1998nonparametric}. Furthermore, in supervised
 learning, modal regression associates input variables with the modes of
 the conditional density function of the output variable, and enables us
 to simultaneously capture multiple functional relationships between the
 input and
 output~\citep{sager1982maximum,carreira2000reconstruction,carreira2001continuous,einbeck2006modelling,chen2016nonparametric,sasaki2016modal}.
 One of the most natural applications is clustering. \emph{Mean shift
 clustering} (MS) makes use of the modes of the estimated density
 function~\citep{fukunaga1975estimation,cheng1995mean,comaniciu2002mean}:
 MS initially regards all data samples as candidates for cluster
 centers, and then iteratively updates them toward the nearest modes of
 the estimated density by gradient ascent (Fig.\ref{fig:MS}). Finally,
 the data samples which converge to the same mode are assigned the same
 cluster label. Unlike standard clustering methods such as k-means
 clustering~\citep{BerkeleySymp:MacQueen:1967} and mixture-model-based
 clustering~\citep{melnykov2010finite}, the notable advantage is that
 the number of clusters is automatically determined according to the
 number of detected modes. MS has been applied to a wide range of tasks
 such as image
 segmentation~\citep{comaniciu2002mean,tao2007color,wang2004image} and
 object tracking~\citep{collins2003mean,comaniciu2000real}. (See also a
 recent review article by~\citet{carreira2015review})

 \begin{figure}[t]
  \begin{center}
   \subfigure[Iteration=0]{\includegraphics[width=0.245\textwidth,clip]{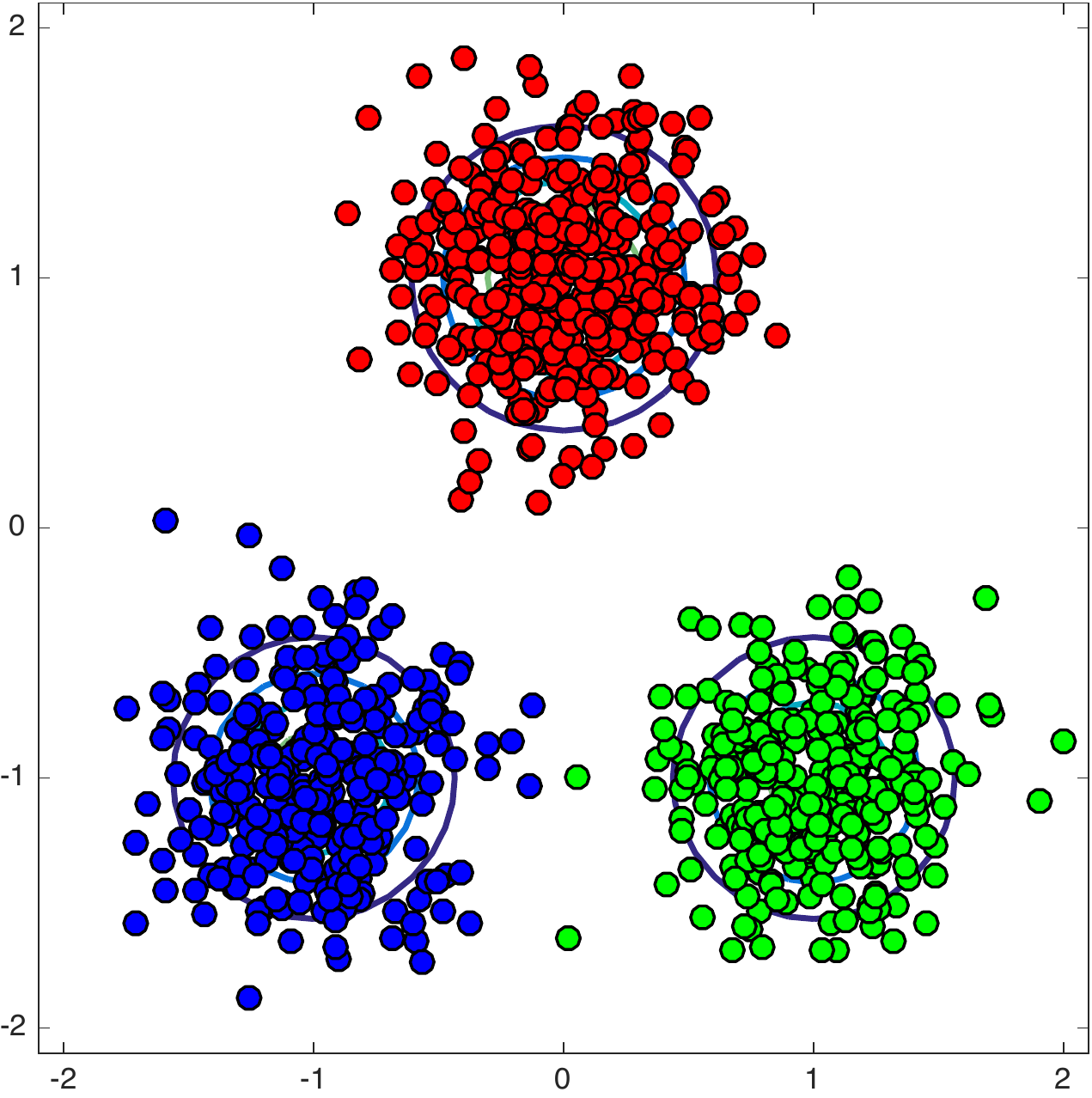}}
   \subfigure[Iteration=1]{\includegraphics[width=0.245\textwidth,clip]{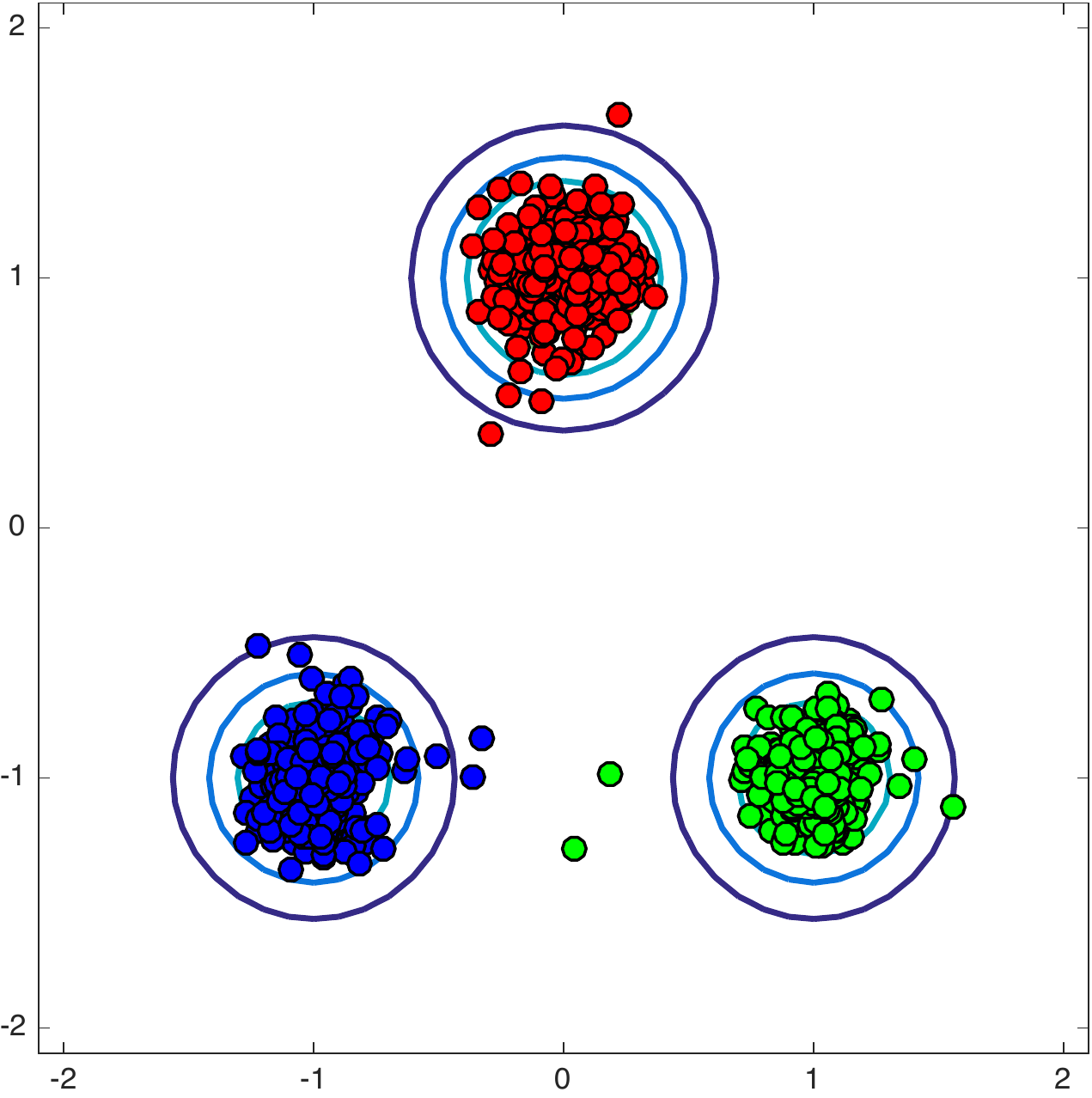}}
   \subfigure[Iteration=3]{\includegraphics[width=0.245\textwidth,clip]{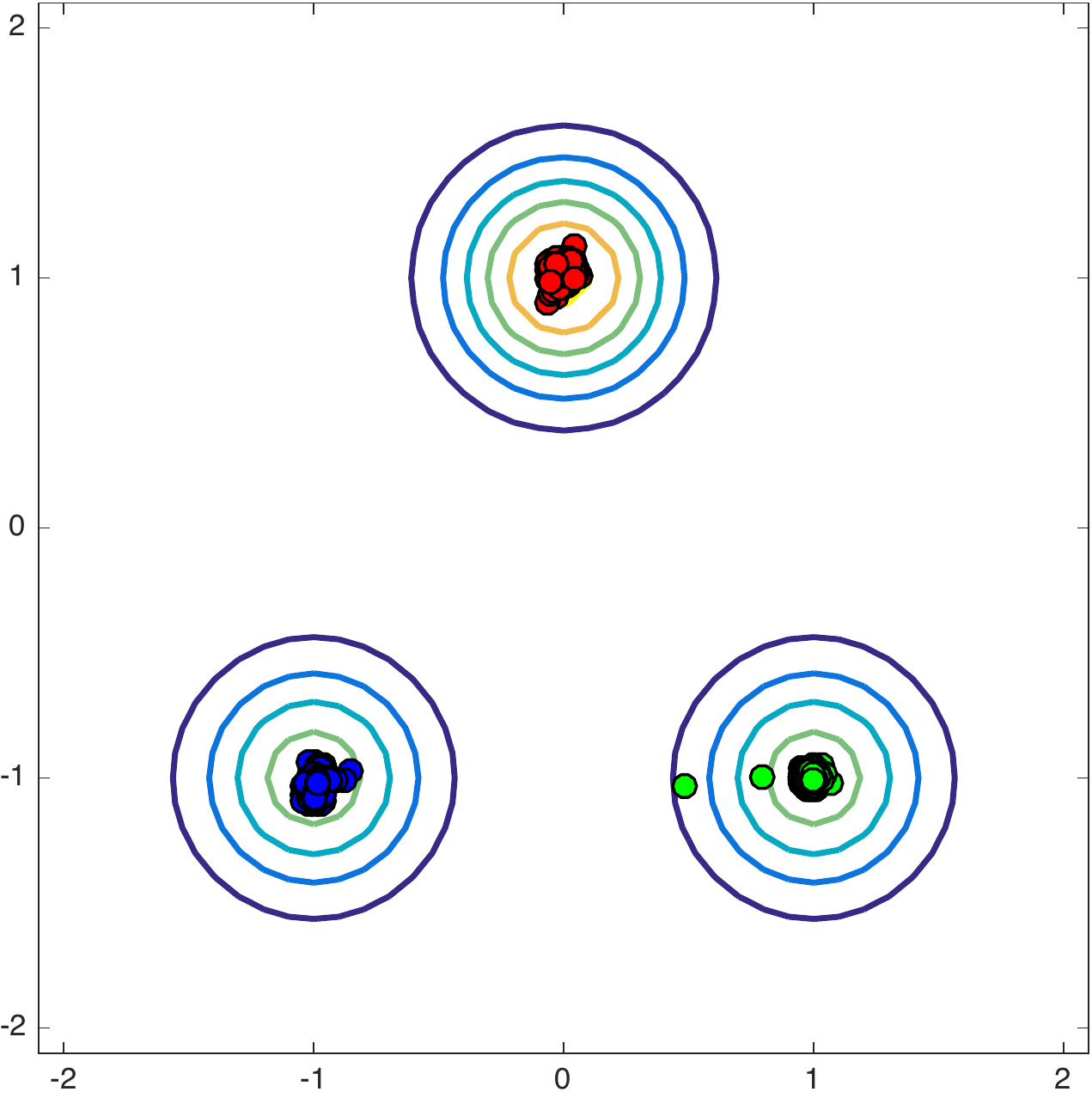}}
   \subfigure[Iteration=10]{\includegraphics[width=0.245\textwidth,clip]{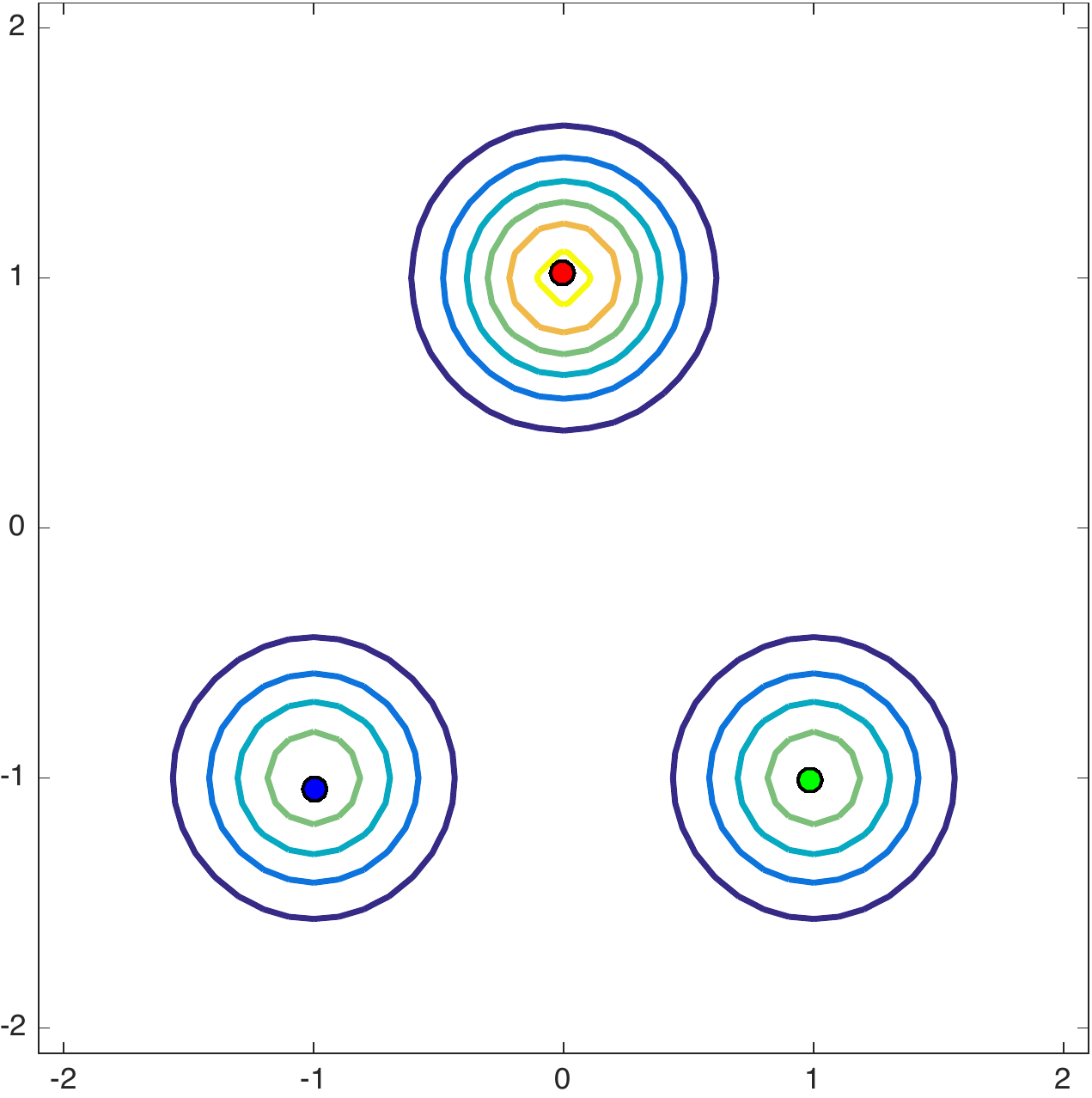}}
   \caption{\label{fig:MS} Illustration of a mode-seeking process. The
   contour plot indicates the probability density function that
   generates the data samples.}
   \vspace{1mm}
   \subfigure{\includegraphics[width=0.245\textwidth,clip]{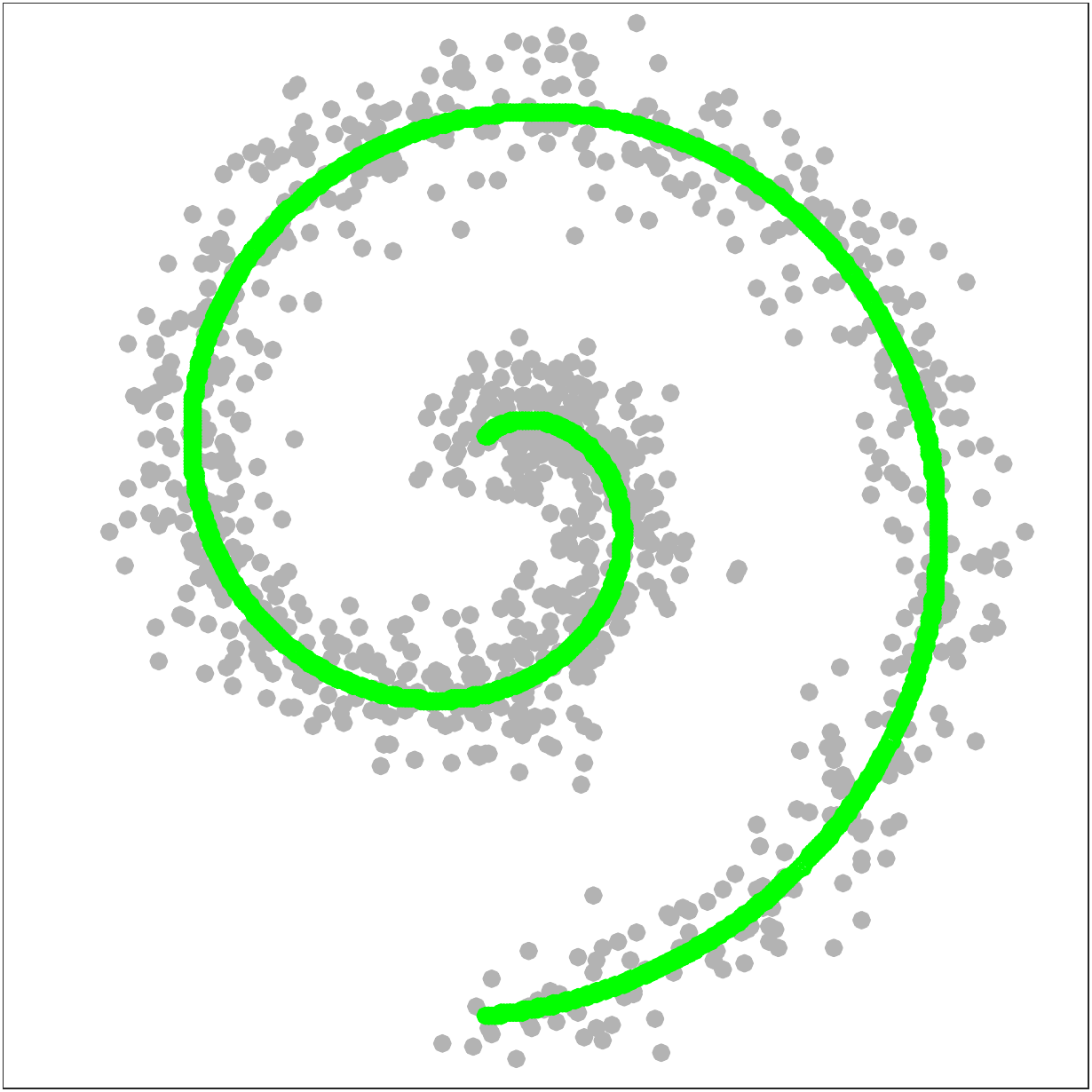}}
   \subfigure{\includegraphics[width=0.245\textwidth,clip]{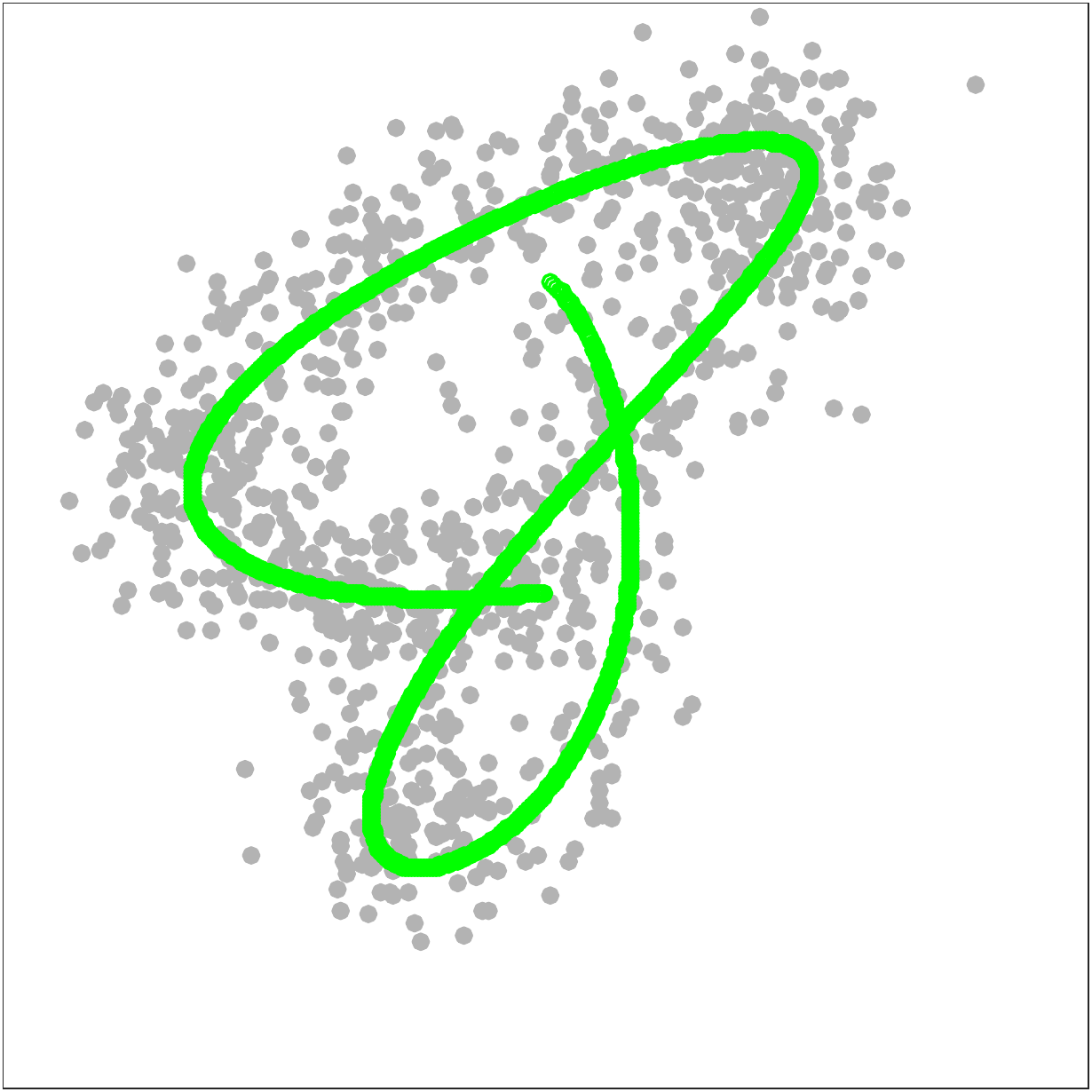}}
   \subfigure{\includegraphics[width=0.245\textwidth,clip]{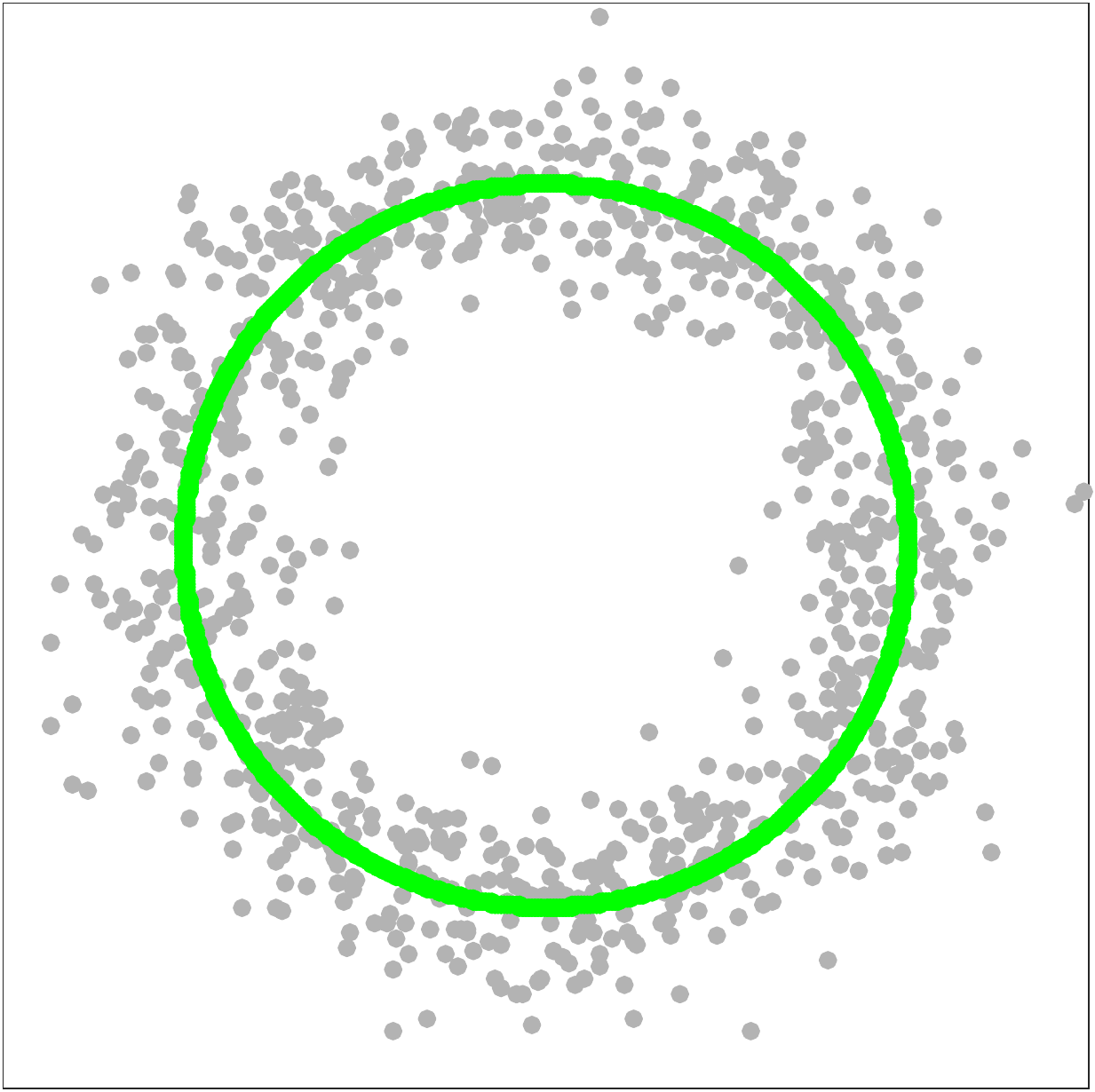}}
   \subfigure{\includegraphics[width=0.245\textwidth,clip]{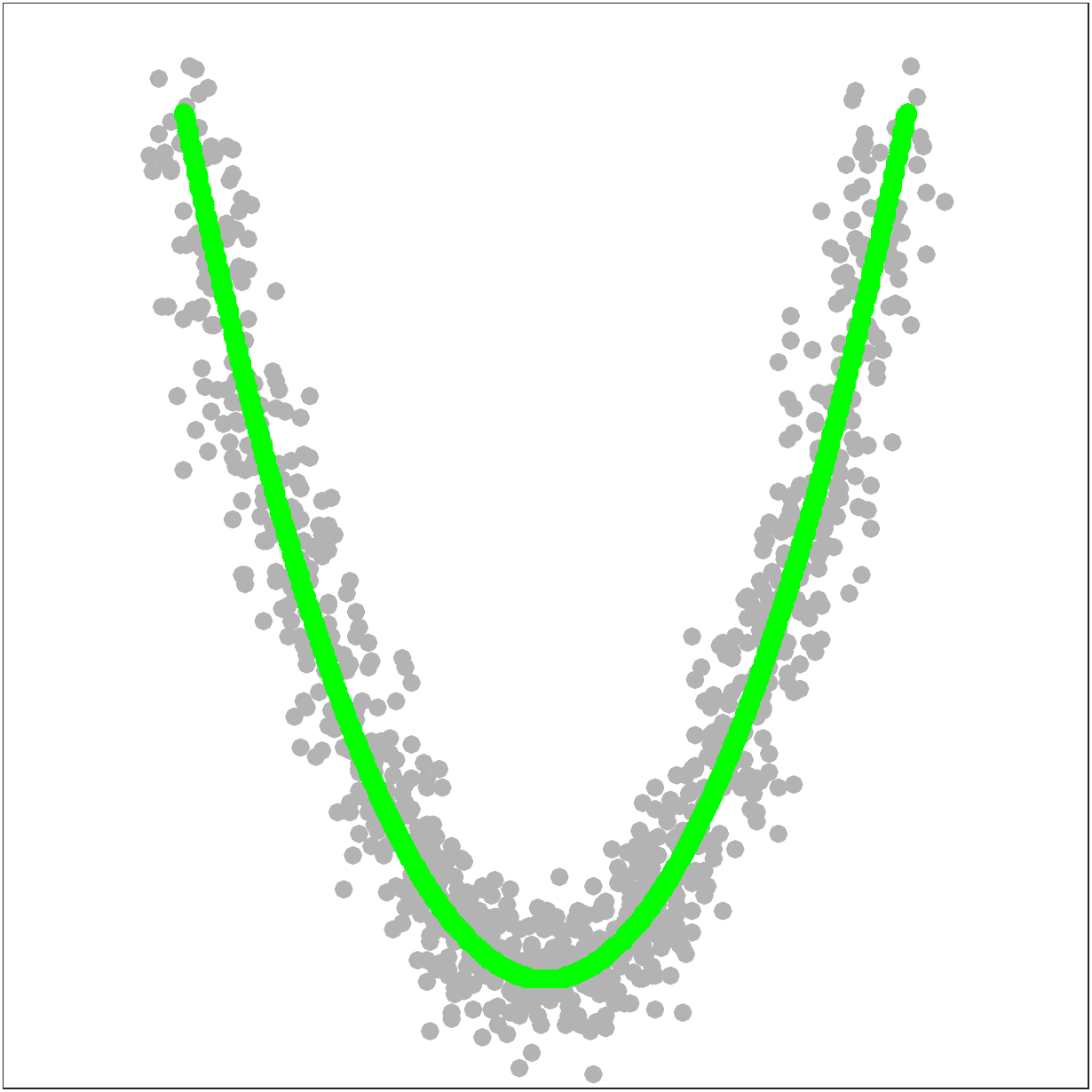}}
   \vspace{-2mm}
   \caption{\label{fig:Ridge} Examples of the density ridges hidden in
   data. Gray dot points and green curves indicate data samples and
   density ridges, respectively.}
  \end{center}
 \end{figure}

 A \emph{ridge} of the probability density function generalizes the
 notion of the mode. The density ridge is a lower-dimensional hidden
 structure of the data (Fig.\ref{fig:Ridge}), and the zero-dimensional
 ridge can be interpreted as the
 mode~\citep{genovese2014nonparametric}. Application of density ridge
 estimation can be found in a variety of fields such as filamentary
 structure estimation in cosmology~\citep{chen2016cosmic}, extraction of
 curvilinear structures (e.g., blood vessels in the eyes) in medical
 imaging~\citep{you2011principal}, and shape analysis in computer
 vision~\citep{su2013detection} (See~\citet{pulkkinen2015ridge} for more
 applications). Density ridge estimation is closely related to manifold
 estimation. When data is assumed to be generated on a lower-dimensional
 manifold with additive Gaussian noise, density ridge estimation offers
 a way to circumvent the difficulty of manifold estimation:
 \citet{genovese2014nonparametric} theoretically proved that the density
 ridges capture the essential properties of such manifolds and
 estimating the density ridge is substantially easier than estimating
 the manifold. A practical algorithm called \emph{subspace constrained
 mean shift} (SCMS) was proposed by~\citet{ozertem2011locally}. SCMS is
 an extension to MS, but a projected gradient ascent method is performed
 to find density ridges instead of the gradient ascent method in MS; the
 gradient vector of the estimated density is projected to the subspace
 which is orthogonal to the ridge. Such a subspace can be obtained by
 applying principal component analysis to an estimate of the Hessian
 matrix of the log-density, which is composed of the ratios of the
 first- and second-order density derivatives to the density. Along the
 projected gradient vector, SCMS updates data points toward the ridge of
 the estimated density until convergence.

 For MS, the technical challenge is accurate estimation of the
 derivatives of the probability density function. To derive practical
 methods, MS takes a two-step approach, firstly estimating the
 probability density function and then computing its
 derivatives~\citep[Section~2]{comaniciu2002mean}.\footnote{As reviewed
 in Section~\ref{ssec:meanshift}, practical methods themselves do not
 perform initial density estimation.} However, this approach can be
 unreliable because a good density estimator does not necessarily imply
 a good density derivative estimator in many practical situations. For
 example, small random fluctuations in a density estimate can create
 fake modes and may produce large errors in density-derivative
 estimation, even if the density estimate is fairly good in terms of
 density estimation~\citep[Fig.1]{genovese2016non}. Therefore, testing
 methods have been proposed to investigate whether the estimated modes
 are real modes from the underlying data density or fake modes due to
 the random
 fluctuations~\citep{godtliebsen2002significance,duong2008feature,genovese2016non}.
 For SCMS, it is even more challenging to estimate the ratios of density
 derivatives to the density, but SCMS also naively estimates the ratios
 by adding one more step to the two-step approach in MS: the computed
 density derivatives are divided by the estimated density. However, such
 a division could strongly magnify estimation error.
 
 To cope with these problems, we propose a novel estimator of the ratios
 of density derivatives to the density. In stark contrast with the
 approaches in MS and SCMS, the key idea is to \emph{directly} estimate
 the ratios without going through density estimation. Moreover, we
 theoretically analyze the proposed estimator and establish a
 convergence rate. The direct approach has been adopted and proved to be
 useful both empirically and theoretically when estimating the ratio of
 two probability density
 functions~\citep{sugiyama2008direct,nguyen2008estimating,kanamori2009least,kanamori2012statistical,sugiyama2012density-ratio,pmlr-v54-kpotufe17a}.
 Here, we follow the direct approach in the context of a different
 problem and derive an estimator in a substantially different
 way. Previously, a direct estimator has been proposed for the
 log-density derivatives~\citep{beran1976adaptive,cox1985penalty}, which
 are the ratios of first-order density derivatives to the density. On
 the other hand, the proposed estimator in this paper approximates the
 ratio of the derivatives of any order to the density, and thus
 generalizes the previous estimator.
 
 The proposed estimator is first applied to mode-seeking clustering. We
 derive an update rule for mode-seeking based on a fixed-point
 algorithm, while inheriting the advantage of MS: the proposed
 clustering method also does not require the number of clusters to be
 specified in advance. This is advantageous because clustering is an
 unsupervised learning problem and tuning the number of clusters is not
 straightforward in general. Next, based on the mode-seeking clustering,
 we propose a novel method for density ridge estimation. For both
 methods, we prove the consistency of the mode and ridge estimators, and
 establish the convergence rates. Finally, we experimentally demonstrate
 that our proposed methods outperform MS and SCMS, particularly for
 high(er)-dimensional data.
 
 This paper is organized as follows: In Section~\ref{sec:LSDDR}, we
 propose a novel estimator for the ratio of the derivatives of any order
 to the density, and establish a non-parametric convergence rate.  The
 proposed estimator is applied to develop novel methods for mode-seeking
 clustering and density ridge estimation in
 Sections~\ref{modeclustering} and~\ref{sec:ridge} respectively, and
 both methods are theoretically analyzed. Section~\ref{sec:illust}
 experimentally investigates the performance of the proposed methods for
 mode-seeking clustering and density ridge
 estimation. Section~\ref{sec:conclusion} concludes this
 paper. Preliminary results of this paper were presented at
 ECML/PKDD~2014~\citep{SasakiHS14clustering} and
 AISTATS~2017~\citep{SasakiEstimating2017}. However, in addition to
 combining the results in those conference papers, we have added new
 theoretical analysis of the proposed estimator, mode-seeking clustering
 and density ridge estimation methods. From a theoretical stand point,
 we further improved upon the methods appeared in the conference papers,
 and performed more experiments in this paper.
 \section{Direct Estimation of Density-Derivative-Ratios}
 \label{sec:LSDDR}
 This section proposes a novel estimator of the ratios of density
 derivatives to the density and performs theoretical analysis.
  \subsection{Problem Formulation}
  Suppose that $n$ i.i.d.~samples, which were drawn from a probability
  distribution on $\R{D}$ with density $p(\bm{x})$, are available:
  \begin{align*}
   \mathcal{D}:=\{\bm{x}_i
   =(x_i^{(1)},x_i^{(2)},\dots,x_i^{(D)})^{\top}\}_{i=1}^n
   \overset{\text{i.i.d.}}{\sim} p(\bm{x}).
  \end{align*}
  Here, our goal is to estimate the ratio of the $|\bm{j}|$-th order
  partial derivative of $p(\bm{x})$ to $p(\bm{x})$ from
  $\mathcal{D}=\{\bm{x}_i\}_{i=1}^n$,
  \begin{align}
   \frac{\partial_{\bm{j}} p(\bm{x})}{p(\bm{x})},
   \label{defi:der}
  \end{align}
  where $\partial_{\bm{j}}=\frac{\partial^{|\bm{j}|}}{\partial^{j_1}
  x^{(1)}\partial^{j_2}x^{(2)}\dots\partial^{j_D} x^{(D)}}$,
  $\bm{j}=(j_1,j_2,\dots,j_D)^{\top}$ and $|\bm{j}|=j_1+j_2+\dots+j_D$
  for non-negative integers $j_i=0,1,\dots,|\bm{j}|$. For instance, when
  $|\bm{j}|=1$ (or $|\bm{j}|=2$), $\partial_{\bm{j}}
  p(\bm{x})/p(\bm{x})$ is a single element of $\nabla
  p(\bm{x})/p(\bm{x})$ (or of $\nabla\nabla p(\bm{x})/p(\bm{x})$).
  \subsection{Least-Squares Density-Derivative-Ratios}
  \label{ssec:LSDDR}
  Our main idea is to directly fit a model $r_{\bm{j}}(\bm{x})$ to
  $\partial_{\bm{j}} p(\bm{x})/p(\bm{x})$ under the squared-loss:
  \begin{align}
   J_{\bm{j}}(r_{\bm{j}})&:=\int \left\{r_{\bm{j}}(\bm{x})
   -\frac{\partial_{\bm{j}} p(\bm{x})}{p(\bm{x})}\right\}^2
   p(\bm{x})\mathrm{d}\bm{x}\nonumber\\
   &=\int\left\{r_{\bm{j}}(\bm{x})\right\}^2 p(\bm{x})\mathrm{d}\bm{x}
   -2\int r_{\bm{j}}(\bm{x})\partial_{\bm{j}} p(\bm{x})\mathrm{d}\bm{x}
   +\int \left\{\frac{\partial_{\bm{j}}
   p(\bm{x})}{p(\bm{x})}\right\}^2 p(\bm{x})\mathrm{d}\bm{x}.
   \label{J}
  \end{align}
  The first term on the right-hand side of~\eqref{J} can be naively
  estimated from samples and the third term is ignorable, but it seems
  challenging to estimate the second term because it includes the
  derivative of the unknown density. However, as
  in~\citet{sasaki2015density}, repeatedly applying \emph{integration by
  parts} allows us to transform the second term as
  \begin{align}
   \int r_{\bm{j}}(\bm{x})
   \left\{\partial_{\bm{j}}p(\bm{x})\right\}\mathrm{d}\bm{x}
   &=(-1)^{|\bm{j}|} \int \left\{\partial_{\bm{j}}
   r_{\bm{j}}(\bm{x})\right\} p(\bm{x})\mathrm{d}\bm{x},
   \label{secondterm}
  \end{align}
  where we assumed that as $|x^{(j)}|\rightarrow\infty$ for all $j$, the
  product of $\partial_{\bm{j}_1}r_{\bm{j}}(\bm{x})$ and
  $\partial_{\bm{j}_2}p(\bm{x})$ approaches zero for any pairs of
  $\bm{j}_1$ and $\bm{j}_2$ satisfying
  $|\bm{j}_1|+|\bm{j}_2|=|\bm{j}|-1$ for
  $|\bm{j}_1|,|\bm{j}_2|=0,1,\dots,|\bm{j}|-1$. As a result, the
  right-hand side of~\eqref{secondterm} can be easily estimated from
  samples. Then, an empirical version of~\eqref{J} is given by
  \begin{align}
   \widehat{J}_{\bm{j}}(r_{\bm{j}}):=\frac{1}{n}\sum_{i=1}^n
   \left\{r_{\bm{j}}(\bm{x}_i)^2 -2(-1)^{|\bm{j}|} \partial_{\bm{j}}
   r_{\bm{j}}(\bm{x}_i)\right\}+\text{const.} \label{Jsample}
  \end{align}
  After adding the regularizer $R(r_{\bm{j}})$, the estimator is defined
  as the minimizer of
  \begin{align}
   \widehat{r}_{\bm{j}}:=\argmin_{r_{\bm{j}}} \left[
   \widehat{J}_{\bm{j}}(r_{\bm{j}})+\lambda_{\bm{j}}R(r_{\bm{j}})\right],
   \label{eqn:LSDDR}
  \end{align}
  where $\lambda_{\bm{j}}$ is the regularization parameter.
  
  We call this method the \emph{least-squares density-derivative ratios}
  (LSDDR). Note that when $|\bm{j}|=1$, $J_{\bm{j}}$ is called the
  Fisher divergence and has been used for parameter estimation of
  unnnormalized statistical models~\citep{hyvarinen2005estimation},
  density estimation with the computationally intractable partition
  function~\citep{JMLR:v18:16-011}, and direct estimation of log-density
  derivatives~\citep{beran1976adaptive,cox1985penalty,SasakiHS14clustering}.
  Therefore, LSDDR can be regarded as a generalization of such methods
  to higher-order derivatives.
  \subsection{Theoretical Analysis of LSDDR}
  Next, we theoretically analyze LSDDR.
   \subsubsection{Preliminaries and Notations}
   For a $D$-dimensional vector $\bm{x}\in\R{D}$, the norm is defined by
   $\|\bm{x}\|:=\sqrt{\sum_{j=1}^D(x^{(j)})^2}$. For a domain
   $\X(\subseteq\R{D})$, $C(\X)$ denotes the space of all continuous
  functions on $\X$. Furthermore, we define the \emph{$L^p$} space of
  functions $f$ on $\X$: For $1\leq p\leq\infty$,
  $L^p(\X):=\{f~:~\|f\|_{p}<\infty\}$ where $\|\cdot\|_p$ is the
  \emph{$L^p$} norm defined by
  $\|f\|_p:=\left(\int|f(\bm{x})|^p\intd\bm{x}\right)^{\frac{1}{p}}$
  with the Lebesgue measure for $1\leq p<\infty$ and
  $\|f\|_{\infty}:=\esssup_{\bm{x}\in\X}|f(\bm{x})|$.  For $f\in
  L^1(\R{D})$, the \emph{Fourier transform} is defined as
   \begin{align*}
    f^{\wedge}(\bm{\omega}):=\frac{1}{(2\pi)^{D/2}}
    \int f(\bm{x})e^{-i\bm{\omega}^{\top}\bm{x}}\intd\bm{x}, 
   \end{align*}
   where $i$ denotes the imaginary unit.
   
   Let $\rkhs$ be a reproducing kernel Hilbert space (RKHS) over
   $\mathcal{X}$ uniquely associated with the reproducing kernel
   $k:\mathcal{X}\times\mathcal{X}\rightarrow\R{}$. The norm and inner
   product on $\rkhs$ are denoted by $\|\cdot\|_{\rkhs}$ and
   $\<\cdot,\cdot\>_{\rkhs}$, respectively. $k$ is a real-valued,
   symmetric and positive definite function and has the reproducing
   property: For all $\bm{x}\in\X$ and $f\in\rkhs$,
   $\<f,k(\cdot,\bm{x})\>_{\rkhs}=f(\bm{x})$. An example of reproducing
   kernels is the \emph{Gaussian kernel},
   $k(\bm{x},\bm{y})=\exp\left(-\frac{\|\bm{x}-\bm{y}\|^2}{2\sigma^2}\right)$
   where $\sigma>0$ is the width parameter. Another example is the
   \emph{Mat{\'e}rn kernel},
   $k(\bm{x},\bm{y})=\psi(\bm{x}-\bm{y})=\frac{2^{1-s}}{\Gamma(s)}
   \|\bm{x}-\bm{y}\|^{s-D/2}\mathfrak{K}_{D/2-s}(\|\bm{x}-\bm{y}\|)$,
   whose corresponding RKHS $\rkhs$ coincides with the Sobolev space
   $H^{s}_2$ with the smoothness parameter
   $s>D/2$~\citep[Chapter~10]{wendland2004scattered}:
   \begin{align*}
    \rkhs=H^{s}_2:=\left\{f\in L^2(\R{D})\cap C(\R{D}) :
   \int(1+\|\bm{\omega}\|^2)^{s}|f^{\wedge}(\bm{\omega})|^2
   \intd\bm{\omega}<\infty\right\}.
   \end{align*}
   $\Gamma(\cdot)$ denotes the Gamma function, and
   $\mathfrak{K}_{v}(\cdot)$ is the modified Bessel function of the
   second kind of order $v$.
   \subsubsection{The Convergence Rate of LSDDR}
   Here, we derive a rate of convergence for LSDDR under the RKHS
   norm. To this end, we assume that the true density-derivative-ratio
   is contained in $\rkhs$:
   \begin{align*}
    r^*_{\bm{j}}(\bm{x}):= \frac{\partial_{\bm{j}}
    p(\bm{x})}{p(\bm{x})}\in\mathcal{H}.
   \end{align*}
   Furthermore, we restrict the search space of $r_{\bm{j}}$ to $\rkhs$
   and express LSDDR with $R(r_{\bm{j}})=\|r_{\bm{j}}\|_{\rkhs}^2$ as
   \begin{align}
    \widehat{r}_{\bm{j}}=\argmin_{r_{\bm{j}}\in\rkhs} \left[
    \widehat{J}_{\bm{j}}(r_{\bm{j}})+\lambda_{\bm{j}}\|r_{\bm{j}}\|_{\rkhs}^2\right].
    \label{eqn:LSDDRrkhs}
   \end{align}
   
   To establish a convergence rate under the RKHS norm, we make the
   following assumptions as in~\citet{sriperumbudur2013density}:
   \begin{itemize}
    \item[{\bf(A)}] $\mathcal{X}$ is compact.
		 
    \item[{\bf(B)}] $k$ is $2|\bm{j}|$ continuously differentiable.
		 
    \item[{\bf(C)}] The following equation holds:
		 \begin{align*}
		  \int_{\X}
		  k(\cdot,\bm{x})\partial_{\bm{j}}p(\bm{x})\intd\bm{x}
		  =(-1)^{|\bm{j}|}\int_{\X}
		  \partial_{\bm{j}}k(\cdot,\bm{x}) p(\bm{x})\intd\bm{x}.
		 \end{align*}
		 
    \item[{\bf(D)}] For all $\bm{j}$, there exists $\epsilon\geq 1$
		 subject to
		 \begin{align*}
		  \left(\int_{\X} \|k(\cdot,\bm{x})\|_{\rkhs}^{2\epsilon} 
		  p(\bm{x})\intd\bm{x}\right)^{\frac{1}{2\epsilon}}<\infty
		  \quad\text{and}\quad\left(\int_{\X}
		  \|\partial_{\bm{j}}k(\cdot,\bm{x})\|_{\rkhs}^{\epsilon} 
		  p(\bm{x})\intd\bm{x}\right)^{\frac{1}{\epsilon}}<\infty.
		 \end{align*}
   \end{itemize}
   Assumption~{\bf (A)} makes $\rkhs$
   separable~\citep[Lemma~4.33]{steinwart2008support} and the
   separability of $\rkhs$ is required to apply Proposition~A.2
   in~\citet{sriperumbudur2013density}. Assumption~{\bf (B)} ensures
   that arbitrary functions in $\rkhs$ are $2|\bm{j}|$ continuously
   differentiable~\citep[Corollary~4.36]{steinwart2008support}. Assumption~{\bf
   (C)} holds under mild assumptions of $k$ and $p$ as
   in~\eqref{secondterm}. From Assumption~{\bf (D)},
   $J_{\bm{j}}(r_{\bm{j}})<\infty$ when $\epsilon=1$. Then, the
   following theorem establishes the convergence rate under the RKHS
   norm:
   \begin{theorem}
    Let
    \begin{align*}
     C:=\int_{\X} k(\cdot,\bm{x})\otimes k(\cdot,\bm{x}) p(\bm{x})\intd\bm{x},
    \end{align*}
    where $\otimes$ denotes the tensor product, be an operator on
    $\rkhs$.
    If there exists $\gamma>0$ such that $r^*_{\bm{j}}$ is in the range
    of $C^{\gamma}$ (i.e., $r^*_{\bm{j}}\in\mathcal{R}(C^{\gamma})$),
    then
    \begin{align*}
     \|\widehat{r}_{\bm{j}}-r^*_{\bm{j}}\|_{\rkhs}
     =O_{\mathrm{P}}\left(n^{-\min\left\{\frac{1}{4},
     \frac{\gamma}{2(\gamma+1)}\right\}}\right),
    \end{align*}
    with $\epsilon=2$ and $\lambda_{\bm{j}}=
    O\left(n^{-\max\left\{\frac{1}{4},\frac{1}{2(\gamma+1)}\right\}}\right)$
    as $n\rightarrow\infty$.  \label{theo:Hconvrate}
   \end{theorem}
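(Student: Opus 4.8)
The plan is to recast LSDDR as regularized least squares in the RKHS and then split the estimation error into a stochastic (sampling) part and a deterministic (regularization bias) part, bounding each by operator calculus. First I would use the reproducing property $r_{\bm{j}}(\bm{x})=\<r_{\bm{j}},k(\cdot,\bm{x})\>_{\rkhs}$ together with the derivative-reproducing property $\partial_{\bm{j}}r_{\bm{j}}(\bm{x})=\<r_{\bm{j}},\partial_{\bm{j}}k(\cdot,\bm{x})\>_{\rkhs}$, valid under Assumption~{\bf(B)}, to rewrite the regularized empirical objective in~\eqref{eqn:LSDDRrkhs} as the quadratic form
\[
\widehat{J}_{\bm{j}}(r_{\bm{j}})+\lambda_{\bm{j}}\|r_{\bm{j}}\|_{\rkhs}^2
=\<r_{\bm{j}},(\widehat{C}+\lambda_{\bm{j}}I)r_{\bm{j}}\>_{\rkhs}
-2\<r_{\bm{j}},\widehat{\xi}_{\bm{j}}\>_{\rkhs}+\text{const.},
\]
where $\widehat{C}:=\frac1n\sum_i k(\cdot,\bm{x}_i)\otimes k(\cdot,\bm{x}_i)$ is the empirical version of $C$ and $\widehat{\xi}_{\bm{j}}:=\frac{(-1)^{|\bm{j}|}}{n}\sum_i\partial_{\bm{j}}k(\cdot,\bm{x}_i)$. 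Minimizing over $r_{\bm{j}}\in\rkhs$ then gives the closed form $\widehat{r}_{\bm{j}}=(\widehat{C}+\lambda_{\bm{j}}I)^{-1}\widehat{\xi}_{\bm{j}}$.

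Second, I would identify the population normal equation satisfied by the target. Using Assumption~{\bf(C)} and the reproducing property, one has $C r^*_{\bm{j}}=\int_{\X} k(\cdot,\bm{x})\partial_{\bm{j}}p(\bm{x})\intd\bm{x}=\xi_{\bm{j}}$, where $\xi_{\bm{j}}:=(-1)^{|\bm{j}|}\int_{\X}\partial_{\bm{j}}k(\cdot,\bm{x})p(\bm{x})\intd\bm{x}$, so $r^*_{\bm{j}}$ solves $C r^*_{\bm{j}}=\xi_{\bm{j}}$. This yields the decomposition
\[
\widehat{r}_{\bm{j}}-r^*_{\bm{j}}
=(\widehat{C}+\lambda_{\bm{j}}I)^{-1}\widehat{g}_{\bm{j}}
-\lambda_{\bm{j}}(\widehat{C}+\lambda_{\bm{j}}I)^{-1}r^*_{\bm{j}},
\]
where $\widehat{g}_{\bm{j}}:=\widehat{\xi}_{\bm{j}}-\widehat{C}r^*_{\bm{j}}$ is a centred average of i.i.d. RKHS-valued terms with mean $\xi_{\bm{j}}-Cr^*_{\bm{j}}=0$.

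For the first (stochastic) term I would use the bound $\|(\widehat{C}+\lambda_{\bm{j}}I)^{-1}\|_{\mathrm{op}}\le\lambda_{\bm{j}}^{-1}$, which follows from the positive semi-definiteness of $\widehat{C}$, together with a Hilbert-space second-moment (concentration) bound giving $\|\widehat{g}_{\bm{j}}\|_{\rkhs}=O_{\mathrm{P}}(n^{-1/2})$; here Assumption~{\bf(D)} with $\epsilon=2$ supplies the finite second moment of $\partial_{\bm{j}}k(\cdot,\bm{x})$ and fourth moment of $k(\cdot,\bm{x})$ needed for these fluctuations. This produces a stochastic contribution of order $\lambda_{\bm{j}}^{-1}n^{-1/2}$. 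For the second (bias) term I would insert the source condition $r^*_{\bm{j}}=C^{\gamma}v$ and pass from $\widehat{C}$ to $C$ through the resolvent identity $(\widehat{C}+\lambda I)^{-1}=(C+\lambda I)^{-1}+(\widehat{C}+\lambda I)^{-1}(C-\widehat{C})(C+\lambda I)^{-1}$. Spectral calculus on the compact operator $C$ gives $\|\lambda_{\bm{j}}(C+\lambda_{\bm{j}}I)^{-1}C^{\gamma}\|_{\mathrm{op}}=O(\lambda_{\bm{j}}^{\min\{\gamma,1\}})$ for the leading piece (the saturation at exponent $1$ reflecting the boundedness of the spectrum), while the cross remainder is controlled by $\|\widehat{C}-C\|_{\HS}=O_{\mathrm{P}}(n^{-1/2})$ (again Assumption~{\bf(D)}, $\epsilon=2$) and turns out to be of smaller order than the stochastic term in every regime of $\gamma$. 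Collecting terms gives $\|\widehat{r}_{\bm{j}}-r^*_{\bm{j}}\|_{\rkhs}=O_{\mathrm{P}}(\lambda_{\bm{j}}^{-1}n^{-1/2}+\lambda_{\bm{j}}^{\min\{\gamma,1\}})$, and balancing the two orders over $\lambda_{\bm{j}}$ yields exactly the stated rate together with the prescribed choice of $\lambda_{\bm{j}}$.

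The main obstacle I anticipate is the bias term, precisely because the source condition $r^*_{\bm{j}}\in\mathcal{R}(C^{\gamma})$ is phrased through the population operator $C$ whereas the estimator inverts the empirical operator $\widehat{C}$. Making the resolvent perturbation rigorous — verifying that the cross term $\lambda_{\bm{j}}(\widehat{C}+\lambda_{\bm{j}}I)^{-1}(C-\widehat{C})(C+\lambda_{\bm{j}}I)^{-1}r^*_{\bm{j}}$ is genuinely lower order uniformly in the relevant range of $\lambda_{\bm{j}}$, and invoking the operator-concentration result (Proposition~A.2 of \citet{sriperumbudur2013density}, whose applicability needs the separability of $\rkhs$ granted by Assumption~{\bf(A)}) — is where the real work lies; the stochastic bound and the final optimisation over $\lambda_{\bm{j}}$ are comparatively routine.
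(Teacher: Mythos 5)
Your proposal is correct and follows essentially the same route as the paper's proof: the closed forms $\widehat{r}_{\bm{j}}=(\widehat{C}+\lambda_{\bm{j}}I)^{-1}\widehat{\xi}_{\bm{j}}$ and $Cr^*_{\bm{j}}=\xi_{\bm{j}}$, a Chebyshev-type bound giving $\|\widehat{\xi}_{\bm{j}}-\widehat{C}r^*_{\bm{j}}\|_{\rkhs}=O_{\mathrm{P}}(n^{-1/2})$, the operator bounds $\|(\widehat{C}+\lambda_{\bm{j}}I)^{-1}\|\le\lambda_{\bm{j}}^{-1}$ and $\|C-\widehat{C}\|_{\HS}=O_{\mathrm{P}}(n^{-1/2})$, the source-condition bias bound $\lambda_{\bm{j}}^{\min\{\gamma,1\}}$, and the final balancing are all the same ingredients the paper uses. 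Your decomposition via the resolvent identity is in fact algebraically identical to the paper's decomposition through the population regularized minimizer $r^{\lambda}_{\bm{j}}=(C+\lambda_{\bm{j}}I)^{-1}Cr^*_{\bm{j}}$ (note $r^{\lambda}_{\bm{j}}-r^*_{\bm{j}}=-\lambda_{\bm{j}}(C+\lambda_{\bm{j}}I)^{-1}r^*_{\bm{j}}$), with your direct spectral-calculus estimate playing the role of the cited Proposition~A.2 of \citet{sriperumbudur2013density}.
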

   The proof is given in Appendix~\ref{app:Hconv}.  We followed the
   proof techniques in~\citet{sriperumbudur2013density}, but adopted
   them to a different problem: \citet{sriperumbudur2013density}
   proposed and analyzed a non-parametric estimator for log-densities
   with the intractable partition functions based on the Fisher
   divergence, which is a special case of $J_{\bm{j}}$ at
   $|\bm{j}|=1$. The range space assumption
   $r^*_{\bm{j}}\in\mathcal{R}(C^{\gamma})$ is closely related to the
   smoothness of
   $r^*_{\bm{j}}$~\citep[Section~4.2]{sriperumbudur2013density}: Larger
   $\gamma$ implies that $r^*_{\bm{j}}$ is smoother. As seen in
   Sections~\ref{ssec:ModeConv} and~\ref{ssec:ridgeconv},
   Theorem~\ref{theo:Hconvrate} is particularly useful in the analysis
   of our mode-seeking clustering and density ridge estimation methods.
   
   \begin{remark*}
    By following~\citet[Section~4.2]{JMLR:v18:16-011},
    Theorem~\ref{theo:Hconvrate} has some connection to the minimax
    theory~\citep{tsybakov2008introduction} under Sobolev spaces where
    for any $\alpha>s\geq0$, the minimax rate is given by
    \begin{align*}
     \inf_{\widehat{r}_{\bm{j},n}}\sup_{r_{\bm{j}}^*\in H^{\alpha}_2}
     \|\widehat{r}_{\bm{j},n}-r_{\bm{j}}^*\|_{H^{s}_2} \asymp
     n^{-\frac{\alpha-s}{2(\alpha-s)+D}}.
    \end{align*}
    $\inf$ is taken over possible estimators $\widehat{r}_{\bm{j},n}$,
    and $a_n\asymp b_n$ means that $a_n/b_n$ has lower- and upper-bounds
    away from zero and infinity, respectively. To establish a connection
    to Sobolev spaces, suppose that the Mat{\'e}rn kernel is employed
    whose corresponding RKHS is a Sobolev space $\rkhs=H^s_2$ with the
    smoothness parameter $s>D/2$. As proved in
    Appendix~\ref{app:minimax}, when the true density belongs to
    $L^1(\R{D})$ (i.e., $p\in L^1(\R{D})$),
    $r^*_{\bm{j}}\in\mathcal{R}(C^{\gamma})$ for $\gamma\geq 1$ implies
    that $r^*_{\bm{j}}\in H^{\frac{3D}{2}-\frac{1}{2}+\epsilon}_2$ for
    arbitrarily small $\epsilon>0$. Then, the convergence rate
    $n^{-\frac{1}{4}}$ is minimax optimal under
    $\rkhs=H^{D-\frac{1}{2}+\epsilon}_2$.
    Furthermore, this result implies that the dimension effect is veiled
    through the relative smoothness between two Sobolev spaces
    ($H^{\frac{3D}{2}-\frac{1}{2}+\epsilon}_2$ and
    $H^{D-\frac{1}{2}+\epsilon}_2$), and therefore the rate in
    Theorem~\ref{theo:Hconvrate} is independent of data dimension
    $D$. Details are provided in Appendix~\ref{app:minimax}.
   \end{remark*}
  \subsection{Practical Implementation of LSDDR}
  \label{ssec:PracLSDDR}
  Here, we describe practical implementation of LSDDR.
  \begin{itemize}
   \item \emph{A practical version of LSDDR}: The representer
	 theorem~\citep[Theorem~2]{zhou2008derivative} states that the
	 estimator $\widehat{r}_{\bm{j}}$ should take the following
	 form:
	 \begin{align}
	  \widehat{r}_{\bm{j}}(\bm{x})=\sum_{i=1}^n\alpha^{(i)}_{\bm{j}}
	  k(\bm{x},\bm{x}_i) +\beta^{(i)}_{\bm{j}}\partial_{\bm{j}}'
	  k(\bm{x},\bm{x}')\Bigr|_{\bm{x}'=\bm{x}_i} =\sum_{i=1}^{2n}
	  \theta^{(i)}_{\bm{j}}\psi_{\bm{j}}^{(i)}(\bm{x})
	  =\bm{\theta}_{\bm{j}}^{\top}\bm{\psi}_{\bm{j}}(\bm{x}),
	  \label{eqn:g_model}
	 \end{align}
	 where $\partial_{\bm{j}}'$ denotes the partial derivative with 
	 respect to $\bm{x}'$, 
	 \begin{align*}
	  \theta^{(i)}_{\bm{j}}:=\left\{
	  \begin{array}{ll}
	   \alpha^{(i)}_{\bm{j}} & i=1,\dots,n,\\
	   \beta^{(i-n)}_{\bm{j}} & i=n+1,\dots,2n,
	  \end{array}\right. \qquad 
	  \psi_{\bm{j}}^{(i)}(\bm{x}):=\left\{
	  \begin{array}{ll}
	   k(\bm{x},\bm{x}_i) & i=1,\dots,n,\\
	   \partial_{\bm{j}}'k(\bm{x},\bm{x}')\Bigr|_{\bm{x}'=\bm{x}_{i-n}}
	    & i=n+1,\dots,2n.
	  \end{array}\right.
	 \end{align*}
	 
	 To estimate $\bm{\theta}_{\bm{j}}$, we
	 substitute~\eqref{eqn:g_model} into $\widehat{J}_{\bm{j}}$
	 in~\eqref{Jsample}. Then, when
	 $R(r_{\bm{j}})=\bm{\theta}_{\bm{j}}^\top \bm{\theta}_{\bm{j}}$,
	 the optimal solution of $\bm{\theta}_{\bm{j}}$ can be computed
	 analytically as
	 \begin{align*}
	  \widehat{\bm{\theta}}_{\bm{j}}
	  &:=\argmin_{\bm{\theta}_{\bm{j}}}
	  \left[\bm{\theta}_{\bm{j}}^{\top}
	  \widehat{\bm{G}}_{\bm{j}}\bm{\theta}_{\bm{j}}
	  -2(-1)^{|\bm{j}|}\bm{\theta}_{\bm{j}}^{\top}
	  \widehat{\bm{h}}_{\bm{j}}
	  +\lambda_{\bm{j}}\bm{\theta}_{\bm{j}}^\top
	  \bm{\theta}_{\bm{j}}
	  \right]
	  =(-1)^{|\bm{j}|}\left(\widehat{\bm{G}}_{\bm{j}}
	  +\lambda_{\bm{j}}\I_{2n}\right)^{-1}\widehat{\bm{h}}_{\bm{j}},
	 \end{align*}
	 where $\I_{2n}$ denotes the $2n$ by $2n$ identity matrix,
	 \begin{align*}
	  \widehat{\bm{G}}_{\bm{j}}:= \frac{1}{n}\sum_{i=1}^n
	  \bm{\psi}_{\bm{j}}(\bm{x}_i)
	  \bm{\psi}_{\bm{j}}(\bm{x}_i)^{\top}
	  \qquad\text{and}\qquad \widehat{\bm{h}}_{\bm{j}}
	  &:=\frac{1}{n}\sum_{i=1}^n
	  \partial_{\bm{j}}\bm{\psi}_{\bm{j}}(\bm{x}_i).
	 \end{align*}
	 Finally, a practical version of LSDDR is given by
	 \begin{align*}
	  \widehat{r}_{\bm{j}}(\bm{x}):=\widehat{\bm{\theta}}_{\bm{j}}^{\top}
	  \bm{\psi}_{\bm{j}}(\bm{x})=\sum_{i=1}^n
	  \widehat{\alpha}^{(i)}_{\bm{j}} k(\bm{x},\bm{x}_i)
	  +\widehat{\beta}^{(i)}_{\bm{j}}\partial_{\bm{j}}'
	  k(\bm{x},\bm{x}')\Bigr|_{\bm{x}'=\bm{x}_i}.
	 \end{align*}
	  
   \item \emph{Model selection by cross-validation}: Model selection is
	 a crucial problem in LSDDR.  As in standard model selection
	 methods for kernel density
	 estimation~\citep{bowman1984alternative,sheather2004density},
	 we take a least-squares approach based on~\eqref{J}, and
	 optimize the model parameters (parameters in $k(\cdot,\cdot)$
	 and the regularization parameter $\lambda_{\bm{j}}$) by
	 cross-validation as follows:
	 \begin{enumerate}
	  \item Divide the samples $\mathcal{D}=\{\bm{x}_i\}_{i=1}^{n}$
		into $T$ disjoint subsets $\{\mathcal{D}_t\}_{t=1}^{T}$.
		
	  \item Obtain the estimator
		$\widehat{r}^{(t)}_{\bm{j}}(\bm{x})$ from
		$\mathcal{D}\setminus\mathcal{D}_t$
		(i.e., $\mathcal{D}$ without $\mathcal{D}_t$), and then
		compute $\widehat{J}_{\bm{j}}$ from the hold-out samples
		as
		\begin{align*}
		 \text{CV}(t)&:=
		 \frac{1}{|\mathcal{D}_t|}\sum_{\bm{x}\in\mathcal{D}_t}
		 \left[\left\{\widehat{r}^{(t)}_{\bm{j}}(\bm{x})\right\}^2
		 -2(-1)^{|\bm{j}|} \partial_{\bm{j}}
		 \widehat{r}^{(t)}_{\bm{j}}(\bm{x})\right],
		\end{align*} 
		where $|\mathcal{D}_t|$ denotes the number of elements
		in $\mathcal{D}_t$.
		
	  \item Choose the model that minimizes
		$\frac{1}{T}\sum_{t=1}^T\text{CV}(t)$.
	 \end{enumerate}
  \end{itemize}
  \subsection{Notation}
  \label{ssec:notation1}
  In the rest of this paper, we consider LSDDR only for $|\bm{j}|=1$ and
  $|\bm{j}|=2$. Therefore, we use more specific notations as follows:
  \begin{itemize}
   \item (Sections~\ref{modeclustering} and~\ref{sec:ridge}) For
	 $|\bm{j}|=1$, a first order density-derivative-ratio
	 corresponds to a first order derivative of the log-density, and
	 we express the true derivative as
	 \begin{align*}
	  g_{j}(\bm{x}):=\frac{\partial_j p(\bm{x})}{p(\bm{x})}
	  =\partial_j\log p(\bm{x}),
	 \end{align*}
	 where $\partial_j:=\parder{x^{(j)}}$. Then, LSDDR to
	 $g_j(\bm{x})$ is denoted by 
	 \begin{align*}
	  \widehat{g}_{j}(\bm{x})
	  :=\sum_{i=1}^{2n}\widehat{\theta}_{j}^{(i)}\psi_j^{(i)}(\bm{x})
	  =\sum_{i=1}^n \widehat{\alpha}_{j}^{(i)}k(\bm{x},\bm{x}_i)
	  +\widehat{\beta}_{j}^{(i)}\partial_j' k(\bm{x},\bm{x}')\Bigr|_{\bm{x}'=\bm{x}_i},
	 \end{align*}
	 where $\partial_j'$ denotes the partial derivative with respect
	 to the $j$-th coordinate in $\bm{x}'$, and the subscript $j$ of
	 $\widehat{\theta}_{j}^{(i)}$ is simplified from $\bm{j}$
	 because only one element in $\bm{j}$ is one and the others are
	 zeros when $|\bm{j}|=1$.

   \item (Section~\ref{sec:ridge}) For $|\bm{j}|=2$, we express a true
	 second order density-derivative-ratio by
	 $[\bm{H}(\bm{x})]_{ij}:=\frac{\partial_i\partial_j
	 p(\bm{x})}{p(\bm{x})}$ where $[\bm{H}(\bm{x})]_{ij}$ denotes
	 the $(i,j)$-th element of the matrix $\bm{H}(\bm{x})$.  LSDDR
	 to $[\bm{H}(\bm{x})]_{ij}$ is denoted by
	 $[\widehat{\bm{H}}(\bm{x})]_{ij}$.
  \end{itemize}
 \section{Application to Mode-Seeking Clustering}
 \label{modeclustering}
 This section applies LSDDR to mode-seeking clustering. 
  \subsection{Problem Formulation for Clustering}
  Suppose that we are given a collection of data samples
  $\mathcal{D}=\left\{\bm{x}_i\right\}_{i=1}^n$.  The goal of clustering
  is to assign a cluster label $c_i\in\{1,\ldots,c\}$ to each data
  sample $\bm{x}_i$, where $c$ denotes the number of clusters, and is
  \emph{unknown}.
  \subsection{Brief Review of Mean Shift Clustering}
  \label{ssec:meanshift}
  \emph{Mean shift clustering}
  (MS)~\citep{fukunaga1975estimation,cheng1995mean,comaniciu2002mean} is
  a popular clustering method, and has been applied in a wide-range of
  fields such as image
  segmentation~\citep{comaniciu2002mean,tao2007color,wang2004image} and
  object tracking~\citep{collins2003mean,comaniciu2000real} (see a
  recent review article by~\citet{carreira2015review}). MS initially
  regards all data samples as candidates of cluster centers, and updates
  them toward the nearest modes of the estimated density by gradient
  ascent. Finally, the same cluster label is assigned to the data
  samples which converge to the same mode.  Unlike standard clustering
  methods such as \emph{k-means
  clustering}~\citep{BerkeleySymp:MacQueen:1967}, MS automatically
  determines the number of clusters according to the number of detected
  modes.
  
  To update data samples, the technical challenge is to accurately
  estimate the gradient of $p(\bm{x})$. MS takes a two-step approach:
  The first step performs kernel density estimation (KDE) as
  \begin{align*}
   \widehat{p}_{\KDE}(\bm{x})&:=\frac{1}{Z_{n,h}}\sum_{i=1}^n
   K_{\KDE}\left(\frac{\left\|\bm{x}-\bm{x}_i\right\|^2}{2h^2}\right),
  \end{align*}
  where $K_{\KDE}$ is a kernel function for KDE, $Z_{n,h}$ is the
  normalizing constant, and $h$ denotes the bandwidth parameter. Then,
  the second step computes the partial derivatives of
  $\widehat{p}_{\KDE}(\bm{x})$ as
  \begin{align*}
   \partial_{j} \widehat{p}_{\KDE}(\bm{x})
   &=\frac{1}{h^2Z_{n,h}}\sum_{i=1}^n (x^{(j)}_i-x^{(j)})
   G_{\KDE}\left(\frac{\left\|\bm{x}-\bm{x}_i\right\|^2}{2h^2}\right)\\
   &=\frac{1}{h^{2}Z_{n,h}}\left\{\sum_{i=1}^n
   G_{\KDE}\left(\frac{\left\|\bm{x}-\bm{x}_i\right\|^2}{2h^2}\right)
   \right\} \left\{\frac{\sum_{i=1}^n x_i^{(j)}
   G_{\KDE}\left(\frac{\left\|\bm{x}-\bm{x}_i\right\|^2}{2h^2}\right)}
   {\sum_{i=1}^n
   G_{\KDE}\left(\frac{\left\|\bm{x}-\bm{x}_i\right\|^2}{2h^2}\right)}
   -x^{(j)}\right\},
  \end{align*}
  where
  $G_{\KDE}(t)=-\frac{\mathrm{d}}{\mathrm{d}t}K_{\KDE}(t)$.
  
  By denoting the $\tau$-th update of a data sample by
  $\bm{z}^{\tau}_k=(z^{(\tau,1)}_k,z^{(\tau,2)}_k,\dots,z^{(\tau,D)}_k)^{\top}$
  where $\bm{z}^{0}_k=\bm{x}_k$, setting $\partial_{j}
  \widehat{p}_{\KDE}(\bm{x})=0$ yields the following fixed-point
  iteration formula:
  \begin{align}
   z_k^{(\tau+1,j)}=\frac{\sum_{i=1}^n x_i^{(j)}
   G_{\KDE}\left(\frac{\left\|\bm{z}_k^{\tau}-\bm{x}_i\right\|^2}{2h^2}\right)}
   {\sum_{i=1}^n
   G_{\KDE}\left(\frac{\left\|\bm{z}_k^{\tau}-\bm{x}_i\right\|^2}{2h^2}\right)}.
   \label{eqn:meanshift}
  \end{align}
  Simple calculation shows that~\eqref{eqn:meanshift} can be
  equivalently expressed as
  \begin{align}
   \bm{z}_k^{\tau+1}=\bm{z}_k^{\tau}
   +\frac{h^{2}Z_{n,h}}{\sum_{i=1}^nG_{\KDE}
   \left(\frac{\left\|\bm{z}_k^{\tau}-\bm{x}_i\right\|^2}{2h^2}\right)}
   \nabla\widehat{p}_{\KDE}(\bm{x})|_{\bm{x}=\bm{z}_k^{\tau}}
   =\bm{z}_k^{\tau}+\widehat{\bm{m}}_{\KDE}(\bm{z}_k^{\tau}),
  \label{ms-grad-ascent}
  \end{align}
  where $\nabla$ denotes the vector differential operator with respect
  to $\bm{x}$, and
  $\widehat{\bm{m}}_{\KDE}(\bm{z})=(\widehat{m}^{(1)}_{\KDE}(\bm{z}),\widehat{m}^{(2)}_{\KDE}(\bm{z}),\dots,\widehat{m}^{(D)}_{\KDE}(\bm{z}))^{\top}$
  is called the \emph{mean shift vector} and defined by
  \begin{align}
   \widehat{\bm{m}}_{\KDE}(\bm{z})=\frac{h^{2}Z_{n,h}}{\sum_{i=1}^nG_{\KDE}
   \left(\frac{\left\|\bm{z}-\bm{x}_i\right\|^2}{2h^2}\right)}
   \nabla\widehat{p}_{\KDE}(\bm{x})|_{\bm{x}=\bm{z}}.
   \label{mean-shift}
  \end{align}
  Eq.\eqref{ms-grad-ascent} indicates that MS performs gradient ascent.
  To speed up MS, acceleration strategies were also developed
  in~\citet{carreira2006acceleration}. 

  Properties of MS have been theoretically
  well-investigated~\citep{cheng1995mean,fashing2005mean,ghassabeh2013convergence,arias2016estimation}.
  For instance, a sequence $\{\bm{z}_k^{\tau}, \tau=0,1,2,\dots\}$
  generated by MS converges to a mode of $\widehat{p}_{\KDE}(\bm{x})$ as
  $\tau$ goes
  infinity~\citep{comaniciu2002mean,li2007note,ghassabeh2013convergence};
  \citet{carreira2007gaussian} showed that the algorithm of MS is
  equivalent to the EM algorithm~\citep{dempster1977maximum} when
  $K_{\KDE}(t)=\exp(-t)$; Furthermore, \citet{fashing2005mean} proved
  that MS performs a bound optimization. Although MS has good
  theoretical properties, the two-step approach in gradient estimation
  seems practically inappropriate because a good-density estimator does
  not necessarily mean a good-density gradient estimator. A more
  appropriate way would be to directly estimate the gradient. Following
  this idea, we apply LSDDR to mode-seeking clustering.
  \subsection{Least-Squares Log-Density Gradient Clustering}
  \label{sec:Clustering}
  Here, LSDDR is employed to develop a novel mode-seeking clustering
  method because LSDDR is an estimator of a single element in the
  log-density gradient when $|\bm{j}|=1$. The proposed clustering method
  is called the \emph{least-squares log-density gradient clustering}
  (LSLDGC).
   \subsubsection{Fixed-Point Iteration}
   \label{ssec:FixedPoint}
   First, when we estimate the $j$-th element in
   $\bm{g}(\bm{x})=\nabla\log p(\bm{x})$, the form of the kernel
   function is restricted as
   \begin{align*}
    k(\bm{x},\bm{x}_i)=\phi\left(\frac{\|\bm{x}-\bm{x}_i\|^2}
   {2\sigma_j^2}\right),
   \end{align*}
   where $\sigma_j$ denotes a bandwidth parameter, and $\phi$ is a
   non-negative, monotonically non-increasing, convex and differentiable
   function. For example, when $\phi(t)=\exp(-t)$, $k(\bm{x},\bm{x}_i)$
   is the Gaussian kernel. Under the restriction, LSDDR can be rewritten
   as
   \begin{align}
    \widehat{g}_{j}(\bm{x})=\sum_{i=1}^n\left[\widehat{\alpha}^{(i)}_{j}
    \phi\left(\frac{\|\bm{x}-\bm{x}_i\|^2} {2\sigma_j^2}\right)
    +\widetilde{\beta}^{(i)}_{j}\frac{x_i^{(j)}-x^{(j)}}{\sigma_j^2}
    \varphi\left(\frac{\|\bm{x}-\bm{x}_i\|^2}{2\sigma_j^2}\right)\right],
    \label{derGau}
   \end{align}
   where $\widetilde{\beta}^{(i)}_{j}=-\widehat{\beta}^{(i)}_{j}$ and
   $\varphi(t)=-\frac{\intd}{\intd t}\phi(t)$.
   
   For our mode-seeking clustering method, we derive a fixed-point
   iteration similarly to MS.  When
   $\sum_{i=1}^n\widetilde{\beta}^{(i)}_{j}
   \varphi\left(\frac{\|\bm{x}-\bm{x}_i\|^2}{2\sigma_j^2}\right)\neq 0$,
   \eqref{derGau} can be expanded as
   \begin{align*}
    \widehat{g}_{j}(\bm{x}) &= \sum_{i=1}^n
    \left[\widehat{\alpha}^{(i)}_{j}
    \phi\left(\frac{\|\bm{x}-\bm{x}_i\|^2} {2\sigma_j^2}\right)
    +\frac{\widetilde{\beta}^{(i)}_{j}x_i^{(j)}}{\sigma_j^2}
    \varphi\left(\frac{\|\bm{x}-\bm{x}_i\|^2}{2\sigma_j^2}\right)\right]
    -\frac{x^{(j)}}{\sigma_j^2} \sum_{i=1}^n\widetilde{\beta}^{(i)}_{j}
    \varphi\left(\frac{\|\bm{x}-\bm{x}_i\|^2}{\sigma_j^2}\right)
    \nonumber\\
    &=\frac{1}{\sigma_j^2}\sum_{i=1}^n\widetilde{\beta}^{(i)}_{j}
    \varphi\left(\frac{\|\bm{x}-\bm{x}_i\|^2}{2\sigma_j^2}\right)
    \left[\frac{\sum_{i=1}^n\left[\sigma_j^2\widehat{\alpha}^{(i)}_{j}
    \phi\left(\frac{\|\bm{x}-\bm{x}_i\|^2} {2\sigma_j^2}\right)
    +\widetilde{\beta}^{(i)}_{j}x_{i}^{(j)}
    \varphi\left(\frac{\|\bm{x}-\bm{x}_i\|^2}{2\sigma_j^2}\right)\right]}
    {\sum_{i=1}^n\widetilde{\beta}^{(i)}_{j}
    \varphi\left(\frac{\|\bm{x}-\bm{x}_i\|^2}{2\sigma_j^2}\right)}
    -x^{(j)}\right].
   \end{align*}
   As in MS, setting $\widehat{g}_j(\bm{x})=0$ yields the following
   update formula:
   \begin{align}
    z_k^{(\tau+1,j)}=\frac{\sum_{i=1}^n\left[\sigma_j^2\widehat{\alpha}^{(i)}_{j}
    \phi\left(\frac{\|\bm{z}_k^{\tau}-\bm{x}_i\|^2} {2\sigma_j^2}\right)
    +\widetilde{\beta}^{(i)}_{j}x_{i}^{(j)}
    \varphi\left(\frac{\|\bm{z}_k^{\tau}-\bm{x}_i\|^2}{2\sigma_j^2}\right)\right]}
    {\sum_{i=1}^n\widetilde{\beta}^{(i)}_{j}
    \varphi\left(\frac{\|\bm{z}_k^{\tau}-\bm{x}_i\|^2}{2\sigma_j^2}\right)},
    \label{updateelement}
   \end{align}
   where $\bm{z}_k^{\tau}$ denotes the $\tau$-th update of a data sample
   initialized by $\bm{x}_k$. Eq.\eqref{updateelement} can be also
   equivalently expressed as
   \begin{align}
    z^{(\tau+1,j)}=z^{(\tau,j)}+\frac{\sigma_j^2}
    {\sum_{i=1}^n\widetilde{\beta}^{(i)}_{j}
    \varphi\left(\frac{\|\bm{z}^{\tau}-\bm{x}_i\|^2}{2\sigma_j^2}\right)}
    \widehat{g}_j(\bm{z}^{\tau})=z^{(\tau,j)}
    +\widehat{m}^{(j)}(\bm{z}^{\tau}), \label{LS-grad}
   \end{align}
   where
   \begin{align}
    \widehat{m}^{(j)}(\bm{z}):=\frac{\sigma_j^2}
    {\sum_{i=1}^n\widetilde{\beta}^{(i)}_{j}
    \varphi\left(\frac{\|\bm{z}-\bm{x}_i\|^2}{2\sigma_j^2}\right)}
    \widehat{g}_j(\bm{z}).\label{LS-mean-shift}
   \end{align}
   When $\widehat{\alpha}^{(i)}_{j}=0$ and
   $\widetilde{\beta}^{(i)}_{j}=1/n$, \eqref{updateelement} is reduced
   to the MS update formula~\eqref{eqn:meanshift}. Thus, LSLDGC includes
   MS as a special case.
   
   The form of~\eqref{updateelement} motivates us to develop a
   coordinate-wise update rule. From $j=1$ to $j=D$, we iteratively
   update one coordinate at a time by simply
   modifying~\eqref{updateelement} as
   \begin{align}
    z_k^{(\tau+1,j)}=\frac{\sum_{i=1}^n\left[\sigma_j^2\widehat{\alpha}^{(i)}_{j}
    \phi\left(\frac{\|\tilde{\bm{z}}_k^{\tau}-\bm{x}_i\|^2} {2\sigma_j^2}\right)
    +\widetilde{\beta}^{(i)}_{j}x_{i}^{(j)}
    \varphi\left(\frac{\|\tilde{\bm{z}}_k^{\tau}-\bm{x}_i\|^2}{2\sigma_j^2}\right)
    \right]}
    {\sum_{i=1}^n\widetilde{\beta}^{(i)}_{j}
    \varphi\left(\frac{\|\tilde{\bm{z}}_k^{\tau}-\bm{x}_i\|^2}{2\sigma_j^2}\right)},
    \label{CWupdate}
   \end{align}
   where 
   \begin{align*}
    \tilde{\bm{z}}^{\tau}_k=(z_k^{(\tau+1,1)},\dots,z_k^{(\tau+1,j-1)},z_k^{(\tau,j)},z_k^{(\tau,j+1)},\dots,z_k^{(\tau,D)})^{\top}.
   \end{align*}
   Note that the $(j-1)$-th and $j$-th elements in
   $\tilde{\bm{z}}^{\tau}_k$ are different in terms of $\tau$.  As shown
   below, this coordinate-wise update rule has a nice theoretical
   property.
   \subsubsection{Sufficient Conditions for Monotonic Hill-Climbing}
   \label{ssec:LSLDGconv}
   LSLDGC updates data samples towards the modes like hill-climbing.
   Here, we show sufficient conditions for monotonic hill-climbing,
   i.e., LSLDGC makes data samples never climbing-down. The challenge in
   this analysis is that unlike MS, we cannot know the estimated
   density, and thus it is not straightforward to investigate this
   property for LSLDGC. To overcome this challenge, we employ {\it path
   integral}\footnote{Path integral is also called \emph{line
   integral}.}~\citep{strang1991calculus}: For the vector field
   $\g(\x)=\nabla\log{p}(\x)$ and a differentiable curve
   ${\bm\gamma}(t),\,t\in[0,s]$ connecting $\x$ and $\y$, i.e.,
   $\bm{\gamma}(0)=\bm{y},\,\bm{\gamma}(s)=\x$, the standard formula of
   path integral is given by
   \begin{align}
    D_{\g}[\x|\y]:= \int_{0}^s
    \<\g(\bm{\gamma}(t)),\,\dot{\bm\gamma}(t)\> \intd t
    =\log{p(\x)}-\log{p(\y)}, \label{pathintegral}
   \end{align}
   where $\dot{\bm\gamma}(t)=\frac{\intd}{\intd t}\bm\gamma(t)$ and
   $\<\cdot,\cdot\>$ denotes the inner product. The notable property of
   path integral is that the integral is independent of any choice of a
   path, and determined only by the two points, $\bm{y}$ and $\bm{x}$,
   as shown in the most right-hand side of~\eqref{pathintegral}. In this
   analysis, we use the following path along with one coordinate at a
   time repeatedly:
   \begin{align}
    \bm{y}=(y^{(1)},y^{(2)}&,y^{(3)},\dots,y^{(D)})
    \rightarrow(x^{(1)},y^{(2)},y^{(3)},\dots,y^{(D)})
    \rightarrow(x^{(1)},x^{(2)},y^{(3)},\dots,y^{(D)})
    \nonumber\\
    &\rightarrow(x^{(1)},x^{(2)},x^{(3)},\dots,y^{(D)})
    \rightarrow\dots
    \rightarrow(x^{(1)},x^{(2)},x^{(3)},\dots,x^{(D)})=\bm{x}.
    \label{path}
   \end{align}

   By substituting our gradient estimate $\widehat{\bm{g}}(\bm{x})$ into
   the middle part of~\eqref{pathintegral} under the
   path~\eqref{path},
   \begin{align}
    \widehat{D}_{\widehat{\bm{g}}}[\bm{x}|\bm{y}]
    &:=\int_{0}^s
    \<\widehat{\bm{g}}(\bm{\gamma}(t)),\,\dot{\bm{\gamma}}(t)\>\intd t
    =\sum_{j=1}^D \int_{y^{(j)}}^{x^{(j)}}
    \widehat{g}_{j}(x^{(1)},x^{(2)},\dots,z^{(j)},\dots,y^{(D)})\intd
    z^{(j)}.  \label{mypathintegral}
   \end{align}
   From~\eqref{pathintegral},
   $\widehat{D}_{\widehat{\bm{g}}}[\bm{x}|\bm{y}]$ can be regarded as an
   estimator of $\log{p(\x)}-\log{p(\y)}$ when we fix the curve that
   connects $\x$ and $\y$. Thus,
   $\widehat{D}_{\widehat{\bm{g}}}[\bm{z}_k^{\tau+1}|\bm{z}_k^{\tau}]\geq
   0$ for all $\tau$ implies that the data samples updated by LSLDGC
   never climb down. The following theorem provides some sufficient
   conditions:
   \begin{theorem}
    \label{theo:LSLDGCconv} Suppose that $\phi$ is a non-negative,
    monotonically non-increasing, convex and differentiable
    function. Then, if $\widehat{\alpha}_{j}^{(i)}=0$ and
    $\widetilde{\beta}_{j}^{(i)}\geq 0$, under the coordinate-wise
    update rule~\eqref{CWupdate} and path~\eqref{path},
    \begin{align*}
     \widehat{D}_{\widehat{\bm{g}}}[\bm{z}_k^{\tau+1}|\bm{z}_k^{\tau}]
     \geq 0.
    \end{align*}
   \end{theorem}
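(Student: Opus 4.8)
The plan is to exploit the fact that both the path~\eqref{path} and the update rule~\eqref{CWupdate} are organized one coordinate at a time, so that the whole claim reduces to a one-dimensional monotonicity statement per coordinate, which I then settle by a bound-optimization (minorize--maximize) argument resting on the convexity of $\phi$. First I would note that setting $\widehat{\alpha}^{(i)}_{j}=0$ collapses the model~\eqref{derGau} to $\widehat{g}_{j}(\bm{x})=\sum_{i=1}^n\widetilde{\beta}^{(i)}_{j}\frac{x_i^{(j)}-x^{(j)}}{\sigma_j^2}\varphi\!\left(\frac{\|\bm{x}-\bm{x}_i\|^2}{2\sigma_j^2}\right)$. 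Because $\varphi=-\frac{\intd}{\intd t}\phi$, this is precisely the $j$-th partial derivative of the nonnegative potential $F_j(\bm{x}):=\sum_{i=1}^n\widetilde{\beta}^{(i)}_{j}\phi\!\left(\frac{\|\bm{x}-\bm{x}_i\|^2}{2\sigma_j^2}\right)$; that is, $\partial_j F_j=\widehat{g}_{j}$. In the $j$-th summand of~\eqref{mypathintegral} only the $j$-th coordinate varies while the others are frozen at the entries of $\tilde{\bm{z}}^{\tau}_k$ (the earlier ones at their new values, the later ones at their old values), so the fundamental theorem of calculus turns that integral into the difference of $F_j$ evaluated at the new and old $j$-th coordinates. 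Hence proving $\widehat{D}_{\widehat{\bm{g}}}[\bm{z}_k^{\tau+1}|\bm{z}_k^{\tau}]\geq 0$ reduces to showing that each coordinate step does not decrease the corresponding $F_j$.

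To establish this per-coordinate increase I would use the classical minorize--maximize device behind the bound-optimization view of mean shift~\citep{fashing2005mean}. Writing $a_i:=\|\tilde{\bm{z}}^{\tau}_k-\bm{x}_i\|^2/(2\sigma_j^2)$ for the argument at the current iterate and applying the convexity inequality $\phi(b)\geq\phi(a_i)+\phi'(a_i)(b-a_i)$ with the nonnegative weights $\widetilde{\beta}^{(i)}_{j}\geq 0$ yields a minorizer $Q(z)$ of $F_j$ along the free $j$-th coordinate that touches $F_j$ at the old coordinate value. Since $\phi$ is monotonically non-increasing we have $\phi'=-\varphi\leq 0$, so $Q$ is a concave quadratic in that coordinate; solving $Q'(z)=0$ gives exactly $z=\big(\sum_i\widetilde{\beta}^{(i)}_{j}\varphi(a_i)x_i^{(j)}\big)/\big(\sum_i\widetilde{\beta}^{(i)}_{j}\varphi(a_i)\big)$, which is the new coordinate produced by~\eqref{CWupdate}. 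Thus the update maximizes $Q$, and chaining $F_j(\text{new})\geq Q(\text{new})\geq Q(\text{old})=F_j(\text{old})$ gives non-negativity of the $j$-th integral. Summing over $j=1,\dots,D$ then yields $\widehat{D}_{\widehat{\bm{g}}}[\bm{z}_k^{\tau+1}|\bm{z}_k^{\tau}]\geq 0$.

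The main obstacle I anticipate is the bookkeeping that ties the maximizer of the minorizer to the fixed-point formula: one must check carefully that the frozen coordinates in $Q$ agree with those appearing in~\eqref{CWupdate}, so that the stationarity condition $Q'(z)=0$ reproduces the update exactly rather than a shifted version, and that the denominator $\sum_i\widetilde{\beta}^{(i)}_{j}\varphi(a_i)$ is strictly positive (not merely nonzero) so that $Q$ is genuinely concave and its critical point is a global maximizer. Everything else is routine: the nonnegativity and monotonicity hypotheses on $\phi$ enter only through $\varphi\geq 0$ (ensuring the denominator is nonnegative) and the convexity inequality, and notably the path-independence asserted in~\eqref{pathintegral} is not needed here, since the coordinate integrals are evaluated directly via the potential $F_j$.
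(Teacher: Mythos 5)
Your proposal is correct and follows essentially the same route as the paper's proof: the same coordinate-wise reduction via the path and the fundamental theorem of calculus (the paper's Eq.~\eqref{one-integral}), the same convexity inequality $\phi(b)\geq\phi(a)+\phi'(a)(b-a)$ with the nonnegative weights $\widetilde{\beta}^{(i)}_{j}$, and the same use of the update~\eqref{CWupdate} as the stationary point of the resulting quadratic. Your minorize--maximize packaging is algebraically identical to the paper's final step, which substitutes the update rule and completes the square to obtain $\frac{1}{2\sigma_j^2}\sum_{i}\widetilde{\beta}^{(i)}_{j}\varphi\bigl(\|\tilde{\bm{z}}_k^{\tau}-\bm{x}_i\|^2/(2\sigma_j^2)\bigr)\,(z_k^{(\tau,j)}-z_k^{(\tau+1,j)})^2\geq 0$, i.e., exactly your $Q(\text{new})-Q(\text{old})$.
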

   The proof is deferred to Appendix~\ref{app:LSLDGCconv}.
   \begin{remark*}
    Theorem~\ref{theo:LSLDGCconv} shows sufficient conditions that
    LSLDGC with the coordinate-wise update rule~\eqref{CWupdate} makes
    data samples monotonically hill-climb towards the modes. However,
    without satisfying the conditions, we empirically observed that most
    of data samples monotonically converge to modes. Therefore, we
    conjecture that some milder conditions exist, and do not apply all
    sufficient conditions in practice. Practical implementation is
    described in Section~\ref{LSLDGCpractice}.
   \end{remark*}
   \begin{remark*}    
    For another update rule~\eqref{updateelement}, sufficient conditions
    for monotonic hill-climbing were not established as in
    Theorem~\ref{theo:LSLDGCconv}. However,
    Theorem~\ref{theo:nonintegrable-field} implies that accurate
    mode-seeking is possible for both update rules as long as
    $\widehat{D}_{\widehat{\bm{g}}}[\bm{z}_k^{\tau+1}|\bm{z}_k^{\tau}]$
    is kept non-negative for all $\tau$. Therefore, in practice,
    whenever
    $\widehat{D}_{\widehat{\bm{g}}}[\bm{z}_k^{\tau+1}|\bm{z}_k^{\tau}]$
    is negative, we perform standard gradient ascent. The details are
    given in Section~\ref{LSLDGCpractice}.
   \end{remark*}
   \begin{remark*}    
    Sufficient conditions for monotonic hill-climbing have been
    established in
    MS~\citep{comaniciu2002mean,li2007note,ghassabeh2013convergence}.
    The main difference is that we obtain the difference of two
    log-density estimates from a gradient estimate, while previous work
    directly begins with density estimation based on KDE. Thus, the
    proof is substantially different.
   \end{remark*}
   
   Theorem~\ref{theo:LSLDGCconv} holds under the path~\eqref{path}.
   However, the following theorem states that as $n$ increases,
   $\widehat{D}_{\widehat{\bm{g}}}[\bm{x}|\bm{y}]$ approaches
   $D_{\bm{g}}[\bm{x}|\bm{y}]$, which is independent of the choice of a
   path:
   \begin{theorem}
    \label{theo:nonintegrable-field} Suppose that both $\bm{g}$ and
    $\widehat{\bm{g}}$ are finite on the path~\eqref{path} and the
    assumptions in Theorem~\ref{theo:Hconvrate} hold. Then, for arbitrary
    $\bm{x}$ and $\bm{y}$,
    \begin{align*}
     \left|
     D_{\bm{g}}[\bm{x}|\bm{y}]-\widehat{D}_{\widehat{\bm{g}}}[\bm{x}|\bm{y}]
     \right|&\leq
     \|\bm{g}-\widehat{\bm{g}}\|_{\infty}\|\bm{x}-\bm{y}\|_{1}
     \leq O_{\mathrm{P}}\left(n^{-\min\left\{\frac{1}{4},
     \frac{\gamma}{2(\gamma+1)}\right\}}\right),     
    \end{align*}
    where $\|\cdot\|_1$ denotes the $\ell_1$ norm.
   \end{theorem}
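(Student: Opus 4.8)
The plan is to exploit the path-independence of $D_{\bm{g}}[\bm{x}|\bm{y}]$ established in~\eqref{pathintegral}, which lets me evaluate it along the \emph{same} coordinate-wise path~\eqref{path} that defines $\widehat{D}_{\widehat{\bm{g}}}[\bm{x}|\bm{y}]$ in~\eqref{mypathintegral}. Writing both quantities as the corresponding sum of $D$ one-dimensional integrals over the segments of~\eqref{path} and subtracting, the integrands on the $j$-th segment differ exactly by $g_j-\widehat{g}_j$ evaluated at the relevant point of the path. I would then bound the $j$-th integral by the segment length $|x^{(j)}-y^{(j)}|$ times $\sup_{z}|g_j(z)-\widehat{g}_j(z)|$, sum over $j$, and use $\sup_{z}|g_j(z)-\widehat{g}_j(z)|\le\|\bm{g}-\widehat{\bm{g}}\|_{\infty}$ (reading $\|\bm{g}-\widehat{\bm{g}}\|_{\infty}$ as $\max_j\|g_j-\widehat{g}_j\|_{\infty}$) to obtain $|D_{\bm{g}}[\bm{x}|\bm{y}]-\widehat{D}_{\widehat{\bm{g}}}[\bm{x}|\bm{y}]|\le\|\bm{g}-\widehat{\bm{g}}\|_{\infty}\sum_{j=1}^D|x^{(j)}-y^{(j)}|=\|\bm{g}-\widehat{\bm{g}}\|_{\infty}\|\bm{x}-\bm{y}\|_1$, which is the first inequality. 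The assumed finiteness of $\bm{g}$ and $\widehat{\bm{g}}$ on the path guarantees all these integrals are well-defined.

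The crux is the second inequality, which requires converting the RKHS-norm rate of Theorem~\ref{theo:Hconvrate} into a sup-norm rate. Since $g_j=r^*_{\bm{j}}\in\rkhs$ by assumption and $\widehat{g}_j=\widehat{r}_{\bm{j}}\in\rkhs$ by the representer form~\eqref{eqn:g_model} (a finite combination of $k(\cdot,\bm{x}_i)$ and its derivatives, which lie in $\rkhs$ under Assumption~{\bf(B)}), the difference $g_j-\widehat{g}_j$ lies in $\rkhs$, so the reproducing property gives the pointwise bound $|g_j(\bm{z})-\widehat{g}_j(\bm{z})|=|\<g_j-\widehat{g}_j,k(\cdot,\bm{z})\>_{\rkhs}|\le\|g_j-\widehat{g}_j\|_{\rkhs}\sqrt{k(\bm{z},\bm{z})}$. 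Because $\X$ is compact (Assumption~{\bf(A)}) and $k$ is continuous (Assumption~{\bf(B)}), the quantity $C_k:=\sup_{\bm{z}\in\X}\sqrt{k(\bm{z},\bm{z})}$ is a finite constant, and hence $\|g_j-\widehat{g}_j\|_{\infty}\le C_k\|g_j-\widehat{g}_j\|_{\rkhs}$.

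I would then apply Theorem~\ref{theo:Hconvrate} coordinate-wise, giving $\|g_j-\widehat{g}_j\|_{\rkhs}=O_{\mathrm{P}}(n^{-\min\{1/4,\gamma/(2(\gamma+1))\}})$ for each $j$; since $D$ is fixed, taking the maximum over the finitely many coordinates preserves the rate, so $\|\bm{g}-\widehat{\bm{g}}\|_{\infty}=O_{\mathrm{P}}(n^{-\min\{1/4,\gamma/(2(\gamma+1))\}})$. Finally, compactness of $\X$ bounds $\|\bm{x}-\bm{y}\|_1$ by the finite $\ell_1$-diameter of $\X$, so the product is of the same stochastic order, completing the proof. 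I expect the main obstacle to be the sup-norm conversion step: it is the only place where Assumptions~{\bf(A)} and~{\bf(B)} are essential, and it is what turns an $L^2(p)$-flavored estimation guarantee into the uniform control needed to bound the integrand simultaneously along the whole integration path.
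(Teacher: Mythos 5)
Your proof is correct and takes essentially the same route as the paper: the paper likewise evaluates $D_{\bm{g}}$ along the path~\eqref{path}, bounds each segment integral by $\|\bm{g}-\widehat{\bm{g}}\|_{\infty}\,|x^{(j)}-y^{(j)}|$, and sums to obtain the $\ell_1$ bound. Your sup-norm conversion step (reproducing property, Cauchy--Schwarz, kernel boundedness on the compact $\X$, then Theorem~\ref{theo:Hconvrate} applied coordinate-wise) is exactly the content of the paper's Lemma~\ref{Epsiconv}, which the paper's proof simply invokes at that point instead of spelling out.
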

   The proof is given in Appendix~\ref{app:proofnonintegrable-field}.
   \begin{remark*}
    Theorem~\ref{theo:nonintegrable-field} shows
    \begin{align}
     \label{eqn:one-step-diff}
     |D_{\g}[\z^{\tau}_k|\z^{\tau+1}_k]
     -\widehat{D}_{\widehat{\g}}[\z^{\tau}_k|\z^{\tau+1}_k]|
     &\leq \|\g-\widehat{\g}\|_\infty 
     \|\z^{\tau}_k-\z^{\tau+1}_k\|_1.
    \end{align}
    From~\eqref{eqn:one-step-diff}, the non-negativity of
    $\widehat{D}_{\widehat{\g}}[\z^{\tau}_k|\z^{\tau+1}_k]$ implies that
    $D_{{\g}}[\z^{\tau}_k|\z^{\tau+1}_k]$ is also non-negative when $n$
    is sufficiently large. Thus, Theorem~\ref{theo:nonintegrable-field}
    ensures that accurate mode-seeking is possible by both update
    rules~\eqref{updateelement} and~\eqref{CWupdate}.
   \end{remark*}
   \subsubsection{The Convergence Rate to the True Mode Set}
   \label{ssec:ModeConv}
   First, we define the set of the true mode points as
   \begin{align}
    \mathcal{M}:=\left\{\bm{\mu}~:~\bm{g}(\bm{\mu})=\bm{0},
    \nabla\bm{g}(\bm{\mu})\prec\bm{O}\right\},
    \label{definition-of-mode-set}
   \end{align}
   where $\nabla\bm{g}(\bm{\mu})$ is the Hessian matrix of the
   log-density at a mode point $\bm{\mu}$, and
   $\nabla\bm{g}(\bm{\mu})\prec\bm{O}$ means that
   $\nabla\bm{g}(\bm{\mu})$ is (strictly) negative definite. The set of
   the estimated mode points is also denoted by $\widehat{\mathcal{M}}$.
   Our goal is to establish the convergence rate between $\mathcal{M}$
   and $\widehat{\mathcal{M}}$ under the Hausdorff distance:
   \begin{align}
    \Haus(\mathcal{A},\mathcal{B})
   :=\max\left(\sup_{\bm{x}\in\mathcal{A}}\inf_{\bm{y}\in\mathcal{B}}\|\bm{x}-\bm{y}\|,
   \sup_{\bm{y}\in\mathcal{B}}\inf_{\bm{x}\in\mathcal{A}}\|\bm{x}-\bm{y}\|\right),
   \label{Hausdist}
   \end{align}
   where $\mathcal{A}$ and $\mathcal{B}$ denote two sets.
   
   The following theorem establishes the convergence rate of
   $\Haus(\widehat{\mathcal{M}},\mathcal{M})$.
   \begin{theorem}
    \label{theo:Modeconv} Suppose that the assumptions in
    Theorem~\ref{theo:Hconvrate} hold. Further assume that
    each mode point $\bm{\mu}\in\mathcal{M}$ is approximated by a unique
    estimated mode point
    $\widehat{\bm{\mu}}\in\widehat{\mathcal{M}}$. Then, with high
    probability,
    \begin{align}
     \Haus(\widehat{\mathcal{M}},\mathcal{M})=
     O_{\mathrm{P}}\left(n^{-\min\left\{\frac{1}{4},
     \frac{\gamma}{2(\gamma+1)}\right\}}\right). \label{Modeconv}
    \end{align}
   \end{theorem}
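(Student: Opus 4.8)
The plan is to reduce the Hausdorff convergence to the uniform convergence of the log-density gradient estimate supplied by Theorem~\ref{theo:Hconvrate}, and then to transfer this to the location of the zeros of $\widehat{\bm{g}}$ via a quantitative inverse-function argument anchored at the negative-definite Hessian at each true mode. Throughout, write $\rho_n := n^{-\min\{1/4,\gamma/(2(\gamma+1))\}}$ for the target rate.

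First I would convert the RKHS bound into sup-norm bounds on both the gradient estimate and its Jacobian. By the reproducing property, $|f(\bm{x})| = |\<f,k(\cdot,\bm{x})\>_{\rkhs}| \le \|f\|_{\rkhs}\sqrt{k(\bm{x},\bm{x})}$, and an analogous derivative-reproducing identity~\citep{zhou2008derivative} gives $|\partial_i f(\bm{x})| \le \|f\|_{\rkhs}\sqrt{\partial_i\partial_i' k(\bm{x},\bm{x}')|_{\bm{x}'=\bm{x}}}$. Under Assumptions~{\bf(A)} and~{\bf(B)} the domain $\X$ is compact and $k$ is twice continuously differentiable, so both square roots are uniformly bounded on $\X$. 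Applying these to $f = \widehat{g}_j - g_j$ for each coordinate $j$ and combining with Theorem~\ref{theo:Hconvrate} at $|\bm{j}|=1$ yields $\|\widehat{\bm{g}} - \bm{g}\|_{\infty} = O_{\mathrm{P}}(\rho_n)$ and $\|\nabla\widehat{\bm{g}} - \nabla\bm{g}\|_{\infty} = O_{\mathrm{P}}(\rho_n)$.

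Next, fix a true mode $\bm{\mu} \in \mathcal{M}$. Since $\nabla\bm{g}(\bm{\mu}) \prec \bm{O}$, by continuity there is a closed ball $B$ around $\bm{\mu}$ on which $\nabla\bm{g}$ stays negative definite with $\|\nabla\bm{g}(\bm{x})^{-1}\| \le L$. On the high-probability event where both sup-norm errors are small, the map $\Phi(\bm{x}) := \bm{x} - \nabla\bm{g}(\bm{\mu})^{-1}\widehat{\bm{g}}(\bm{x})$ has Jacobian $I - \nabla\bm{g}(\bm{\mu})^{-1}\nabla\widehat{\bm{g}}(\bm{x})$ close to $\bm{O}$ on $B$ and displaces the center by only $O_{\mathrm{P}}(\rho_n)$, so it is a self-contraction of $B$; by the Banach fixed-point theorem it has a unique fixed point $\widehat{\bm{\mu}} \in B$, which is a zero of $\widehat{\bm{g}}$ and, because $\nabla\widehat{\bm{g}}(\widehat{\bm{\mu}})$ is within $O_{\mathrm{P}}(\rho_n)$ of the negative-definite $\nabla\bm{g}(\widehat{\bm{\mu}})$, is an estimated mode. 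To bound its displacement, write $\bar{\bm{J}} := \int_0^1 \nabla\bm{g}(\bm{\mu} + t(\widehat{\bm{\mu}} - \bm{\mu}))\,\intd t$; since $\bm{g}(\bm{\mu}) = \bm{0}$ and $\widehat{\bm{g}}(\widehat{\bm{\mu}}) = \bm{0}$, we get $\bar{\bm{J}}(\widehat{\bm{\mu}} - \bm{\mu}) = \bm{g}(\widehat{\bm{\mu}}) - \widehat{\bm{g}}(\widehat{\bm{\mu}})$, and as $\bar{\bm{J}}$ is an average of matrices within $O_{\mathrm{P}}(\rho_n)$ of $\nabla\bm{g}(\bm{\mu})$ its inverse norm is bounded by a constant $\lesssim L$, whence $\|\widehat{\bm{\mu}} - \bm{\mu}\| \le L\,\|\widehat{\bm{g}} - \bm{g}\|_{\infty} = O_{\mathrm{P}}(\rho_n)$, uniformly over the finitely many modes.

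Finally I would assemble the two directions of the Hausdorff distance. The displacement bound directly gives $\sup_{\bm{\mu}\in\mathcal{M}} \inf_{\widehat{\bm{\mu}}\in\widehat{\mathcal{M}}} \|\bm{\mu} - \widehat{\bm{\mu}}\| = O_{\mathrm{P}}(\rho_n)$. For the reverse direction, any estimated mode $\widehat{\bm{\mu}}$ satisfies $\|\bm{g}(\widehat{\bm{\mu}})\| = \|\bm{g}(\widehat{\bm{\mu}}) - \widehat{\bm{g}}(\widehat{\bm{\mu}})\| = O_{\mathrm{P}}(\rho_n)$ and has $\nabla\bm{g}(\widehat{\bm{\mu}})$ close to negative definite, so it lies within $O_{\mathrm{P}}(\rho_n)$ of the critical set; the stated assumption that each true mode is approximated by a \emph{unique} estimated mode rules out spurious estimated modes far from $\mathcal{M}$ and makes the correspondence a bijection, giving $\sup_{\widehat{\bm{\mu}}\in\widehat{\mathcal{M}}} \inf_{\bm{\mu}\in\mathcal{M}} \|\widehat{\bm{\mu}} - \bm{\mu}\| = O_{\mathrm{P}}(\rho_n)$ as well, and taking the maximum yields~\eqref{Modeconv}. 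The main obstacle is the third step: establishing existence and local uniqueness of the estimated mode near each true mode while excluding spurious ones requires the contraction/inverse-function argument to be carried out with the negative-definiteness margin controlling the constant $L$, which is precisely where the uniform Hessian convergence and the compactness in Assumption~{\bf(A)} enter.
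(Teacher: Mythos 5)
Your proposal is correct, and its core route is the same as the paper's proof in Appendix~\ref{app:proofModeconv}: turn the RKHS rate of Theorem~\ref{theo:Hconvrate} into uniform bounds on the gradient error and on the Jacobian error via the reproducing and derivative-reproducing properties (this is exactly what the paper's Lemma~\ref{Epsiconv} and its invertibility argument do), then linearize around each true mode using the strictly negative-definite Hessian, and finally take the maximum over the finitely many mode pairs, with the uniqueness hypothesis supplying the bijection needed for both directions of the Hausdorff distance. Where you differ is in execution, and in both places your version does a bit more work than the paper: (i) the paper never proves existence of an estimated mode near each true mode---that is folded into the hypothesis ``each mode point is approximated by a unique estimated mode point''---whereas your Banach fixed-point step with $\Phi(\bm{x})=\bm{x}-\nabla\bm{g}(\bm{\mu})^{-1}\widehat{\bm{g}}(\bm{x})$ derives existence and local uniqueness from the uniform convergence itself, making the argument more self-contained; (ii) the paper uses a first-order Taylor expansion of $\widehat{\bm{g}}$ with an unquantified $o(\|\bm{\mu}_j-\widehat{\bm{\mu}}_j\|)$ remainder, whereas your exact integral mean-value matrix $\bar{\bm{J}}$ applied to the true $\bm{g}$ avoids that remainder and needs only continuity of $\nabla\bm{g}$, which is cleaner. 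One small imprecision to fix: the matrices $\nabla\bm{g}(\bm{\mu}+t(\widehat{\bm{\mu}}-\bm{\mu}))$ are close to $\nabla\bm{g}(\bm{\mu})$ only up to the modulus of continuity of the Hessian over the ball $B$ (not within $O_{\mathrm{P}}(\rho_n)$ unless the Hessian is Lipschitz), but this does not affect your conclusion, since the uniform negative-definiteness margin on $B$ already bounds $\|\bar{\bm{J}}^{-1}\|$ by $L$; and, as you correctly note, the reverse Hausdorff direction in both your proof and the paper's ultimately rests on the assumed bijection, because smallness of $\|\bm{g}(\widehat{\bm{\mu}})\|$ alone does not force proximity to $\mathcal{M}$.
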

   The proof can be seen in Appendix~\ref{app:proofModeconv}.
   \begin{remark*}
    \citet[Theorem~1]{chen2016comprehensive} established the following
    convergence rate based on KDE: With the asymptotically optimal
    bandwidth $h=O\left(n^{-\frac{1}{D+6}}\right)$,
    \begin{align}
     \text{Haus}(\widehat{\mathcal{M}}_{\text{KDE}},\mathcal{M})
     =O_{\mathrm{P}}\left(n^{-\frac{2}{D+6}}\right), \label{KDEModeconv}
    \end{align}
    where $\widehat{\mathcal{M}}_{\text{KDE}}$ denotes the set of mode
    points based on KDE. Eq.\eqref{KDEModeconv} shows that the
    convergence rate of
    $\text{Haus}(\widehat{\mathcal{M}}_{\text{KDE}},\mathcal{M})$
    depends on data dimension $D$, although direct comparison to our
    result is not straightforward due to the different assumptions in
    both analyses.
   \end{remark*}
   \subsection{Practical Implementation of LSLDGC}
   \label{LSLDGCpractice}
   Here, we describe details of practical implementation of LSLDGC.
   \begin{itemize}
    \item \emph{Sufficient conditions in Theorem~\ref{theo:LSLDGCconv}}:
	  The conditions, $\widehat{\alpha}_{j}^{(i)}=0$ and
	  $\widetilde{\beta}_{j}^{(i)}\left(=-\widehat{\beta}_{j}^{(i)}\right)\geq
	  0$ , ensure that
	  $\widehat{D}_{\widehat{\bm{g}}}[\bm{z}^{\tau+1}_k|\bm{z}^{\tau}_k]\geq0$.
	  Here, we set $\widehat{\alpha}_{j}^{(i)}=0$ for all $i$ and
	  $j$, and the coordinate-wise update rule~\eqref{CWupdate} is
	  simplified as
	  \begin{align*}
	   z_k^{(\tau+1,j)}=\frac{\sum_{i=1}^n
	   \widetilde{\beta}^{(i)}_{j}x_{i}^{(j)}
	   \varphi\left(\frac{\|\tilde{\bm{z}}_k^{\tau}-\bm{x}_i\|^2}{2\sigma_j^2}
	   \right)}
	   {\sum_{i=1}^n\widetilde{\beta}^{(i)}_{j}
	   \varphi\left(\frac{\|\tilde{\bm{z}}_k^{\tau}-\bm{x}_i\|^2}{2\sigma_j^2}
	   \right)}.
	  \end{align*}
	  The same simplification is applied to the update
	  rule~\eqref{updateelement} as well. This significantly reduces
	  the computational costs in LSDDR because $\alpha_j^{(i)}$ do
	  not need to be estimated.  On the other hand, to satisfy
	  $\widetilde{\beta}_{j}^{(i)}\geq 0$, we have to solve a
	  constrained optimization problem, which tends to be
	  time-consuming. Therefore, the unconstrained optimization
	  problem is solved as in Section~\ref{LSLDGCpractice}, but as a
	  remedy we perform gradient ascent whenever
	  $\widehat{D}_{\widehat{\bm{g}}}[\bm{z}^{\tau+1}_k|\bm{z}^{\tau}_k]<0$.
	  Details of the gradient ascent are given below.
	  
    \item \emph{Stability in the mode-seeking process}: The derivation
	  of~\eqref{updateelement} indicates that the mode-seeking
	  (hill-climbing) process in LSLDGC can be unstable when
	  $f_j(\bm{z}_k^{\tau}):=\sum_{i=1}^n\widetilde{\beta}_{j}^{(i)}
	  \varphi\left(\frac{\|\bm{z}_k^{\tau}-\bm{x}_i\|}{2\sigma_j^2}\right)$
	  is close to zero. To cope with this problem, we simply perform
	  gradient ascent when $f_j(\bm{z}_k^{\tau})$ is close to zero.
	  
    \item \emph{Gradient ascent}: Whenever
	  $\widehat{D}_{\widehat{\bm{g}}}[\bm{z}^{\tau+1}_k|\bm{z}^{\tau}_k]<0$
	  or $\exists j$, $f_j(\bm{z}_k^{\tau})\approx 0$, we perform
	  the following gradient ascent:
	  \begin{align}
	   \bm{z}_k^{\tau+1}
	   =\bm{z}_k^{\tau}+\eta\widehat{\bm{g}}(\bm{z}_k^{\tau}),
	   \label{eqn:grad-ascent}
	  \end{align}
	  where the step size parameter $\eta$ is selected so that
	  $\widehat{D}_{\widehat{\bm{g}}}[\bm{z}_k^{\tau}+\eta\widehat{\bm{g}}(\bm{z}_k^{\tau})|\bm{z}^{\tau}_k]$
	  is maximized.

    \item \emph{Choice of the kernel function}: Throughout the paper, we
	  use the Gaussian kernel:
	  \begin{align*}
	   k(\bm{x},\bm{x}_i)
	   =\phi\left(\frac{\|\bm{x}-\bm{x}_i\|^2} {2\sigma_j^2}\right)
	   =\exp\left(-\frac{\|\bm{x}-\bm{x}_i\|^2}{2\sigma_j^2}\right).
	  \end{align*}
	  The Gaussian kernel satisfies the conditions of $\phi$ in
	  Theorem~\ref{theo:LSLDGCconv}, and is a universal kernel
	  associated with which RKHS covers a wide range of
	  functions~\citep{micchelli2006universal}.	  

    \item \emph{Decreasing the computation costs}: After the
	  simplification above, LSDDR requires to compute the inverse of
	  a $2n$ by $2n$ matrix, which is computationally costly to
	  large $n$. To decrease the computation costs, we reduce the
	  number of center points as
	  $\phi\left(\frac{\|\bm{x}-\bm{c}_i\|^2}{2\sigma_j^2}\right)$
	  and
	  $\varphi\left(\frac{\|\bm{x}-\bm{c}_i\|^2}{2\sigma_j^2}\right)$
	  where $\{\bm{c}_i\}_{i=1}^b$ is a randomly chosen subset of
	  $\{\bm{x}_i\}_{i=1}^n$. As a result, the coefficients can be
	  represented as
	  $\widetilde{\bm{\beta}}_{j}=(\widetilde{\beta}_{j}^{(1)},\widetilde{\beta}_{j}^{(2)},\dots,\widetilde{\beta}_{j}^{(b)})^{\top}$.
	  Appendix~\ref{app:kernelcenter} shows that this significantly
	  decreases the computation cost without scarifying clustering
	  performance.  In this paper, we fix the number of centers at
	  $b=\min(n,100)$ as long as we do not specify it.
   \end{itemize}
   The mode-seeking algorithm in LSLDGC is summarized in
   Figs.\ref{alg:LSLDGC} and~\ref{alg:mode-seeking}.\footnote{A MATLAB
   package of LSLDGC is available at
   \url{https://sites.google.com/site/hworksites/home/software/lsldg}.}
   \begin{figure}[p]
   \centering \noindent\fbox{
    \begin{minipage}{\dimexpr\linewidth-2\fboxsep-2\fboxrule\relax}
     \begin{algorithmic}
      \State \textbf{Input:} $\{\vector{x}_i\}_{i=1}^n$. \\
      \State{$\left\{\{\widetilde{\bm{\beta}}_{j}\}_{j=1}^D, \{\bm{c}_i\}_{i=1}^b\right\}
      \leftarrow\text{LSDDR1}(\{\vector{x}_i\}_{i=1}^n)$};\\
      \For{$k=1$ to $n$}
      \State{$\tau\leftarrow 0$};
      \State{$\vector{z}^{\tau}_k\leftarrow \vector{x}_k$};
      \Repeat 
      \State{$\{\bm{z}_k^{\tau+1},\{f_j\}_{j=1}^D\}
      \leftarrow\text{ModeSeeking}
      (\{\widetilde{\bm{\beta}}_{j}\}_{j=1}^D,\{\bm{c}_i\}_{i=1}^b,
      \bm{z}_k^{\tau})$}
      \State{$\widehat{D}\leftarrow     
      \widehat{D}_{\widehat{\bm{g}}}[\bm{z}_k^{\tau+1}|\bm{z}_k^{\tau}]$};\\
      \If{$\widehat{D}<0$ or~$\exists j,~|f_j|\approx 0$}
      \State{$\bm{z}_k^{\tau+1} \leftarrow
      \bm{z}_k^{\tau}+\eta\widehat{\bm{g}}(\bm{z}_k^{\tau})$};
      \State{$\widehat{D}\leftarrow
      \widehat{D}_{\widehat{\bm{g}}}[\bm{z}_k^{\tau+1}|\bm{z}_k^{\tau}]$};
     \EndIf
     \State{$\tau\leftarrow\tau+1$};
      \Until
     \State{$\vector{z}_k\leftarrow\vector{z}^{\tau}_k$};
     \EndFor\\
     \State \textbf{Outputs:} $\{\vector{z}_i\}_{i=1}^n$.
     \end{algorithmic}
    \end{minipage}}
    \caption{\label{alg:LSLDGC} The mode-seeking algorithm in LSLDGC.
    $\text{LSDDR1}(\{\vector{x}_i\}_{i=1}^n)$ denotes the LSDDR
    estimator for the first-order density-derivative-ratios from data
    samples $\{\vector{x}_i\}_{i=1}^n$, and
    $\{\widetilde{\bm{\beta}}_{j}\}_{j=1}^D$ and $\{\bm{c}_i\}_{i=1}^b$
    are the coefficients and (sub-sampled) centers, respectively.
    $\text{ModeSeeking}
    (\{\widetilde{\bm{\beta}}_{j}\}_{j=1}^D,\{\bm{c}_i\}_{i=1}^b,
    \bm{z}_k^{\tau})$ is a single step mode-seeking process whose
    details are given in Fig.\ref{alg:mode-seeking}. The update of
    $\bm{z}_k^{\tau}$ terminates when either $\widehat{D}$ or
    $\|\bm{z}_k^{\tau+1}-\bm{z}_k^{\tau}\|$ is less than a small
    positive constant.}
    \centering \noindent\fbox{
    \begin{minipage}{0.465\textwidth}
     \begin{algorithmic}
      \State \textbf{Input:}
      $\{\widetilde{\bm{\beta}}_{j}\}_{j=1}^D,\{\bm{c}_i\}_{i=1}^b,
      \bm{z}_k^{\tau}$\\
      \State{$\bm{z}_k^{\tau+1}\leftarrow\bm{z}_k^{\tau}
      +\widehat{\bm{m}}(\bm{z}_k^{\tau})$};\\
      \State{$\left\{f_j\right\}_{j=1}^D\leftarrow
      \left\{\widetilde{\bm{\beta}}_j^{\top}\bm{\varphi}_j(\bm{z}_k^{\tau})\right\}_{j=1}^D$};\\
      \State \textbf{Outputs:} $\vector{z}_k^{\tau+1},
      \left\{f_j\right\}_{j=1}^D$.
     \end{algorithmic}
    \end{minipage}
    } 
    \centering \noindent\fbox{
     \begin{minipage}{0.465\textwidth}
     \begin{algorithmic}
      \State \textbf{Input:}
      $\{\widetilde{\bm{\beta}}_{j}\}_{j=1}^D,\{\vector{c}_i\}_{i=1}^b,      
      \bm{z}_k^{\tau}$
      \State{$\tilde{\bm{z}}\leftarrow\bm{z}_k^{\tau}$};
      \For{$j\in\{1,\dots,D\}$} 
      \State{$\tilde{z}^{(j)}\leftarrow z_k^{(\tau,j)}
      +m^{(j)}(\tilde{\bm{z}})$};
      \State{$f_j\leftarrow\widetilde{\bm{\beta}}_j^{\top}\bm{\varphi}_j(\tilde{\bm{z}})$};
      \EndFor
      \State{$\bm{z}_k^{\tau+1}\leftarrow\tilde{\bm{z}}$};
      \State \textbf{Outputs:} $\vector{z}_k^{\tau+1},
      \left\{f_j\right\}_{j=1}^D$.
     \end{algorithmic}
     \end{minipage}
    } \caption{\label{alg:mode-seeking} Two mode-seeking algorithms in
    LSLDGC. The left figure uses the update rule~\eqref{updateelement},
    while the right one is based on the coordinate-wise update
    rule~\eqref{CWupdate}.
    $\bm{\varphi}_j(\bm{z})=(\varphi_j^{(1)}(\bm{z}),\varphi_j^{(2)}(\bm{z}),
    \dots,\varphi_j^{(b)}(\bm{z}))^{\top}$ where
    $\varphi_j^{(i)}(\bm{z})=\varphi\left(\frac{\|\bm{z}-\bm{c}_i\|^2}{2\sigma_j^2}\right)$.}
     \end{figure}
 \section{Application to Density Ridge Estimation}
 \label{sec:ridge}
 This section applies LSDDR to density ridge estimation and develops a
 novel method.
  \subsection{Problem Formulation for Density Ridge Estimation}
  \label{ssec:problem-ridge}
  For a positive integer $d$ such that $d<D$, the goal is to estimate
  from a collection of data samples $\mathcal{D}=\{\bm{x}_i\}_{i=1}^n$
  the $d$-dimensional \emph{density ridge}, which is defined as a
  collection of points satisfying
  \begin{align}
   \mathcal{R}&:=\{\bm{x}\in\R{D}~|~\|\V(\bm{x})\V(\bm{x})^{\top}
   \bm{g}(\bm{x})\|=0, \eta_{d+1}(\bm{x})<0\},\label{ridge}
  \end{align}
  where $\bm{g}(\bm{x})=\nabla\log p(\bm{x})$,
  $\V(\bm{x})=(\bm{v}_{d+1},\dots,\bm{v}_{D})$, and $\bm{v}_{i}$ is the
  eigenvector associated with the eigenvalue $\eta_i(\bm{x})$ of the
  Hessian matrix of the logarithm of the probability density function,
  $\nabla\nabla \log p(\bm{x})$. We assume that the eigenvalues are
  sorted in descending order such that
  $\eta_1(\bm{x})\geq\eta_2(\bm{x})\geq\dots\geq\eta_D(\bm{x})$.

  Here, we defined the density ridge in terms of the logarithm of the
  probability density function because our practical algorithm is
  proposed based on the logarithm. While the density ridge has been
  previously defined without the
  logarithm~\citep{eberly1996ridges,ozertem2011locally,genovese2014nonparametric,chen2015asymptotic},
  both definitions offer the same density ridge.
  \subsection{Brief Review of Subspace Constrained Mean Shift}
  \label{ssec:SCMS}
  A practical algorithm for density ridge estimation called
  \emph{subspace constrained mean shift} (SCMS) was proposed
  by~\citet{ozertem2011locally}. SCMS extends MS: SCMS performs
  projected gradient ascent on the subspace orthogonal to the density
  ridge, while MS updates data points by gradient ascent. SCMS obtains
  such a subspace as the span of the eigenvectors of the negative
  Hessian matrix of the log-density, which is called the \emph{inverse
  local-covariance matrix}~\citep{ozertem2011locally}:
  \begin{align}
   \Sig^{-1}(\bm{x})&:= -\nabla\nabla \log p(\bm{x})
   =-\frac{\nabla\nabla p(\bm{x})}{p(\bm{x})} +\frac{\nabla p(\bm{x})\nabla
   p(\bm{x})^\top}{p(\bm{x})^2} 
   =-\bm{H}(\bm{x})+\bm{g}(\bm{x})\bm{g}(\bm{x})^{\top}.\label{local}
  \end{align}
  An advantage of employing the log-density is discussed in the context
  of manifold estimation in~\citet{genovese2014nonparametric}: Theorem~7
  in~\citet{genovese2014nonparametric} states that when $D$-dimensional
  data is assumed to be generated on a $d$-dimensional manifold with
  $D$-dimensional Gaussian noise, the density ridge is close to the
  lower-dimensional manifold in the sense of the Hausdorff distance, and
  thus can be a surrogate for the manifold. This surrogate property
  holds in an $O(1)$ neighborhood of the manifold for the log-density,
  while the theorem holds in an $O(\sigma_n)$ neighborhood of the
  manifold for the (non-log) density, where $\sigma_n$ is the standard
  deviation of the Gaussian noise.  Furthermore, when $p(\bm{x})$ is
  Gaussian, \eqref{local} reduces to the inverse of the covariance
  matrix. This allows us to intuitively understand that SCMS finds the
  subspace by PCA to the non-stationary covariance matrix at a location
  $\bm{x}$ around the ridge.

  In practice, SCMS substitutes $\widehat{p}_{\KDE}(\bm{x})$
  into~\eqref{local}:
  \begin{align*}
   \widehat{\Sig}^{-1}_{\KDE}(\bm{x})&:= 
   -\frac{\nabla\nabla \widehat{p}_{\KDE}(\bm{x})}{\widehat{p}_{\KDE}(\bm{x})} 
   +\frac{\nabla \widehat{p}_{\KDE}(\bm{x})\nabla\widehat{p}_{\KDE}(\bm{x})^\top}
   {\widehat{p}_{\KDE}(\bm{x})^2}.
  \end{align*}
  Then, SCMS obtains the orthogonal projector to the subspace as
  $\widehat{\bm{L}}_{\KDE}(\bm{x})
  =\widehat{\V}_{\KDE}(\bm{x})\widehat{\V}_{\KDE}(\bm{x})^{\top}$, where
  $\widehat{\V}_{\KDE}(\bm{x})\in\R{D\times(D-d)}$ consists of the $D-d$
  eigenvectors associated with the $D-d$ largest eigenvalues of
  $\widehat{\Sig}^{-1}_{\KDE}(\bm{x})$. Then, the update rule of SCMS is
  given by
  \begin{align}
   \bm{z}^{\tau+1}=\bm{z}^{\tau}
   +\widehat{\bm{L}}_{\KDE}(\bm{z}^{\tau})
   \widehat{\bm{m}}_{\KDE}(\bm{z}^{\tau}),
   \label{projeted-gradient-SCMS}
  \end{align}
  where $\bm{z}^{\tau}$ denotes the $\tau$-th update of an arbitrarily
  initialized point and $\widehat{\bm{m}}_{\KDE}(\bm{x})$ is the mean
  shift vector defined
  in~\eqref{mean-shift}. Eq.\eqref{projeted-gradient-SCMS} is repeatedly
  applied until convergence. The monotonic hill-climbing property for
  SCMS is proved in~\citet{ghassabeh2013some}.

  One of the key challenges in SCMS is to accurately estimate
  $\Sig^{-1}(\bm{x})$ in~\eqref{local}. SCMS takes a three-step
  approach, i.e., estimate $p(\bm{x})$ by KDE, compute its derivatives,
  and plug them into $\Sig^{-1}(\bm{x})$. However, this approach can
  perform poorly because of the same reason as MS, i.e., a good density
  estimator does not necessarily mean a good density derivative
  estimator. In addition, division by the estimated density could
  further magnify the estimation error for density derivatives. To cope
  with this problem, we employ LSDDR for direct estimation of
  density-derivative-ratios in $\Sig^{-1}(\bm{x})$ without going through
  density estimation and division, and propose a novel method for
  density ridge estimation.
  \subsection{Least-Squares Density Ridge Finder}
  \label{ssec:LSDRF}
  Based on LSDDR, we develop a novel density ridge finder called the
  \emph{least-squares density ridge finder} (LSDRF), which extends
  LSLDGC for density ridge estimation.
  \subsubsection{Algorithm of LSDRF}
   \begin{figure}[!t]
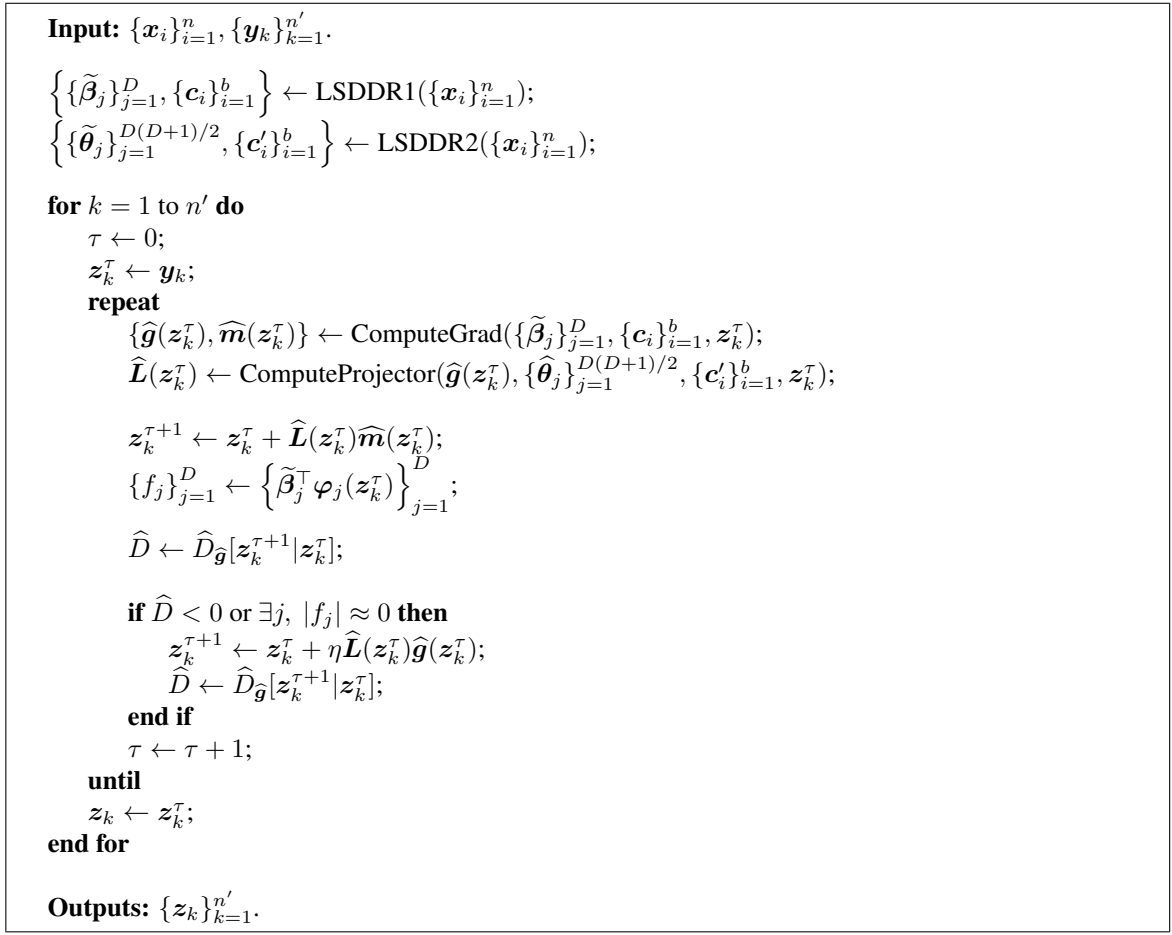

    \centering \noindent\fbox{
    \begin{minipage}{\dimexpr\linewidth-2\fboxsep-2\fboxrule\relax}
     \begin{algorithmic}
      \State \textbf{Input:} $\{\vector{x}_i\}_{i=1}^n,
      \{\vector{y}_k\}_{k=1}^{n'}$. \\
      \State{$\left\{\{\widetilde{\bm{\beta}}_{j}\}_{j=1}^D,
      \{\bm{c}_i\}_{i=1}^b\right\}
      \leftarrow\text{LSDDR1}(\{\vector{x}_i\}_{i=1}^n)$};
      \State{$\left\{\{\widetilde{\bm{\theta}}_{j}\}_{j=1}^{D(D+1)/2},
      \{\bm{c}_i'\}_{i=1}^b\right\}
      \leftarrow\text{LSDDR2}(\{\vector{x}_i\}_{i=1}^n)$};\\
      \For{$k=1$ to $n'$}
      \State{$\tau\leftarrow 0$};
      \State{$\vector{z}^{\tau}_k\leftarrow \vector{y}_k$};
      \Repeat
      \State{$\left\{\widehat{\bm{g}}(\bm{z}^{\tau}_k),\widehat{\bm{m}}(\bm{z}^{\tau}_k)\right\}
      \leftarrow\text{ComputeGrad}
      (\{\widetilde{\bm{\beta}}_{j}\}_{j=1}^D,\{\vector{c}_i\}_{i=1}^b,
      \bm{z}_k^{\tau})$};
      \State{$\widehat{\bm{L}}(\bm{z}^{\tau}_k)\leftarrow\text{ComputeProjector}
      (\widehat{\bm{g}}(\bm{z}^{\tau}_k),
      \{\widehat{\bm{\theta}}_{j}\}_{j=1}^{D(D+1)/2},\{\vector{c}_i'\}_{i=1}^b,
      \bm{z}_k^{\tau})$};\\
      \State{$\bm{z}_k^{\tau+1} \leftarrow \bm{z}^{\tau}_k
      +\widehat{\bm{L}}(\bm{z}_k^{\tau})\widehat{\bm{m}}(\bm{z}_k^{\tau})$;}
      \State{$\left\{f_j\right\}_{j=1}^D\leftarrow
      \left\{\widetilde{\bm{\beta}}_{j}^{\top}\bm{\varphi}_j(\bm{z}_k^{\tau})\right\}_{j=1}^D$;}\\
      \State{$\widehat{D}\leftarrow
      \widehat{D}_{\widehat{\bm{g}}}[\bm{z}_k^{\tau+1}|\bm{z}_k^{\tau}]$;}\\
      \If{$\widehat{D}<0$ or~$\exists j,~|f_j|\approx 0$}
      \State{$\bm{z}_k^{\tau+1} \leftarrow \bm{z}^{\tau}_k
      +\eta\widehat{\bm{L}}(\bm{z}_k^{\tau})\widehat{\bm{g}}(\bm{z}_k^{\tau})$};
      \State{$\widehat{D}\leftarrow
      \widehat{D}_{\widehat{\bm{g}}}[\bm{z}_k^{\tau+1}|\bm{z}_k^{\tau}]$};
      \EndIf
      \State{$\tau\leftarrow\tau+1$};
      \Until
      \State{$\vector{z}_k\leftarrow\vector{z}^{\tau}_k$};
      \EndFor\\
      \State \textbf{Outputs:} $\{\vector{z}_k\}_{k=1}^{n'}$.
     \end{algorithmic}
    \end{minipage}}
    \caption{\label{alg:LSDRF} The algorithm of LSDRF.
    $\text{LSDDR2}(\{\vector{x}_i\}_{i=1}^n)$ denotes the LSDDR
    estimator for the second-order density-derivative-ratios, and
    $\{\widehat{\bm{\theta}}_{j}\}_{j=1}^{D(D+1)/2}$ are the
    corresponding coefficient vectors. $\{\vector{y}_k\}_{k=1}^{n'}$ are
    initial points to approximate the density ridge.
    $\text{ComputeGrad}(\{\widetilde{\bm{\beta}}_{j}\}_{j=1}^D,\{\vector{c}_i\}_{i=1}^b,
    \bm{z}_k^{\tau})$ computes the estimated log-density gradient
    $\widehat{\bm{g}}(\bm{z}_k^{\tau})$ and
    $\widehat{\bm{m}}(\bm{z}_k^{\tau})$ in~\eqref{LS-mean-shift}, while
    $\text{ComputeProjector}(\widehat{\bm{g}}(\bm{z}^{\tau}_k),
    \{\widehat{\bm{\theta}}_{j}\}_{j=1}^{D(D+1)/2},\{\vector{c}_i'\}_{i=1}^b,
    \bm{z}_k^{\tau})$ computes the subspace projector
    $\widehat{\bm{L}}(\bm{z}^{\tau}_k)$.  The update of
    $\bm{z}_k^{\tau}$ terminates when either $\widehat{D}$ or
    $\|\bm{z}_k^{\tau+1}-\bm{z}_k^{\tau}\|$ is less than a small
    positive constant.  The other notations follow Figs.\ref{alg:LSLDGC}
    and~\ref{alg:mode-seeking}.}
   \end{figure}
  
  The algorithm of LSDRF essentially follows the same line as SCMS,
  which performs projected gradient ascent. By employing LSDDR, we
  obtain an estimate of $\Sig^{-1}(\bm{x})$ as
  \begin{align}
   \widehat{\bm{\Sig}}^{-1}(\bm{x}):=-\widehat{\bm{H}}(\bm{x})
  +\widehat{\bm{g}}(\bm{x})\widehat{\bm{g}}^{\top}(\bm{x}),
  \end{align}
  where we recall that $\widehat{g}_j(\bm{x})$ and
  $[\widehat{\bm{H}}(\bm{x})]_{ij}$ are LSDDR to $\partial_j
  p(\bm{x})/p(\bm{x})$ and $\partial_i \partial_j p(\bm{x})/p(\bm{x})$,
  respectively. Then, we obtain the orthogonal projector to the subspace
  as $\widehat{\bm{L}}(\bm{x})
  =\widehat{\V}(\bm{x})\widehat{\V}^{\top}(\bm{x})$ where
  $\widehat{\V}(\bm{x})$ consists of the $D-d$ eigenvectors associated
  with the $D-d$ largest eigenvalues of
  $\widehat{\bm{\Sig}}^{-1}(\bm{x})$. By replacing
  $\widehat{\bm{L}}_{\KDE}(\bm{x})$ and
  $\widehat{\bm{m}}_{\KDE}(\bm{x})$ in~\eqref{projeted-gradient-SCMS}
  with $\widehat{\bm{L}}(\bm{x})$ and $\widehat{\bm{m}}(\bm{x})$
  respectively, the following update rule for LSDRF is obtained by
  \begin{align}
   \bm{z}^{\tau+1}=\bm{z}^{\tau}
   +\widehat{\bm{L}}(\bm{z}^{\tau})\widehat{\bm{m}}(\bm{z}^{\tau}),
   \label{eqn:LSDRFupdate}
  \end{align}
  where $\widehat{\bm{m}}(\bm{x})
  =(\widehat{m}^{(1)}(\bm{x}),\widehat{m}^{(2)}(\bm{x}),\dots,\widehat{m}^{(D)}(\bm{x}))$
  is used in LSLDGC for mode-seeking whose definition is given
  in~\eqref{LS-mean-shift}.

  The implementation techniques of LSLDGC in
  Section~\ref{LSLDGCpractice} are inherited, but LSDRF performs
  projected gradient ascent instead of the gradient ascent: Whenever
  $\widehat{D}_{\widehat{\bm{g}}}[\bm{z}^{\tau+1}|\bm{z}^{\tau}]<0$ or
  $\exists j, f_j(\bm{z}^{(\tau)})\approx 0$, we perform the projected
  gradient ascent as
  \begin{align}
   \bm{z}^{\tau+1}=\bm{z}^{\tau}
   +\eta\widehat{\bm{L}}(\bm{z}^{\tau})\widehat{\bm{g}}(\bm{z}^{\tau}).
   \label{eqn:proj-grad-ascent}
  \end{align}
  The step size parameter $\eta$ is selected so that
  $\widehat{D}_{\widehat{\bm{g}}}[\bm{z}^{\tau}
  +\eta\widehat{\bm{L}}(\bm{z}^{\tau})\widehat{\bm{g}}(\bm{z}^{\tau})
  |\bm{z}^{\tau}]$ is maximized. The algorithm of LSDRF is summarized in
  Fig.\ref{alg:LSDRF}.\footnote{A MATLAB package of LSDRF is available
  at
  \url{https://sites.google.com/site/hworksites/home/software/lsdrf}.}
  The algorithm is essentially the same as LSLDGC based on the update
  rule~\eqref{updateelement} (Figs.~\ref{alg:LSLDGC}
  and~\ref{alg:mode-seeking}), where we only replace~\eqref{LS-grad}
  and~\eqref{eqn:grad-ascent} in LSLDGC with~\eqref{eqn:LSDRFupdate}
  and~\eqref{eqn:proj-grad-ascent} in LSDRF, respectively. Unlike
  clustering, for density ridge estimation, the starting points
  $\bm{z}^{\tau=0}$ are arbitrary, but in this paper, we set them at
  data samples $\bm{x}_i$ because data samples are fairly good starting
  points.
  \subsubsection{The Convergence Rate to the True Ridge}
  \label{ssec:ridgeconv}
  Here, we establish the convergence rate to understand how the
  estimated ridge approaches to the true ridge as $n$ increases. Based
  on LSDDR, the estimated ridge is defined as
  \begin{align*}
   \widehat{\mathcal{R}}
   &:=\{\bm{x}\in\R{D}~|~\|\widehat{\V}(\bm{x})\widehat{\V}(\bm{x})^{\top}
   \widehat{\bm{g}}(\bm{x})\|=0, \widehat{\eta}_{d+1}(\bm{x})<0\},
  \end{align*}
  where $\widehat{\eta}_{i}(\bm{x})$ denotes the $i$-th largest
  eigenvalue of $-\widehat{\Sig}^{-1}(\bm{x})$.

  In our analysis, we make the following assumptions:
  \begin{enumerate}[(A0)]   
   \item[(A0)] Kernel boundedness: $k(\bm{x},\bm{x}^{\prime})$ and
	       $\partial_j\partial_j^{\prime} k(\bm{x},\bm{x}^{\prime})$
	       for all $j$ are uniformly bounded, where
	       $\partial_j^{\prime}$ denotes the partial derivative with
	       respect to the $j$-th coordinate in $\bm{x}^{\prime}$.
	      
   \item[(A1)] Differentiability and boundedness: Let
	       $B_D(\bm{x},\delta)$ be the $D$-dimensional ball of
	       radius $\delta>0$ centered at $\bm{x}$ and let
	       $\mathcal{R}\oplus\delta:=
	       \cup_{\bm{x}\in\mathcal{R}}B_D(\bm{x},\delta)$. For all
	       $\bm{x}\in \mathcal{R}\oplus\delta$, the $|\bm{j}|$-th
	       order derivatives of $\log p(\bm{x})$ for
	       $|\bm{j}|=0,1,2,3$ exist and are bounded.
	       
   \item[(A2)] Eigengap: Assume that there exists $\kappa>0$ and
	       $\delta$ such that for all
	       $\bm{x}\in\mathcal{R}\oplus\delta$,
	       $\eta_{d+1}(\bm{x})<-\kappa$ and
	       $\eta_{d}(\bm{x})-\eta_{d+1}(\bm{x})>\kappa$, where
	       $\eta_{i}(\bm{x})$ denotes the $i$-th eigenvalue of
	       $\nabla\nabla \log p(\bm{x})$.

   \item[(A3)] Path smoothness: For each $\bm{x}\in\mathcal{R}\oplus\delta$,
	       \begin{align*}
		\|\bm{L}^{\perp}(\bm{x})\bm{g}(\bm{x})\|
		\cdot\|\bm{\Sig}^{-1\prime}(\bm{x})\|_{\max}
		<\frac{\kappa^2}{2D^{3/2}},
	       \end{align*}
	       where
	       $\bm{L}^{\perp}(\bm{x}):=\I_D-\bm{V}(\bm{x})\bm{V}(\bm{x})^{\top}$,
	       $\bm{\Sig}^{-1\prime}(\bm{x})
	       :=\nabla\vecope\left(\bm{\Sig}^{-1}(\bm{x})\right)$,
	       $\vecope(\cdot)$ denotes vectorization of matrices by
	       concatenating the columns, and
	       $\|\bm{A}\|_{\max}:=\max_{i,j}|[\bm{A}]_{ij}|$.  The
	       $(i,j)$-th element in
	       $\nabla\vecope\left(\bm{\Sig}^{-1}(\bm{x})\right)(\in\R{D^2\times
	       D})$ is given by
	       $\partial_j[\vecope\left(\bm{\Sig}^{-1}(\bm{x})\right)]_i$.
  \end{enumerate}
  Assumptions~(A2) and~(A3) are a straightforward modification of the
  assumptions in~\citet{genovese2014nonparametric} from the (non-log)
  density to the log-density. Assumption~(A2) indicates that the density
  ridge has a sharp and curvilinear shape in the subspace orthogonal to
  the ridge. Assumption (A3) indicates that
  $\|\bm{L}^{\perp}(\bm{x})\bm{g}(\bm{x})\|$ and
  $\|\bm{\Sig}^{-1\prime}(\bm{x})\|_{\max}$ are both bounded. Since
  $\bm{L}^{\perp}(\bm{x})$ is orthogonal to
  $\bm{V}(\bm{x})\bm{V}(\bm{x})^{\top}$ for all $\bm{x}$, the
  boundedness of $\|\bm{L}^{\perp}(\bm{x})\bm{g}(\bm{x})\|$ implies that
  the gradient $\bm{g}(\bm{x})$ is not too steep in the orthogonal
  subspace. The boundedness of $\|\bm{\Sig}^{-1\prime}(\bm{x})\|_{\max}$
  means that the third-order derivative is bounded and thus the subspace
  direction does not abruptly change, which implies that the (projected)
  gradient ascent path cannot be too
  wiggly~\citep[Section~2.2]{genovese2014nonparametric}. Note that
  Assumptions (A1)-(A3) are only valid in the neighborhood around the
  ridge.
  
  Let
  \begin{align*}
   \epsilon^{\prime}&:=\max_j\|g_j(\bm{x})-\widehat{g}_j(\bm{x})\|_{\infty},
   \qquad \epsilon^{\prime\prime}:=\max_{ij}\|
   [\bm{\Sig}^{-1}(\bm{x})]_{ij}
   -[\widehat{\bm{\Sig}}^{-1}(\bm{x})]_{ij}\|_{\infty},\\
   \epsilon^{\prime\prime\prime}&:=\max_{ij}\|
   [\bm{\Sig}^{-1\prime}(\bm{x})]_{ij}
   -[\widehat{\bm{\Sig}}^{-1\prime}(\bm{x})]_{ij}\|_{\infty}.
  \end{align*}
  To establish the convergence rate, we rely on two lemmas.  The first
  lemma is a simple modification of Theorem~4
  in~\citet{genovese2014nonparametric} to the log-density from the
  (non-log) density, and we use it without proof. The lemma states that
  if $\epsilon^{\prime}$, $\epsilon^{\prime\prime}$ and
  $\epsilon^{\prime\prime\prime}$ are sufficiently small, then the true
  and estimated ridges are close to each other:
  \begin{lemma}
   \label{Genovese} Suppose that (A1)-(A3) hold. Let
   $\psi:=\max\{\epsilon^{\prime}, \epsilon^{\prime\prime}\}$ and
   $\Psi:=\max\{\epsilon^{\prime},
   \epsilon^{\prime\prime},\epsilon^{\prime\prime\prime}\}$. When $\Psi$
  is sufficiently small, the following statements hold:
   \begin{enumerate}[(i)]
    \item[(i)] Conditions (A2) and (A3) hold for $\widehat{\bm{g}}$,
	       $\widehat{\bm{\Sig}}^{-1}$ and
	       $\widehat{\bm{\Sig}}^{-1\prime}$.

    \item[(ii)] $\Haus(\mathcal{R},\widehat{\mathcal{R}})$ is bounded
		as
		\begin{align}
		 \Haus(\mathcal{R},\widehat{\mathcal{R}})
		 = O(\psi). \label{Haus-dis-ridge}
		\end{align}
   \end{enumerate}   
  \end{lemma}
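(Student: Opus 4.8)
The plan is to adapt the proof of Theorem~4 in \citet{genovese2014nonparametric} by replacing the (non-log) density with the log-density throughout, so that the ridge is characterized via the projected log-density gradient rather than the projected density gradient. Write $\bm{G}(\bm{x}):=\bm{L}(\bm{x})\bm{g}(\bm{x})$ with $\bm{L}(\bm{x})=\bm{V}(\bm{x})\bm{V}(\bm{x})^{\top}$, and $\widehat{\bm{G}}(\bm{x}):=\widehat{\bm{L}}(\bm{x})\widehat{\bm{g}}(\bm{x})$; then $\mathcal{R}$ and $\widehat{\mathcal{R}}$ are the zero sets of $\bm{G}$ and $\widehat{\bm{G}}$, intersected with the respective eigenvalue conditions. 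The whole argument reduces to two perturbation-theoretic facts: that the estimated quantities inherit the regularity (A2)--(A3), and that the zero set of $\widehat{\bm{G}}$ lies within $O(\psi)$ of that of $\bm{G}$.

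For part~(i) I would exploit that (A2) and (A3) are \emph{open} conditions, hence stable under small perturbations. By Weyl's inequality, $|\widehat{\eta}_i(\bm{x})-\eta_i(\bm{x})|\le\|\widehat{\bm{\Sig}}^{-1}(\bm{x})-\bm{\Sig}^{-1}(\bm{x})\|_{\mathrm{op}}\le C_1\epsilon^{\prime\prime}$ uniformly on $\mathcal{R}\oplus\delta$, so for $\Psi$ small the strict inequalities $\eta_{d+1}<-\kappa$ and $\eta_d-\eta_{d+1}>\kappa$ survive with $\kappa$ shrunk to, say, $\kappa/2$, giving (A2) for $\widehat{\bm{\Sig}}^{-1}$. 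For (A3), the Davis--Kahan theorem together with the eigengap from (A2) yields $\|\widehat{\bm{L}}^{\perp}(\bm{x})-\bm{L}^{\perp}(\bm{x})\|\le C_2\epsilon^{\prime\prime}/\kappa$, while $\|\widehat{\bm{g}}-\bm{g}\|_{\infty}\le\sqrt{D}\,\epsilon^{\prime}$ and $\|\widehat{\bm{\Sig}}^{-1\prime}-\bm{\Sig}^{-1\prime}\|_{\max}\le\epsilon^{\prime\prime\prime}$; multiplying these out shows the product $\|\widehat{\bm{L}}^{\perp}\widehat{\bm{g}}\|\cdot\|\widehat{\bm{\Sig}}^{-1\prime}\|_{\max}$ differs from its true counterpart by $O(\Psi)$, and since the true product lies strictly below $\kappa^2/(2D^{3/2})$, the perturbed inequality holds for $\Psi$ small.

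For part~(ii) the key estimate is a uniform bound on the field perturbation. Decomposing
\begin{align*}
 \widehat{\bm{G}}(\bm{x})-\bm{G}(\bm{x})
 =\bigl(\widehat{\bm{L}}(\bm{x})-\bm{L}(\bm{x})\bigr)\bm{g}(\bm{x})
 +\widehat{\bm{L}}(\bm{x})\bigl(\widehat{\bm{g}}(\bm{x})-\bm{g}(\bm{x})\bigr),
\end{align*}
the second term is at most $\sqrt{D}\,\epsilon^{\prime}$ since $\widehat{\bm{L}}$ is a projector, and the first is $O(\epsilon^{\prime\prime})$ by the Davis--Kahan bound above and the boundedness of $\|\bm{g}\|$ on $\mathcal{R}\oplus\delta$ from (A1). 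Hence $\sup_{\bm{x}\in\mathcal{R}\oplus\delta}\|\widehat{\bm{G}}(\bm{x})-\bm{G}(\bm{x})\|=O(\psi)$. The conclusion then follows from a quantitative implicit-function argument, exactly as in \citet{genovese2014nonparametric}: (A2)--(A3) guarantee that the restriction of $\bm{G}$ to the normal subspace has a derivative whose smallest singular value is bounded below by a constant multiple of $\kappa$, so $\mathcal{R}$ is locally a graph over the tangent directions on which $\bm{G}$ is non-degenerate; pushing the $O(\psi)$ field perturbation through this lower bound moves each zero by $O(\psi)$. Applying this in both directions (using part~(i) so that $\widehat{\mathcal{R}}$ is equally well-behaved) bounds both halves of the Hausdorff distance, yielding $\Haus(\mathcal{R},\widehat{\mathcal{R}})=O(\psi)$.

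The main obstacle I anticipate is the uniform control of the projector perturbation $\widehat{\bm{L}}-\bm{L}$ over the whole tube $\mathcal{R}\oplus\delta$: Davis--Kahan is pointwise, so I must invoke the eigengap of (A2) uniformly (which is precisely why (A2) is stated on a neighborhood rather than only on $\mathcal{R}$) and then verify that the resulting bound feeds correctly into the implicit-function step, whose non-degeneracy constant is the delicate $\kappa^{2}/(2D^{3/2})$ threshold in (A3). The transcription from $p$ to $\log p$ also introduces extra chain-rule terms, most notably the $\bm{g}\bm{g}^{\top}$ correction in $\bm{\Sig}^{-1}$ and its derivative, so I would need to check that these are themselves controlled by $\epsilon^{\prime},\epsilon^{\prime\prime},\epsilon^{\prime\prime\prime}$; this bookkeeping is routine given (A1) but is the one place where the ``simple modification'' of \citet{genovese2014nonparametric} genuinely requires care.
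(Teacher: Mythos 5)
The paper offers no proof of this lemma at all: it is stated as ``a simple modification of Theorem~4 in \citet{genovese2014nonparametric} to the log-density from the (non-log) density'' and is explicitly used without proof. Your sketch is a correct reconstruction of exactly that adaptation --- Weyl/Davis--Kahan stability of (A2)--(A3) for the estimated quantities, the uniform bound on the projected-gradient field perturbation via the decomposition $\widehat{\bm{L}}\widehat{\bm{g}}-\bm{L}\bm{g}=(\widehat{\bm{L}}-\bm{L})\bm{g}+\widehat{\bm{L}}(\widehat{\bm{g}}-\bm{g})$, and the quantitative non-degeneracy step that moves each zero by $O(\psi)$ --- so it matches, and indeed fills in, the argument the paper delegates entirely to the citation (your only slightly misplaced concern is the $\bm{g}\bm{g}^{\top}$ chain-rule bookkeeping, which in this paper is absorbed into the definitions of $\epsilon^{\prime\prime},\epsilon^{\prime\prime\prime}$ and handled in Lemma~\ref{Epsiconv}, not here).
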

  The next lemma characterizes the convergence rates of
  $\epsilon^{\prime}$, $\epsilon^{\prime\prime}$ and
  $\epsilon^{\prime\prime\prime}$ when we employ LSDDR:
  \begin{lemma}
   \label{Epsiconv} Suppose that the assumptions in
   Theorem~\ref{theo:Hconvrate} and (A0) hold. When LSDDR is applied for
   density-derivative-ratio estimation,
   \begin{align}
    \epsilon^{\prime}&=O_{\mathrm{P}}\left(n^{-\min\left\{\frac{1}{4},
    \frac{\gamma}{2(\gamma+1)}\right\}}\right),\label{epsi'}\\
    \epsilon^{\prime\prime}&=O_{\mathrm{P}}\left(n^{-\min\left\{\frac{1}{4},
    \frac{\gamma}{2(\gamma+1)}\right\}}\right),\label{epsi''}\\
    \epsilon^{\prime\prime\prime}&=O_{\mathrm{P}}\left(n^{-\min\left\{\frac{1}{4},
    \frac{\gamma}{2(\gamma+1)}\right\}}\right)\label{epsi'''}.
   \end{align}
  \end{lemma}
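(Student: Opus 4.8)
The plan is to convert the RKHS-norm consistency of Theorem~\ref{theo:Hconvrate} into uniform ($L^\infty$) consistency of the estimated derivative-ratios \emph{and their first derivatives}, and then to propagate those uniform bounds through the algebraic definitions of $\bm{\Sig}^{-1}$ and $\bm{\Sig}^{-1\prime}$. The single tool underlying everything is the reproducing property together with its derivative analogue~\citep[Theorem~1]{zhou2008derivative}: for $f\in\rkhs$ and any coordinate $k$,
\begin{align*}
 f(\x)=\<f,k(\cdot,\x)\>_{\rkhs},\qquad
 \partial_k f(\x)=\<f,\partial_k' k(\cdot,\x)\>_{\rkhs}.
\end{align*}
Cauchy--Schwarz then gives $\|f\|_\infty\le\|f\|_{\rkhs}\sup_{\x}\sqrt{k(\x,\x)}$ and $\|\partial_k f\|_\infty\le\|f\|_{\rkhs}\sup_{\x}\sqrt{\partial_k\partial_k' k(\x,\x')|_{\x'=\x}}$, and both suprema are finite by Assumption~(A0). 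Applying the first bound with $f=\widehat g_j-g_j$ (a first-order ratio, so $g_j=r^*_{\bm j}\in\rkhs$ under Theorem~\ref{theo:Hconvrate}) immediately yields
\begin{align*}
 \epsilon'=\max_j\|g_j-\widehat g_j\|_\infty
 =O_{\mathrm P}\!\left(\max_j\|g_j-\widehat g_j\|_{\rkhs}\right)
 =O_{\mathrm P}\!\left(n^{-\min\{1/4,\,\gamma/(2(\gamma+1))\}}\right),
\end{align*}
which is~\eqref{epsi'}. The derivative bound gives the same rate for each $\partial_k\widehat g_j-\partial_k g_j$, and applying Theorem~\ref{theo:Hconvrate} at $|\bm j|=2$ gives it for $[\widehat{\bm H}]_{lm}-[\bm H]_{lm}$ and its first derivatives; write $\rho_n:=n^{-\min\{1/4,\,\gamma/(2(\gamma+1))\}}$ for this common rate.

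For $\epsilon''$ I would use $[\bm{\Sig}^{-1}]_{lm}=-[\bm H]_{lm}+g_l g_m$ and the matching estimator $[\widehat{\bm{\Sig}}^{-1}]_{lm}=-[\widehat{\bm H}]_{lm}+\widehat g_l\widehat g_m$, and split
\begin{align*}
 [\bm{\Sig}^{-1}]_{lm}-[\widehat{\bm{\Sig}}^{-1}]_{lm}
 =-\bigl([\bm H]_{lm}-[\widehat{\bm H}]_{lm}\bigr)
  +g_l(g_m-\widehat g_m)+\widehat g_m(g_l-\widehat g_l).
\end{align*}
The first term is $O_{\mathrm P}(\rho_n)$ by the paragraph above. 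For the products, Assumption~(A1) bounds $\|g_l\|_\infty$, while $\|\widehat g_m\|_\infty\le\|g_m\|_\infty+\epsilon'$ is bounded with high probability; hence each product term is a bounded factor times $O_{\mathrm P}(\epsilon')=O_{\mathrm P}(\rho_n)$, giving~\eqref{epsi''}.

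For $\epsilon'''$ I would differentiate the same identity. Writing $\partial_k[\bm{\Sig}^{-1}]_{lm}=-\partial_k[\bm H]_{lm}+(\partial_k g_l)g_m+g_l(\partial_k g_m)$ and subtracting the analogous expression for the estimator, the $\bm H$-term contributes $\partial_k[\widehat{\bm H}]_{lm}-\partial_k[\bm H]_{lm}=O_{\mathrm P}(\rho_n)$ by the derivative-reproducing bound, while the two product terms expand by the same add-and-subtract device into sums of the form (uniformly bounded factor)$\times O_{\mathrm P}(\rho_n)$: boundedness of $\partial_k g_l$ and $g_m$ comes from Assumption~(A1), and that of $\partial_k\widehat g_l$ and $\widehat g_m$ from their uniform convergence to bounded limits. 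This establishes~\eqref{epsi'''}.

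The algebraic bookkeeping of the products is routine; the step that deserves care is the derivative-reproducing argument, because the consistency of $\partial_k\widehat g_j$ and $\partial_k[\widehat{\bm H}]_{lm}$ is obtained \emph{not} by estimating a higher-order ratio but by literally differentiating the LSDDR estimate and controlling $\|\partial_k' k(\cdot,\x)\|_{\rkhs}$ through Assumption~(A0). Checking that the kernel is smooth enough for this evaluation functional to be bounded---Assumption~(B) at $|\bm j|=2$ forces $k\in C^4$, which covers the single extra derivative needed here---is what makes the first-derivative errors inherit the RKHS rate $\rho_n$, so that all three quantities converge at the single rate $n^{-\min\{1/4,\,\gamma/(2(\gamma+1))\}}$.
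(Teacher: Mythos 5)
Your proposal is correct, and for \eqref{epsi'} and \eqref{epsi''} it is essentially the paper's own argument: the reproducing property plus Cauchy--Schwarz under (A0) for $\epsilon^{\prime}$, and the same add-and-subtract factorization of $\widehat{g}_l\widehat{g}_m-g_lg_m$ into (uniformly bounded factor)$\times$(RKHS-rate term) for $\epsilon^{\prime\prime}$. Where you genuinely diverge is \eqref{epsi'''}. The paper applies the derivative-reproducing property to the entries of $\Sig^{-1}$ and $\widehat{\Sig}^{-1}$ themselves: it bounds $|\partial_k([\widehat{\Sig}^{-1}]_{lm}-[\Sig^{-1}]_{lm})|$ by $\|[\widehat{\Sig}^{-1}]_{lm}-[\Sig^{-1}]_{lm}\|_{\rkhs}$ times a kernel factor, and then asserts that this RKHS norm has the rate $n^{-\min\{1/4,\,\gamma/(2(\gamma+1))\}}$ by pointing back to the bound established for $\epsilon^{\prime\prime}$. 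That step implicitly requires the products $g_lg_m$ and $\widehat{g}_l\widehat{g}_m$ to lie in $\rkhs$ and their difference to converge in RKHS norm, neither of which is established there (the bound it cites is a sup-norm bound). You instead differentiate the identity $[\Sig^{-1}]_{lm}=-[\bm{H}]_{lm}+g_lg_m$ by the product rule first, and apply the derivative-reproducing inequality only to the genuine RKHS elements $\widehat{g}_l-g_l$ and $[\widehat{\bm{H}}]_{lm}-[\bm{H}]_{lm}$, for which Theorem~\ref{theo:Hconvrate} and (A0) supply everything needed; this buys rigor, since no closure of $\rkhs$ under multiplication is ever invoked. One small repair to your write-up: you appeal to (A1) for the uniform boundedness of $g_l$ and $\partial_k g_l$, but (A1) holds only on $\mathcal{R}\oplus\delta$, whereas $\epsilon^{\prime}$, $\epsilon^{\prime\prime}$, $\epsilon^{\prime\prime\prime}$ are defined through $\|\cdot\|_{\infty}$; the boundedness should instead be drawn from $g_l\in\rkhs$ and $[\bm{H}]_{lm}\in\rkhs$ together with (A0), via exactly the reproducing and derivative-reproducing bounds you already set up.
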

  The proof is given in Appendix~\ref{app:Epsiconv}.
  
  Combining Lemma~\ref{Genovese} with Lemma~\ref{Epsiconv} yields the
  following theorem:
  \begin{theorem}
   \label{theo:Rconvrate} Suppose that the assumptions in
   Theorem~\ref{theo:Hconvrate} and (A0)-(A3) hold. Then,
   \begin{align}
    \Haus(\mathcal{R},\widehat{\mathcal{R}})
    =O_{\mathrm{P}}\left(n^{-\min\left\{\frac{1}{4},
    \frac{\gamma}{2(\gamma+1)}\right\}}\right).
    \label{Rconvrate}
   \end{align}
  \end{theorem}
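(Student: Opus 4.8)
The plan is to treat Theorem~\ref{theo:Rconvrate} as a corollary obtained by feeding the stochastic error rates of Lemma~\ref{Epsiconv} into the deterministic perturbation bound of Lemma~\ref{Genovese}. Lemma~\ref{Genovese} already does the geometric heavy lifting: provided the uniform estimation errors $\epsilon^{\prime}$, $\epsilon^{\prime\prime}$, $\epsilon^{\prime\prime\prime}$ (collected into $\Psi$) are small enough, the estimated objects inherit the eigengap and path-smoothness conditions (A2)--(A3), and the Hausdorff distance between the two ridges is controlled by $\psi=\max\{\epsilon^{\prime},\epsilon^{\prime\prime}\}$. Lemma~\ref{Epsiconv} supplies the missing probabilistic ingredient, namely that each of $\epsilon^{\prime}$, $\epsilon^{\prime\prime}$, $\epsilon^{\prime\prime\prime}$ vanishes at the common rate $n^{-\min\{1/4,\gamma/(2(\gamma+1))\}}$.

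First I would observe that Lemma~\ref{Epsiconv} gives
\begin{align*}
 \Psi=\max\{\epsilon^{\prime},\epsilon^{\prime\prime},\epsilon^{\prime\prime\prime}\}
 =O_{\mathrm{P}}\left(n^{-\min\left\{\frac{1}{4},\frac{\gamma}{2(\gamma+1)}\right\}}\right),
\end{align*}
so in particular $\Psi$ converges to $0$ in probability as $n\to\infty$. Fix the threshold $\Psi_0>0$ below which Lemma~\ref{Genovese} is applicable. Since $\Psi$ tends to $0$ in probability, the event $A_n:=\{\Psi<\Psi_0\}$ satisfies $\mathrm{P}(A_n)\to 1$. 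On $A_n$ the hypotheses of Lemma~\ref{Genovese} are met, so statements (i) and (ii) hold: in particular (A2)--(A3) transfer to $\widehat{\bm{g}}$, $\widehat{\bm{\Sig}}^{-1}$ and $\widehat{\bm{\Sig}}^{-1\prime}$ (guaranteeing that $\widehat{\mathcal{R}}$ is well defined as a genuine $d$-dimensional ridge), and $\Haus(\mathcal{R},\widehat{\mathcal{R}})=O(\psi)$ with $\psi=\max\{\epsilon^{\prime},\epsilon^{\prime\prime}\}$. Next, because $\psi\le\Psi$ and both $\epsilon^{\prime}$ and $\epsilon^{\prime\prime}$ carry the rate of Lemma~\ref{Epsiconv}, I would conclude $\psi=O_{\mathrm{P}}(n^{-\min\{1/4,\gamma/(2(\gamma+1))\}})$, and combining this with the bound $\Haus(\mathcal{R},\widehat{\mathcal{R}})=O(\psi)$ valid on $A_n$ yields the claimed rate~\eqref{Rconvrate}.

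The main obstacle is not any new geometry but the careful probabilistic bookkeeping needed to splice a \emph{conditional} deterministic bound with an $O_{\mathrm{P}}$ statement: Lemma~\ref{Genovese} delivers its conclusion only when $\Psi$ is sufficiently small, so I must argue that this precondition holds with probability tending to one and that the $O(\psi)$ bound obtained on that event may be upgraded to a genuine $O_{\mathrm{P}}$ bound. Concretely, writing $r_n:=n^{-\min\{1/4,\gamma/(2(\gamma+1))\}}$, for any $\delta>0$ one can choose a constant $M$ with $\mathrm{P}(\psi\le M\,r_n)\ge 1-\delta$ for large $n$, intersect this event with $A_n$, and on the resulting event invoke the deterministic implication $\Haus(\mathcal{R},\widehat{\mathcal{R}})\le C\psi\le CM\,r_n$, which is exactly what $\Haus(\mathcal{R},\widehat{\mathcal{R}})=O_{\mathrm{P}}(r_n)$ means. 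A secondary point worth verifying is that the implicit constant $C$ in the $O(\psi)$ of Lemma~\ref{Genovese} does not itself depend on $n$ through the estimated quantities; this is ensured by part~(i) of the lemma, which keeps the eigengap $\kappa$ and the path-smoothness constants uniform over $A_n$, so that $C$ is a fixed constant determined by the population-level assumptions (A1)--(A3).
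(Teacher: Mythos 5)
Your proposal is correct and follows essentially the same route as the paper, whose proof is simply the observation that Lemma~\ref{Epsiconv} gives $\psi=O_{\mathrm{P}}\bigl(n^{-\min\{\frac{1}{4},\frac{\gamma}{2(\gamma+1)}\}}\bigr)$ and then concludes via the bound~\eqref{Haus-dis-ridge} of Lemma~\ref{Genovese}. Your additional bookkeeping --- restricting to the high-probability event $A_n$ on which $\Psi$ is small enough for Lemma~\ref{Genovese} to apply, and noting that the implicit constant stays uniform there --- is a careful spelling-out of details the paper leaves implicit, not a different argument.
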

  \begin{proof}
   Lemma~\ref{Epsiconv} ensures that
    $\psi=O_{\mathrm{P}}\left(n^{-\min\left\{\frac{1}{4},
    \frac{\gamma}{2(\gamma+1)}\right\}}\right)$. This completes the
    proof from~\eqref{Haus-dis-ridge}.
  \end{proof}
  \begin{remark*}
   \citet[Eq.(1)]{genovese2014nonparametric} established the following
   convergence rate based on KDE:
   \begin{align}
    \Haus(\mathcal{R},\widehat{\mathcal{R}}_{\text{KDE}})
    =O_{\mathrm{P}}\left(\left(\frac{\log n}{n}\right)^{\frac{2}{D+8}}
    \right), \label{GenoveseRidge}
   \end{align}
   where $\widehat{\mathcal{R}}_{\text{KDE}}$ denotes the estimated
   ridge by KDE. Comparison to our result is difficult, but the main
   difference is that the rate in~\eqref{GenoveseRidge} explicitly
   depends on data dimension $D$.
  \end{remark*}
  \section{Numerical Illustration on Mode-Seeking Clustering and Density Ridge Estimation}
  \label{sec:illust}
  This section experimentally illustrates the performance of the
  proposed methods for mode-seeking clustering and density ridge
  estimation on a variety of datasets.
  \subsection{Illustration on Clustering}
  \label{ssec:Illust-Clust}
  First, we illustrate the performance of LSLDGC both on artificial and
  benchmark datasets.
  \subsubsection{Artificial Datasets: LSLDGC vs MS}
  Here, we compare the performance of LSLDGC to MS with two different
  bandwidth selection methods:
  \begin{itemize}
   \item {\bf LSLDGC}: LSLDGC based on the update
	 rule~\eqref{updateelement}.  The width parameter $\sigma_j$ in
	 the Gaussian kernel and regularization parameter $\lambda_j$
	 were selected by cross-validation as in
	 Section~\ref{ssec:PracLSDDR}. We selected ten candidates of
	 $\sigma_j$ and $\lambda_j$ from $c_{\sigma}\times
	 \sigma^{(j)}_{\text{med}}$ ($0.5\leq c_{\sigma} \leq 5$) and
	 $10^{m}$ ($-3\leq m \leq 0$), respectively where
	 $\sigma^{(j)}_{\text{med}}$ is the median value of
	 $|x_i^{(j)}-x_k^{(j)}|$ with respect to $i$ and $k$.

   \item {\bf LSLDGC$_{\text{CW}}$}: LSLDGC based on the coordinate-wise
	 update rule~\eqref{CWupdate}. The same cross-validation was
	 performed as above.
	 
   \item {\bf MS$_{\text{LS}}$}: The bandwidth parameter $h$ was
	 cross-validated based on the standard \emph{integrated squared
	 error}. We selected ten candidates of $h$ from $10^{l}\times
	 h_{\text{med}}$ ($-1.5\leq l \leq 0$) where $h_{\text{med}}$ is
	 the median value of $|x_i^{(j)}-x_k^{(j)}|$ with respect to
	 $i$, $j$ and $k$.
	 
   \item {\bf MS$_{\text{NR}}$}: The bandwidth parameter $h$ was
	 determined by
	 \begin{align*}
	  \bar{S}_n\left(\frac{4}{D+4}\right)^{\frac{1}{D+6}}
	  n^{-\frac{1}{d+6}},
	 \end{align*}
	 where $\bar{S}_n=\frac{1}{nD}\sum_{j=1}^D\sum_{i=1}^n
	 (x^{(j)}_i-\bar{x}^{(j)})^2$ and
	 $\bar{x}^{(j)}=\frac{1}{n}\sum_{i=1}^n x_i^{(j)}$. This
	 bandwidth parameter was used in~\citet{chen2016comprehensive}
	 and a slight modification of the normal reference
	 rule~\citep{silverman1986density}.
  \end{itemize}
  
  First, we generated three kinds of two-dimensional data as follows:
  \begin{enumerate}[(a)]    
   \item[(a)] {\bf Three Gaussian blobs} (Fig.\ref{fig:ClustArt}(a)):
	      Each data sample was drawn from a mixture of three
	      Gaussians with means $(0,1)^{\top}$, $(-1,-1)^{\top}$ and
	      $(1,-1)^{\top}$, and covariance matrices $0.1\I_{2}$. The
	      mixing coefficients were $0.4, 0.3, 0.3$, respectively.
         
   \item[(b)] {\bf Two curves} (Fig.\ref{fig:ClustArt}(d)): Two curves
	      are generated as $(x^{(1)}, x^{(2)})=(\cos(\pi
	      t^{(1)}),\sin(\pi t^{(1)}))^{\top}$ and $(x^{(1)},
	      x^{(2)}) =(-\cos(\pi t^{(2)})+1, -\sin(\pi
	      t^{(2)}))^{\top}$ where $t^{(1)}$ and $t^{(2)}$ are
	      independently drawn from the Gaussian density with mean
	      $0.5$ and standard deviation $0.15$. Then, Gaussian noise
	      with covariance matrix $0.1\I_{2}$ was added to these
	      curves. The numbers of data samples for both curves were
	      approximately same.
         
   \item[(c)] {\bf Two curves \& a Gaussian blob}
	      (Fig.\ref{fig:ClustArt}(g)): Data samples from the
	      Gaussian density with mean $0$ and standard deviation
	      $0.1$ were added to the two curves similarly generated as
	      in (b). The number of samples for the two curves was same,
	      and for the Gaussian blob, we set the number at $n/3$
	      approximately.
  \end{enumerate}
  When higher-dimensional data were generated, we simply appended
  Gaussian variables with mean $0$ and standard deviation $0.1$ to the
  two-dimensional data.  Clustering performance was measured by the
  adjusted Rand index (ARI)~\citep{JoC:Hubert+Arabie:1985}: ARI takes a
  value less than or equal to one, a larger value indicates a better
  clustering result, and when a clustering result is perfect, the ARI
  value equals to one.
  
  Fig.\ref{fig:ClustArt}(b,e,h) clearly indicates the advantage of our
  clustering methods over MS: Both LSLDGC and LSLDGC$_{\text{CW}}$
  significantly outperform MS$_{\text{LS}}$ and MS$_{\text{NR}}$
  particularly for higher-dimensional data. When the dimensionality of
  data is low, MS$_{\text{NR}}$ performs well to all kinds of
  datasets. However, the ARI values of both MS$_{\text{LS}}$ and
  MS$_{\text{NR}}$ quickly approach zero as the dimensionality of data
  increases. These unsatisfactory results seem to be due to the fact
  that the bandwidth selection in KDE is more difficult for
  high(er)-dimensional data. Thus, our direct approach would be more
  suitable particularly for high(er)-dimensional data.

  Both LSLDGC and LSLDGC$_{\text{CW}}$ keep the ARI values high on a
  wide range of sample sizes (Fig.\ref{fig:ClustArt}(c,f,i)). The
  performance of MS$_{\text{NR}}$ is improved as $n$ increases. However,
  MS$_{\text{LS}}$ performs rather worse for large(r) datasets. The
  least-squares cross-validation often suggests small bandwidth
  parameters for large(r) datasets, which make the estimated density
  unsmooth. Thus, the estimated density can include a lot of spurious
  modes with small peaks even if it was good in terms of density
  estimation. This also supports that our direct estimation is a more
  appropriate approach.
  \begin{figure}[!t]
   \begin{center}
    \subfigure[Gaussian
    blobs]{\includegraphics[width=0.27\textwidth,clip]{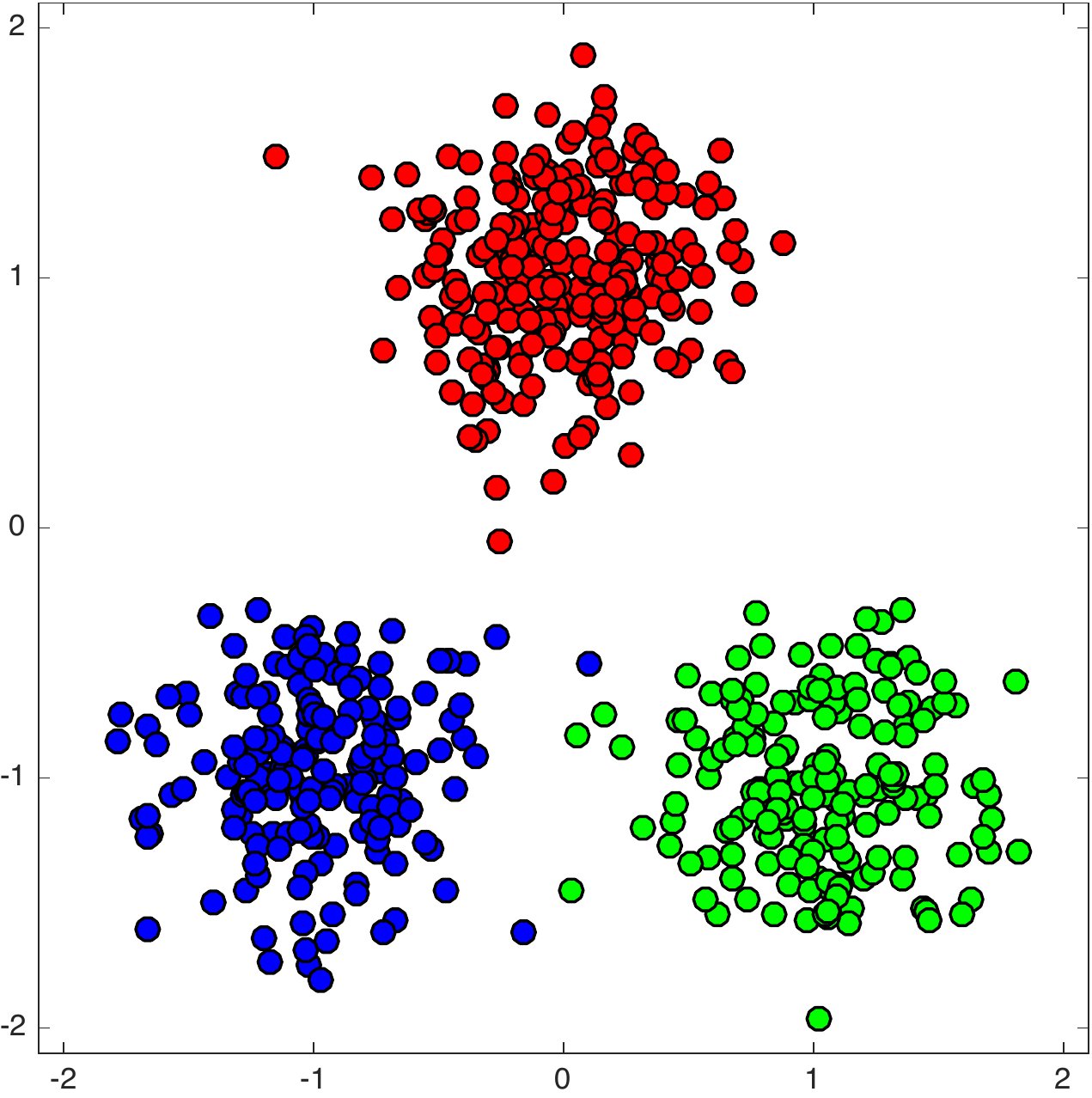}}
    \subfigure[$n=600$]{\includegraphics[width=0.34\textwidth,clip]{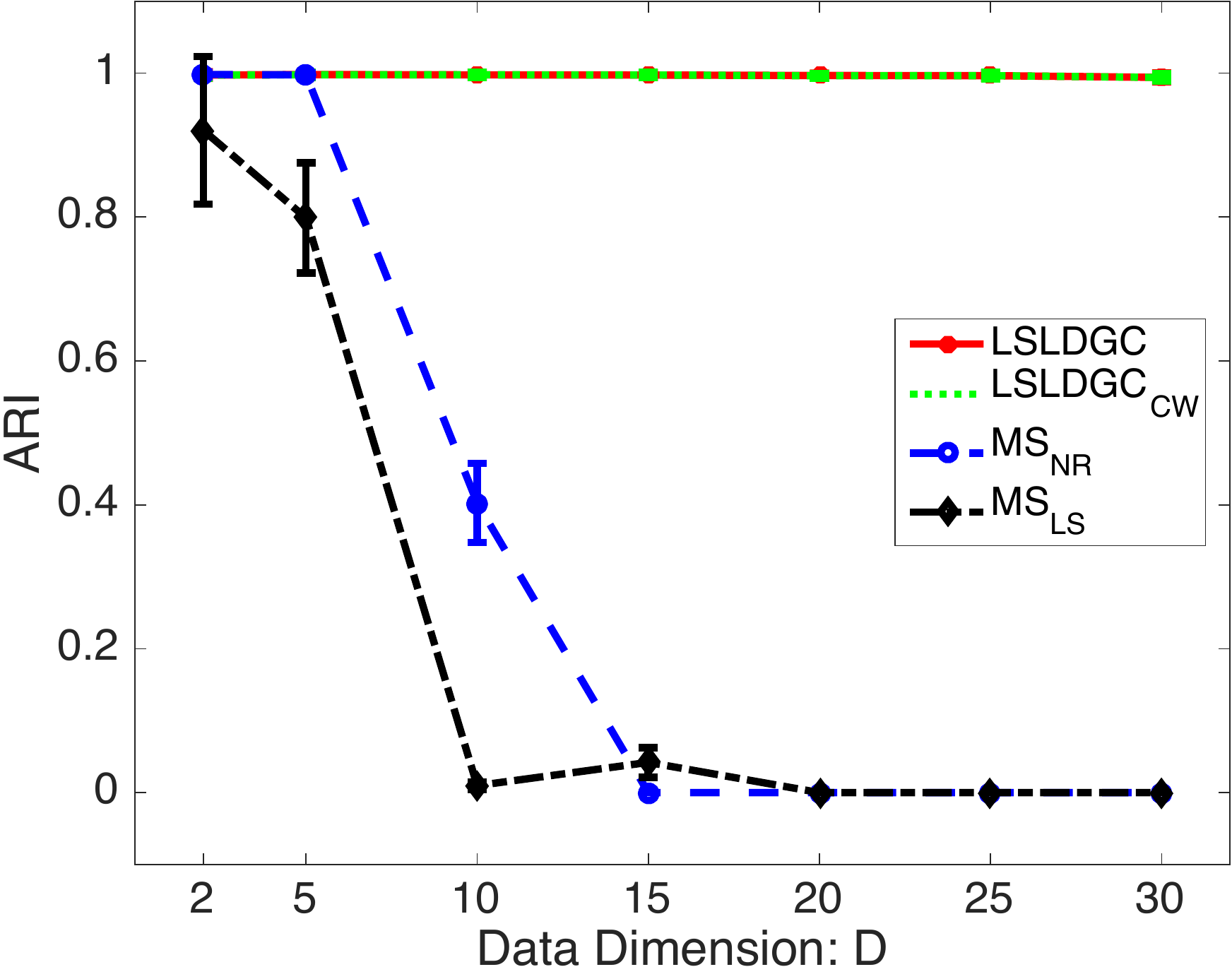}}
    \subfigure[$D=10$]{\includegraphics[width=0.34\textwidth,clip]{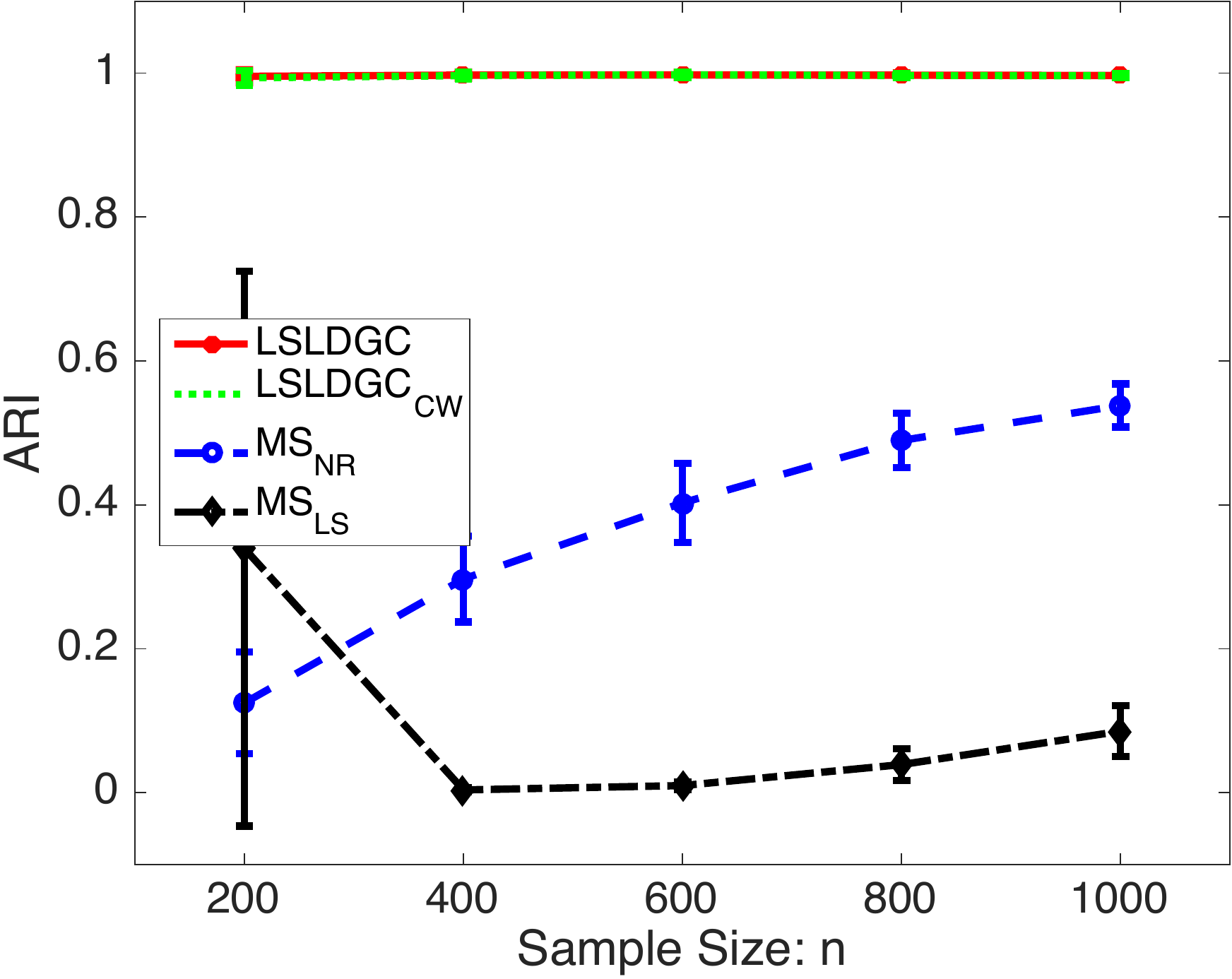}}\\
    \subfigure[Two
    curves]{\includegraphics[width=0.27\textwidth,clip]{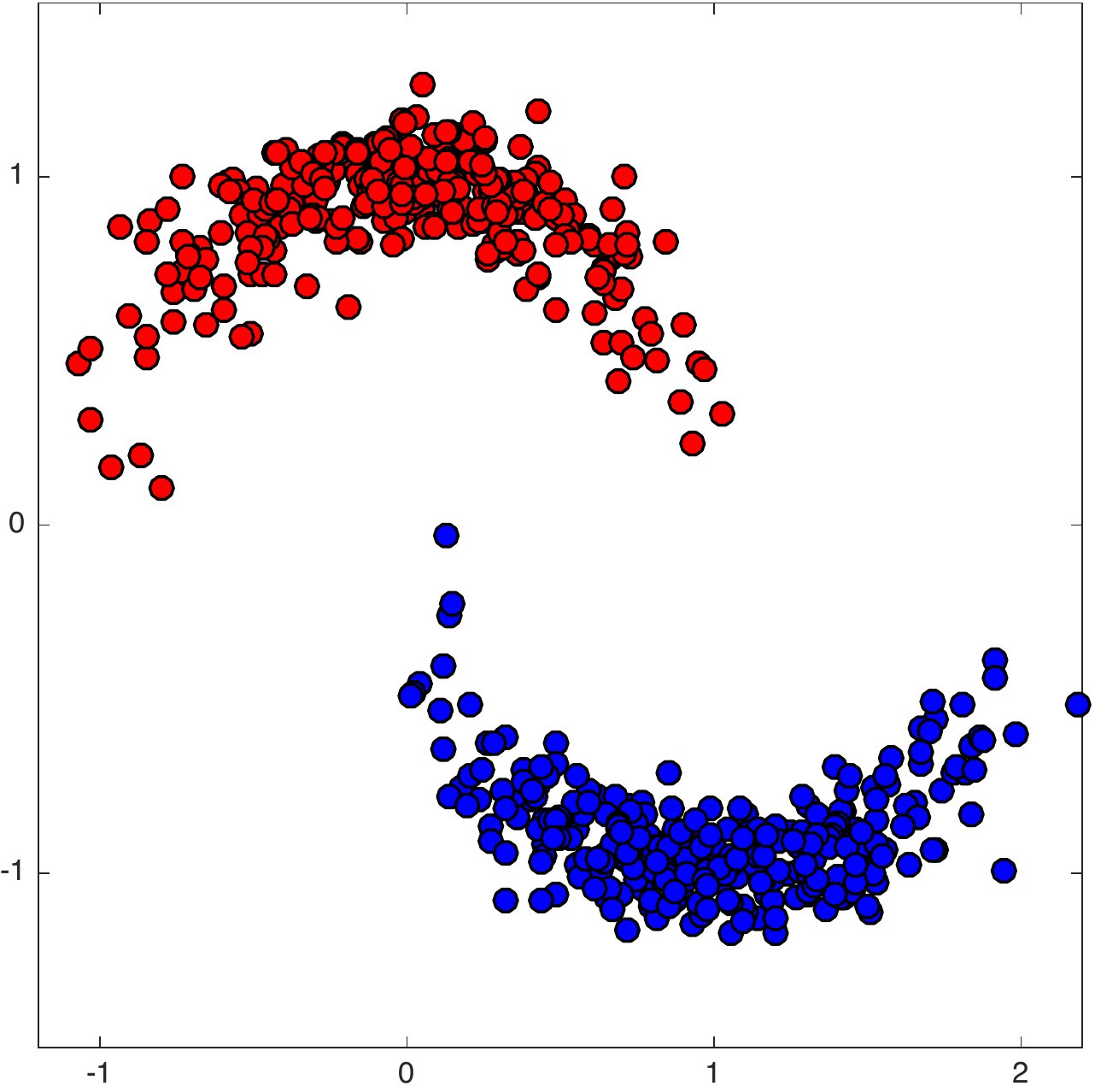}}
    \subfigure[$n=600$]{\includegraphics[width=0.34\textwidth,clip]{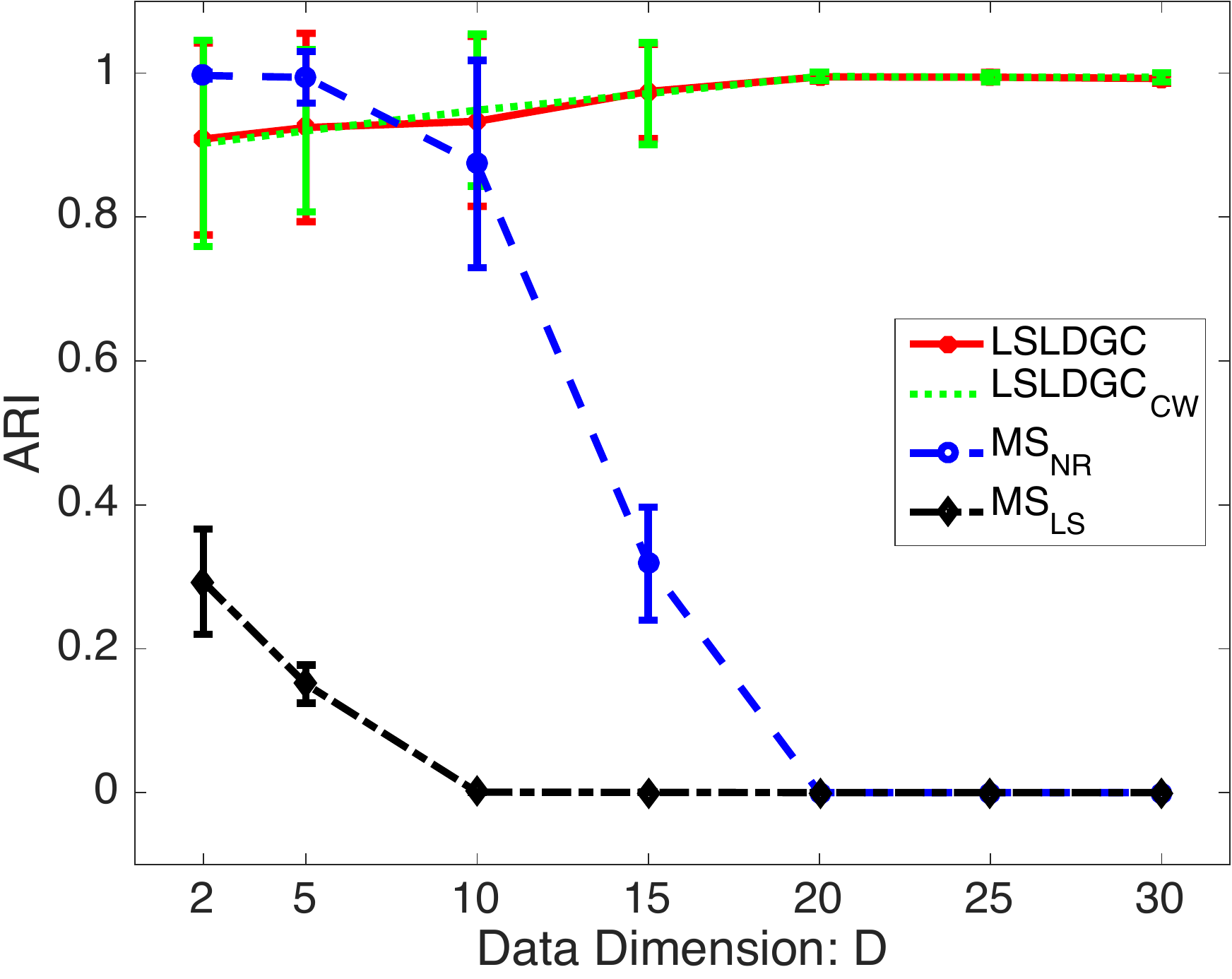}}
    \subfigure[$D=10$]{\includegraphics[width=0.34\textwidth,clip]{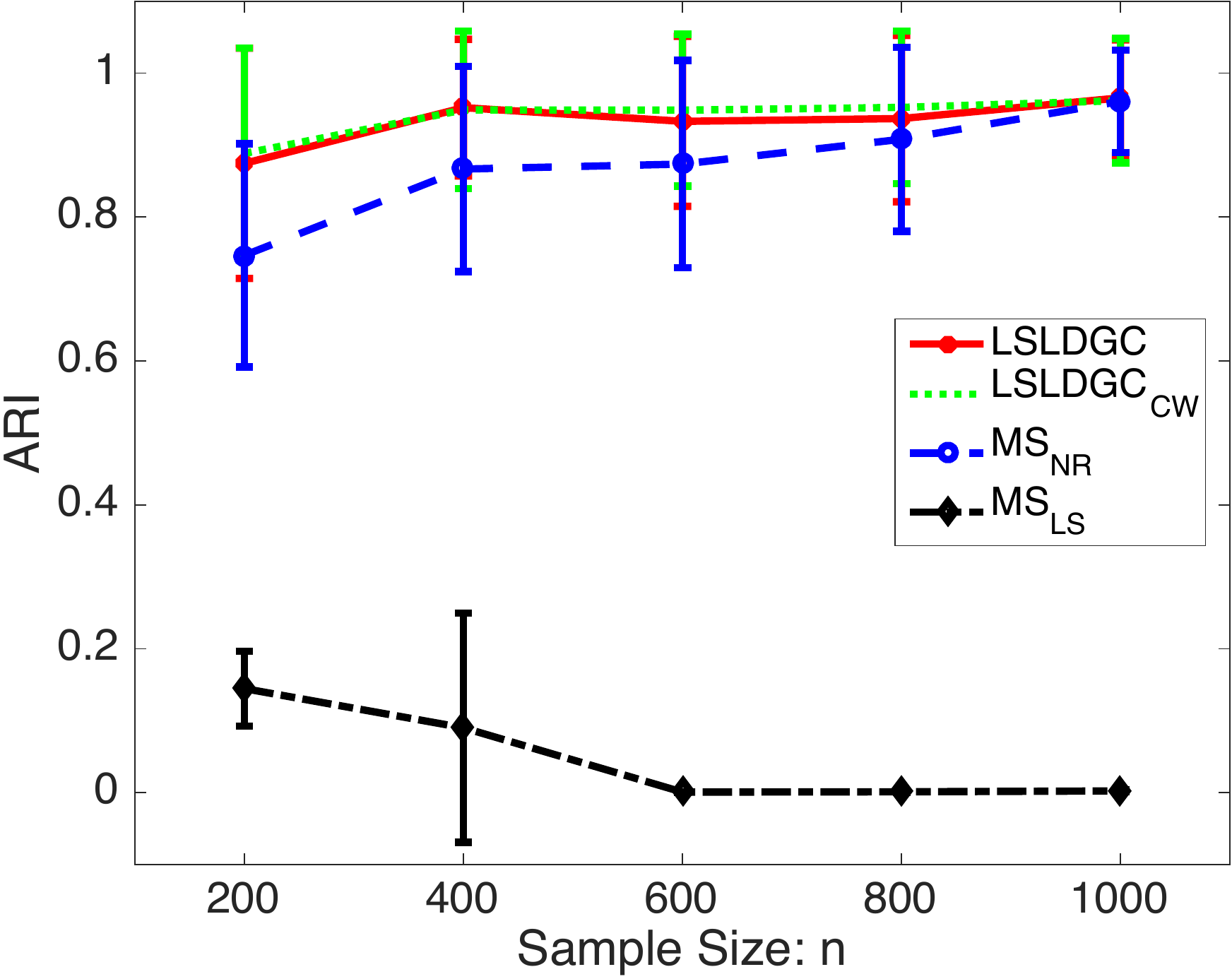}}\\
    \subfigure[Two curves \& a Gaussian
    blob]{\includegraphics[width=0.27\textwidth,clip]{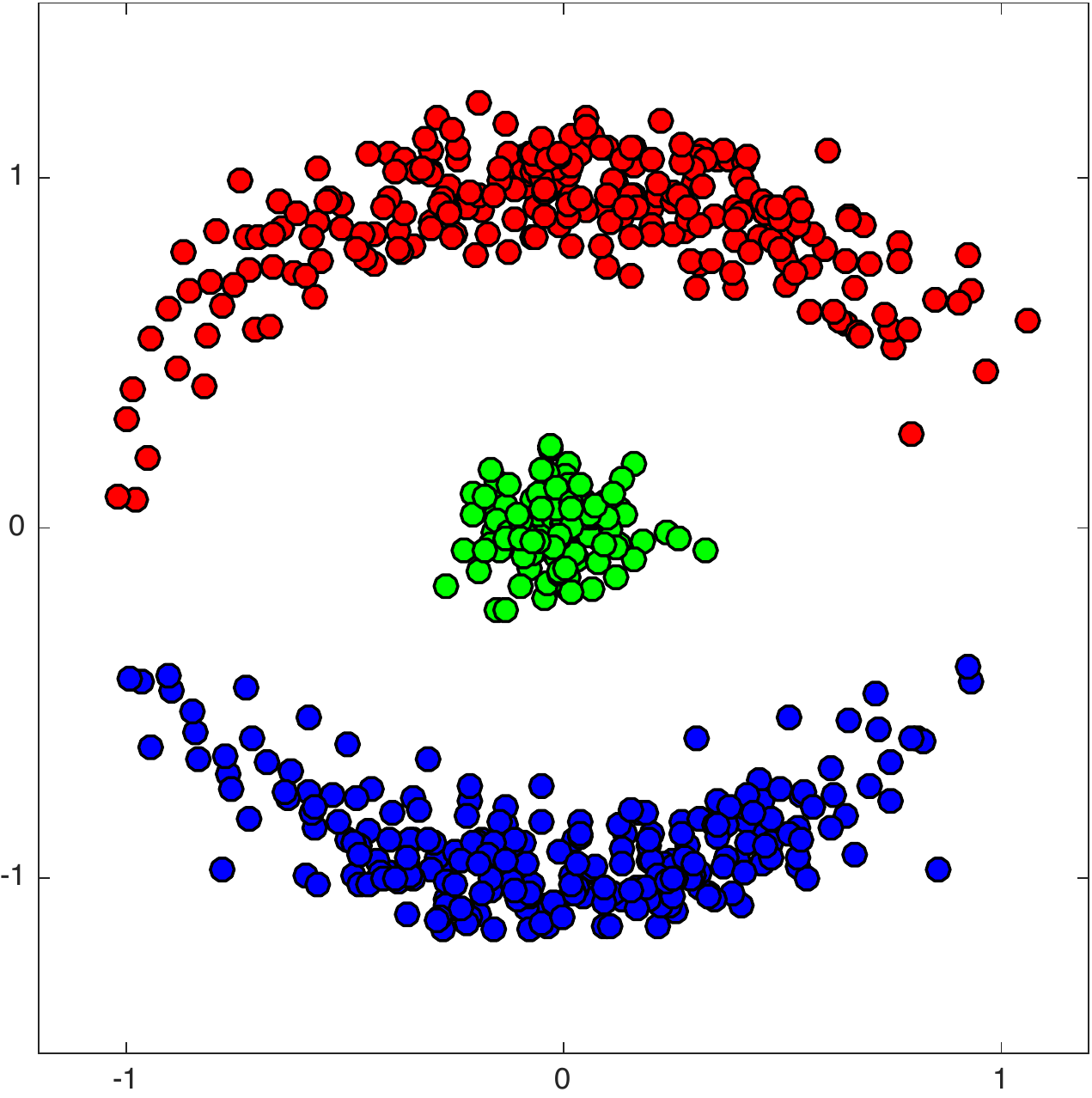}}
    \subfigure[$n=600$]{\includegraphics[width=0.34\textwidth,clip]{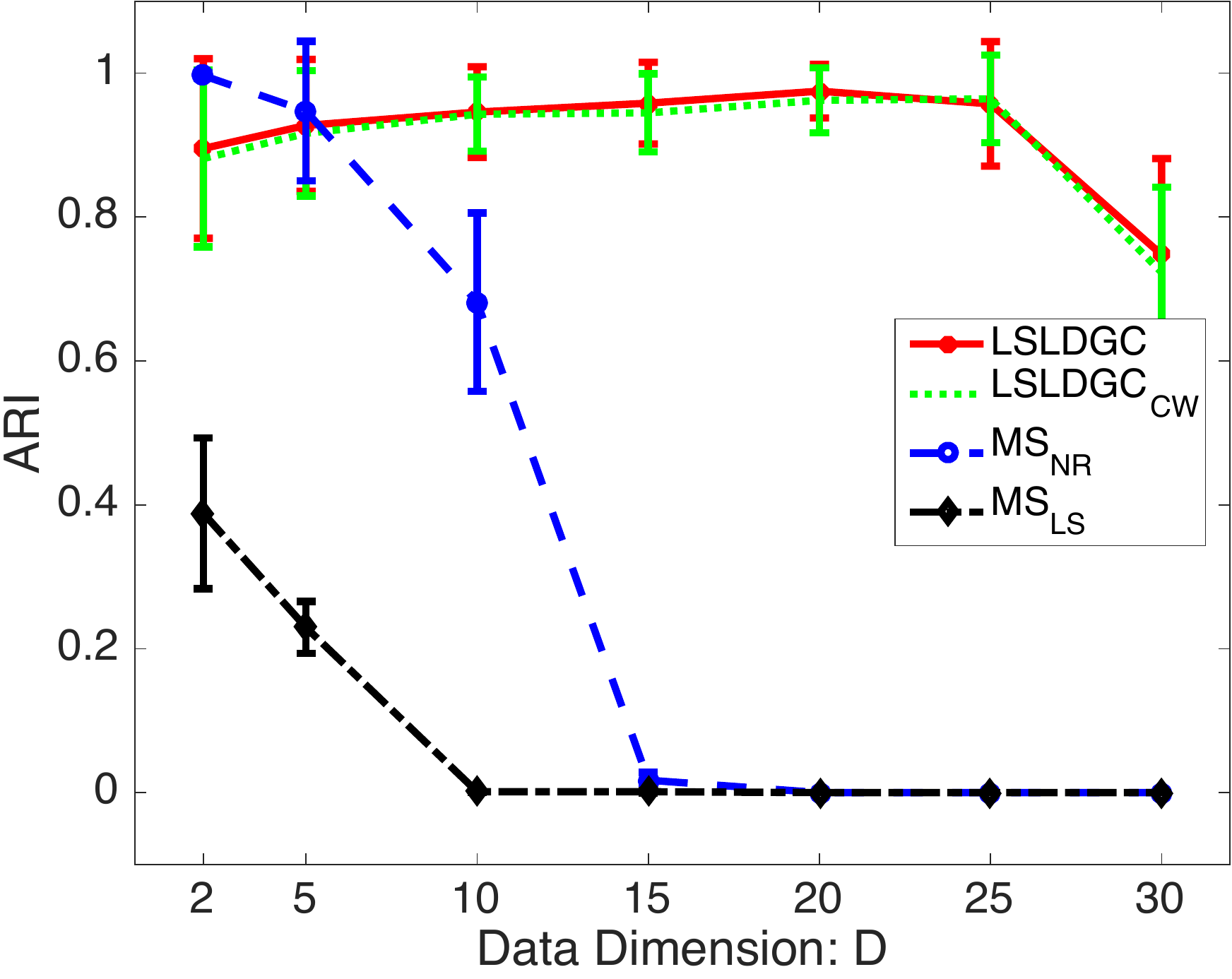}}
    \subfigure[$D=10$]{\includegraphics[width=0.34\textwidth,clip]{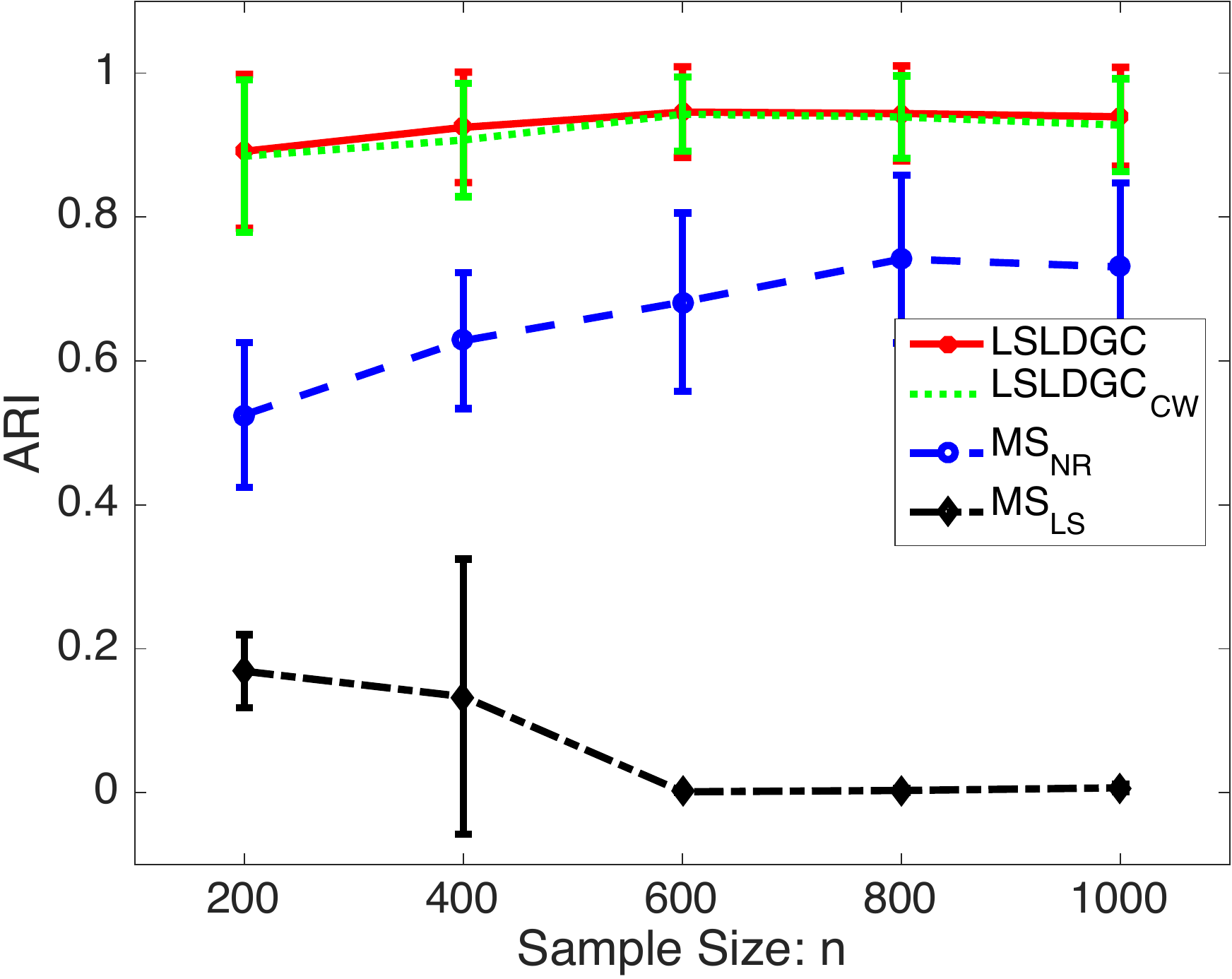}}
    \caption{\label{fig:ClustArt} Clustering performance on artificial
    data. Each point and error bar denote the average and standard
    deviation of ARI over $50$ runs, respectively.}
   \end{center}
  \end{figure}
  \subsubsection{Benchmark Datasets}
  \begin{table}[t]
   \caption{\label{tab:benchclust} The average and standard deviation of
   ARI values over $50$ runs. A larger value means a better
   result. Numbers in the parentheses are standard deviations. The best
   and comparable methods judged by the unpaired t-test at the
   significance level $1\%$ are described in boldface.}
   \begin{center}
    \begin{tabular}{@{\ }c@{\ }|@{\ }c@{\ }|@{\ }c@{\ }|@{\ }c@{\ }|@{\ }c@{\ }|@{\ }c@{\ }}
     \hline
     \multicolumn{6}{c}{Banknote $(D,n,c)=(4,100,2)$}\\
     \hline
     LSLDGC & LSLDGC$_{\text{CW}}$ & MS$_{\text{LS}}$ & MS$_{\text{NR}}$ & SC & KM \\
     \hline
     {\bf 0.165(0.059)} & {\bf 0.169(0.055)} 
	 & 0.036(0.014) & {\bf 0.167(0.147)} & 0.054(0.064) & 0.039(0.051)\\
     \hline
    \end{tabular}
   \end{center}
   \begin{center}
    \begin{tabular}{@{\ }c@{\ }|@{\ }c@{\ }|@{\ }c@{\ }|@{\ }c@{\ }|@{\ }c@{\ }|@{\ }c@{\ }}
     \hline
     \multicolumn{6}{c}{Accelerometry $(D,n,c)=(5,300,3)$}\\
     \hline
     LSLDGC & LSLDGC$_{\text{CW}}$ & MS$_{\text{LS}}$ & MS$_{\text{NR}}$ & SC & KM \\
     \hline
     {\bf 0.628(0.058)} & {\bf 0.624(0.065)} & 0.029(0.007) 
	     & 0.500(0.041) & 0.226(0.271) & 0.499(0.023) \\
     \hline     
    \end{tabular}
   \end{center}
   \begin{center}
    \begin{tabular}{@{\ }c@{\ }|@{\ }c@{\ }|@{\ }c@{\ }|@{\ }c@{\ }|@{\ }c@{\ }|@{\ }c@{\ }}
     \hline
     \multicolumn{6}{c}{Olive oil $(D,n,c)=(8,200,9)$}\\
     \hline
     LSLDGC & LSLDGC$_{\text{CW}}$ & MS$_{\text{LS}}$ & MS$_{\text{NR}}$ & SC & KM \\
     \hline
     {\bf 0.717(0.081)} & {\bf 0.728(0.062)} & 0.020(0.019) 
	     & {\bf 0.756(0.078)} & 0.552(0.060) & 0.618(0.063)\\
     \hline
    \end{tabular}
   \end{center}
   \begin{center}
    \begin{tabular}{@{\ }c@{\ }|@{\ }c@{\ }|@{\ }c@{\ }|@{\ }c@{\ }|@{\ }c@{\ }|@{\ }c@{\ }}
     \hline
     \multicolumn{6}{c}{Vowel $(D,n,c)=(10,110,11)$}\\
     \hline
     LSLDGC & LSLDGC$_{\text{CW}}$ & MS$_{\text{LS}}$ & MS$_{\text{NR}}$ & SC & KM \\
     \hline
     0.147(0.037) & 0.139(0.032) & 0.017(0.010) 
	     & 0.133(0.026) & 0.145(0.027) & {\bf 0.180(0.027)} \\
     \hline
    \end{tabular}
   \end{center}
   \begin{center}
    \begin{tabular}{@{\ }c@{\ }|@{\ }c@{\ }|@{\ }c@{\ }|@{\ }c@{\ }|@{\ }c@{\ }|@{\ }c@{\ }}
     \hline
     \multicolumn{6}{c}{Sat-image $(D,n,c)=(36,120,6)$}\\
     \hline
     LSLDGC & LSLDGC$_{\text{CW}}$ & MS$_{\text{LS}}$ & MS$_{\text{NR}}$ & SC & KM \\
     \hline
     {\bf 0.427(0.072)} & {\bf 0.422(0.073)} & 0.000(0.000) 
	     & 0.343(0.063) & {\bf 0.418(0.056)} & {\bf 0.434(0.052)}\\
     \hline
    \end{tabular}
   \end{center}
   \begin{center}
    \begin{tabular}{@{\ }c@{\ }|@{\ }c@{\ }|@{\ }c@{\ }|@{\ }c@{\ }|@{\ }c@{\ }|@{\ }c@{\ }}
     \hline
     \multicolumn{6}{c}{Speech $(D,n,c)=(50,400,2)$}\\
     \hline
     LSLDGC & LSLDGC$_{\text{CW}}$ & MS$_{\text{LS}}$ & MS$_{\text{NR}}$ & SC & KM \\
     \hline     
     {\bf 0.146(0.063)} & {\bf 0.147(0.054)} & 0.000(0.000) 
	     & 0.000(0.000) & 0.004(0.004) & 0.002(0.004) \\
     \hline
    \end{tabular}    
   \end{center}
  \end{table}
  Next, we investigate the performance of LSLDGC over the following
  benchmark datasets:
  \begin{itemize}
   \item {\it Banknote} $(D=4,
	 n=100,~\mathrm{and}~c=2)$~\citep{UCIBench}\footnote{\url{https://archive.ics.uci.edu/ml/datasets/banknote+authentication\#}}:
	 This dataset consists of four-dimensional features from $400$
	 by $400$ images for genuine and forged banknote-like
	 specimens. The features were extracted by wavelet
	 transformation. We randomly chose $50$ samples from each of the
	 two classes.
	 
   \item {\it Accelerometry} $(D=5,
	 n=300,~\mathrm{and}~c=3)$\footnote{\url{http://alkan.mns.kyutech.ac.jp/web/data.html}}:
	 The ALKAN dataset contains $3$-axis (i.e., x-, y-, and z-axes)
	 accelerometric data. During the data collection, subjects were
	 instructed to perform walking, running, and standing up. After
	 segmenting each data stream into windows, five
	 orientation-invariant-features were computed from each
	 window~\citep{sugiyama2014information}. We randomly chose $100$
	 samples from each of the three classes.

   \item {\it Olive oil} $(D=8,
	 n=200,~\mathrm{and}~c=9)$~\citep{forina1983classification}.
	 This dataset was obtained from the R
	 software.\footnote{\url{https://artax.karlin.mff.cuni.cz/r-help/library/pdfCluster/html/oliveoil.html}}
	 The dataset includes eight chemical measurements on different
	 specimen of olive oil produced in nine regions in Italy. We
	 randomly chose $200$ samples.
	 
   \item {\it Vowel} $(D=10,
	 n=110,~\mathrm{and}~c=11)$~\citep{turney1993robust,UCIBench}\footnote{\url{https://archive.ics.uci.edu/ml/datasets/Connectionist+Bench+(Vowel+Recognition+-+Deterding+Data)}}:
	 This consists utterance data for eleven vowels of British
	 English. Each utterance is expressed by a ten-dimensional
	 vector. We randomly chose $10$ samples from each of the eleven
	 classes.
	 
   \item {\it Sat-image} $(D=36,
	 n=120,~\mathrm{and}~c=6)$~\citep{UCIBench}\footnote{\url{https://archive.ics.uci.edu/ml/datasets/Statlog+(Landsat+Satellite)}}:
	 The dataset contains the multi-spectral values of pixels in
	 $3\times 3$ neighborhoods in a satellite image with six
	 classes. We randomly chose $20$ samples from each of the six
	 classes.

   \item {\it Speech} $(D=50, n=400,~\mathrm{and}~c=2)$. An in-house
	 speech dataset~\citep{sugiyama2014information}, which contains
	 short utterance samples recorded from $2$ male subjects
	 speaking in French with sampling rate $44.1$kHz.
	 50-dimensional line spectral frequencies
	 vectors~\citep{kain1998spectral} were computed from each
	 utterance sample. We randomly chose $200$ samples from each of
	 the two classes.
  \end{itemize}
  As preprocessing, each data sample was standardized by the sample mean
  and standard deviation in coordinate-wise manner.  For comparison, we
  applied \emph{k-means clustering}
  (KM)~\citep{BerkeleySymp:MacQueen:1967} and \emph{spectral clustering}
  (SC)~\citep{ng2001spectral,shi2000normalized} to the same
  datasets. Since KM and SC require to input the number of clusters, we
  set it at the correct number.
  
  As seen in the illustration on artificial data, when the
  dimensionality of data is low, the performance of LSLDGC,
  LSLDGC$_{\text{CW}}$ and MS$_{\text{NR}}$ is comparable, but LSLDGC
  and LSLDGC$_{\text{CW}}$ significantly work better than
  MS$_{\text{NR}}$ to higher-dimensional datasets (sat-image and speech
  datasets). KM and SC have prior information about the number of
  clusters. Nonetheless, the performance of LSLDGC and
  LSLDGC$_{\text{CW}}$ are often better than KM and SC.
  
  From the results of both the artificial and benchmark datasets, we
  conclude that LSLDGC and LSLDGC$_{\text{CW}}$ are advantageous to
  relatively high-dimensional data.
  \subsection{Illustration on Density Ridge Estimation}
  \label{ssec:illustridge}
  Next, we illustrate the performance of LSDRF, and compare LSDRF with
  SCMS both on artificial and standard benchmark datasets.
   \subsubsection{Artificial Data: LSDRF vs SCMS}
   \label{ridgeart}
   \begin{figure}[t]
    \begin{center}
     \subfigure{\includegraphics[width=0.32\textwidth,clip]
     {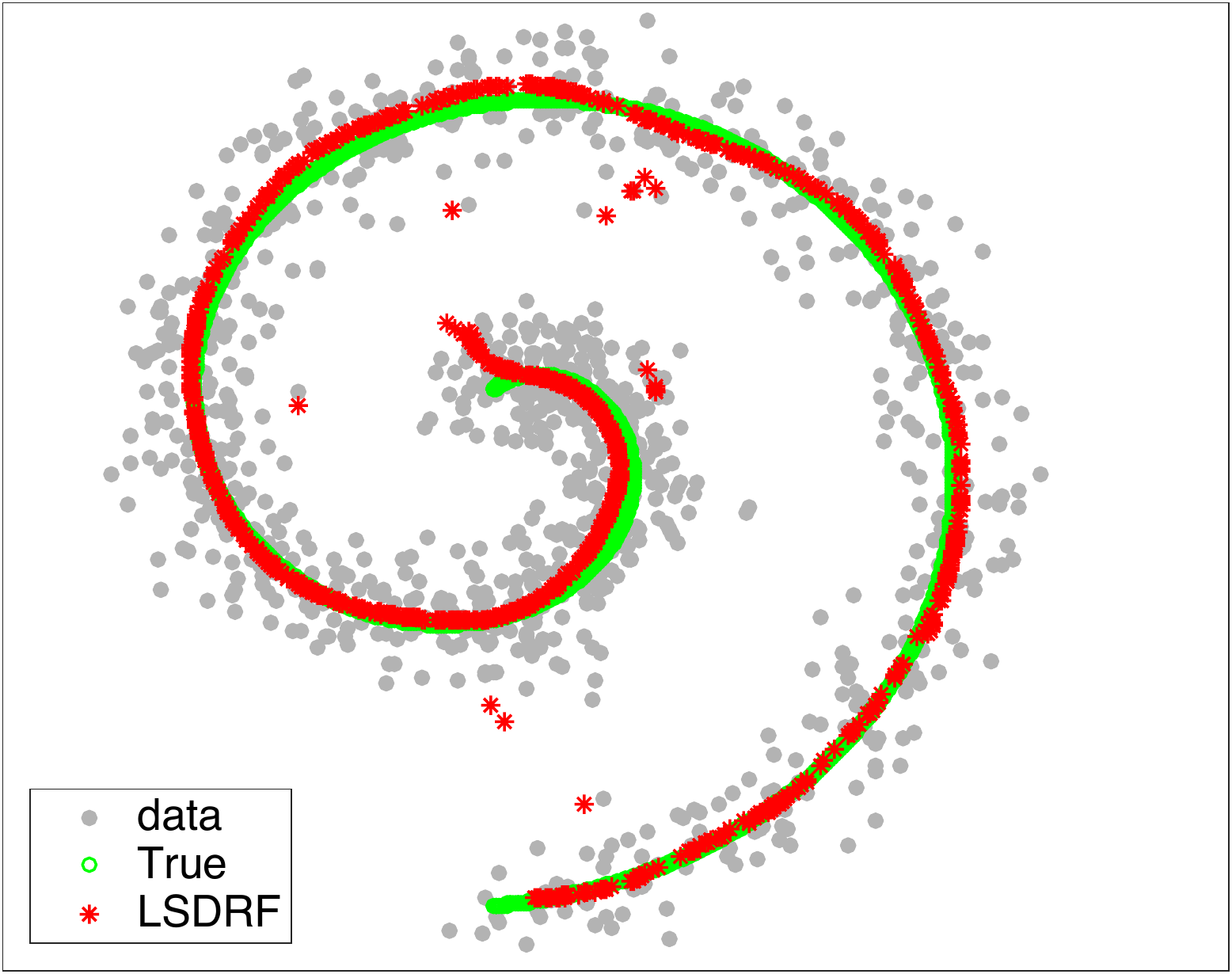}}
     \subfigure{\includegraphics[width=0.32\textwidth,clip]
     {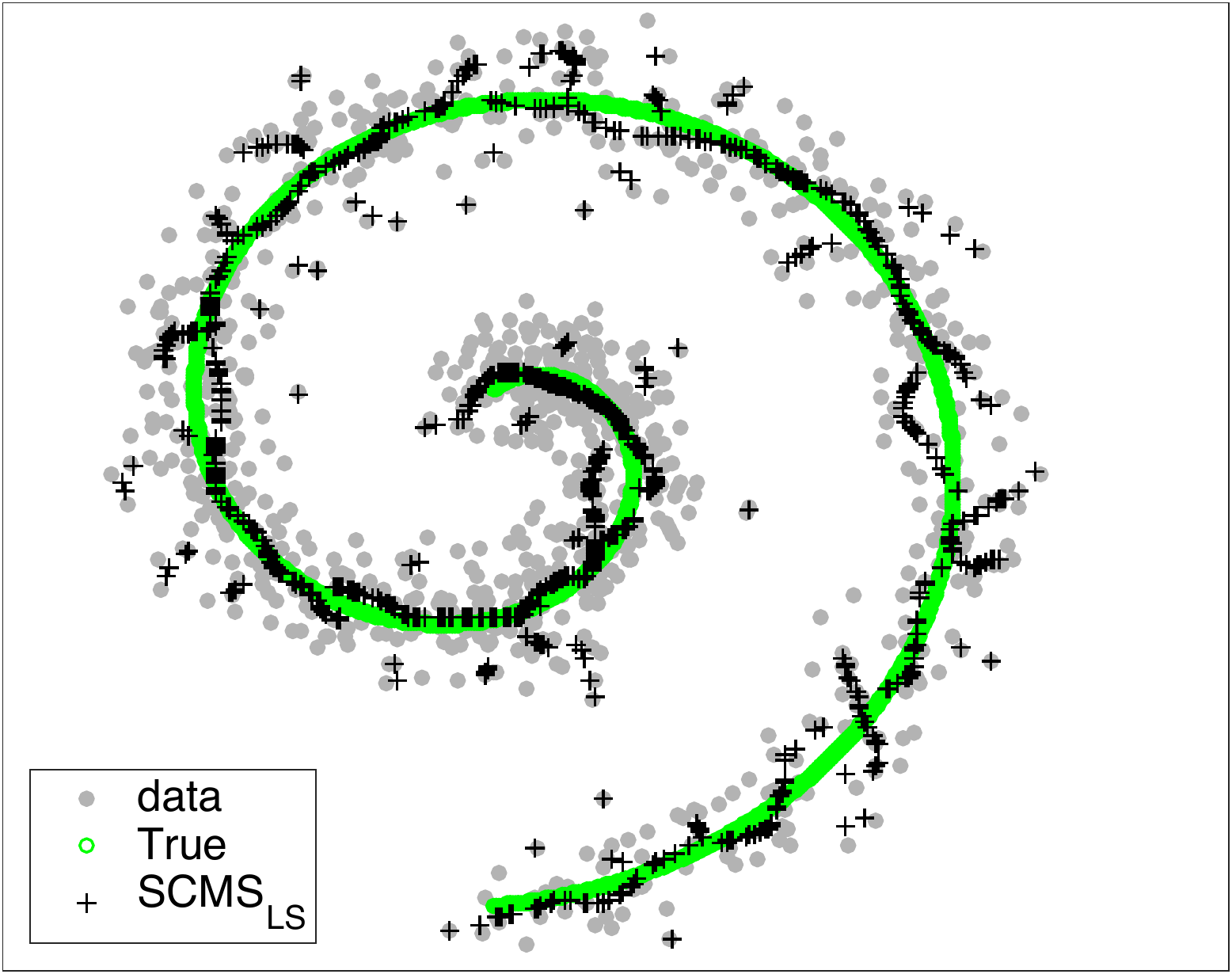}}
     \subfigure{\includegraphics[width=0.32\textwidth,clip]
     {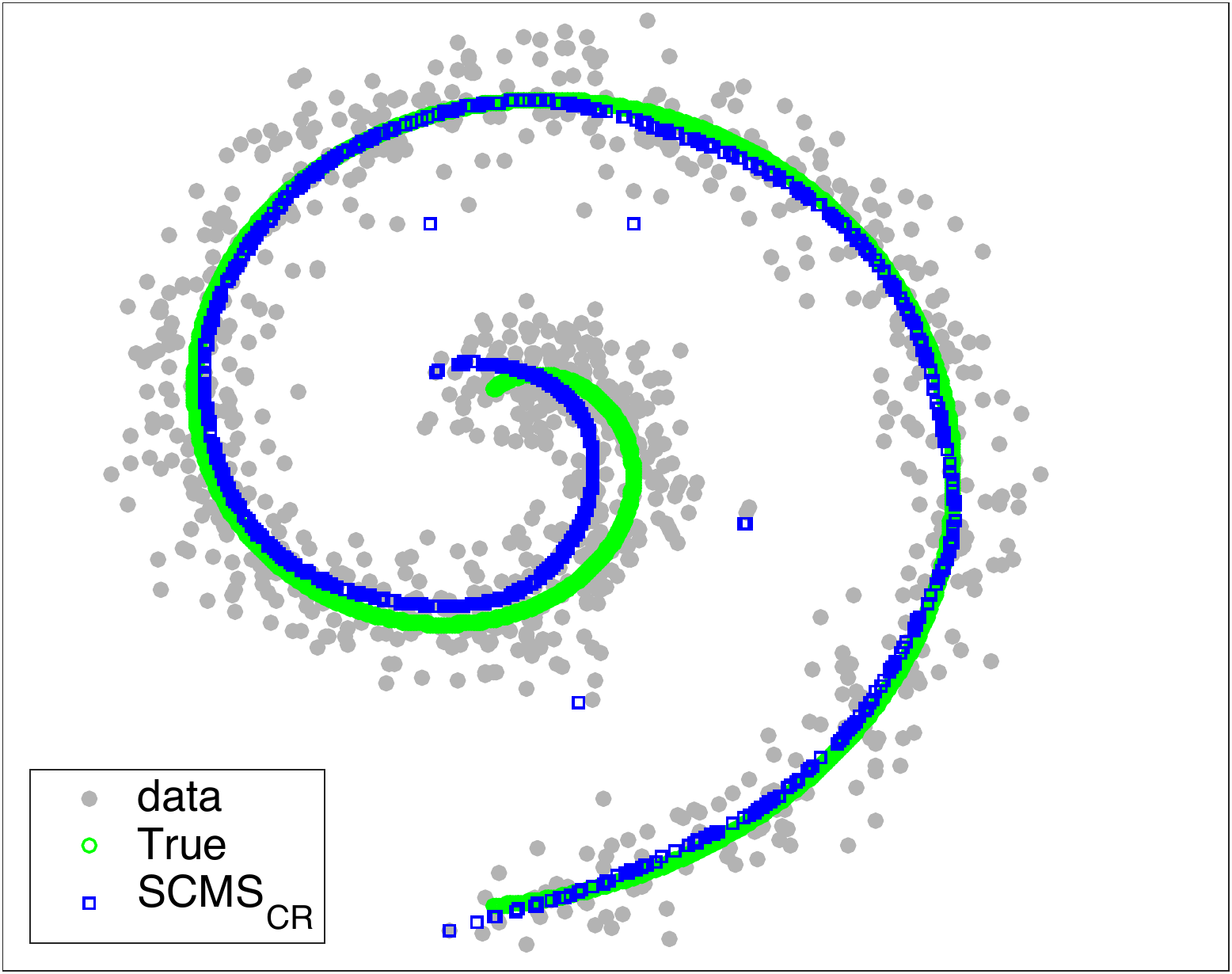}}
     \setcounter{subfigure}{0}
     \subfigure[LSDRF]{\includegraphics[width=0.32\textwidth,clip]
     {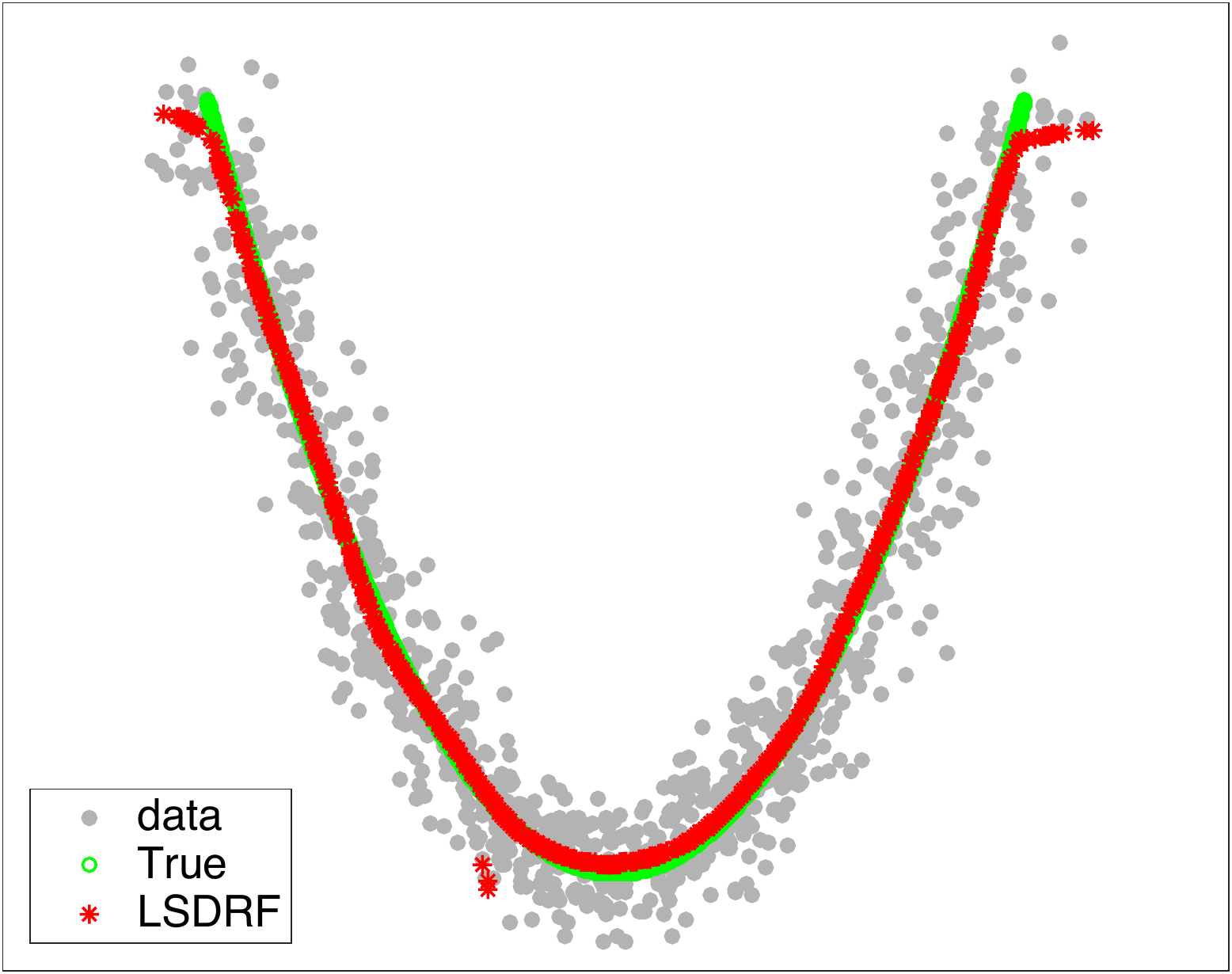}}
     \subfigure[SCMS$_{\text{LS}}$]{\includegraphics[width=0.32\textwidth,clip]
     {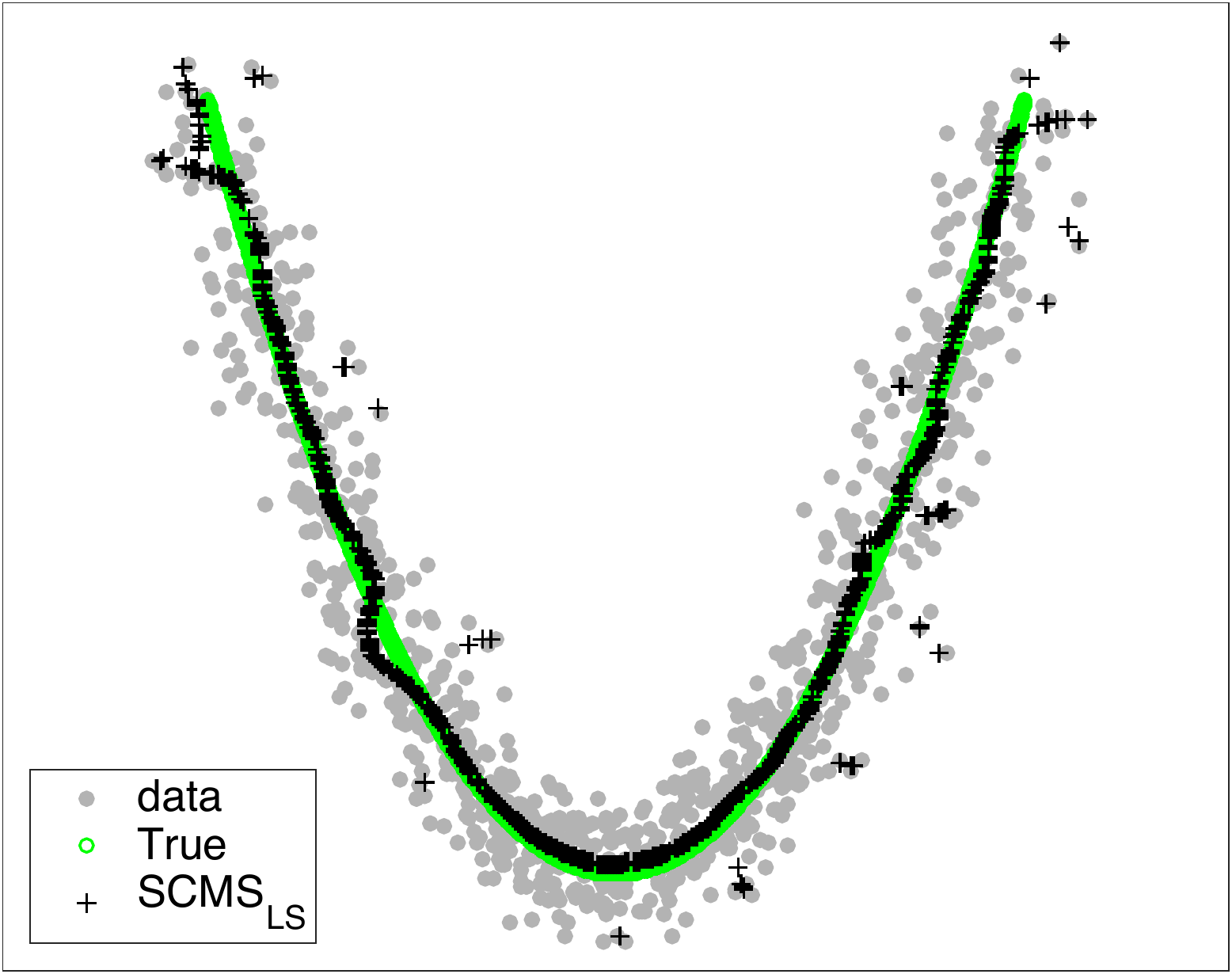}} 
     \subfigure[SCMS$_{\text{CR}}$]{\includegraphics[width=0.32\textwidth,clip]
     {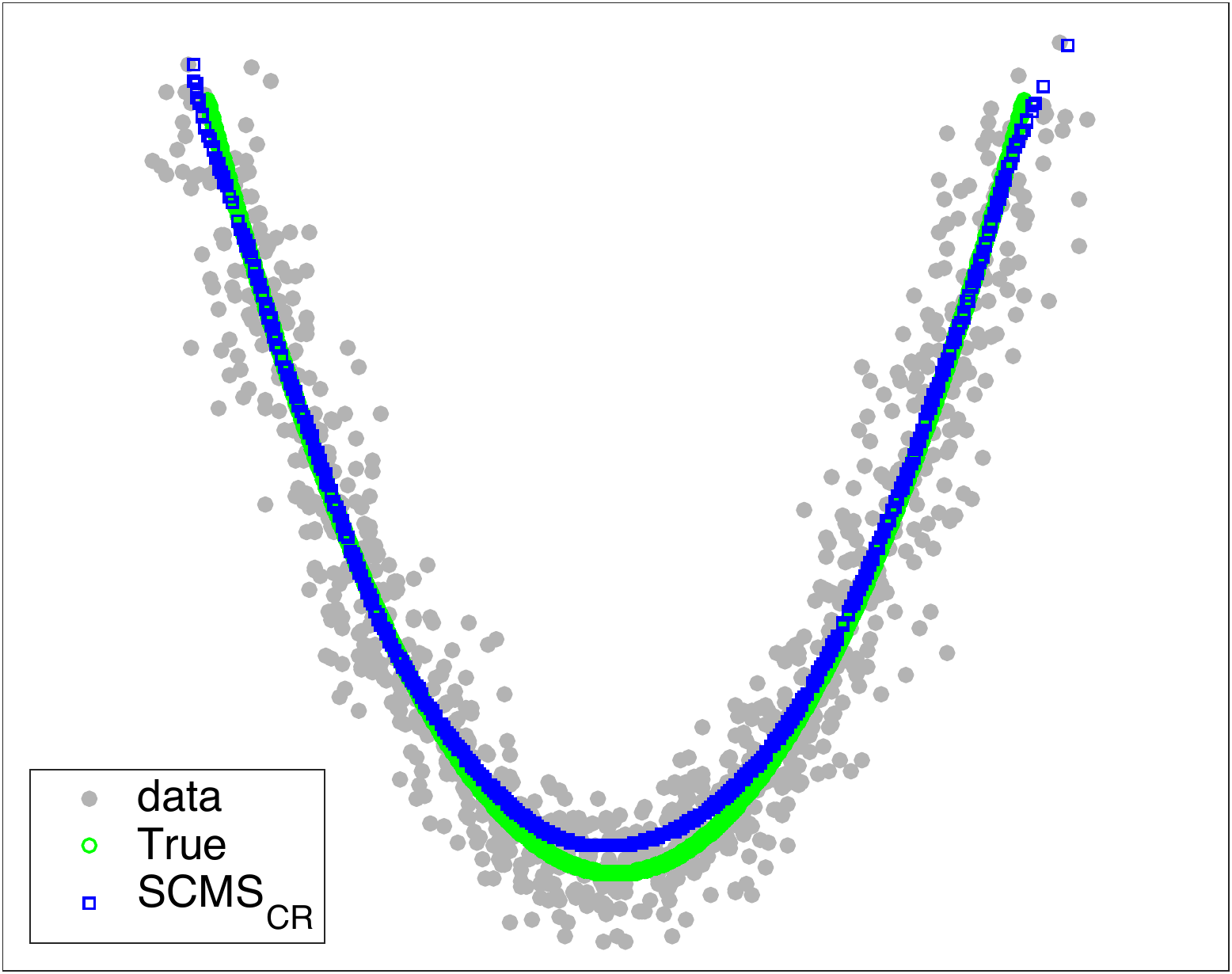}} 
     \caption{\label{fig:RidgeEach} Comparison of the two estimated
     ridges by LSDRF, SCMS$_{\text{LS}}$ and SCMS$_{\text{CR}}$.}
    \end{center}
   \end{figure}   
   \begin{figure}[p]
    \begin{center}
     \subfigure{\includegraphics[width=0.3\textwidth,clip]{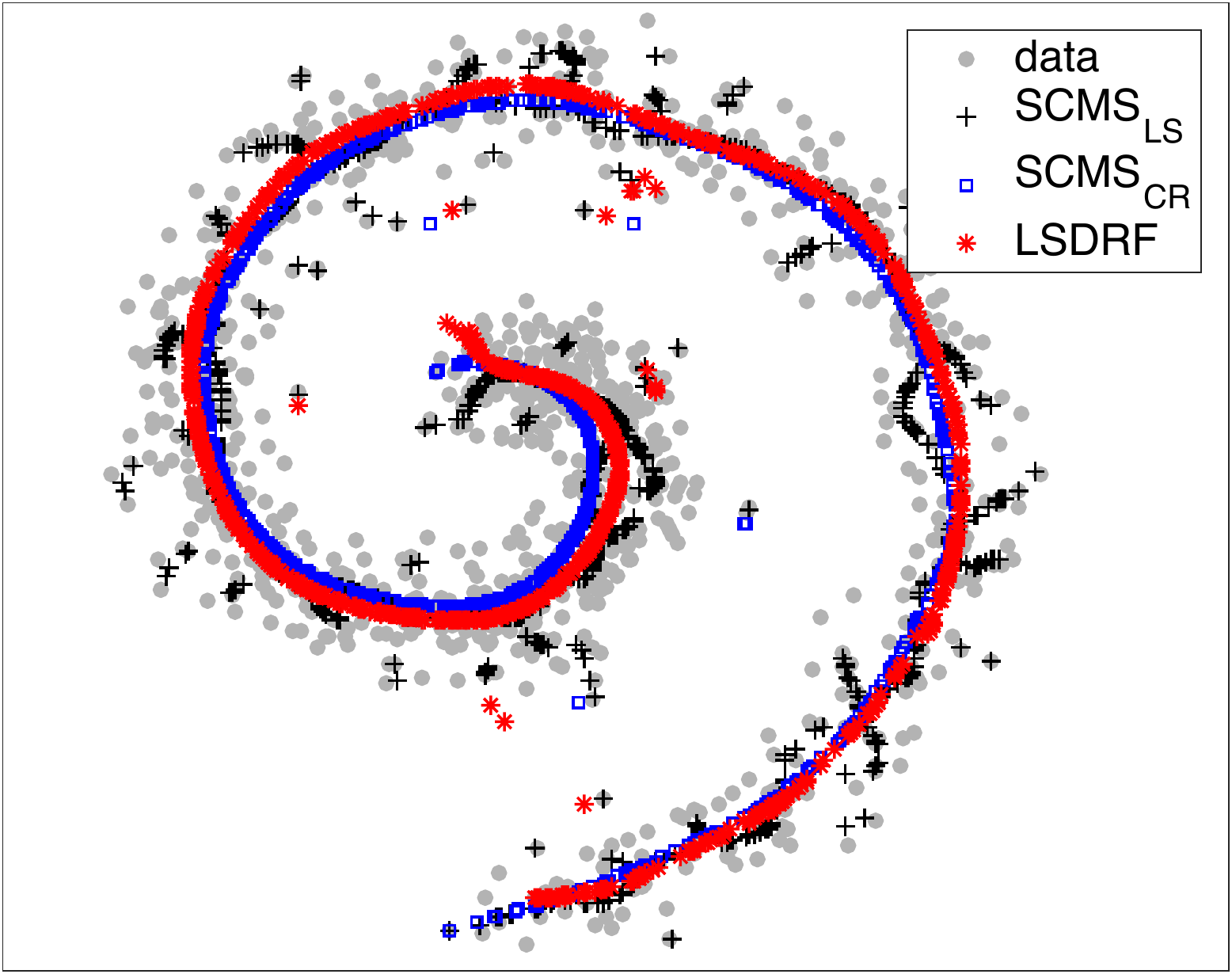}}
     \subfigure{\includegraphics[width=0.3\textwidth,clip]{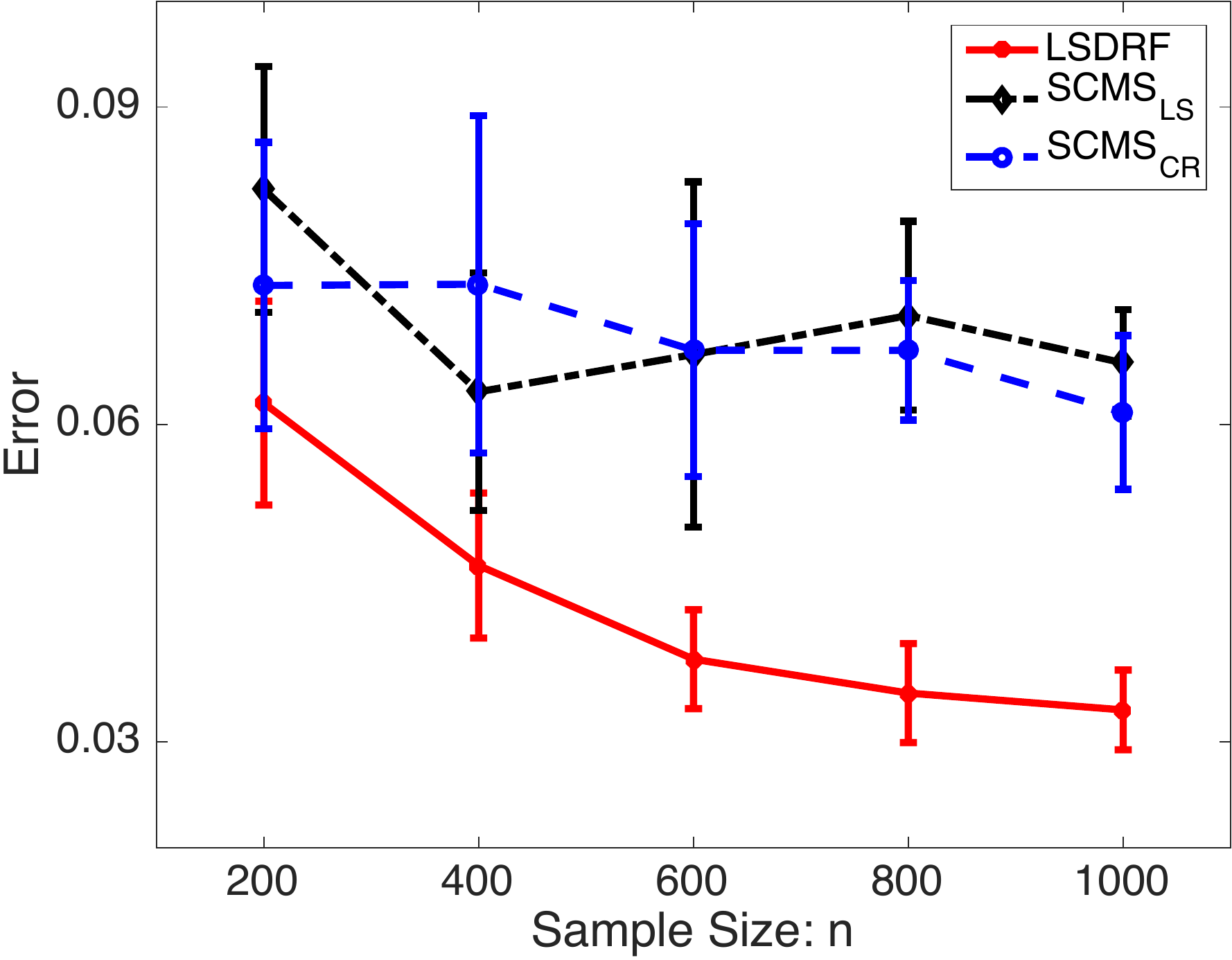}}
     \subfigure{\includegraphics[width=0.3\textwidth,clip]{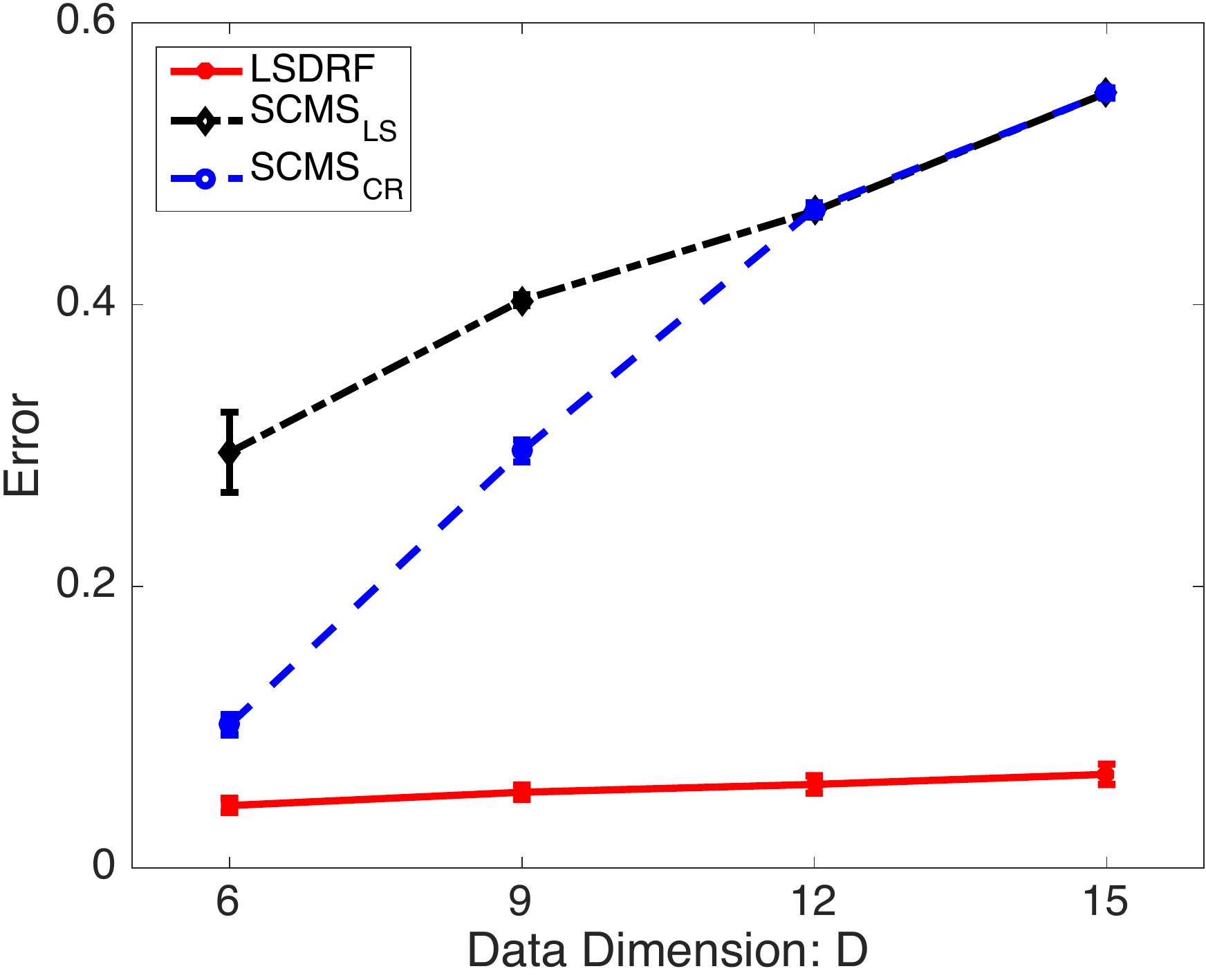}}\\
     \subfigure{\includegraphics[width=0.3\textwidth,clip]{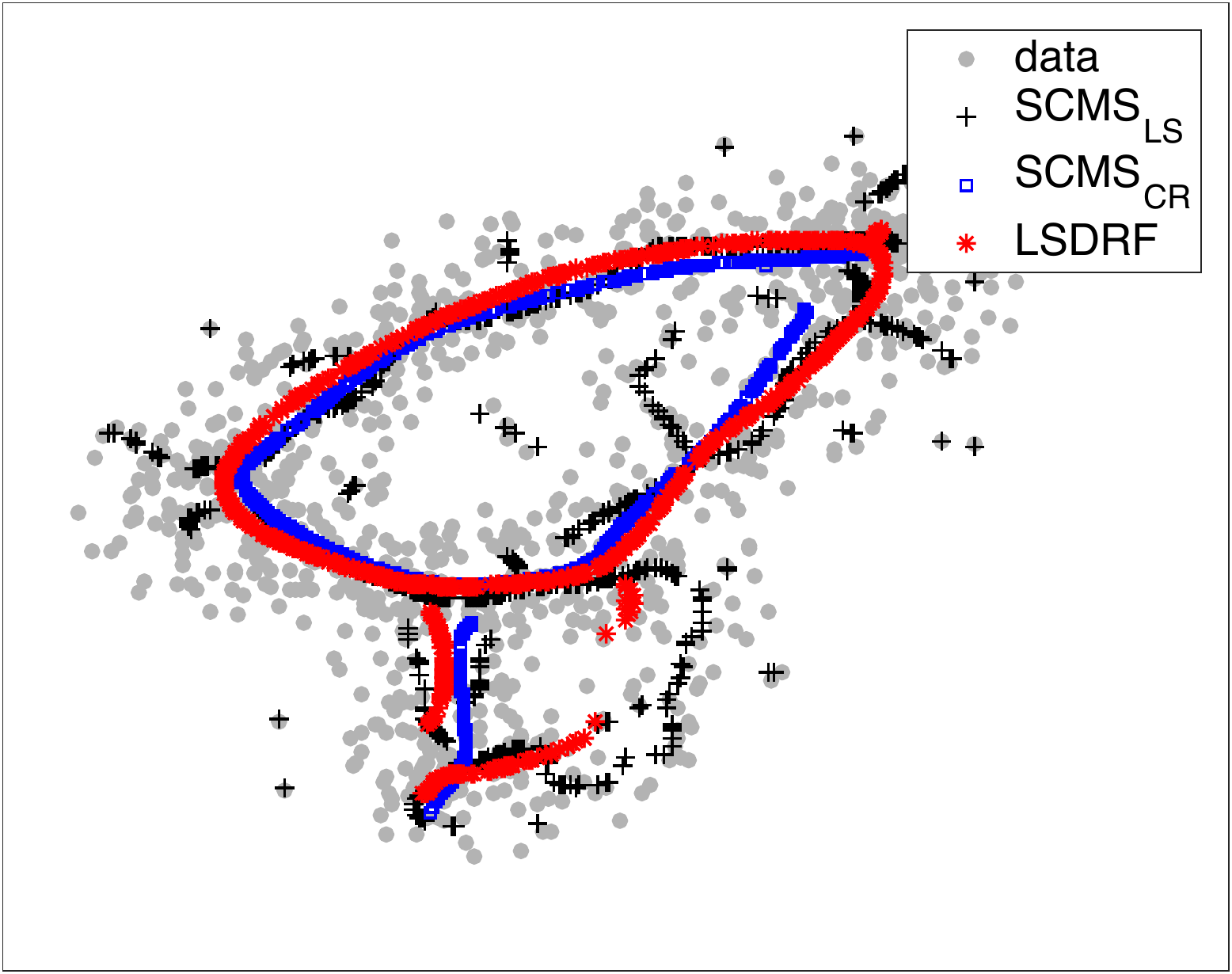}}
     \subfigure{\includegraphics[width=0.3\textwidth,clip]{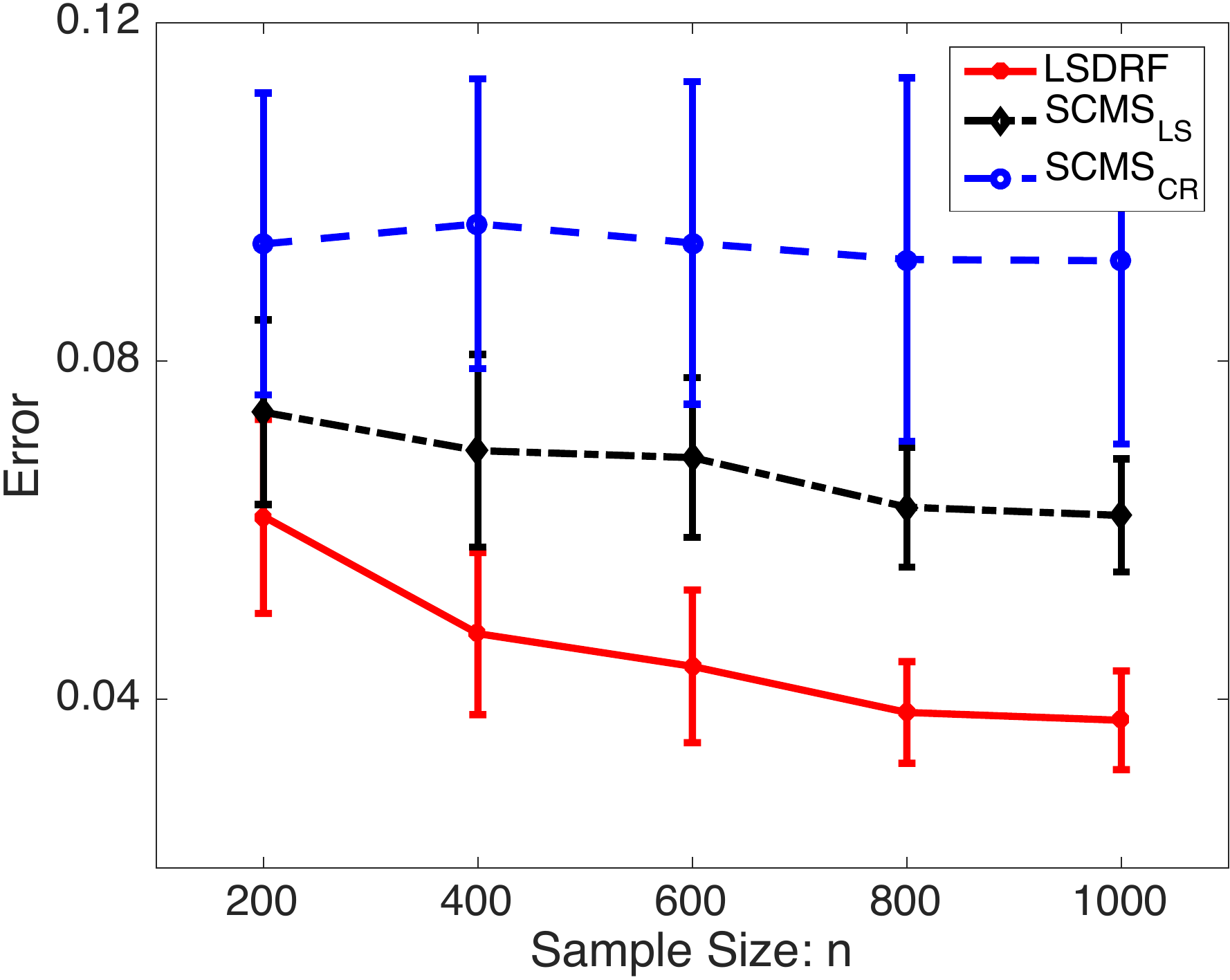}}
     \subfigure{\includegraphics[width=0.3\textwidth,clip]{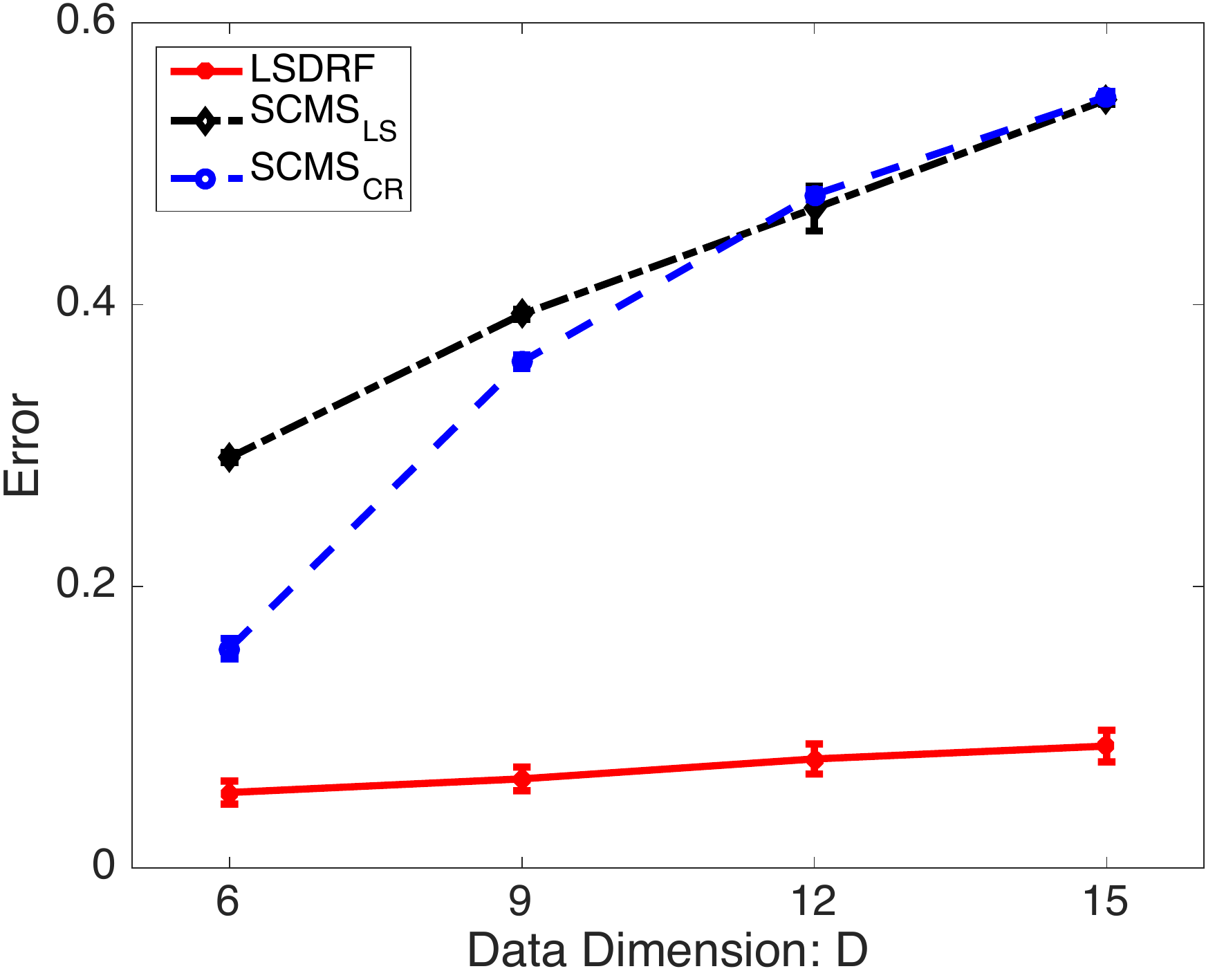}}\\
     \subfigure{\includegraphics[width=0.3\textwidth,clip]{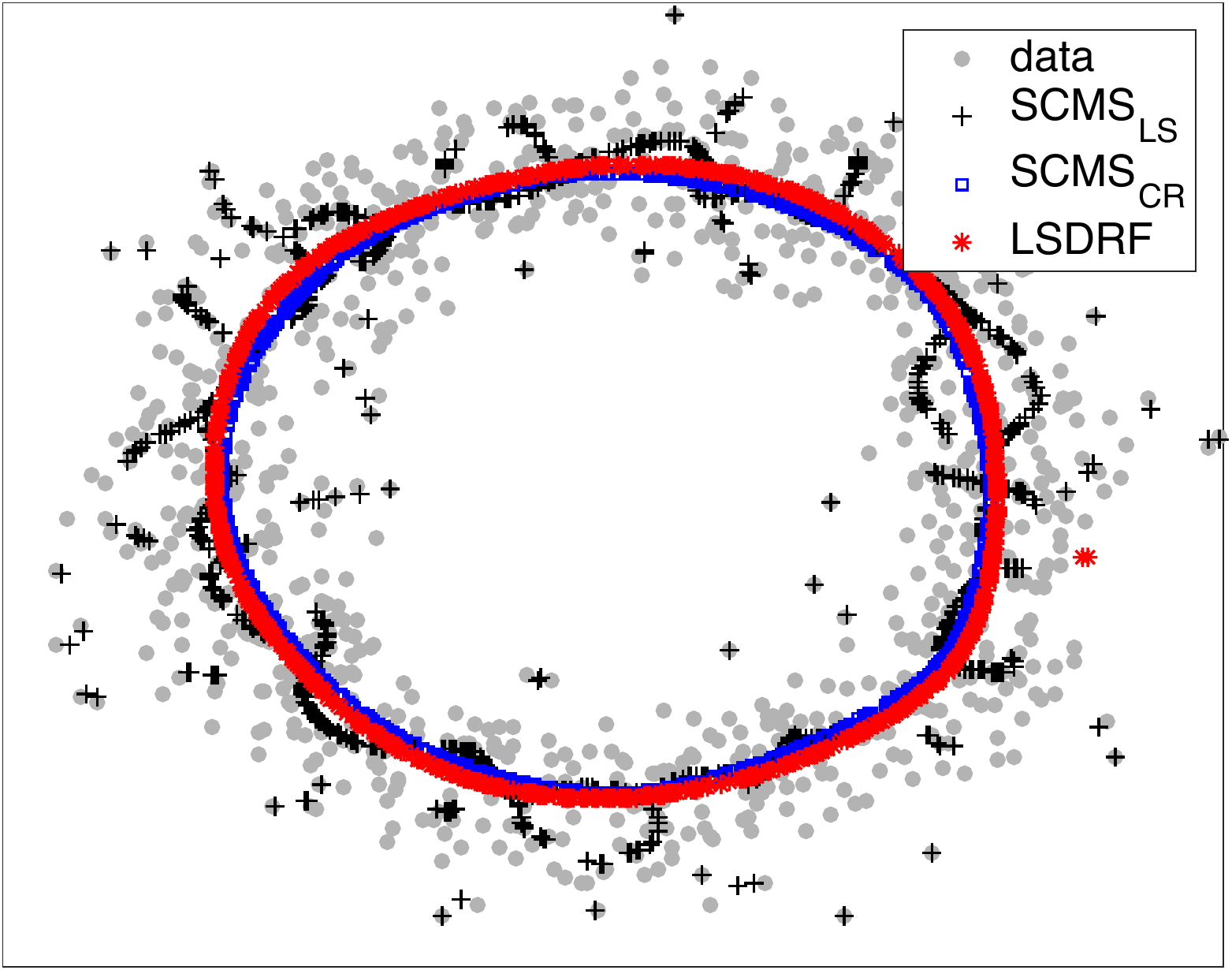}}
     \subfigure{\includegraphics[width=0.3\textwidth,clip]{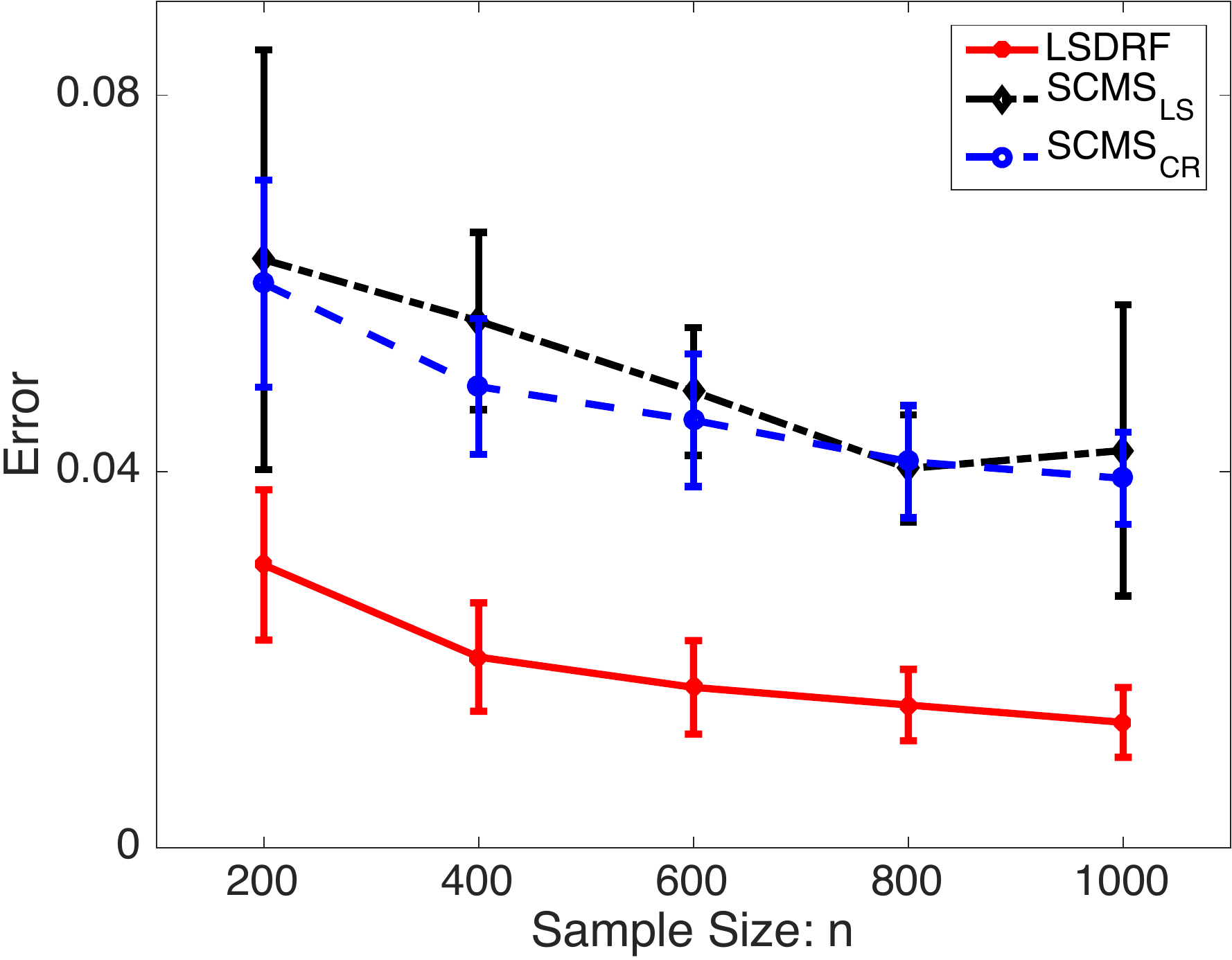}}
     \subfigure{\includegraphics[width=0.3\textwidth,clip]{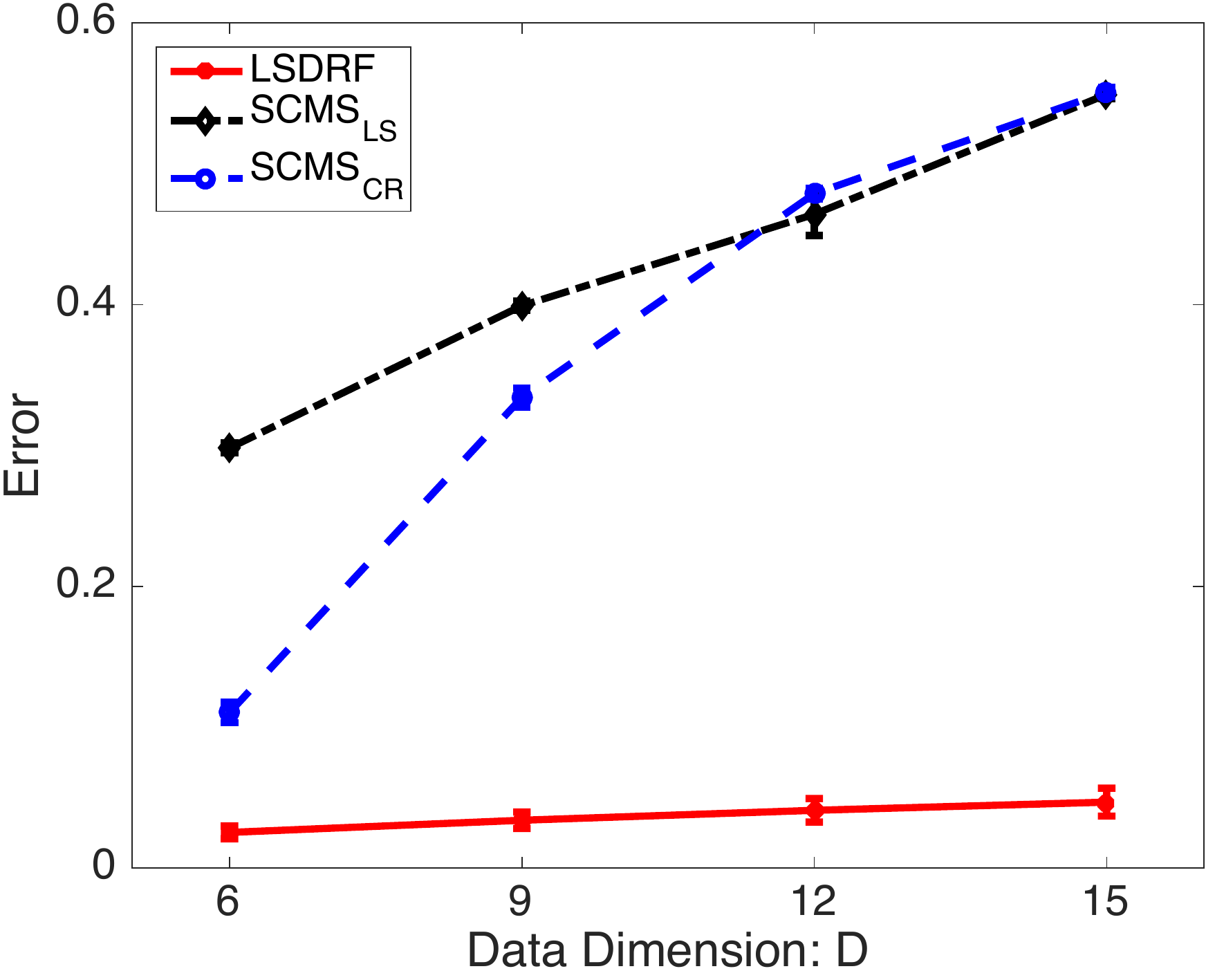}}\\
     \subfigure{\includegraphics[width=0.3\textwidth,clip]{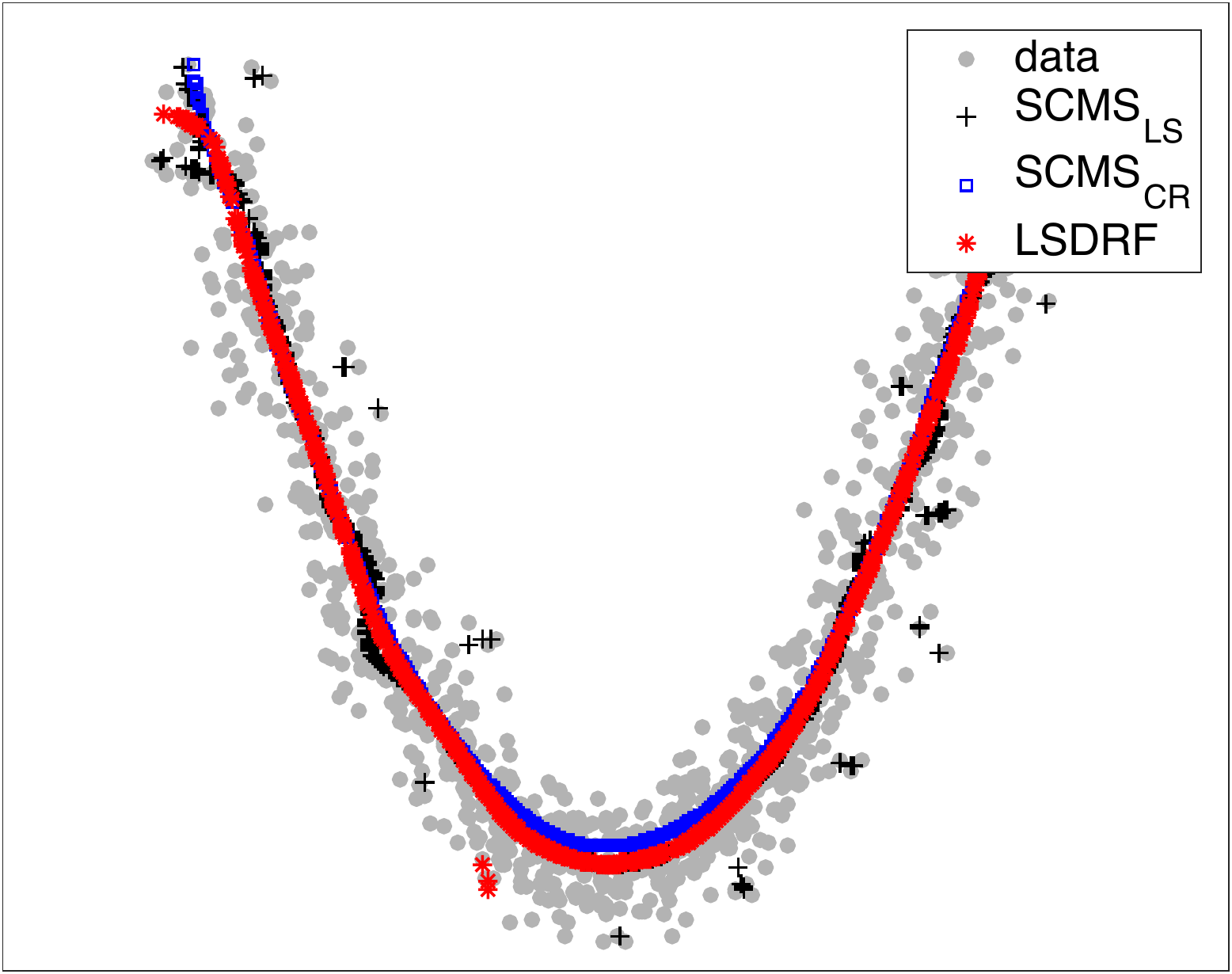}}
     \subfigure{\includegraphics[width=0.3\textwidth,clip]{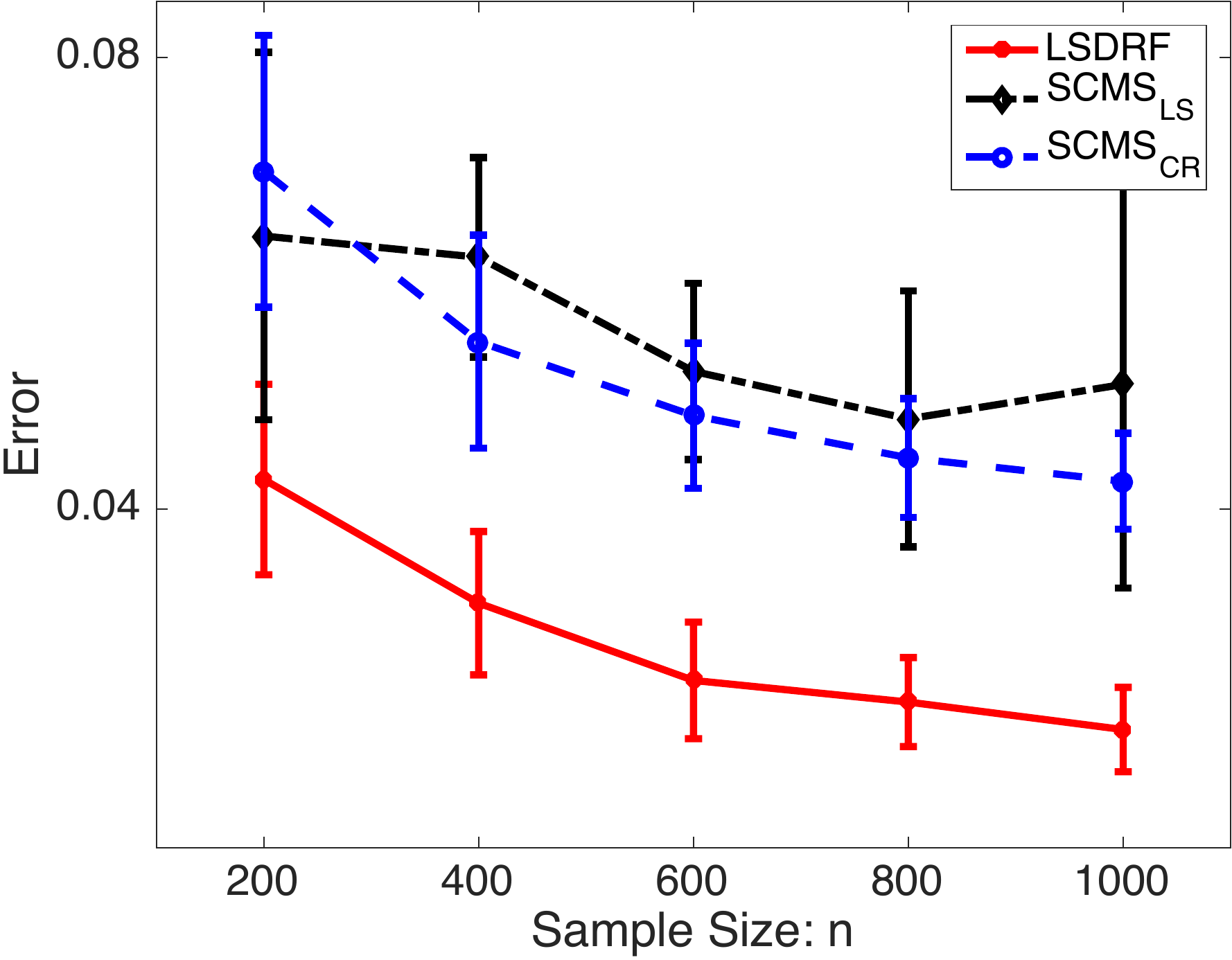}}
     \subfigure{\includegraphics[width=0.3\textwidth,clip]{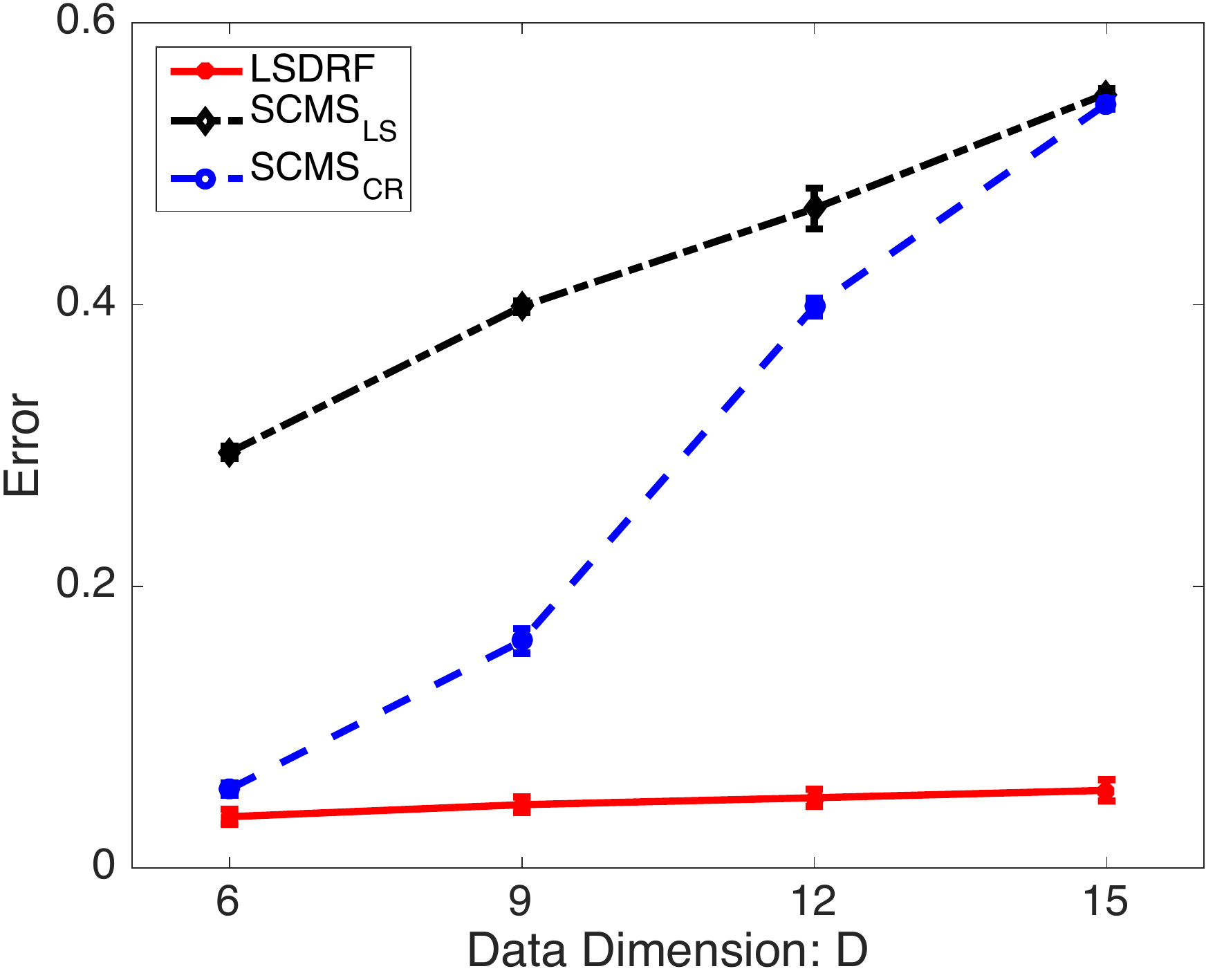}}\\
     \setcounter{subfigure}{0} \subfigure[Density ridge
     estimates]{\includegraphics[width=0.3\textwidth,clip]{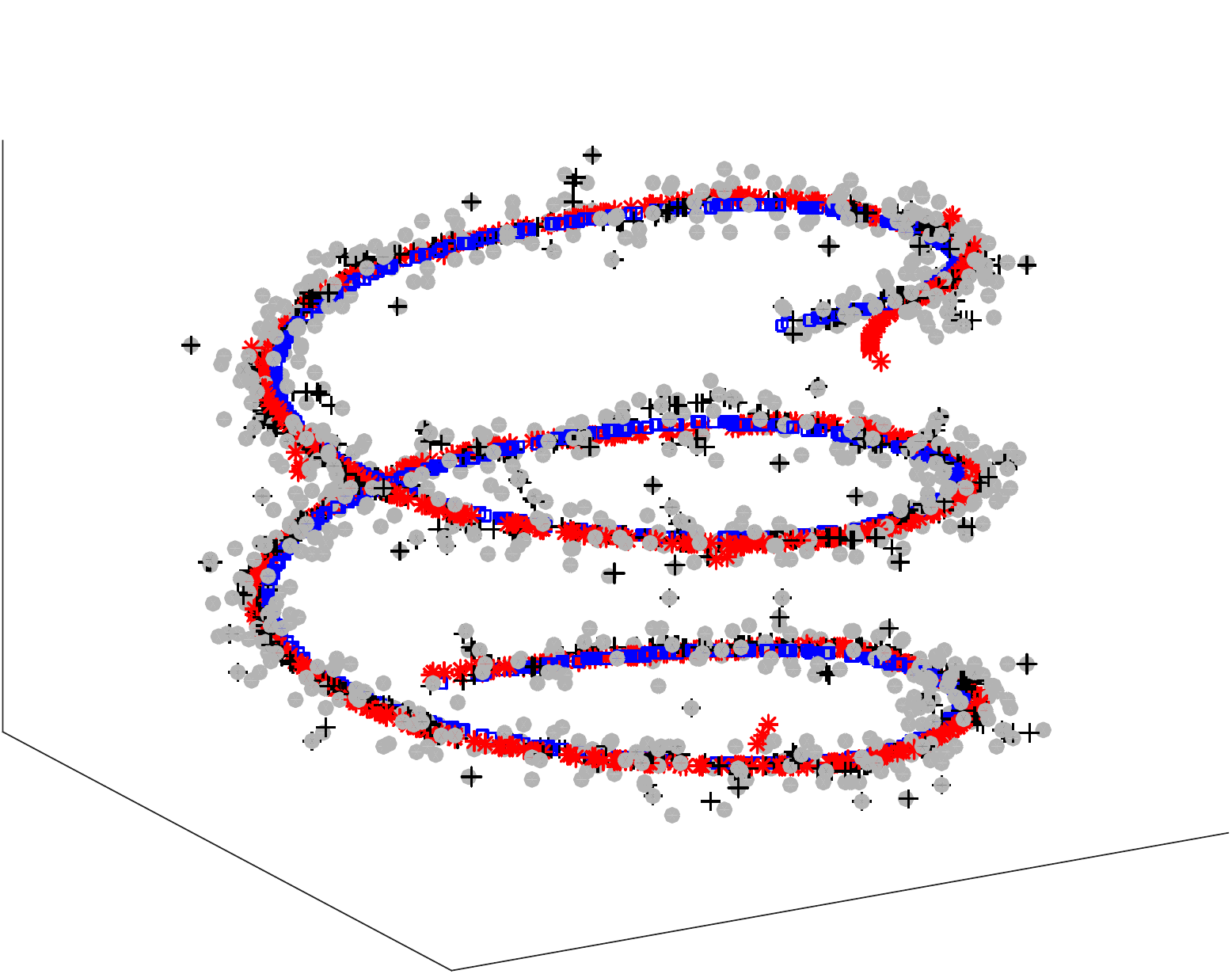}}
     \subfigure[Error against sample
     size]{\includegraphics[width=0.3\textwidth,clip]{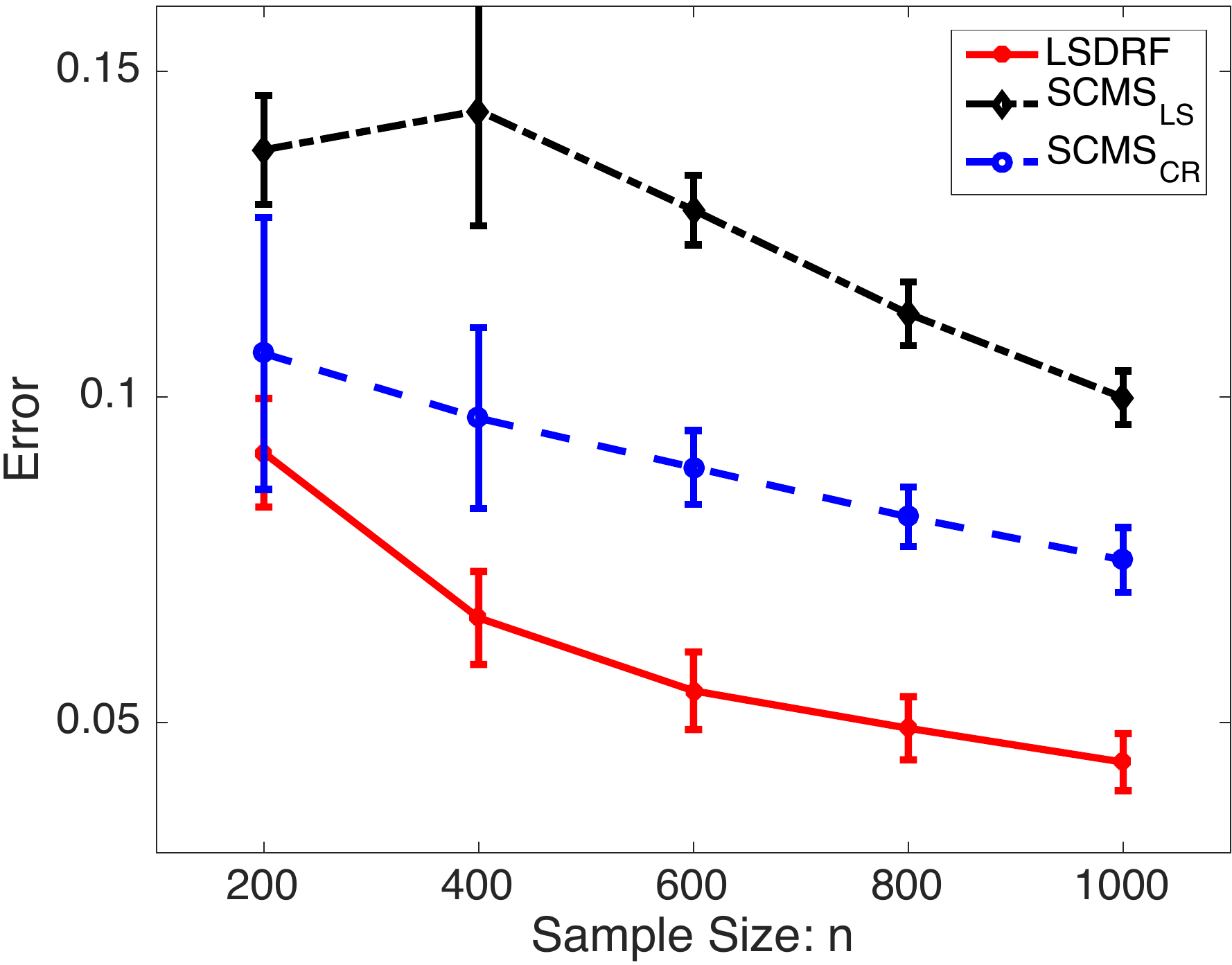}}
     \subfigure[Error against data
     dimension]{\includegraphics[width=0.3\textwidth,clip]{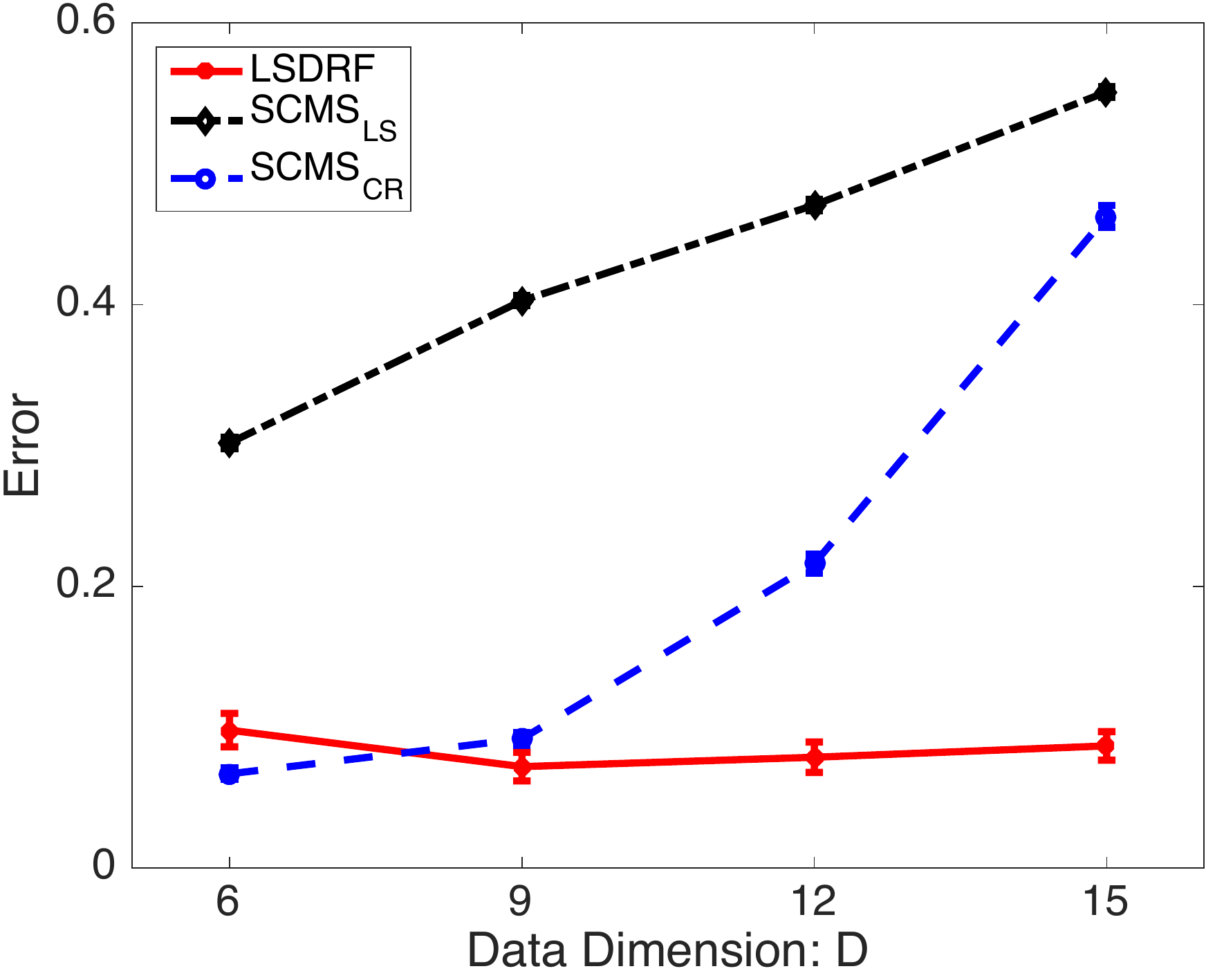}}
     \caption{\label{fig:RidgeArt} Performance of ridge estimation on
     artificial data. Each point and error bar denote the average and
     standard deviation of ARI over $50$ runs, respectively. For (c),
     $n=1000$. Errors for ridge estimation are computed according
     to~\eqref{error-ridge}.}
    \end{center}
   \end{figure}
   
   The performance of LSDRF is compared to SCMS with two different
   bandwidth selection methods:
   \begin{itemize}
    \item {\bf LSDRF}: When estimating $g_j(\bm{x})$, we selected ten
	  candidates of the width parameter in the Gaussian kernel and
	  the regularization parameter from $10^{l}\times
	  \sigma^{(j)}_{\text{med}}$ ($-0.3\leq l \leq 1$) and $10^{m}$
	  ($-4\leq m \leq 0$), respectively. When estimating
	  $[\bm{H}(\bm{x})]_{ij}$, ten candidates of the width parameter
	  in the Gaussian kernel were selected from $10^{l}\times
	  \sqrt{\sigma^{(i)}_{\text{med}}\sigma^{(j)}_{\text{med}}}$
	  ($-0.3\leq l \leq 1$). For the regularization parameter, we
	  used the same candidates as in $g_j(\bm{x})$.
	  
    \item {\bf SCMS$_{\text{LS}}$}: The bandwidth parameter $h$ was
	  cross-validated based on the standard \emph{integrated squared
	  error}. We selected ten candidates of $h$ from $10^{l}\times
	  h_{\text{med}}$ ($-1.5\leq l \leq 0$) where $h_{\text{med}}$
	  is the median value of $|x_i^{(j)}-x_k^{(j)}|$ with respect to
	  $i$, $j$ and $k$.
	 
   \item {\bf SCMS$_{\text{CR}}$}: The bandwidth parameter $h$ was
	 cross-validated based on the coverage risk proposed
	 in~\citet{chen2015optimal}. As suggested
	 in~\citet{chen2015optimal}, we selected ten candidates of $h$
	 from $10^{l}\times h_{\text{NR}}$ ($-1\leq l \leq 0$) where
	 $h_{\text{NR}}$ is the bandwidth based on the normal reference
	 rule~\citep{silverman1986density}.
   \end{itemize}
   We investigate the performance of these methods on a variety of
   simulated datasets.\footnote{Most of the datasets are generated using
   a MATLAB package made by Jakob Verbeek, which is available at
   \url{http://lear.inrialpes.fr/people/verbeek/code/kseg_soft.tar.gz}.}
   The $i$-th observation of data was generated according to
   $x_{i}^{(j)}=f^{(j)}(t_i)+n^{(j)}_{i}$, where $t_i$ was taken from
   some range at regular intervals, $f^{(j)}(\cdot)$ denotes some fixed
   function, and $n^{(j)}_i$ was the Gaussian noise with mean $0$ and
   standard deviation $0.15$. Higher-dimensional data were created by
   appending the Gaussian variables with mean $0$ and standard deviation
   $0.15$.  The estimation error was measured by
   \begin{align}
    \text{Error}=\frac{1}{n}\sum_{i=1}^{n} \min_{l}\|
    \widehat{\bm{y}}_i-\bm{f}(t_l)\|,
    \label{error-ridge}
   \end{align}
   where
   $\bm{f}(\cdot)=(f^{(1)}(\cdot),f^{(2)}(\cdot),\dots,f^{(D)}(\cdot))^{\top}$
   and $\widehat{\bm{y}}_i$ denotes an estimate of the density ridge
   point from $\bm{x}_i$.

   The estimated ridges are visualized in
   Fig.\ref{fig:RidgeEach}. SCMS$_{\text{LS}}$ provides a broken and
   non-smooth ridge estimate because the selected bandwidth by the
   least-squares cross-validation is small for density ridge estimation
   as in mode-seeking clustering. In contrast, the ridges estimated by
   LSDRF and SCMS$_{\text{CR}}$ are smooth. However, SCMS$_{\text{CR}}$
   gives a biased estimate around highly curved region in the true ridge
   (e.g., the centers of the spiral and quadratic curve in
   Fig.\ref{fig:RidgeEach}), while the bias in LSDDR seems smaller. This
   implies that LSDRF more accurately estimates density ridges. The
   accuracy of LSDRF is quantified on a variety of artificial datasets
   in Fig~\ref{fig:RidgeArt}. LSDRF produces smaller errors particularly
   when the sample size is large (Fig~\ref{fig:RidgeArt}(b)). In
   addition, as in mode-seeking clustering, the performance of LSDRF is
   even better when the dimensionality of data is higher
   (Fig.~\ref{fig:RidgeArt}(c)). This implies that our direct approach
   is useful for high(er)-dimensional data.
   \subsubsection{Density Ridge Estimation on Real-World Datasets}
   \label{ssec:application}
   \begin{figure}[!t]
   \begin{center}
    \subfigure{\includegraphics[width=0.32\textwidth,clip]{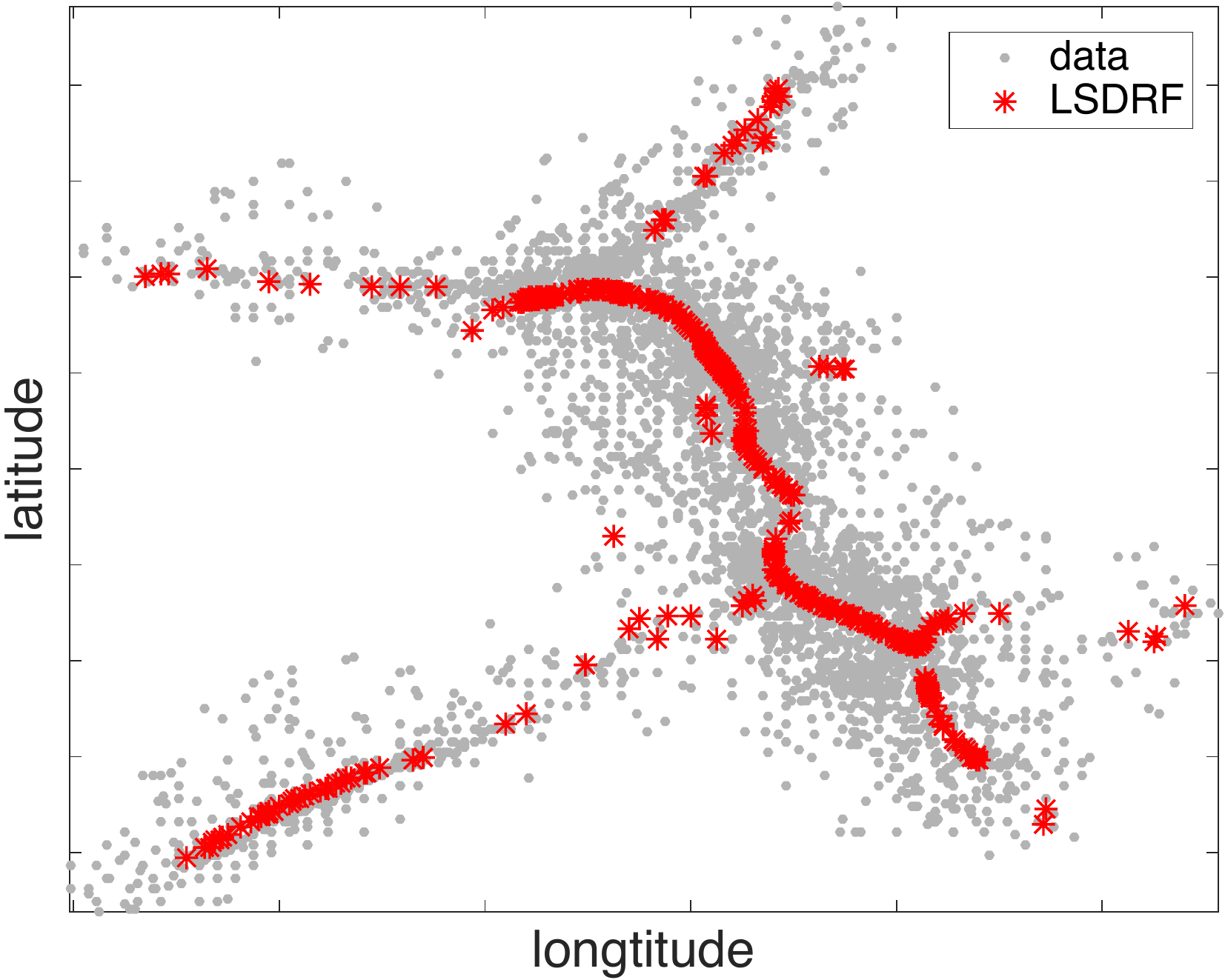}}
    \subfigure{\includegraphics[width=0.32\textwidth,clip]{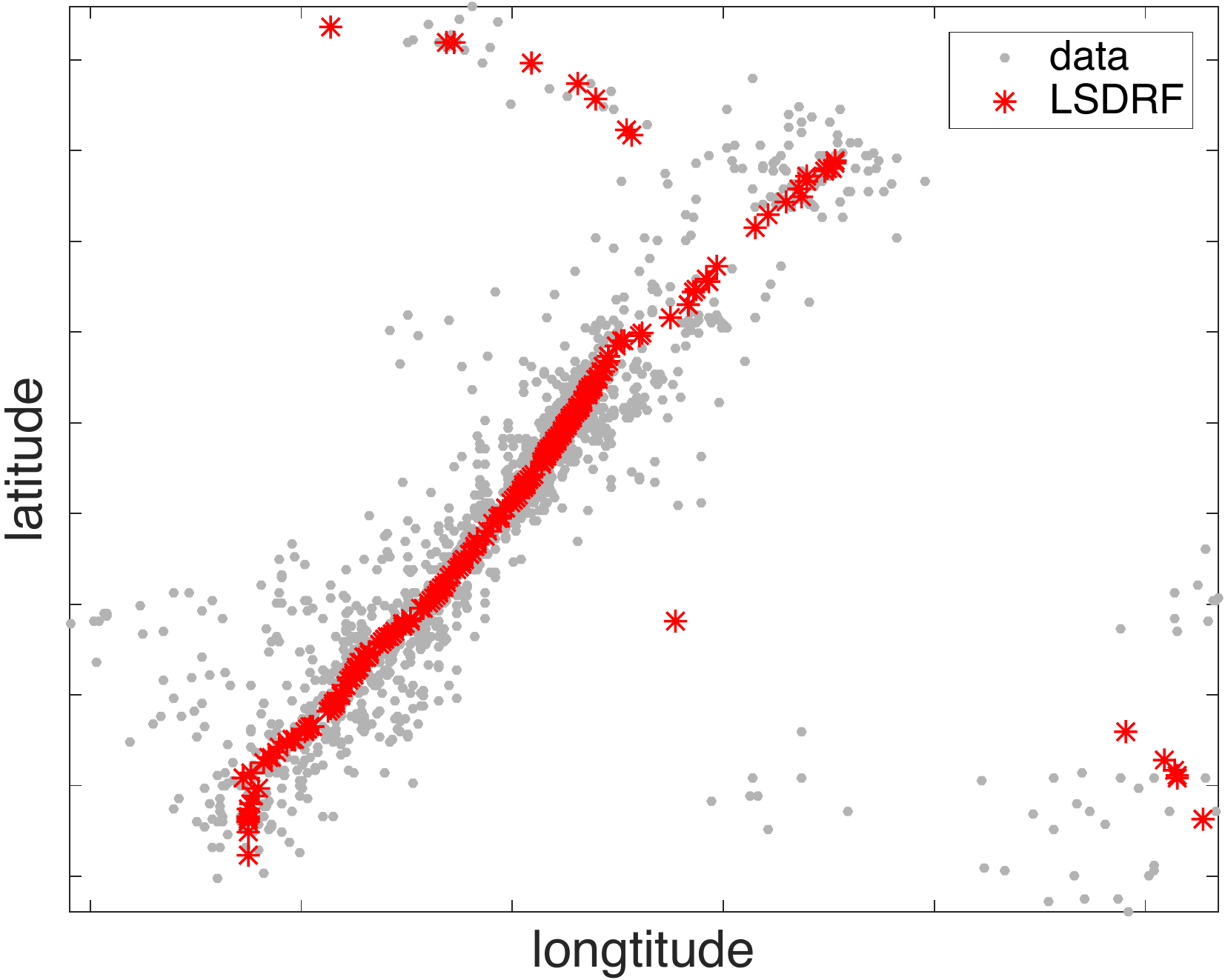}}
    \subfigure{\includegraphics[width=0.32\textwidth,clip]{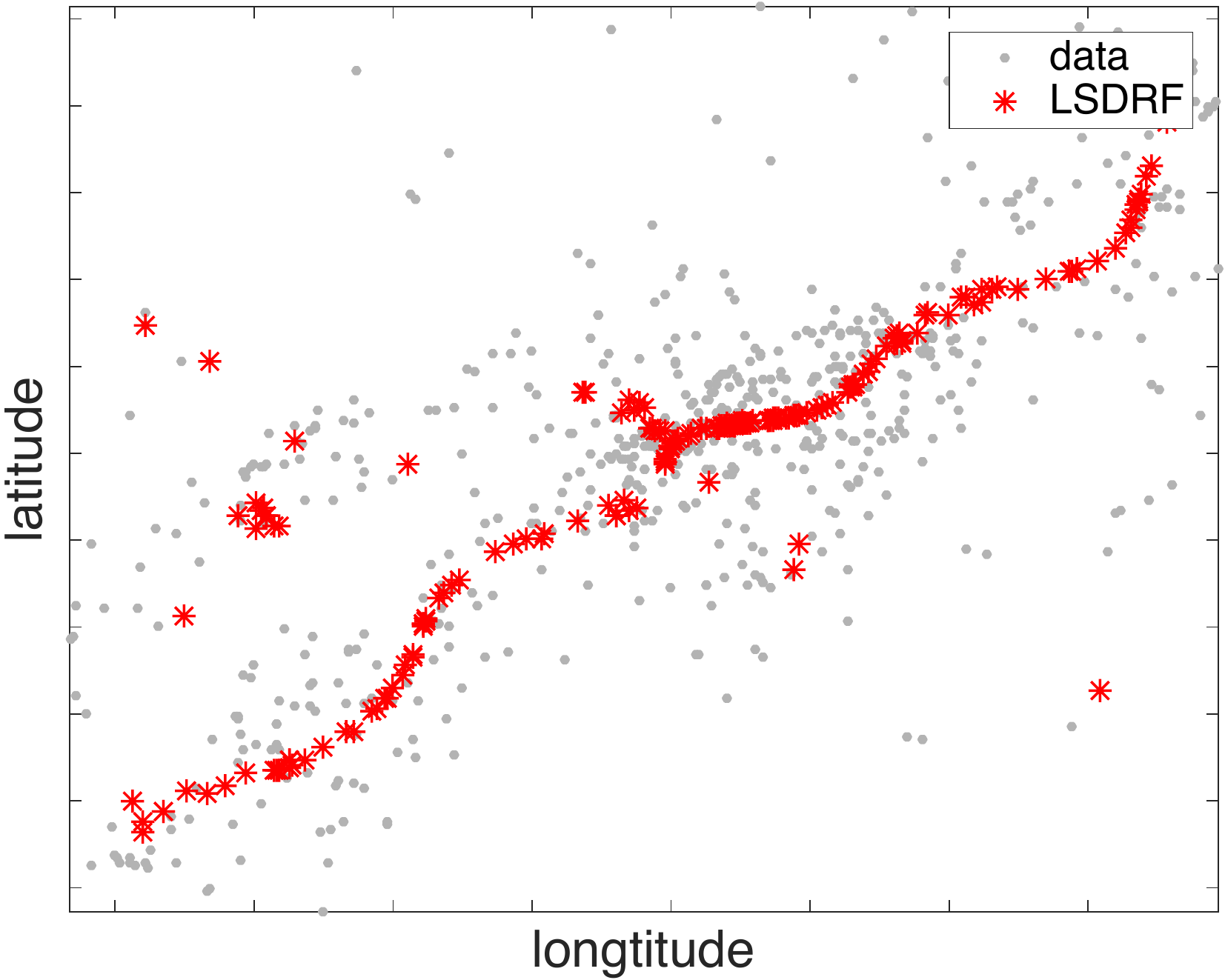}}
    \subfigure{\includegraphics[width=0.32\textwidth,clip]{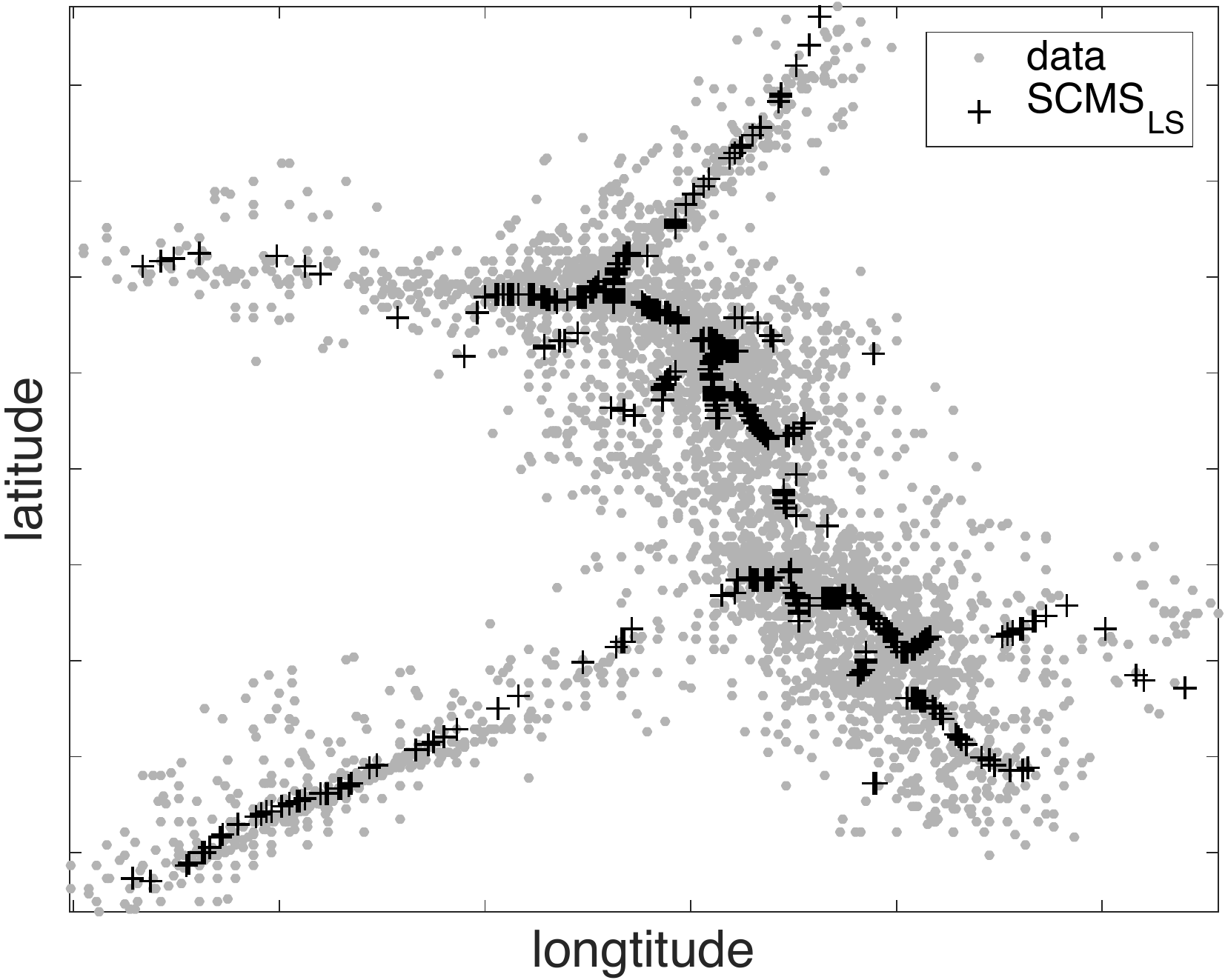}}
    \subfigure{\includegraphics[width=0.32\textwidth,clip]{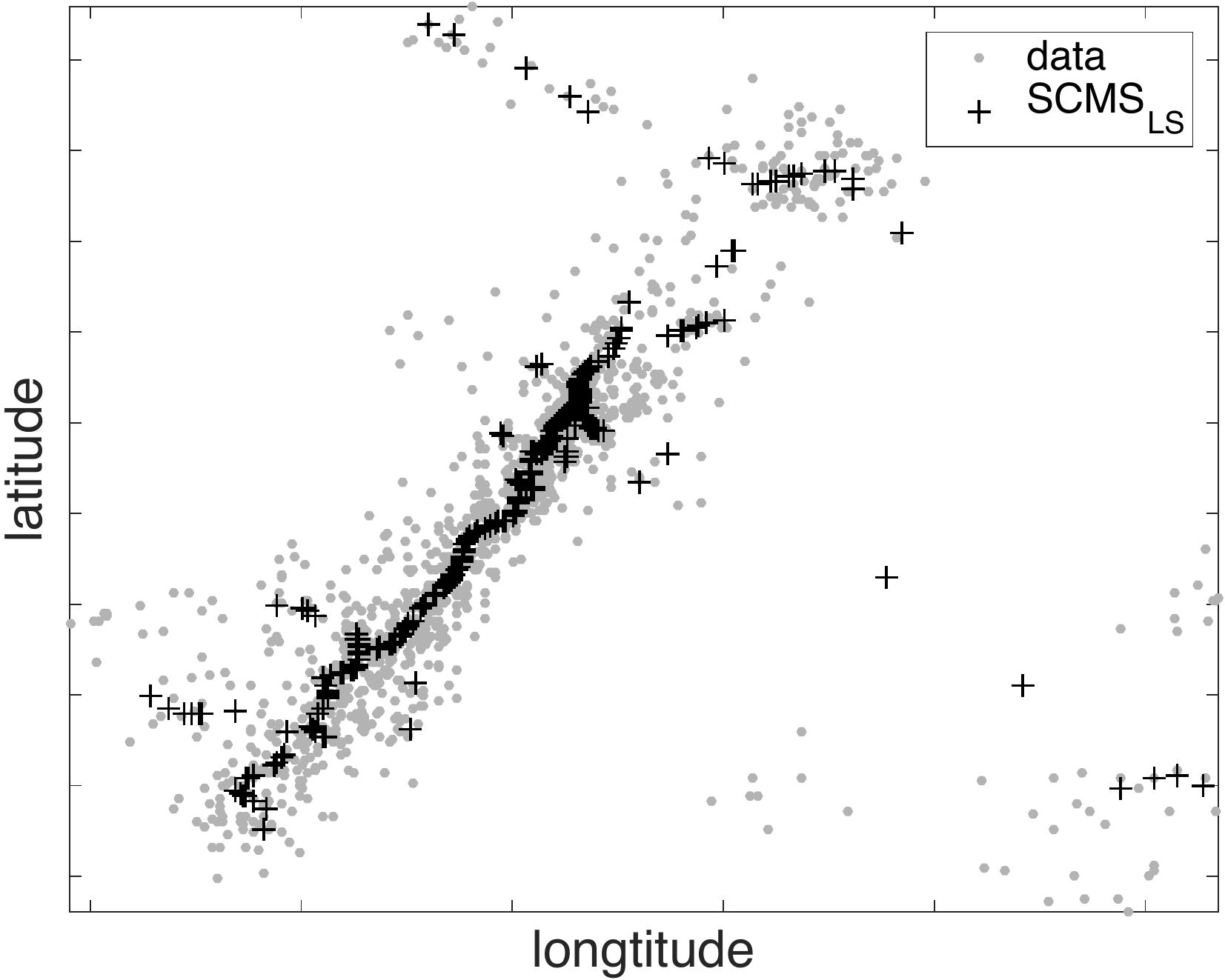}}
    \subfigure{\includegraphics[width=0.32\textwidth,clip]{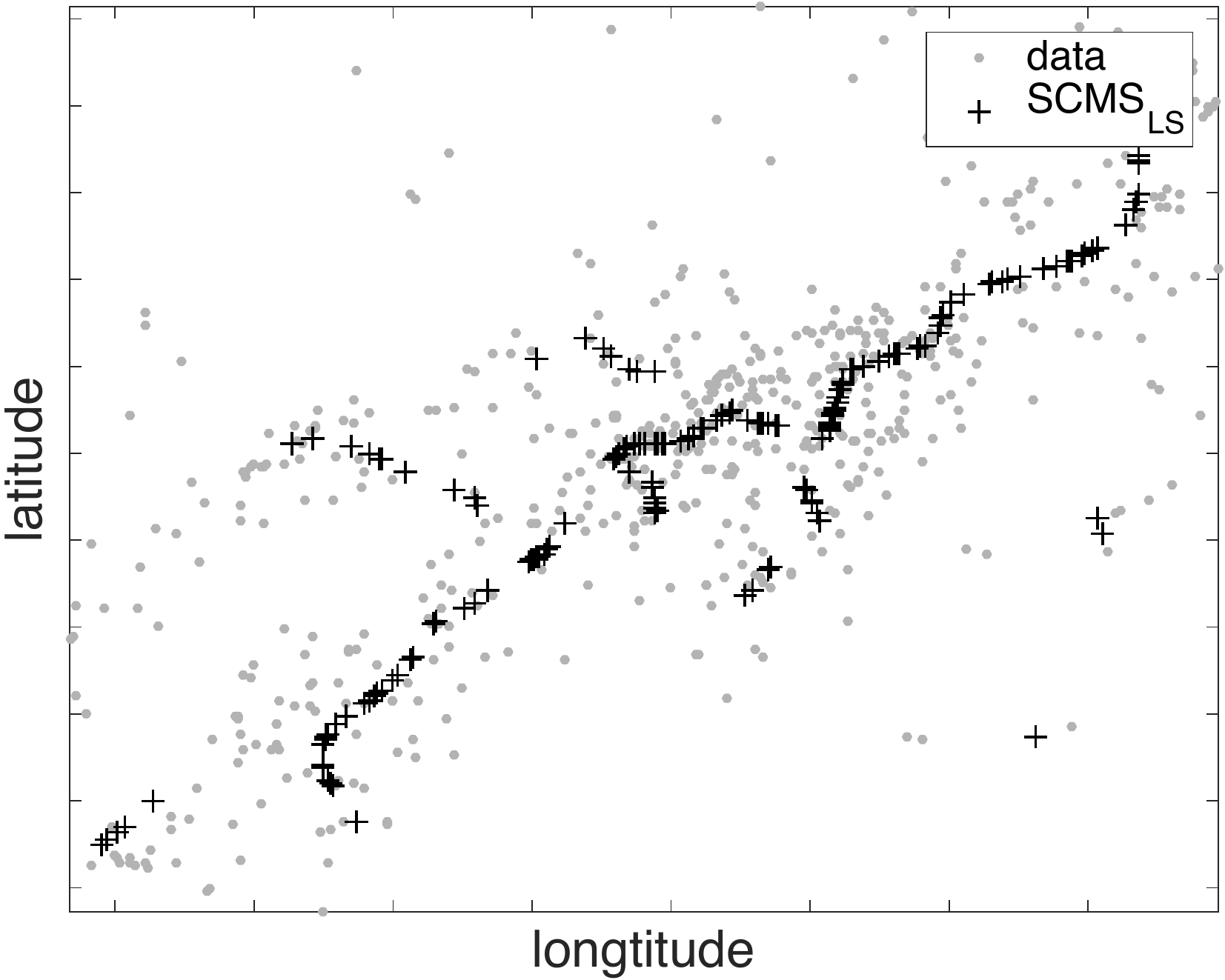}}
    \setcounter{subfigure}{0}
    \subfigure[Madrid~1]{\includegraphics[width=0.32\textwidth,clip]{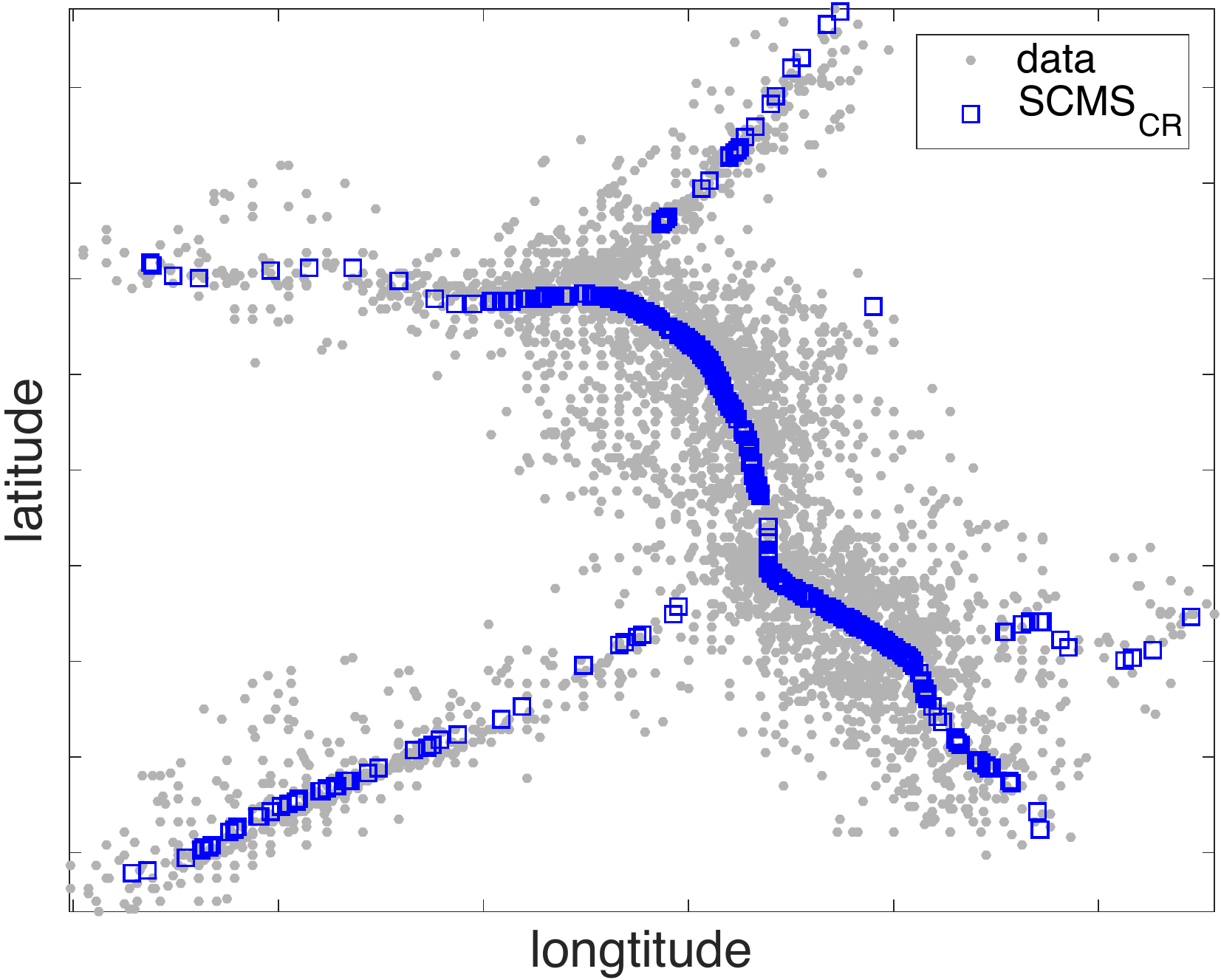}}
    \subfigure[Madrid~2]{\includegraphics[width=0.32\textwidth,clip]{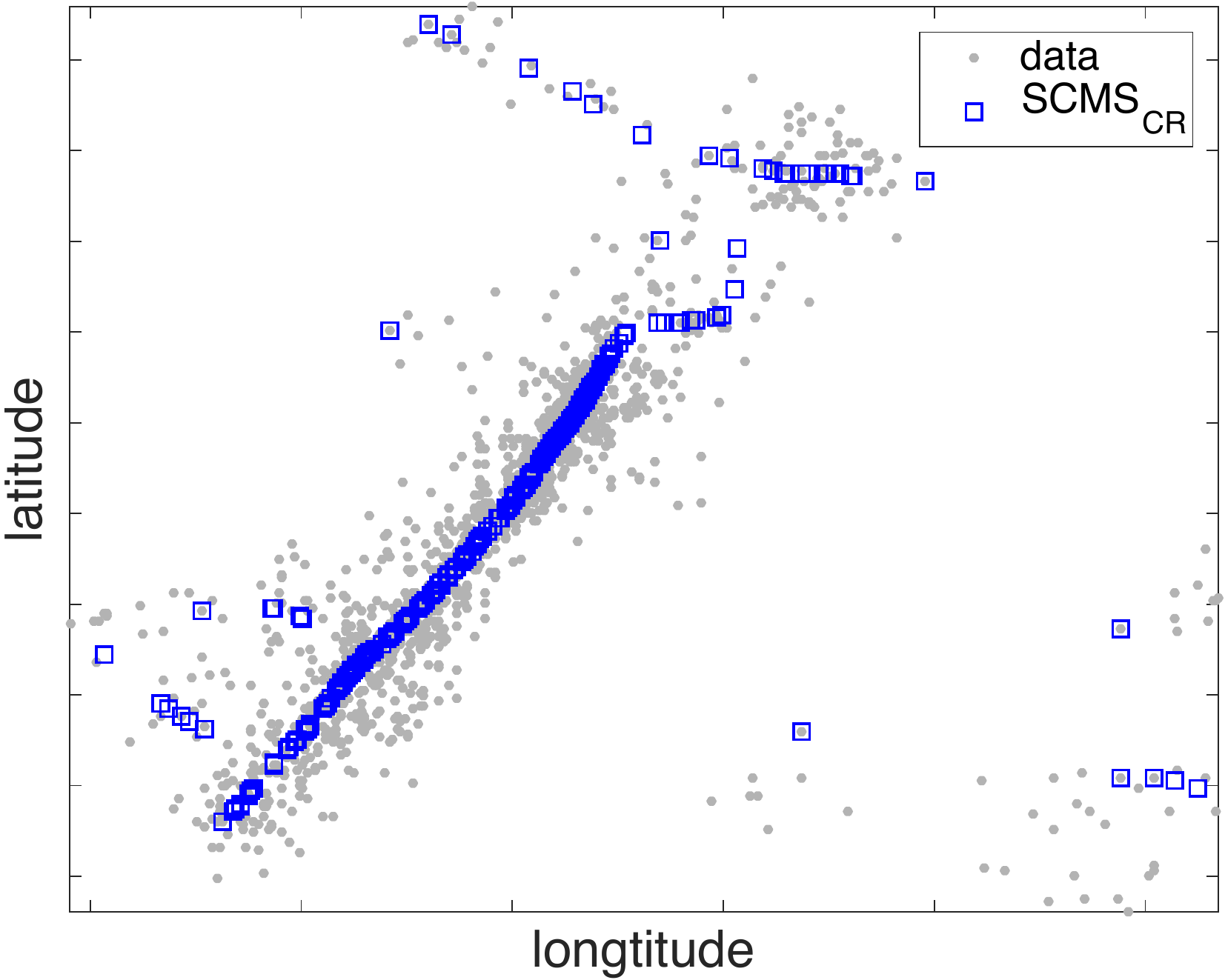}}
    \subfigure[Madrid~3]{\includegraphics[width=0.32\textwidth,clip]{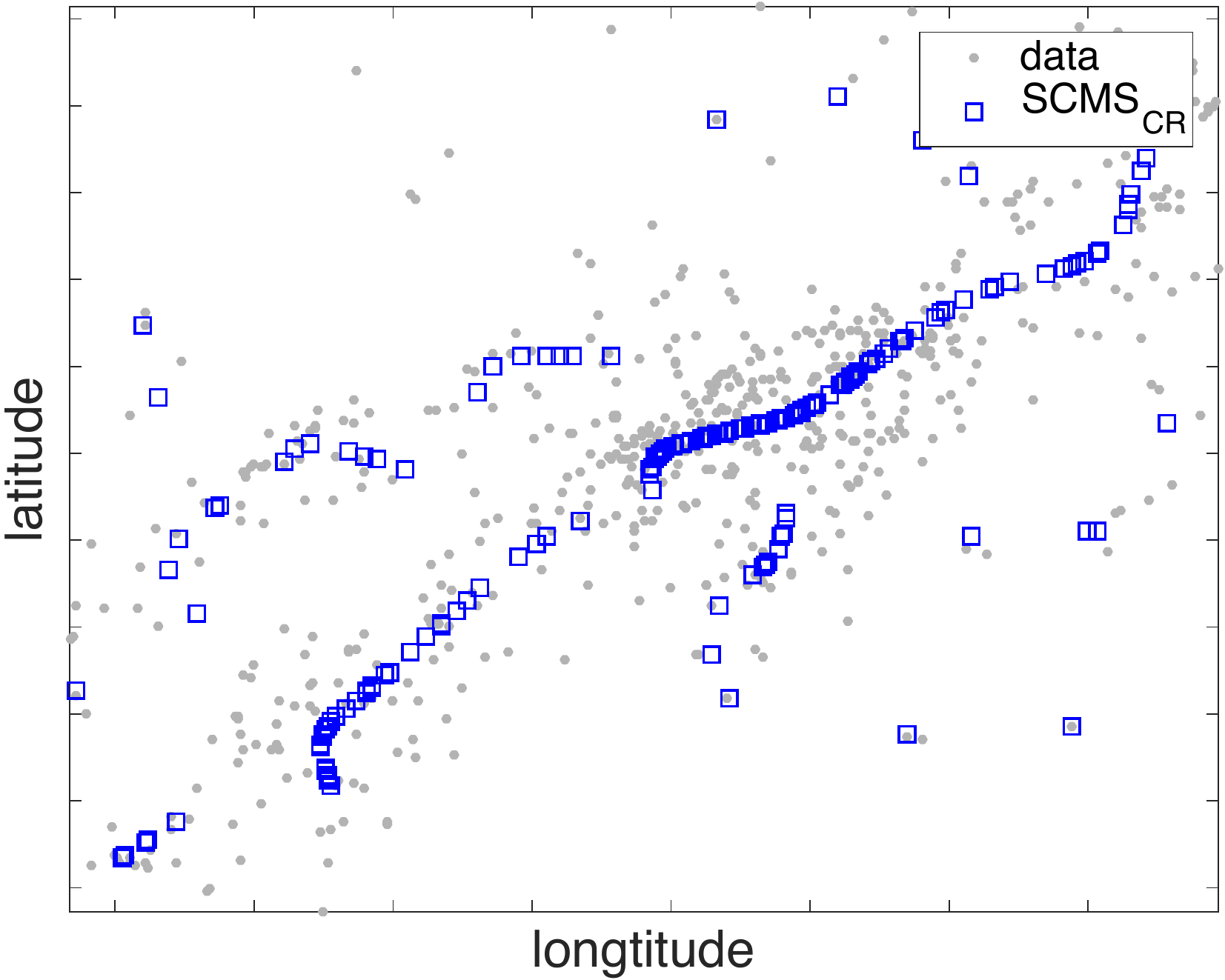}}
    \caption{\label{fig:Madrid} Density ridge estimation to three
    regions in the New Madrid earthquake dataset. The three regions
    (a,b,c) were extracted according to a range of latitude and
    longitude. The first, second and third rows correspond to results
    from LSDRF, SCMS$_{\text{LS}}$ and SCMS$_{\text{CR}}$,
    respectively.}
   \end{center}
   \end{figure}
   \begin{figure}[!t]
   \begin{center}
    \subfigure{\includegraphics[width=0.32\textwidth,clip]{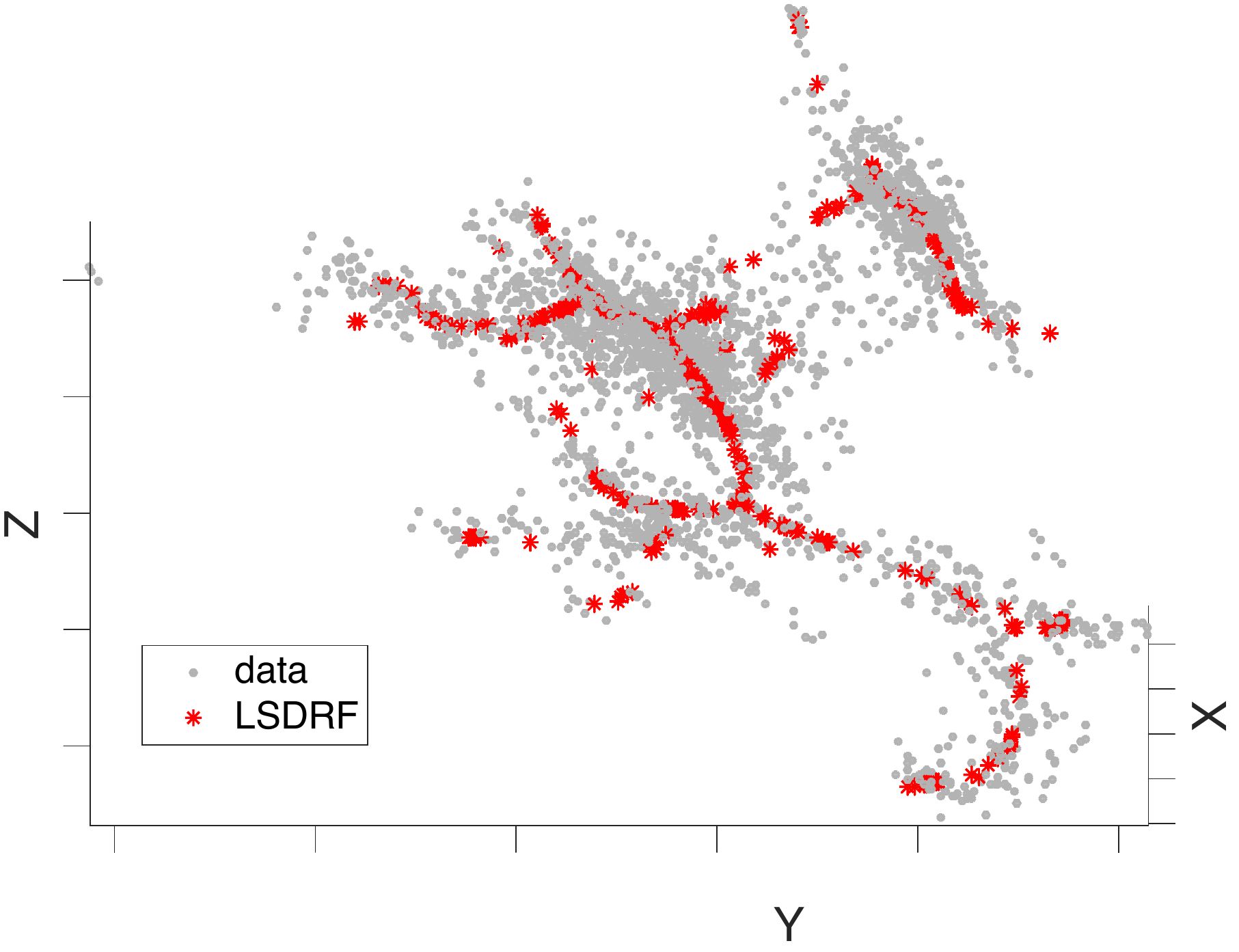}}
    \subfigure{\includegraphics[width=0.32\textwidth,clip]{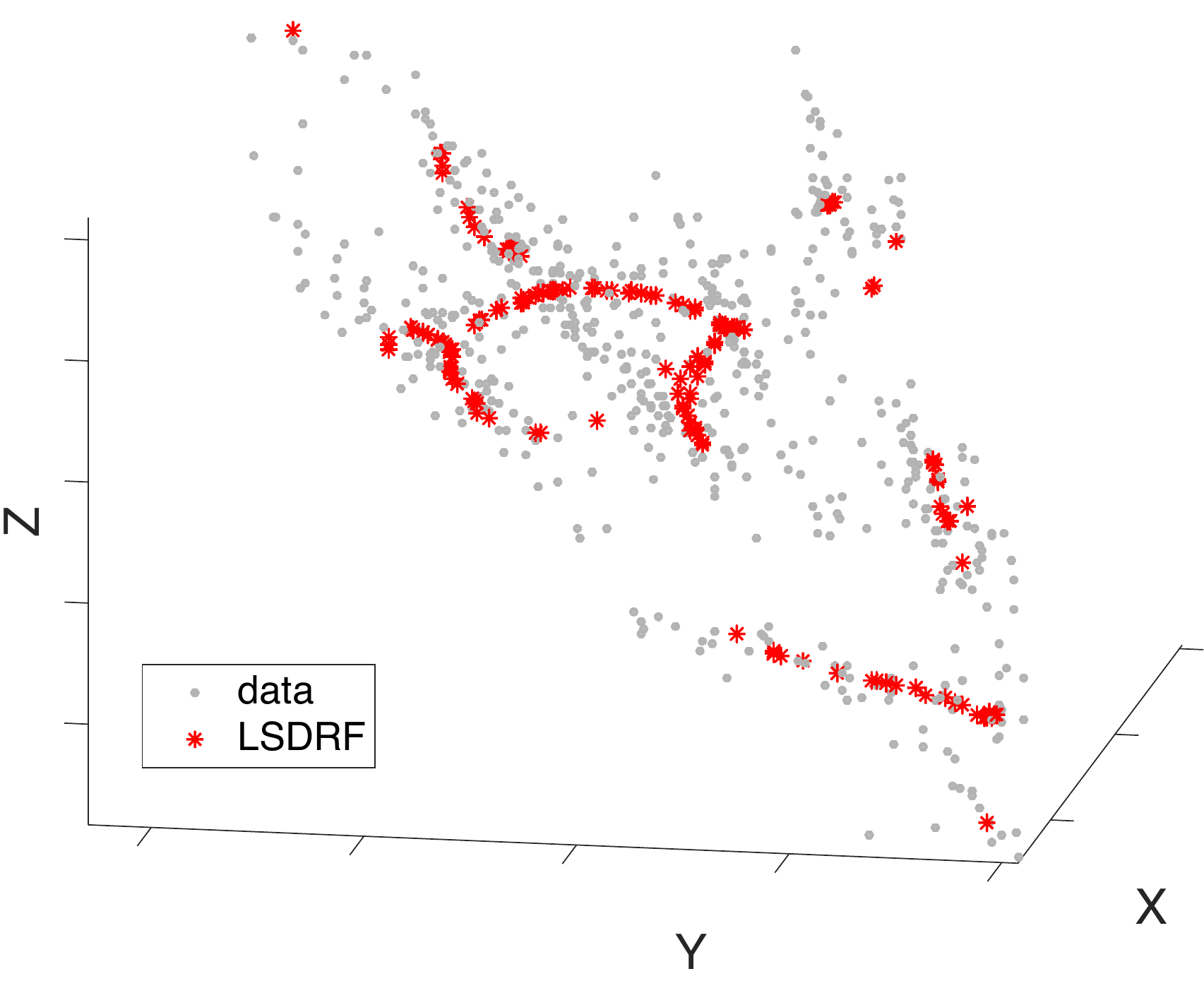}}
    \subfigure{\includegraphics[width=0.32\textwidth,clip]{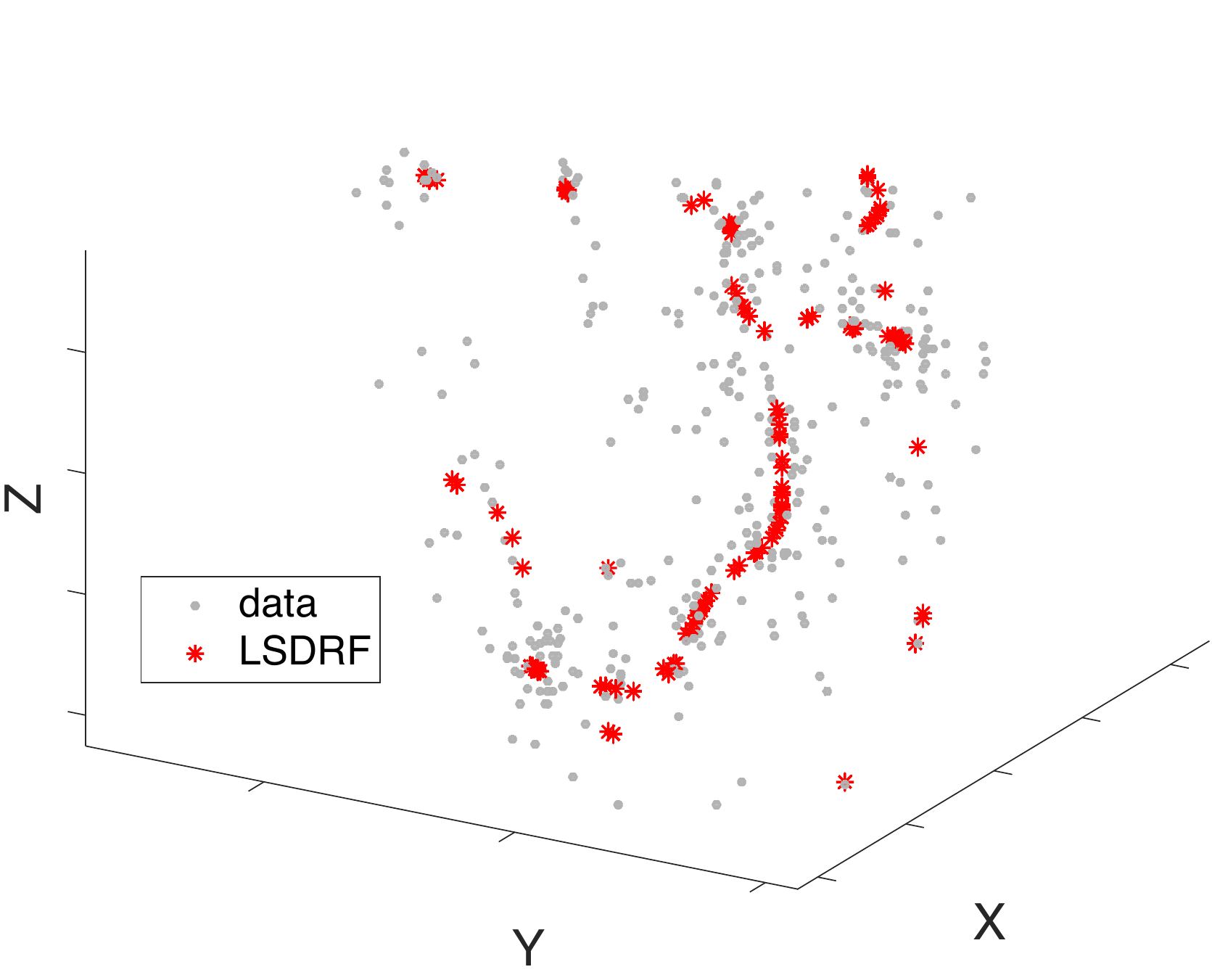}}
    \subfigure{\includegraphics[width=0.32\textwidth,clip]{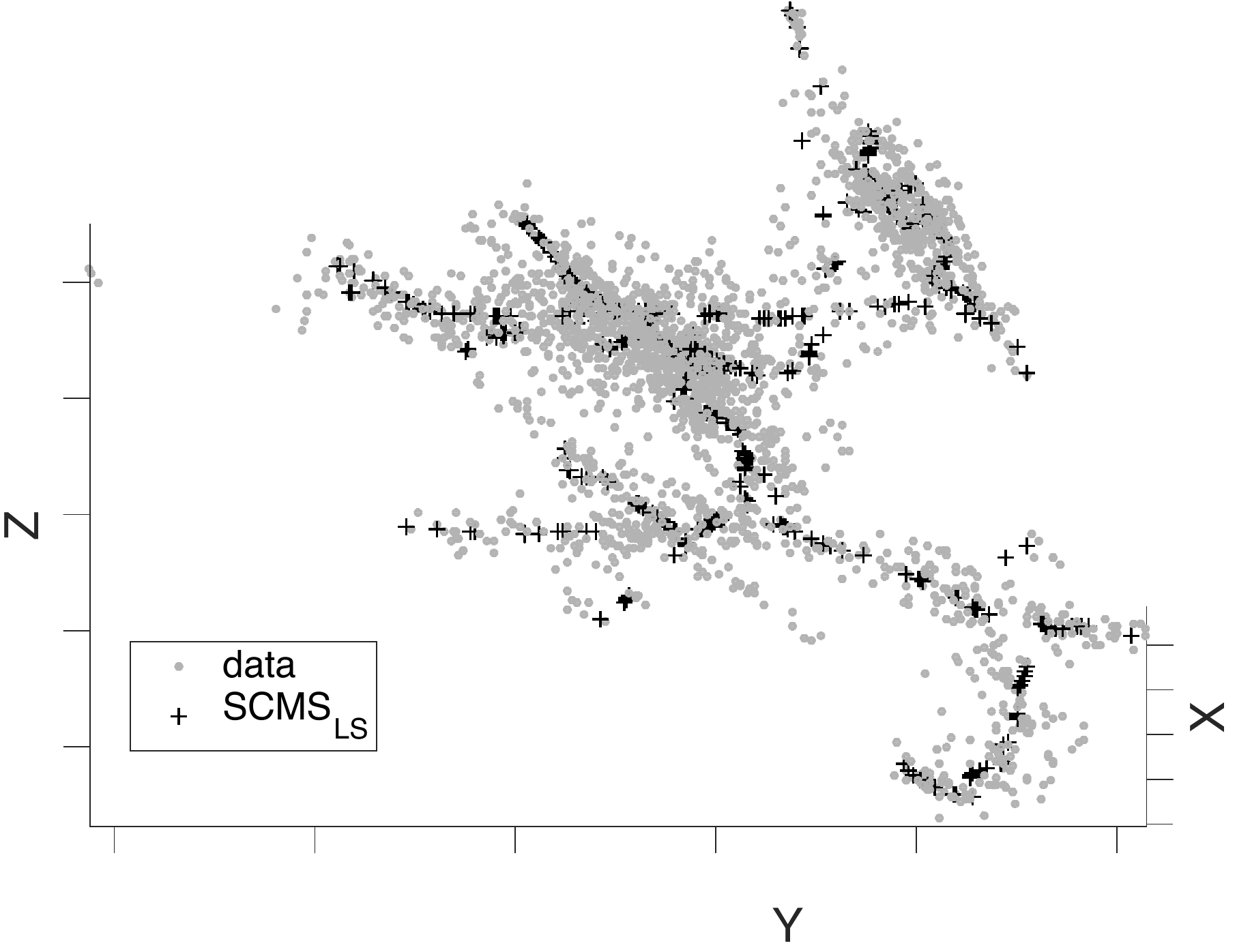}}
    \subfigure{\includegraphics[width=0.32\textwidth,clip]{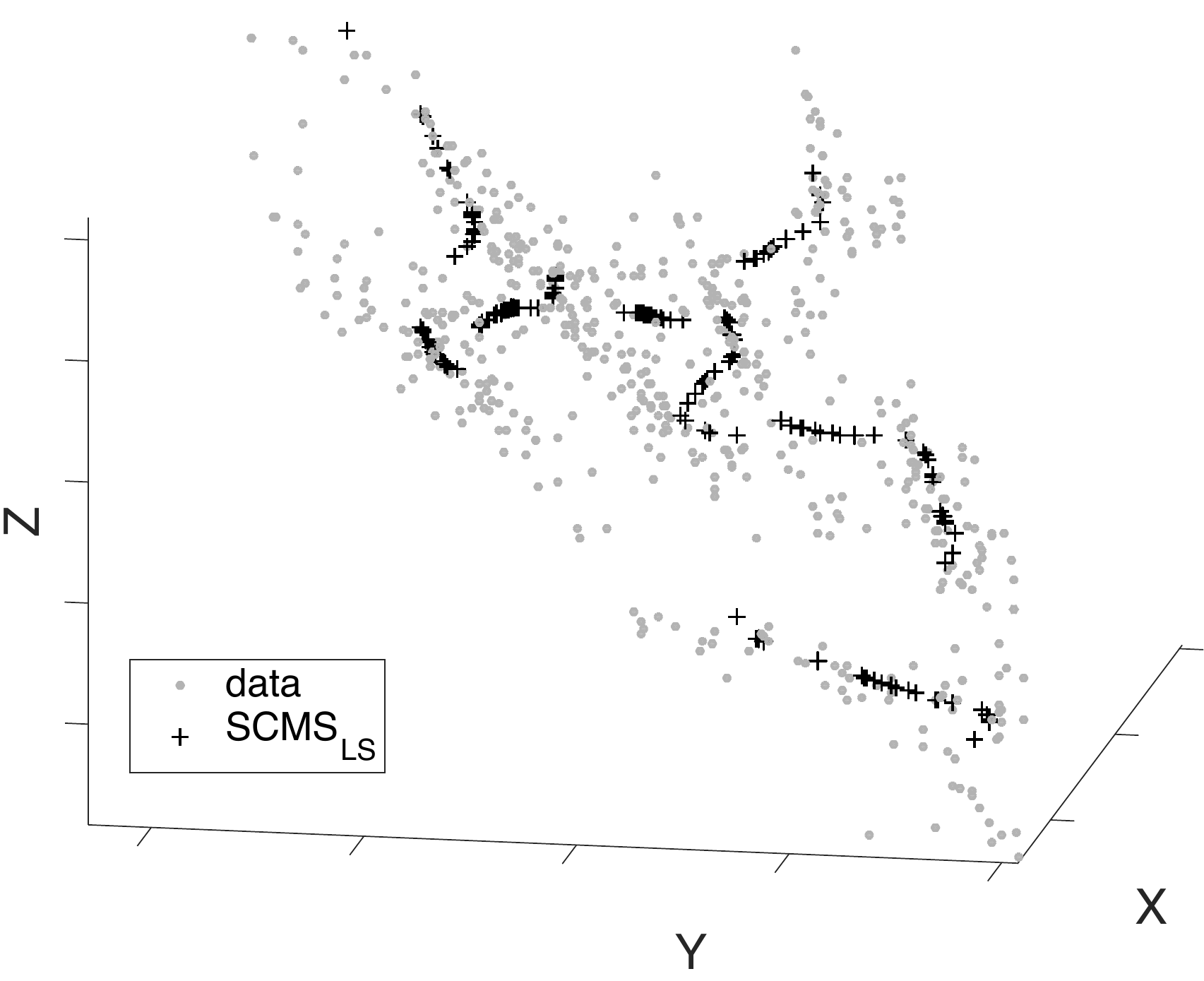}}
    \subfigure{\includegraphics[width=0.32\textwidth,clip]{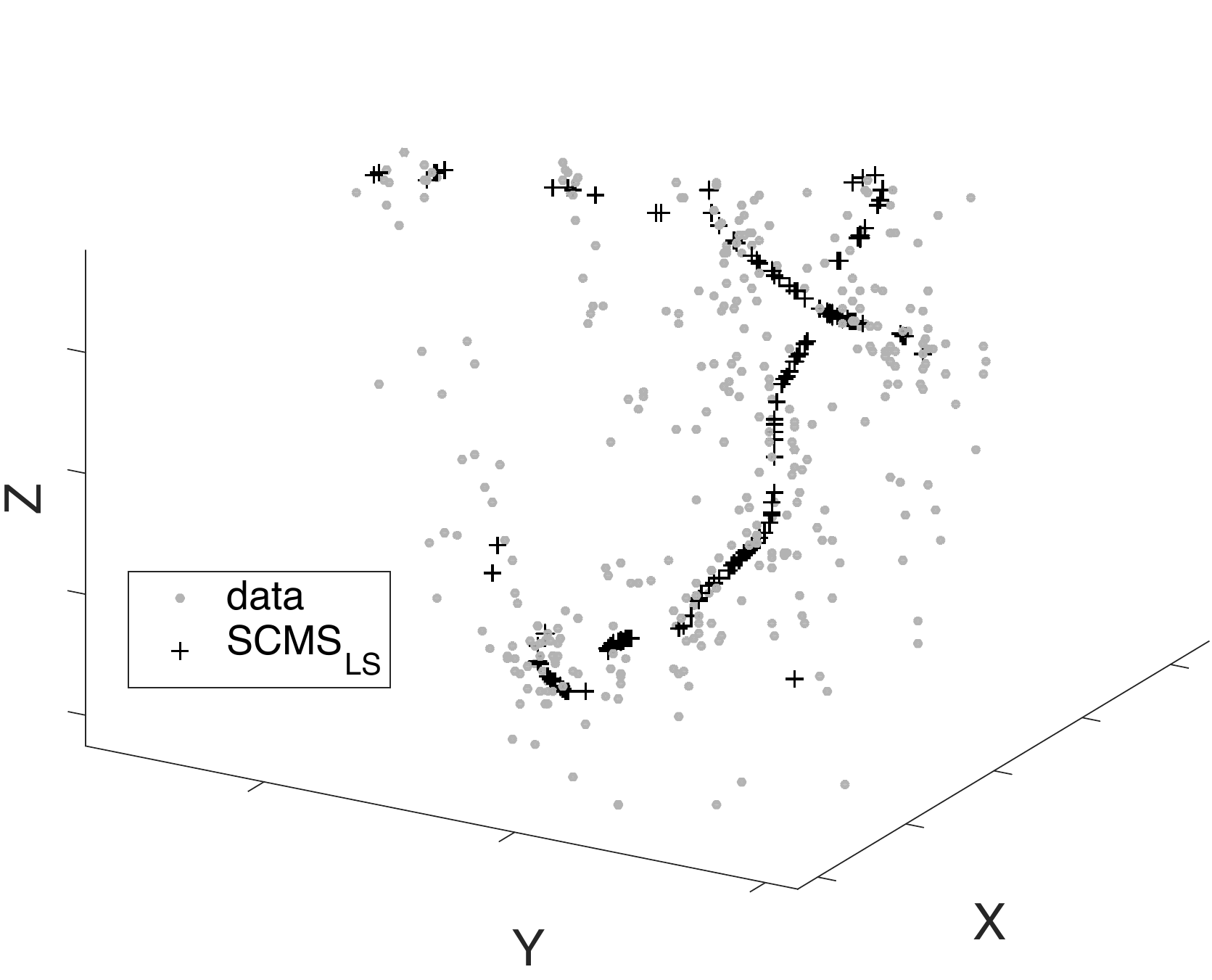}}
    \setcounter{subfigure}{0}
    \subfigure[Shapley~1]{\includegraphics[width=0.32\textwidth,clip]{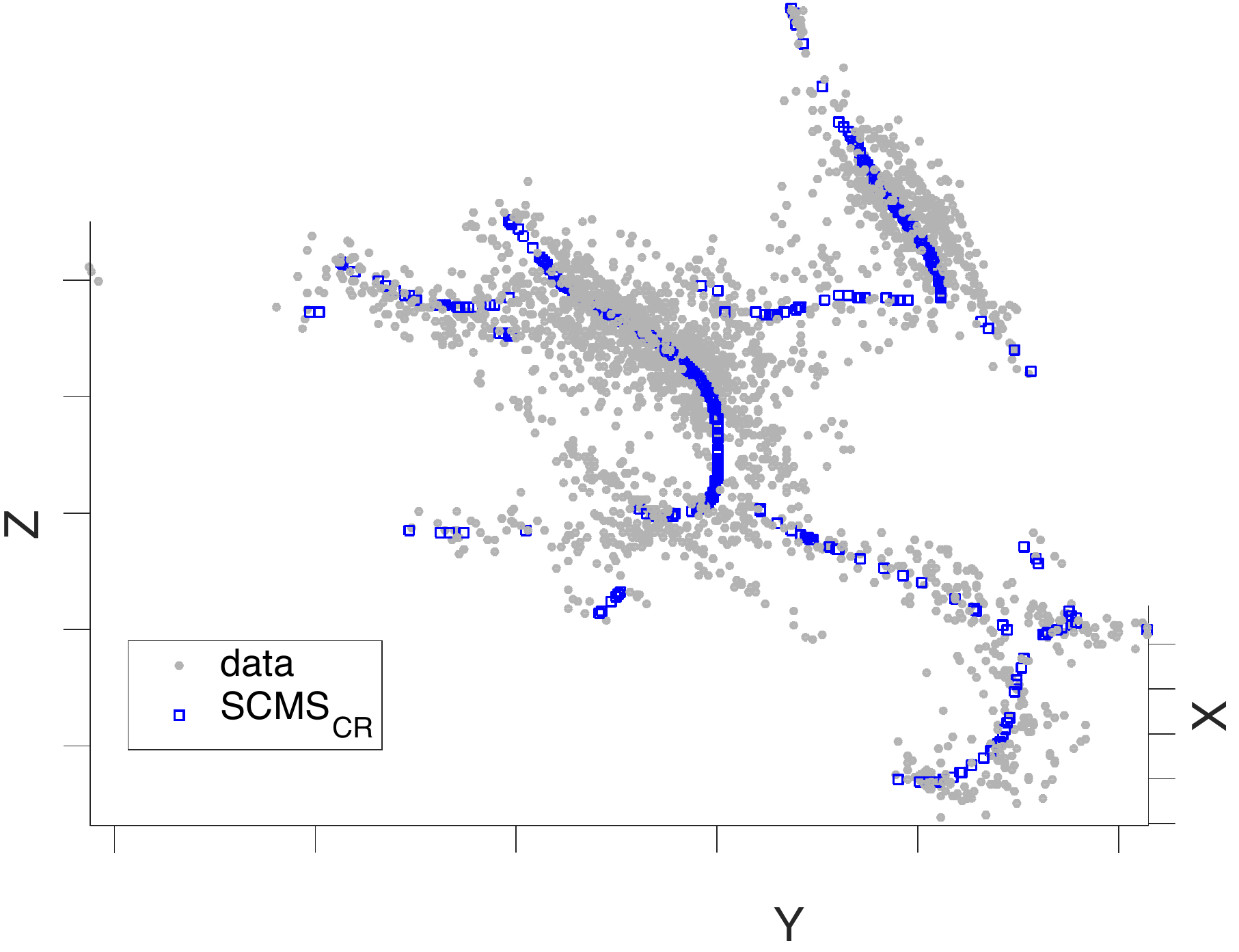}}
    \subfigure[Shapley~2]{\includegraphics[width=0.32\textwidth,clip]{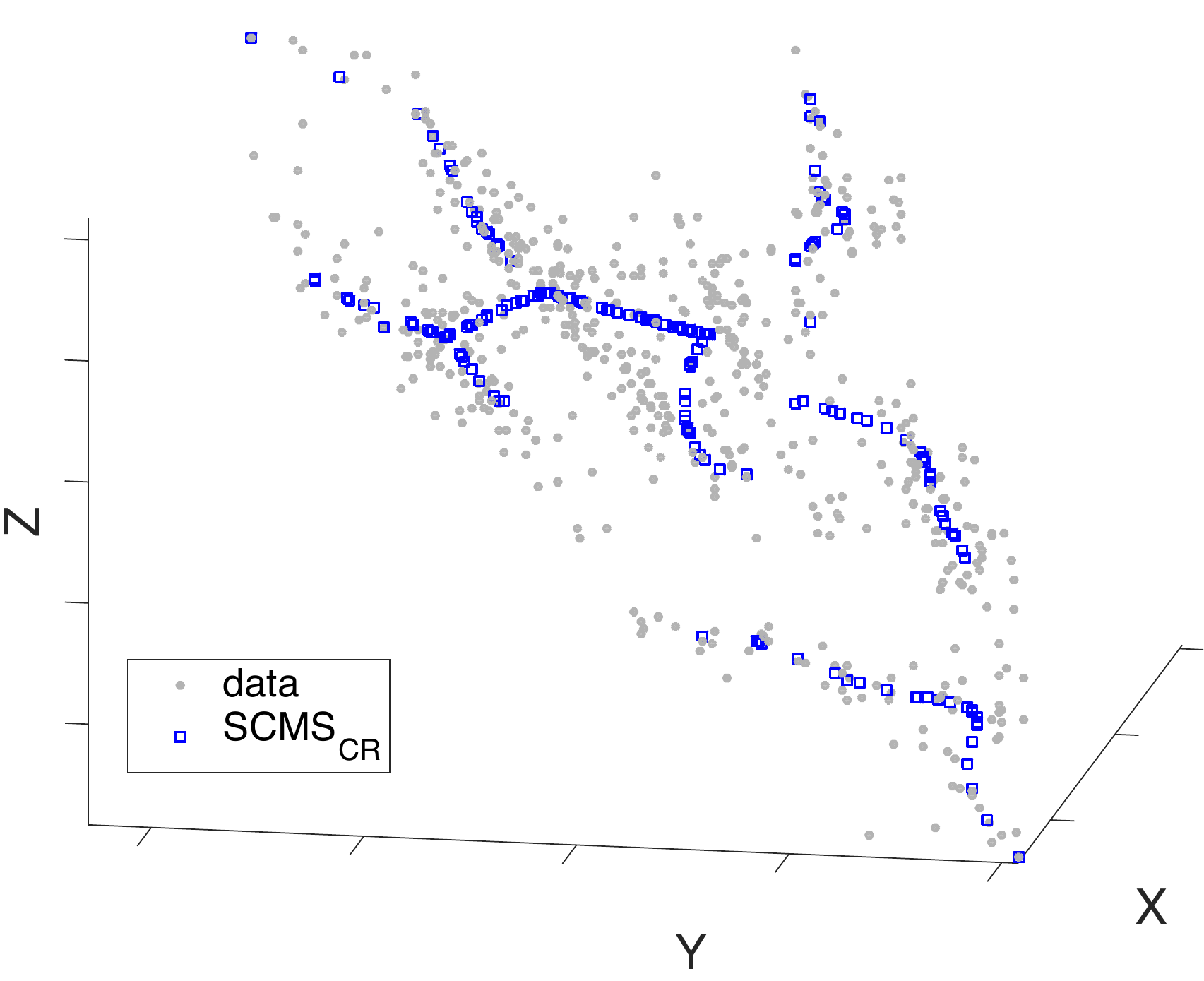}}
    \subfigure[Shapley~3]{\includegraphics[width=0.32\textwidth,clip]{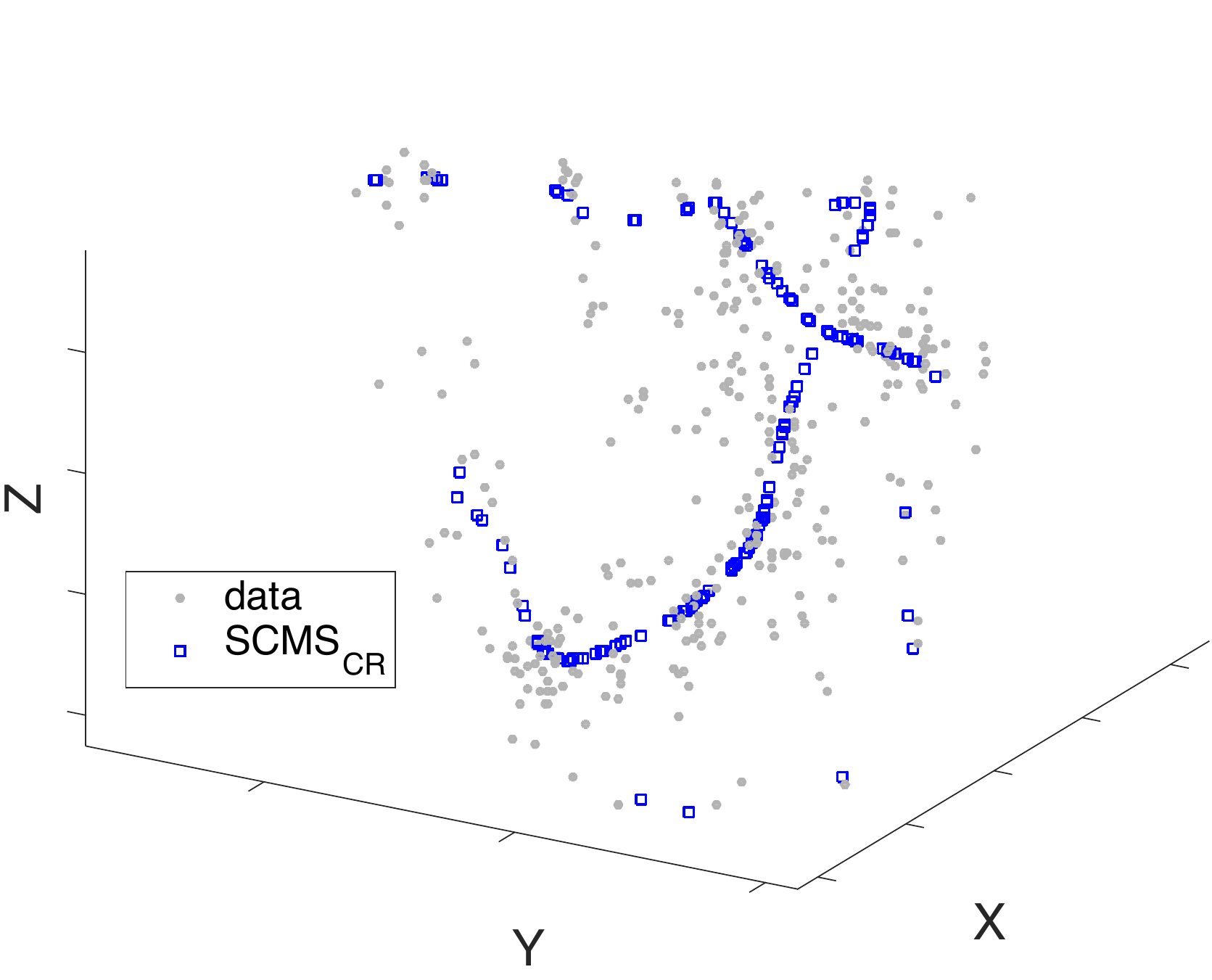}}
    \caption{\label{fig:Shapley} Density ridge estimation to three
    regions in the Shapley galaxy dataset. The three regions were
    extracted according to a range of recession velocity. The first,
    second and third rows correspond to results from LSDRF,
    SCMS$_{\text{LS}}$ and SCMS$_{\text{CR}}$, respectively.}
   \end{center}
   \end{figure}

   Next, we apply LSDRF to real-world datasets. As
   in~\citet{pulkkinen2015ridge}, we employed the following two
   datasets:
   \begin{itemize}
    \item \emph{New Madrid earthquake} dataset: This seismological
	  dataset was downloaded from the Center for Earthquake Research
	  and
	  Information.\footnote{\url{http://www.memphis.edu/ceri/seismic/}}
	  The dataset contains positional information for earthquakes
	  around the New Madrid seismic zone from $1974$ to $2016$,
	  providing $11,131$ samples.  The three regions in
	  Figs.\ref{fig:Madrid}(a,b,c) were extracted according to (a)
	  $(-90.2,-89.25)$, (b) $(-92.5,-92.15)$ and (c) $(-85.5,-83.5)$
	  degrees for the latitude range, respectively. For the
	  longitude range, (a) $(36,36.8)$, (b) $(35.2,35.4)$ and (c)
	  $(34.5,36.5)$ degrees were selected. The total numbers of the
	  original data samples and reduced data samples in each region
	  were (a) $(N,n)=(5902,500)$, (b) $(N,n)=(1548,300)$ and (c)
	  $(N,n)=(594,200)$.
	   
    \item \emph{Shapley galaxy} dataset: This dataset was downloaded
	  from the Center for Astrostatistics at Pennsylvania State
	  University.\footnote{\url{http://astrostatistics.psu.edu/datasets/Shapley_galaxy.html}}
	  The dataset contains information about the three-dimensional
	  sky angles and recession velocity of $4,215$ galaxies. As done
	  in~\citet{pulkkinen2015ridge}, we transformed the data samples
	  into the three-dimensional Cartesian coordinates based on the
	  fact that the recession velocity is proportional to the radial
	  distance~\citep{drinkwater2004large}. The three regions in
	  Figs~\ref{fig:Shapley}(a,b,c) were extracted according to a
	  velocity range: (a) $(6000,20000)$ km/s, (b) $(1500,6000)$
	  km/s and (c) $(6000,10500)$ km/s, respectively. The total
	  numbers of the original data samples and reduced data samples
	  in each region were (a) $(N,n)=(2849,500)$, (b)
	  $(N,n)=(595,200)$ and (c) $(N,n)=(351,150)$.
   \end{itemize}
   In each dataset, we focused on three regions containing prominent
   features, and standardized data samples in each region by subtracting
   the mean value and dividing by standard deviation in a dimension-wise
   manner. Here, the standardized data samples are collectively denoted
   by $\widetilde{\mathcal{D}}=\{\widetilde{\bm{x}}_i\}_{i=1}^N$.
   Before applying density ridge estimation methods, we performed
   preprocessing to remove clutter noises: KDE was applied to the
   dataset of each region, and then the data samples
   $\widetilde{\bm{x}}_i$ in each region were removed when
   $\frac{\widehat{p}_{\text{KDE}}(\widetilde{\bm{x}}_i)}
   {\max_{j}[\widehat{p}_{\text{KDE}}(\widetilde{\bm{x}}_j)]}<10^{-3}$.
   After noise removal, we randomly chose $n$ data samples from each
   region, and applied the three density ridge estimation methods to the
   sub-sampled data. The sub-sampled data are collectively expressed by
   $\mathcal{D}=\{\bm{x}_i\}_{i=1}^n$.  For performance comparison, we
   computed the logarithm of $\widehat{p}_{\text{KDE}}$ on the estimated
   density ridges, which is given by
   \begin{align*}
    \mathcal{L}=\frac{1}{n} \sum_{i=1}^{n}\log
    \widehat{p}_{\text{KDE}}(\widehat{\bm{y}}_i),
   \end{align*}
   where the centers of the kernel function in
   $\widehat{p}_{\text{KDE}}$ were set at the original data samples
   $\widetilde{\bm{x}}_i$ in each region, and $\widehat{\bm{y}}_i$
   denotes an estimated density ridge point from $\bm{x}_i$.  If
   $\mathcal{L}$ is larger, the performance can be interpreted to be
   better because ridges are defined on relatively high density
   areas. Unlike the last illustration, for SCMS$_{\LS}$, we employed
   the following adaptive-bandwidth Gaussian kernel:
   \begin{align*}
    \frac{1}{(2\pi
   h_i^2)^{D/2}}\exp\left(-\frac{\|\bm{x}-\bm{x}_i\|^2}{2h_{i}^2}\right),
   \end{align*}
   where $h_i$ denotes the bandwidth parameter. We restricted $h_{i}$ at
   the $m$-nearest neighbor Euclidean distance from $\bm{x}_i$ to
   $\bm{x}_j~(i\neq j)$, and performed cross-validation with respect to
   $m$ whose candidates were $128$, $64$, $32$, $16$, $8$ and $4$. For
   SCMS$_{\LS}$, the ten candidates of the bandwidth parameter were
   selected from $10^{l}\times h_{\text{NR}}$ ($-0.3\leq l \leq 0$). For
   LSDRF, we employed all data samples $\{\bm{x}_i\}_{i=1}^n$ as the
   centers of the Gaussian kernel, and used the median value of
   $\text{CV}(t)$ in Section~\ref{ssec:PracLSDDR} instead of the mean
   value in cross-validation.

   Ridges estimated by LSDRF are smooth and seem to qualitatively
   well-match the ridges in the underlying data, and SCMS$_{\text{CR}}$
   and SCMS$_{\text{LS}}$ also perform fairly good
   (Figs.\ref{fig:Madrid} and~\ref{fig:Shapley}).  Table~\ref{tab:ridge}
   is quantitative comparison by $\mathcal{L}$, showing that LSDRF
   compares favorably with both SCMS$_{\text{CR}}$ and
   SCMS$_{\text{LS}}$.
   
   \begin{table}[!t]
    \caption{\label{tab:ridge} The average and standard deviation of the
   performance measure $\mathcal{L}$ over $50$ runs. Madrid~1,~2 and~3
   (or Shapley~1,~2 and~3) correspond to the three regions in
   Fig.\ref{fig:Madrid} (or Fig.\ref{fig:Shapley}). A larger value means
   a better result. Numbers in the parentheses are standard deviations.
   The best and comparable methods judged by the unpaired t-test at the
   significance level $1\%$ are described in boldface.}
    \begin{center}
     \begin{tabular}{@{\ }c@{\ }||@{\ }c@{\ }|@{\ }c@{\ }|@{\ }c@{\ }c}
      \hline
      \multicolumn{4}{c}{New Madrid earthquake}\\
      \hline
      & LSDRF & SCMS$_{\text{LS}}$ & SCMS$_{\text{CR}}$ \\
      \hline
      Madrid~1 & {\bf  -0.511(0.101)} & -0.610(0.072) & -0.571(0.075)\\
      Madrid~2 & {\bf  0.001(0.175)} & {\bf 0.029(0.065)} & -0.076(0.075)\\
      Madrid~3 & {\bf -1.173(0.132)} & -1.238(0.086) & -1.238(0.098)\\
      \hline
     \end{tabular}
    \end{center}
    \begin{center}
     \begin{tabular}{@{\ }c@{\ }||@{\ }c@{\ }|@{\ }c@{\ }|@{\ }c@{\ }c}
      \hline
      \multicolumn{4}{c}{Shapley galaxy}\\
      \hline
      & LSDRF & SCMS$_{\text{LS}}$ & SCMS$_{\text{CR}}$ \\
      \hline
      Shapley~1 &  {\bf 0.188(0.093)} & 0.094(0.073) & 0.063(0.121)\\
      Shapley~2 &  {\bf -1.120(0.145)} & -1.220(0.097) & -1.462(0.223)\\  
      Shapley~3 & {\bf -1.295(0.114)} & -1.544(0.076) & -1.581(0.091)\\
      \hline
     \end{tabular}
    \end{center}
   \end{table}  
 \section{Conclusion}
 \label{sec:conclusion}
 In this paper, we proposed a novel estimator of the ratios of the
 density derivatives to the density. In stark contrast with the
 approaches in mean shift clustering and subspace constrained mean
 shift, our approach is to \emph{directly} estimate the
 \emph{density-derivative-ratios} without going through density
 estimation and computing the ratios. The proposed estimator was
 theoretically investigated, and the convergence rate was
 established. We applied the proposed estimator to mode-seeking
 clustering and density ridge estimation, and developed practical
 methods. Moreover, theoretical analysis were also performed to these
 methods , and the convergence rates to the mode and ridge of the true
 density were established. Our experimental illustration demonstrated
 that the proposed methods for mode-seeking clustering and density ridge
 estimation outperformed existing methods particularly for
 high(er)-dimensional data.
 
 This paper focused only on mode-seeking clustering and density ridge
 estimation. The proposed estimator can be useful or extended for other
 problems. For instance, making use of the global mode (the global
 maximum) of a conditional density enables us to develop a regression
 method robust against outliers~\citep{yao2012local}. Non-parametric
 estimation of the mode is also needed in functional data
 analysis~\citep{gasser1998nonparametric}. In future, we explore novel
 applications of the proposed estimator.
 
   \subsubsection*{Acknowledgements}
   The authors are grateful to Dr. Matthew James Holland for his helpful
   comments on an earlier version of this paper. Takafumi Kanamori was
   supported by KAKENHI 16K00044, 15H03636, and 15H01678.  Aapo
   Hyv{\"a}rinen was supported by the Academy of Finland. Gang Niu was
   supported by CREST JPMJCR1403. Masashi Sugiyama was supported by the
   International Research Center for Neurointelligence (WPI-IRCN) at The
   University of Tokyo Institutes for Advanced Study.
   
  \appendix
 \section{Proof of Theorem~\ref{theo:Hconvrate}}
 \label{app:Hconv}
 \begin{proof}  
  We first derive the following two lemmas by modifying the proof
  techniques in~\citet{sriperumbudur2013density}:
  \begin{lemma}
   With $\epsilon=1$ in Assumption~{\bf (D)}, the following statements hold:
   \begin{enumerate}[(i)]
    \item For $J_{\bm{j}}$ with the regularizer,
	  \begin{align*}
	   J_{\bm{j}}^{\lambda}(r_{\bm{j}}):=J_{\bm{j}}(r_{\bm{j}})
	   +\lambda_{\bm{j}}\|r_{\bm{j}}\|^2_{\rkhs},
	  \end{align*}
	  the minimizer of $J_{\bm{j}}^{\lambda}$ is given by
	  \begin{align*}
	   r_{\bm{j}}^{\lambda}
	   :=\argmin_{r_{\bm{j}}\in\rkhs} J_{\bm{j}}^{\lambda}(r_{\bm{j}})
	   =(C+\lambda_{\bm{j}} I)^{-1}\xi_{\bm{j}}
	   =(C+\lambda_{\bm{j}} I)^{-1}Cr_{\bm{j}}^*,
	  \end{align*}
	  where $C=\int_{\X} k(\cdot,\bm{x})\otimes
	  k(\cdot,\bm{x})p(\bm{x})\intd\bm{x}$, $\otimes$ is the tensor
	  product, and 
	  \begin{align*}
	   \xi_{\bm{j}}:=(-1)^{|\bm{j}|}\int_{\X} \partial_{\bm{j}}
	   k(\cdot,\bm{x})p(\bm{x})\intd\bm{x}.
	  \end{align*}
		  
    \item $\widehat{J}_{\bm{j}}(r_{\bm{j}})$ can be equivalently
	  expressed as
	  \begin{align*}
	   \widehat{J}_{\bm{j}}(r_{\bm{j}})
	   =\< r_{\bm{j}}-r_{\bm{j}}^*, \widehat{C}
	   (r_{\bm{j}}-r_{\bm{j}}^*)\>_{\rkhs},
	  \end{align*}
	  where
	  \begin{align*}
	   \widehat{C}:=\frac{1}{n}\sum_{i=1}^n
	   k(\cdot,\bm{x}_i)\otimes k(\cdot,\bm{x}_i)\quad \text{and}\quad
	   \widehat{\xi}_{\bm{j}}:=\frac{(-1)^{|\bm{j}|}}{n}\sum_{i=1}^n
	   \partial_{\bm{j}} k(\cdot,\bm{x}_i).	   
	  \end{align*}	  
	  Then, $\widehat{r}_{\bm{j}}$ is given by
	  \begin{align*}
	   \widehat{r}_{\bm{j}}
	   =\argmin_{r_{\bm{j}}\in\rkhs}\left[\widehat{J}_{\bm{j}}(r_{\bm{j}})
	   +\lambda_{\bm{j}}\|r_{\bm{j}}\|^2_{\rkhs}\right]
	   =(\widehat{C}+\lambda_{\bm{j}} I)^{-1}\widehat{\xi}_{\bm{j}}.
	  \end{align*}
   \end{enumerate}
   \label{lemma1}
  \end{lemma}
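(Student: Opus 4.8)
The plan is to reduce both parts to elementary quadratic optimization in the Hilbert space $\rkhs$, with the only nontrivial ingredients being the reproducing property and its derivative counterpart. The central tool is that, under Assumption~{\bf(B)}, every $r_{\bm j}\in\rkhs$ satisfies the \emph{derivative reproducing property} $\partial_{\bm j}r_{\bm j}(\bm x)=\langle r_{\bm j},\partial_{\bm j}k(\cdot,\bm x)\rangle_{\rkhs}$ in addition to the ordinary reproducing property $r_{\bm j}(\bm x)=\langle r_{\bm j},k(\cdot,\bm x)\rangle_{\rkhs}$; this is what converts pointwise evaluations of $r_{\bm j}$ and of its derivatives into inner products against the fixed feature elements $k(\cdot,\bm x)$ and $\partial_{\bm j}k(\cdot,\bm x)$.

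For part~(i), I would first observe that, directly from the definition of $J_{\bm j}$ and of the operator $C$, the population objective equals $J_{\bm j}(r_{\bm j})=\langle r_{\bm j}-r_{\bm j}^*,\,C(r_{\bm j}-r_{\bm j}^*)\rangle_{\rkhs}$, since $\langle f,Cf\rangle_{\rkhs}=\int_{\X} f(\bm x)^2 p(\bm x)\intd\bm x$ for every $f\in\rkhs$ by the tensor-product identity $(k(\cdot,\bm x)\otimes k(\cdot,\bm x))f=f(\bm x)k(\cdot,\bm x)$. Expanding the square gives $J_{\bm j}(r_{\bm j})=\langle r_{\bm j},Cr_{\bm j}\rangle_{\rkhs}-2\langle r_{\bm j},Cr_{\bm j}^*\rangle_{\rkhs}+\text{const}$, and I would then identify $Cr_{\bm j}^*=\xi_{\bm j}$: writing $Cr_{\bm j}^*=\int_{\X}r_{\bm j}^*(\bm x)k(\cdot,\bm x)p(\bm x)\intd\bm x=\int_{\X}k(\cdot,\bm x)\partial_{\bm j}p(\bm x)\intd\bm x$ and invoking Assumption~{\bf(C)} turns this into $(-1)^{|\bm j|}\int_{\X}\partial_{\bm j}k(\cdot,\bm x)p(\bm x)\intd\bm x=\xi_{\bm j}$. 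Hence $J_{\bm j}^{\lambda}(r_{\bm j})=\langle r_{\bm j},(C+\lambda_{\bm j}I)r_{\bm j}\rangle_{\rkhs}-2\langle r_{\bm j},\xi_{\bm j}\rangle_{\rkhs}+\text{const}$, a strictly convex quadratic in $r_{\bm j}$ because $C$ is self-adjoint positive and $\lambda_{\bm j}>0$ makes $C+\lambda_{\bm j}I$ boundedly invertible. Setting the Fr\'echet derivative to zero yields the stated $r_{\bm j}^{\lambda}=(C+\lambda_{\bm j}I)^{-1}\xi_{\bm j}=(C+\lambda_{\bm j}I)^{-1}Cr_{\bm j}^*$.

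For part~(ii), I would repeat the manipulation with the empirical measure: applying the reproducing property to $r_{\bm j}(\bm x_i)^2$ and the derivative reproducing property to $\partial_{\bm j}r_{\bm j}(\bm x_i)$ rewrites $\widehat J_{\bm j}(r_{\bm j})$ from~\eqref{Jsample} as $\langle r_{\bm j},\widehat C r_{\bm j}\rangle_{\rkhs}-2\langle r_{\bm j},\widehat\xi_{\bm j}\rangle_{\rkhs}+\text{const}$, where $\widehat C$ and $\widehat\xi_{\bm j}$ are the empirical counterparts of $C$ and $\xi_{\bm j}$. Regrouping the cross term through the empirical analogue of the identity $Cr_{\bm j}^*=\xi_{\bm j}$ exhibits this as the quadratic form $\langle r_{\bm j}-r_{\bm j}^*,\widehat C(r_{\bm j}-r_{\bm j}^*)\rangle_{\rkhs}$ up to a constant, and minimizing $\widehat J_{\bm j}+\lambda_{\bm j}\|r_{\bm j}\|_{\rkhs}^2$ exactly as before produces $\widehat r_{\bm j}=(\widehat C+\lambda_{\bm j}I)^{-1}\widehat\xi_{\bm j}$.

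The main obstacle is analytic rather than algebraic: I must justify that $k(\cdot,\bm x)$ and $\partial_{\bm j}k(\cdot,\bm x)$ are Bochner-integrable $\rkhs$-valued maps, so that $C$ and $\xi_{\bm j}$ are well defined and the bounded linear functional $\langle r_{\bm j},\cdot\rangle_{\rkhs}$ may be pulled inside the integrals (turning $\int\langle r_{\bm j},\partial_{\bm j}k(\cdot,\bm x)\rangle_{\rkhs}p(\bm x)\intd\bm x$ into $\langle r_{\bm j},\xi_{\bm j}\rangle_{\rkhs}$). This is precisely where Assumption~{\bf(D)} with $\epsilon=1$ enters, guaranteeing finiteness of $\int_{\X}\|k(\cdot,\bm x)\|_{\rkhs}^2 p(\bm x)\intd\bm x$ and $\int_{\X}\|\partial_{\bm j}k(\cdot,\bm x)\|_{\rkhs}p(\bm x)\intd\bm x$; separately, the derivative reproducing property and the membership $\partial_{\bm j}k(\cdot,\bm x)\in\rkhs$ must be secured from Assumption~{\bf(B)}, citing the standard fact that $2|\bm j|$-fold differentiability of the kernel transfers to every element of $\rkhs$.
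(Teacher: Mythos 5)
Your proof of part~(i) is correct and is essentially the paper's own argument: the tensor-product identity turns $J_{\bm{j}}$ into the quadratic form $\langle r_{\bm{j}}-r^*_{\bm{j}},\,C(r_{\bm{j}}-r^*_{\bm{j}})\rangle_{\rkhs}$, Assumption~{\bf(C)} gives $Cr^*_{\bm{j}}=\xi_{\bm{j}}$, and the minimizer follows (you via the Fr\'echet derivative, the paper via completing the square --- an immaterial difference). Your care about Bochner integrability under Assumption~{\bf(D)} is a point the paper leaves implicit.

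Part~(ii), however, contains a step that is genuinely false: ``regrouping the cross term through the empirical analogue of the identity $Cr^*_{\bm{j}}=\xi_{\bm{j}}$.'' There is no such empirical analogue. The identity $Cr^*_{\bm{j}}=\xi_{\bm{j}}$ comes from integration by parts against the density $p$ (this is exactly what Assumption~{\bf(C)} encodes); integration by parts cannot be performed against the empirical measure, and for any finite sample $\widehat{C}r^*_{\bm{j}}=\frac{1}{n}\sum_{i}r^*_{\bm{j}}(\bm{x}_i)k(\cdot,\bm{x}_i)$ and $\widehat{\xi}_{\bm{j}}=\frac{(-1)^{|\bm{j}|}}{n}\sum_{i}\partial_{\bm{j}}k(\cdot,\bm{x}_i)$ are different elements of $\rkhs$. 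Indeed, the entire point of the paper's Lemma~2 is to prove $\|\widehat{\xi}_{\bm{j}}-\widehat{C}r^*_{\bm{j}}\|_{\rkhs}=O_{\mathrm{P}}(n^{-1/2})$: this nonvanishing deviation is the stochastic error that drives the rate in Theorem~1, and it would be identically zero if your claimed identity held. Concretely, $\widehat{J}_{\bm{j}}(r_{\bm{j}})$ and $\langle r_{\bm{j}}-r^*_{\bm{j}},\,\widehat{C}(r_{\bm{j}}-r^*_{\bm{j}})\rangle_{\rkhs}$ differ by the non-constant term $-2\langle r_{\bm{j}},\,\widehat{\xi}_{\bm{j}}-\widehat{C}r^*_{\bm{j}}\rangle_{\rkhs}$ (plus a constant), so the first display of~(ii) cannot be obtained the way you propose; as an exact statement it is not true, and in fairness the paper's own one-line proof (``substitute $\widehat{C}$ and $\widehat{\xi}_{\bm{j}}$ for $C$ and $\xi_{\bm{j}}$ in~(i)'') glosses over the same defect. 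What does survive --- and is the only part of~(ii) actually used later in the proof of Theorem~1 --- is the portion of your argument that never touches the false identity: the reproducing and derivative-reproducing properties give $\widehat{J}_{\bm{j}}(r_{\bm{j}})=\langle r_{\bm{j}},\widehat{C}r_{\bm{j}}\rangle_{\rkhs}-2\langle r_{\bm{j}},\widehat{\xi}_{\bm{j}}\rangle_{\rkhs}+\mathrm{const}$, and minimizing this plus $\lambda_{\bm{j}}\|r_{\bm{j}}\|^2_{\rkhs}$ yields $\widehat{r}_{\bm{j}}=(\widehat{C}+\lambda_{\bm{j}}I)^{-1}\widehat{\xi}_{\bm{j}}$. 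Keep that derivation and drop the regrouping step; if you want to retain the quadratic-form display, you must state it as holding only up to the linear term in $\widehat{\xi}_{\bm{j}}-\widehat{C}r^*_{\bm{j}}$.
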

  \begin{lemma}
   With $\epsilon=2$ in Assumption~{\bf (D)},
   \begin{align}    
    \|\widehat{\xi}_{\bm{j}}-\widehat{C}r_{\bm{j}}^{*}\|_{\rkhs}
    =O_{\mathrm{P}}\left(n^{-1/2}\right).\label{firstterm}    
   \end{align}
   \label{lemma2}
  \end{lemma}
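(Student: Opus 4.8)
The plan is to write the difference $\widehat{\xi}_{\bm{j}}-\widehat{C}r_{\bm{j}}^{*}$ as an empirical mean of i.i.d.\ $\rkhs$-valued random elements, verify that each is centered, and then invoke the standard $n^{-1/2}$ rate for Hilbert-space-valued sample means under a finite second-moment condition. Concretely, using the reproducing property $\<k(\cdot,\bm{x}_i),r^*_{\bm{j}}\>_{\rkhs}=r^*_{\bm{j}}(\bm{x}_i)$ together with $(u\otimes v)w=\<v,w\>_{\rkhs}u$, one gets $\widehat{C}r^*_{\bm{j}}=\frac{1}{n}\sum_{i=1}^n r^*_{\bm{j}}(\bm{x}_i)\,k(\cdot,\bm{x}_i)$, so that
\begin{align*}
 \widehat{\xi}_{\bm{j}}-\widehat{C}r^*_{\bm{j}}=\frac{1}{n}\sum_{i=1}^n\zeta_i,
 \qquad \zeta_i:=(-1)^{|\bm{j}|}\partial_{\bm{j}}k(\cdot,\bm{x}_i)-r^*_{\bm{j}}(\bm{x}_i)\,k(\cdot,\bm{x}_i).
\end{align*}

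Next I would show $\mathbb{E}[\zeta_i]=\bm{0}$. Since $r^*_{\bm{j}}=\partial_{\bm{j}}p/p$, the expectation of the second term is $\int_{\X}k(\cdot,\bm{x})\partial_{\bm{j}}p(\bm{x})\intd\bm{x}$, while the expectation of the first term is exactly $\xi_{\bm{j}}=(-1)^{|\bm{j}|}\int_{\X}\partial_{\bm{j}}k(\cdot,\bm{x})p(\bm{x})\intd\bm{x}$; by Assumption~{\bf(C)} (the integration-by-parts identity) these two $\rkhs$-valued integrals coincide, hence $\mathbb{E}[\zeta_i]=\xi_{\bm{j}}-Cr^*_{\bm{j}}=\bm{0}$, which is consistent with $\xi_{\bm{j}}=Cr^*_{\bm{j}}$ from Lemma~\ref{lemma1}. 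These Bochner integrals are well defined because Assumption~{\bf(D)} at $\epsilon=1$ supplies absolute integrability in $\rkhs$, and since $p$ is a probability density the $L^\epsilon(p)$-type moments are nondecreasing in $\epsilon$, so the $\epsilon=2$ hypothesis automatically yields the $\epsilon=1$ one.

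Then, because the $\zeta_i$ are i.i.d.\ and centered, the cross terms vanish and $\mathbb{E}\bigl\|\frac{1}{n}\sum_{i}\zeta_i\bigr\|_{\rkhs}^2=n^{-1}\mathbb{E}\|\zeta_1\|_{\rkhs}^2$. I would control the second moment via the triangle inequality and $|r^*_{\bm{j}}(\bm{x}_1)|\le\|r^*_{\bm{j}}\|_{\rkhs}\|k(\cdot,\bm{x}_1)\|_{\rkhs}$, giving $\|\zeta_1\|_{\rkhs}\le\|\partial_{\bm{j}}k(\cdot,\bm{x}_1)\|_{\rkhs}+\|r^*_{\bm{j}}\|_{\rkhs}\|k(\cdot,\bm{x}_1)\|_{\rkhs}^2$, so by $(a+b)^2\le 2a^2+2b^2$,
\begin{align*}
 \mathbb{E}\|\zeta_1\|_{\rkhs}^2
 \le 2\,\mathbb{E}\|\partial_{\bm{j}}k(\cdot,\bm{x}_1)\|_{\rkhs}^2
 +2\,\|r^*_{\bm{j}}\|_{\rkhs}^2\,\mathbb{E}\|k(\cdot,\bm{x}_1)\|_{\rkhs}^4.
\end{align*}
The two expectations on the right are precisely the quantities required to be finite by Assumption~{\bf(D)} at $\epsilon=2$ (namely $\int\|\partial_{\bm{j}}k(\cdot,\bm{x})\|_{\rkhs}^{\epsilon}p$ at $\epsilon=2$ and $\int\|k(\cdot,\bm{x})\|_{\rkhs}^{2\epsilon}p=\int\|k(\cdot,\bm{x})\|_{\rkhs}^{4}p$). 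Hence $\mathbb{E}\|\zeta_1\|_{\rkhs}^2<\infty$, so $\mathbb{E}\|\widehat{\xi}_{\bm{j}}-\widehat{C}r^*_{\bm{j}}\|_{\rkhs}^2=O(n^{-1})$, and Markov's inequality applied to the squared norm yields the claimed $O_{\mathrm{P}}(n^{-1/2})$.

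The main obstacle here is bookkeeping rather than depth: one must confirm that the two defining integrals are genuinely equal (this is exactly where Assumption~{\bf(C)} enters) and, more importantly, that the second-moment bound truly consumes the $\epsilon=2$ hypothesis. Indeed, the whole purpose of strengthening the moment assumption from $\epsilon=1$ (which only guarantees $J_{\bm{j}}(r_{\bm{j}})<\infty$) to $\epsilon=2$ is to make $\mathbb{E}\|\zeta_1\|_{\rkhs}^2$ finite, so that the variance of the Hilbert-valued sample mean is $O(n^{-1})$; tracking exactly which of the $\|k\|$- and $\|\partial_{\bm{j}}k\|$-moments is needed at which power is the only delicate point.
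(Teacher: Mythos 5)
Your proposal is correct and follows essentially the same route as the paper's proof: both write $\widehat{\xi}_{\bm{j}}-\widehat{C}r^*_{\bm{j}}$ as a centered i.i.d.\ $\rkhs$-valued sample mean (centered because $\xi_{\bm{j}}=Cr^*_{\bm{j}}$ via Assumption~{\bf(C)}), bound the second moment by $2\,\mathbb{E}\|\partial_{\bm{j}}k(\cdot,\bm{x})\|_{\rkhs}^2+2\|r^*_{\bm{j}}\|_{\rkhs}^2\,\mathbb{E}\|k(\cdot,\bm{x})\|_{\rkhs}^4$ using Assumption~{\bf(D)} at $\epsilon=2$ (your $\|k(\cdot,\bm{x})\|_{\rkhs}^4$ is exactly the paper's $\|C_{\mathrm{x}}\|_{\HS}^2$), and conclude by Chebyshev/Markov. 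The only cosmetic difference is that the paper states the bias--variance identity $\frac{n-1}{n}\|\xi_{\bm{j}}-Cr^*_{\bm{j}}\|_{\rkhs}^2+\frac{1}{n}\mathbb{E}\|\cdot\|_{\rkhs}^2$ before noting the first term vanishes, whereas you establish centering first.
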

  The proofs of these lemmas can be seen in
  Appendices~\ref{app:proofoflemmaHconv} and~\ref{app:proofoflemma2},
  respectively.

  Next, we make use of the proof of Theorem~5
  in~\citet{sriperumbudur2013density} to prove
  Theorem~\ref{theo:Hconvrate}. From Lemma~\ref{lemma1},
  \begin{align*}
   \widehat{r}_{\bm{j}}-r_{\bm{j}}^{\lambda}
   &=(\widehat{C}+\lambda_{\bm{j}} I)^{-1}
   \widehat{\xi}_{\bm{j}}-r_{\bm{j}}^{\lambda}\\
   &=(\widehat{C}+\lambda_{\bm{j}} I)^{-1}\left\{
   \widehat{\xi}_{\bm{j}}-\widehat{C}r_{\bm{j}}^{\lambda}
   -\lambda_{\bm{j}} r_{\bm{j}}^{\lambda}\right\}\\
   &=(\widehat{C}+\lambda_{\bm{j}} I)^{-1}
   (\widehat{\xi}_{\bm{j}}-\widehat{C}r_{\bm{j}}^{*})
   +(\widehat{C}+\lambda_{\bm{j}} I)^{-1}
   (C-\widehat{C})(r_{\bm{j}}^{\lambda}-r_{\bm{j}}^{*}), 
  \end{align*}
  where we used
  $\lambda_{\bm{j}}r_{\bm{j}}^{\lambda}=C(r_{\bm{j}}^{*}-r_{\bm{j}}^{\lambda})$
  from Lemma~\ref{lemma1}(i). Therefore,
  \begin{align*}
   \|\widehat{r}_{\bm{j}}-r_{\bm{j}}^*\|_{\rkhs} &\leq
   \|\widehat{r}_{\bm{j}}-r_{\bm{j}}^{\lambda}\|_{\rkhs}
   +\|r_{\bm{j}}^{\lambda}-r^*_{\bm{j}}\|_{\rkhs}.\\ &\leq
   \|(\widehat{C}+\lambda_{\bm{j}} I)^{-1}\|
   (\|\widehat{\xi}_{\bm{j}}-\widehat{C}r_{\bm{j}}^{*}\|_{\rkhs}
   +\|C-\widehat{C}\|\mathcal{A}_0(\lambda_{\bm{j}}))+\mathcal{A}_0(\lambda_{\bm{j}}),
  \end{align*}
  where $\mathcal{A}_0(\lambda_{\bm{j}})
  =\|r_{\bm{j}}^{\lambda}-r_{\bm{j}}^{*}\|_{\rkhs}$. It can be shown
  that $\|(\widehat{C}+\lambda_{\bm{j}} I)^{-1}\|\leq
  1/\lambda_{\bm{j}}$ for sufficiently small $\lambda_{\bm{j}}$.  Thus,
  Lemma~\ref{lemma2} shows that the first term can be bounded by
  $O_{\mathrm{P}}\left(\frac{1}{\lambda_{\bm{j}}\sqrt{n}}\right)$. In
  addition, with the proof techniques
  in~\citet[Lemma~5]{fukumizu2007statistical},
  $\|C-\widehat{C}\|\leq\|C-\widehat{C}\|_{\text{HS}}=O_{\mathrm{P}}(n^{-1/2})$
  with $\epsilon=2$ where $\|\cdot\|_{\HS}$ denotes the Hilbert-Schmidt
  norm.  Thus, the second term is of the order
  $O_{\mathrm{P}}\left(\frac{\mathcal{A}_0(\lambda_{\bm{j}})}{\sqrt{n}\lambda_{\bm{j}}}\right)$.
  From these results,
  \begin{align}
   \|\widehat{r}_{\bm{j}}-r_{\bm{j}}^*\|_{\rkhs} &\leq
   O_{\mathrm{P}}\left(\frac{1}{\lambda_{\bm{j}}\sqrt{n}}\right)
   +O_{\mathrm{P}}\left(\frac{\mathcal{A}_0(\lambda_{\bm{j}})}{\sqrt{n}\lambda_{\bm{j}}}\right)
   +\mathcal{A}_0(\lambda_{\bm{j}}).
   \label{upper-bound-rkhs}
  \end{align}
  Propostion~A.2 in~\cite{sriperumbudur2013density} states that if
  $r_{\bm{j}}^*\in\mathcal{R}(C^{\gamma})$ and $C$ is a bounded and
  self-adjoint compact operator on a separable $\rkhs$, the following
  inequality holds:
  \begin{align}
   \mathcal{A}_0(\lambda_{\bm{j}})\leq\max(1,\|C\|^{\gamma-1})
   \lambda_{\bm{j}}^{\min(1,\gamma)}\|C^{-\gamma}r_{\bm{j}}^*\|_{\rkhs}.
   \label{bound-A0}
  \end{align}
  It can be easily verified that $C$ is a self-adjoint operator.
  Assumption~(D) with $\epsilon=2$ ensures that $C$ is a Hilbert-Schmidt
  operator and therefore compact because it is bounded in terms of the
  Hilbert-Schmidt norm. Thus, applying~\eqref{bound-A0}
  to~\eqref{upper-bound-rkhs} completes the proof when choosing
  $\lambda_{\bm{j}}=
  O\left(n^{-\max\left\{\frac{1}{4},\frac{1}{2(\gamma+1)}\right\}}\right)$
  as $n\rightarrow\infty$.
 \end{proof}
  \subsection{Proof of Lemma~\ref{lemma1}}
  \label{app:proofoflemmaHconv}
  \begin{proof}
   {\it (i)} From the definition of $J_{\bm{j}}$,
   \begin{align*}
    J_{\bm{j}}(r_{\bm{j}})
    &=\int_{\X}\left\{r_{\bm{j}}(\bm{x})-r_{\bm{j}}^*(\bm{x})\right\}^2
    p(\bm{x})\intd\bm{x}\\
    &=\int_{\X} \< r_{\bm{j}}-r_{\bm{j}}^*, k(\cdot,\bm{x})\>_{\rkhs}^2
    p(\bm{x})\intd\bm{x}\\
    &=\int_{\X} \<r_{\bm{j}}-r_{\bm{j}}^*, (k(\cdot,\bm{x})\otimes
    k(\cdot,\bm{x}))(r_{\bm{j}}-r_{\bm{j}}^*)\>_{\rkhs} p(\bm{x})\intd\bm{x}\\
    &=\int_{\X} \< r_{\bm{j}}-r_{\bm{j}}^*,
    C_{\mathrm{x}}(r_{\bm{j}}-r_{\bm{j}}^*)\>_{\rkhs}p(\bm{x})\intd\bm{x}\\
    &=\< r_{\bm{j}}-r_{\bm{j}}^*, C(r_{\bm{j}}-r_{\bm{j}}^*)\>_{\rkhs},
   \end{align*}
   where $C_{\mathrm{x}}:=k(\cdot,\bm{x})\otimes
   k(\cdot,\bm{x})$. Expanding the right-hand side above transforms
   $J_{\bm{j}}^{\lambda}$ as
   \begin{align}
    J_{\bm{j}}^{\lambda}(r_{\bm{j}})
    &=\< r_{\bm{j}},Cr_{\bm{j}}\>_{\rkhs}
    -2\< r_{\bm{j}},Cr_{\bm{j}}^*\>_{\rkhs}
    +\< r_{\bm{j}}^*,Cr_{\bm{j}}^*\>_{\rkhs}
    +\lambda_{\bm{j}}\< r_{\bm{j}},r_{\bm{j}}\>_{\rkhs}\nonumber\\
    &=\< r_{\bm{j}},(C+\lambda_{\bm{j}}I)r_{\bm{j}}\>_{\rkhs}
    -2\< r_{\bm{j}},Cr_{\bm{j}}^*\>_{\rkhs}
    +\< r_{\bm{j}}^*,Cr_{\bm{j}}^*\>_{\rkhs}.
    \label{eqn:J_rkhs}
   \end{align}
   For the second term in~\eqref{eqn:J_rkhs}, we compute
   \begin{align}
    \< r_{\bm{j}}, Cr_{\bm{j}}^*\>_{\rkhs} &=\< r_{\bm{j}}, \int_{\X}
    k(\cdot,\bm{x})r_{\bm{j}}^*(\bm{x}) p(\bm{x})\intd\bm{x}\>_{\rkhs}
    \nonumber\\
    &=\< r_{\bm{j}}, \int_{\X} k(\cdot,\bm{x}) \partial_{\bm{j}}
    p(\bm{x})\intd\bm{x}\>_{\rkhs}\nonumber\\ &=\< r_{\bm{j}}, 
    (-1)^{|\bm{j}|}\int_{\X}
    \partial_{\bm{j}} k(\cdot,\bm{x})p(\bm{x})\intd\bm{x}\>_{\rkhs}\nonumber\\
    &=\< r_{\bm{j}}, \xi_{\bm{j}}\>_{\rkhs},
    \label{eqn:secondterm}
   \end{align}
   where we applied Assumption {\bf (C)}, and
   \begin{align*}
    \xi_{\bm{j}}=(-1)^{|\bm{j}|}\int_{\X}
    \partial_{\bm{j}} k(\cdot,\bm{x})p(\bm{x})\intd\bm{x}.
   \end{align*}
   Comparing the left-hand side with the right-hand side at the last
   line in~\eqref{eqn:secondterm} gives
   \begin{align}
    Cr_{\bm{j}}^*=\xi_{\bm{j}}. \label{Cr*xi}
   \end{align}
   Eq.\eqref{Cr*xi} is valid because \eqref{eqn:secondterm} holds for
   arbitrary $r_{\bm{j}}\in\rkhs$.
   
   Simple calculation after substituting~\eqref{eqn:secondterm}
   into~\eqref{eqn:J_rkhs} provides
   \begin{align*}
    J_{\bm{j}}^{\lambda}(r_{\bm{j}})
    &=\|(C+\lambda_{\bm{j}} I)^{1/2}r_{\bm{j}}-(C+\lambda_{\bm{j}}
    I)^{-1/2}\xi_{\bm{j}}\|_{\rkhs}^2 -\<\xi_{\bm{j}},
    (C+\lambda_{\bm{j}} I)^{-1}\xi_{\bm{j}}\>_{\rkhs}
    +\< r_{\bm{j}}^*,Cr_{\bm{j}}^*\>_{\rkhs},
   \end{align*}
   Since the second and third terms in the right-hand side above do not
   include $r_{\bm{j}}$, the minimizer of
   $J_{\bm{j}}^{\lambda}(r_{\bm{j}})$ is given by
   $r_{\bm{j}}^{\lambda}=(C+\lambda_{\bm{j}} I)^{-1}\xi_{\bm{j}}
   =(C+\lambda_{\bm{j}} I)^{-1}Cr_{\bm{j}}^*$ where \eqref{Cr*xi} was
   applied.
   
   {\it (ii)} It follows from (i) by substituting $C$ and $\xi_{\bm{j}}$
   with $\widehat{C}$ and $\widehat{\xi}_{\bm{j}}$, respectively.
  \end{proof}
  \subsection{Proof of Lemma~\ref{lemma2}}
  \label{app:proofoflemma2}
  \begin{proof}
   We first compute the expectation of
   $\|\widehat{\xi}_{\bm{j}}-\widehat{C}r_{\bm{j}}^{*}\|_{\rkhs}^2$ as
   \begin{align}
    E \|\widehat{\xi}_{\bm{j}}-\widehat{C}r_{\bm{j}}^{*}\|_{\rkhs}^2
    =\frac{n-1}{n} \|\xi_{\bm{j}}-Cr_{\bm{j}}^{*}\|^2_{\rkhs}
    +\frac{1}{n}\int_{\X} \|(-1)^{|\bm{j}|}\partial_{\bm{j}}k(\cdot,\bm{x})
    +C_{\mathrm{x}}r_{\bm{j}}^{*}\|^2_{\rkhs}p(\bm{x})\intd\bm{x},
    \label{Explemma2}
   \end{align}
   where $C_{\mathrm{x}}=k(\cdot,\bm{x})\otimes k(\cdot,\bm{x})$.
   Eq.\eqref{Cr*xi} indicates that the first term in the right-hand side
   of~\eqref{Explemma2} vanishes, i.e.,
   $\|\xi_{\bm{j}}-Cr_{\bm{j}}^{*}\|_{\rkhs}=0$. From
   \begin{align*}
    \|(-1)^{|\bm{j}|}\partial_{\bm{j}}k(\cdot,\bm{x})
    +C_{\mathrm{x}}r_{\bm{j}}^{*}\|^2_{\rkhs}
    \leq 2\|\partial_{\bm{j}}k(\cdot,\bm{x})\|^2_{\rkhs}
    +2\|C_{\mathrm{x}}\|^2_{\text{HS}}\|r_{\bm{j}}^{*}\|^2_{\rkhs},
   \end{align*}
   Assumption~(D) with $\epsilon=2$ ensures that the second term in the
   right-hand side of~\eqref{Explemma2} is finite. Thus, applying the
   Chebyshev's inequality proves the lemma because
   $E(\widehat{\xi}_{\bm{j}}-\widehat{C}r_{\bm{j}}^{*})=\xi_{\bm{j}}-Cr_{\bm{j}}^{*}=0$
   from~\eqref{Cr*xi}.
  \end{proof}
 \section{Connection to the Minimax Theory}
 \label{app:minimax}

  This appendix provides details for the connections to the minimax
  theory discussed in the remark after
  Theorem~\ref{theo:Hconvrate}. First, we introduce the following
  results:
  \begin{itemize}
   \item By the minimax theory~\citep{tsybakov2008introduction}, Eq.(10)
	 in~\citet{JMLR:v18:16-011} shows the minimax rate: For any
	 $\alpha>\delta\geq0$,
	 \begin{align}
	  \inf_{\widehat{r}_{\bm{j},n}}\sup_{r_{\bm{j}}^*\in H^{\alpha}_2}
	  \|\widehat{r}_{\bm{j},n}-r_{\bm{j}}^*\|_{H^{\delta}_2} \asymp
	  n^{-\frac{\alpha-\delta}{2(\alpha-\delta)+D}}. \label{minimax-rate}
	 \end{align}
	 
   \item The following proposition provides necessary conditions for
	 $r_{\bm{j}}^*\in R(C)$:
	 \begin{proposition}
	  Suppose that $\psi,\phi\in C(\R{D})\cap L^1(\R{D})$ are
	  real-valued, shift-invariant and positive definite kernel
	  functions. Let $\rkhs$ and $\mathcal{G}$ be RKHSs associated
	  with $\psi(\bm{x}-\bm{y})$ and $\phi(\bm{x}-\bm{y})$,
	  respectively. For $2\leq r\leq\infty$, assume that the
	  followings hold,
	 \begin{align*}
	  p\in L^{\frac{r}{r-1}}(\R{D}),\quad
	  \left\|\frac{\phi^{\wedge}}{\psi^{\wedge}}\right\|_{\infty}<\infty
	  \quad\text{and}\quad
	  \left\|\frac{\psi^{\wedge2}}{\phi^{\wedge}}\right\|_{\frac{r}{r-2}}<\infty.
	 \end{align*}
	  Then, $r_{\bm{j}}^*\in R(C)$ implies that $r_{\bm{j}}^*\in
	  \mathcal{G}\subset\rkhs$, where
	  $k(\bm{x},\bm{y})=\psi(\bm{x}-\bm{y})$ in the operator
	  $C$. \label{prop:necesaryconditions}
	 \end{proposition}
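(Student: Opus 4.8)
The plan is to transfer the entire statement into the Fourier domain, where both native spaces have explicit descriptions, and then combine a convolution representation of the operator $C$ with the Hausdorff--Young and Hölder inequalities. First I would invoke the Fourier characterization of RKHSs of shift-invariant kernels~\citep[Theorem~10.12]{wendland2004scattered}: since $\psi,\phi\in C(\R{D})\cap L^1(\R{D})$ are positive definite, Bochner's theorem gives $\psi^{\wedge},\phi^{\wedge}\geq 0$, and, up to a fixed normalizing constant,
\[
 \rkhs=\Big\{f\in L^2(\R{D}):\int\frac{|f^{\wedge}(\bm{\omega})|^2}{\psi^{\wedge}(\bm{\omega})}\intd\bm{\omega}<\infty\Big\},\qquad
 \mathcal{G}=\Big\{f\in L^2(\R{D}):\int\frac{|f^{\wedge}(\bm{\omega})|^2}{\phi^{\wedge}(\bm{\omega})}\intd\bm{\omega}<\infty\Big\},
\]
with the squared norms given by these integrals. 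The inclusion $\mathcal{G}\subset\rkhs$ is then immediate, since for $f\in\mathcal{G}$,
\[
 \int\frac{|f^{\wedge}|^2}{\psi^{\wedge}}=\int\frac{|f^{\wedge}|^2}{\phi^{\wedge}}\cdot\frac{\phi^{\wedge}}{\psi^{\wedge}}\leq\Big\|\frac{\phi^{\wedge}}{\psi^{\wedge}}\Big\|_{\infty}\int\frac{|f^{\wedge}|^2}{\phi^{\wedge}}<\infty,
\]
so that $\|f\|_{\rkhs}^2\leq\|\phi^{\wedge}/\psi^{\wedge}\|_{\infty}\|f\|_{\mathcal{G}}^2$.

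For the main claim, suppose $r^*_{\bm{j}}\in\mathcal{R}(C)$, so $r^*_{\bm{j}}=Ch$ for some $h\in\rkhs$. Using the reproducing property inside the $\rkhs$-valued Bochner integral defining $C$, I would rewrite $Ch$ as a convolution: for every $\bm{y}$,
\[
 r^*_{\bm{j}}(\bm{y})=\int k(\bm{y},\bm{x})\,\langle k(\cdot,\bm{x}),h\rangle_{\rkhs}\,p(\bm{x})\intd\bm{x}=\int\psi(\bm{y}-\bm{x})\,(hp)(\bm{x})\intd\bm{x}=(\psi*(hp))(\bm{y}),
\]
where I use that $\psi$ is even (as $k$ is real and symmetric). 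Here $h$ is bounded, since $|h(\bm{x})|=|\langle h,k(\cdot,\bm{x})\rangle_{\rkhs}|\leq\|h\|_{\rkhs}\sqrt{\psi(\bm{0})}$; hence $hp\in L^{\frac{r}{r-1}}(\R{D})$ with $\|hp\|_{r/(r-1)}\leq\|h\|_{\infty}\|p\|_{r/(r-1)}<\infty$. The exponent $\frac{r}{r-1}$ lies in $[1,2]$ for $2\leq r\leq\infty$, so the Hausdorff--Young inequality yields $(hp)^{\wedge}\in L^{r}(\R{D})$, and the convolution theorem gives $(r^*_{\bm{j}})^{\wedge}=(2\pi)^{D/2}\psi^{\wedge}(hp)^{\wedge}$.

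It then remains to verify $r^*_{\bm{j}}\in\mathcal{G}$, i.e.\ finiteness of $\int|(r^*_{\bm{j}})^{\wedge}|^2/\phi^{\wedge}$. Substituting the convolution identity and applying Hölder's inequality with the conjugate exponents $\frac{r}{r-2}$ and $\frac{r}{2}$,
\[
 \int\frac{|(r^*_{\bm{j}})^{\wedge}|^2}{\phi^{\wedge}}=(2\pi)^D\int\frac{(\psi^{\wedge})^2}{\phi^{\wedge}}\,|(hp)^{\wedge}|^2\leq(2\pi)^D\Big\|\frac{(\psi^{\wedge})^2}{\phi^{\wedge}}\Big\|_{\frac{r}{r-2}}\big\|(hp)^{\wedge}\big\|_{r}^2.
\]
The first factor is finite by assumption and the second by the Hausdorff--Young bound above, so $r^*_{\bm{j}}\in\mathcal{G}$, and the already-established inclusion $\mathcal{G}\subset\rkhs$ completes the proof. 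The endpoint cases $r=2$ (where $\frac{r}{r-2}=\infty$, so Hölder reduces to an $L^{\infty}$--$L^1$ pairing) and $r=\infty$ (where $\frac{r}{r-1}=1$, so $hp\in L^1$ and $(hp)^{\wedge}\in L^{\infty}$) should be checked separately but reduce to the same two inequalities.

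The step I expect to be the main obstacle is making the convolution representation of $Ch$ and the accompanying Fourier identity fully rigorous. One must justify passing the pointwise evaluation through the $\rkhs$-valued Bochner integral by a Fubini-type argument (using the boundedness of $\psi$ together with Assumption~{\bf (D)}), and then apply the convolution theorem in the regime $\psi\in L^1$, $hp\in L^{q}$ with $1\leq q\leq 2$, where $hp$ need not be integrable; the identity $(\psi*(hp))^{\wedge}=(2\pi)^{D/2}\psi^{\wedge}(hp)^{\wedge}$ must therefore be invoked in its extension beyond the elementary $L^1*L^1$ setting, relying on $\psi^{\wedge}$ being bounded and continuous and on Young's inequality guaranteeing $r^*_{\bm{j}}=\psi*(hp)\in L^{q}$.
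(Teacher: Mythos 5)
Your proposal is correct, and its skeleton coincides with the paper's proof: the same appeal to Theorem~10.12 of Wendland for the Fourier characterization of both native spaces, a verbatim-identical argument for $\mathcal{G}\subset\rkhs$ via $\|\phi^{\wedge}/\psi^{\wedge}\|_{\infty}$, a representation of $r^*_{\bm{j}}=Ch$ as an integral against $\psi$, and the identical H{\"o}lder step with conjugate exponents $\frac{r}{r-2}$ and $\frac{r}{2}$ followed by Hausdorff--Young. The one genuine point of divergence is how the Fourier transform of the product term is controlled in $L^r$. The paper never forms $(fp)^{\wedge}$ as such: it substitutes the inverse-Fourier representation of $\psi$, interchanges integrals by Fubini, writes the transform of $fp$ as the convolution $f^{\wedge}*p^{\wedge}$, and bounds $\|f^{\wedge}*p^{\wedge}\|_{r}\leq\|f^{\wedge}\|_{1}\|p^{\wedge}\|_{r}\leq\|f^{\wedge}\|_{1}\|p\|_{\frac{r}{r-1}}$ by Young's convolution inequality plus Hausdorff--Young applied to $p$ alone, leaning on the fact (supplied by the Wendland lemma) that $f^{\wedge}\in L^1(\R{D})$ for $f\in\rkhs$. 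You instead keep the product $hp$ intact: you bound $\|h\|_{\infty}\leq\|h\|_{\rkhs}\sqrt{\psi(\bm{0})}$ by the reproducing property, conclude $hp\in L^{\frac{r}{r-1}}(\R{D})$, apply Hausdorff--Young directly to $hp$, and then invoke the convolution theorem for $\psi*(hp)$. Your route is slightly more economical (it needs neither Young's convolution inequality nor the $L^1$-integrability of $f^{\wedge}$), but it shifts the technical burden onto the identity $(\psi*(hp))^{\wedge}=(2\pi)^{D/2}\psi^{\wedge}(hp)^{\wedge}$ for $hp\in L^{q}$, $1\leq q\leq 2$, outside the elementary $L^1*L^1$ setting --- a point you correctly flag, and which is settled by a routine density argument using that $\psi^{\wedge}$ is bounded and the Fourier transform is continuous from $L^q$ to $L^{q'}$. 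The paper's Fubini derivation absorbs the analogous justification into the interchange of integrals, at the price of invoking the product-to-convolution identity for $fp$. Both arguments consume the hypotheses $p\in L^{\frac{r}{r-1}}$ and $\|\psi^{\wedge 2}/\phi^{\wedge}\|_{\frac{r}{r-2}}<\infty$ at exactly the same places, so the two proofs are of equal strength.
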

	 The proof of Proposition~\ref{prop:necesaryconditions} is
	 deferred to Section~\ref{ssec:proof-necessaryconditions}.  The
	 conditions are necessary ones for $r_{\bm{j}}^*\in
	 R(C^{\beta})$ with $\beta>1$ as well because
	 $R(C^{\beta_1})\subset R(C^{\beta_2})$ for
	 $0<\beta_1<\beta_2<\infty$~\citep[Section~4.2 and
	 Appendix~B.3]{JMLR:v18:16-011}.
  \end{itemize}
  
  Recall that when the \emph{Mat{\'e}rn kernel},
  $k(\bm{x},\bm{y})=\psi(\bm{x}-\bm{y})=\frac{2^{1-s}}{\Gamma(s)}
  \|\bm{x}-\bm{y}\|^{s-D/2}\mathfrak{K}_{D/2-s}(\|\bm{x}-\bm{y}\|)$, is
  employed, the corresponding RKHS $\rkhs$ is the Sobolev space
  $H^{s}_2$ with $s>D/2$~\citep[Chapter~10]{wendland2004scattered}:
  \begin{align*}
   \rkhs=H^{s}_2:=\left\{f\in L^2(\R{D})\cap C(\R{D}) :
   \int(1+\|\bm{\omega}\|^2)^{s}|f^{\wedge}(\bm{\omega})|^2
   \intd\bm{\omega}<\infty\right\}.
  \end{align*}
  Theorem~6.13 in~\citet{wendland2004scattered} gives the Fourier
  transform of $\psi$ as
  \begin{align*}
   \psi^{\wedge}(\bm{\omega})=(1+\|\bm{\omega}\|^2)^{-s}.
  \end{align*}
  When $p\in L^{1}(\R{D})$, applying
  Proposition~\ref{prop:necesaryconditions} ensures that $r_{\bm{j}}^*\in
  R(C)$ implies $r_{\bm{j}}^*\in H^{s'}_2\subset H^{s}_2$ with
  $\frac{D}{2}<s\leq s'< 2s+\frac{1}{2}-\frac{D}{2}$. Thus,
  $r_{\bm{j}}^*\in H^{2s+\frac{1}{2}-\frac{D}{2}-\epsilon}_2$ for
  arbitrarily small $\epsilon>0$. Then, if we chose
  $\rkhs=H^{D-\frac{1}{2}+\epsilon}_2$, the rate $n^{-\frac{1}{4}}$ in
  Theorem~\ref{theo:Hconvrate} is minimax optimal (Set
  $\alpha=2s+\frac{1}{2}-\frac{D}{2}-\epsilon$ and $\delta=s$
  in~\eqref{minimax-rate}, equate the exponent in the right-hand side
  of~\eqref{minimax-rate} with $-\frac{1}{4}$, and solve it with respect
  to $s$). Similar discussion is possible when $p\in L^{2}(\R{D})$: The
  rate is minimax optimal under the choice of
  $\rkhs=H^{\frac{D}{2}+\epsilon}_2$.
  \subsection{Proof of Proposition~\ref{prop:necesaryconditions}}
  \label{ssec:proof-necessaryconditions}
  Here, we modify the proof of Proposition~8 in~\citet{JMLR:v18:16-011}.
  \begin{proof}
   To characterize RKHSs induced by shift-invariant kernels, we employ
   the following lemma:
   \begin{lemma}[Theorem~10.12 in~\citet{wendland2004scattered}]
    Let $\psi(\bm{x}-\bm{y})$ be a real-valued, symmetric and positive
    definite kernel. When $\psi\in C(\R{D})\cap L^1(\R{D})$, it induces
    the following Hilbert space,
    \begin{align*}
     \rkhs:=\left\{f\in C(\R{D})\cap L^2(\R{D}) :
     \frac{f^{\wedge}}{\sqrt{\psi^{\wedge}}}\in L^2(\R{D})\right\},
    \end{align*}
    with the reproducing kernel $\psi(\bm{x}-\bm{y})$ and inner product,
    \begin{align*}
     \<f,g\>_{\rkhs}:=
     \frac{1}{(2\pi)^{D/2}}\int\frac{f^{\wedge}(\bm{\omega})\overline{g^{\wedge}(\bm{\omega})}}
     {\psi^{\wedge}(\bm{\omega})}\intd\bm{\omega}.
    \end{align*}
    $\overline{g^{\wedge}(\bm{\omega})}$ above denotes the complex
    conjugate of $g^{\wedge}(\bm{\omega})$. In particular, every $f$ in
    $\rkhs$ can be recovered from its Fourier transform $f^{\wedge}\in
    L^1(\R{D})\cap L^2(\R{D})$ as
    \begin{align}
     f(\bm{x})=\frac{1}{(2\pi)^{D/2}}\int
     f^{\wedge}(\bm{\omega})e^{i\bm{x}^{\top}\bm{\omega}}
     \intd\bm{\omega}.  \label{inverseFourier}
    \end{align} \label{RKHS-Shift-Invariant}
   \end{lemma}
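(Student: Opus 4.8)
The plan is to verify directly that the candidate space $\rkhs$, equipped with the stated inner product, satisfies the defining properties of the reproducing kernel Hilbert space of the kernel $\psi(\bm{x}-\bm{y})$; uniqueness of the RKHS (Moore--Aronszajn) then identifies the two. The starting point is Bochner's theorem: since $\psi\in C(\R{D})\cap L^1(\R{D})$ is positive definite, its Fourier transform $\psi^{\wedge}$ is nonnegative and continuous, and I use the standing nondegeneracy $\psi^{\wedge}>0$ (as holds for the Mat{\'e}rn kernel) so that division by $\psi^{\wedge}$ is meaningful. The map $f\mapsto f^{\wedge}/\sqrt{\psi^{\wedge}}$ then sends $\rkhs$ isometrically into $L^2(\R{D})$, and the membership condition $f^{\wedge}/\sqrt{\psi^{\wedge}}\in L^2(\R{D})$ guarantees, via Cauchy--Schwarz, that the inner-product integral converges for all $f,g\in\rkhs$.

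Next I would check the two RKHS axioms. For the membership $\psi(\cdot-\bm{y})\in\rkhs$, the shift rule for the Fourier transform gives $(\psi(\cdot-\bm{y}))^{\wedge}(\bm{\omega})=e^{-i\bm{\omega}^{\top}\bm{y}}\psi^{\wedge}(\bm{\omega})$, so that $(\psi(\cdot-\bm{y}))^{\wedge}/\sqrt{\psi^{\wedge}}=e^{-i\bm{\omega}^{\top}\bm{y}}\sqrt{\psi^{\wedge}}$ has squared $L^2$ norm $\int\psi^{\wedge}(\bm{\omega})\intd\bm{\omega}=(2\pi)^{D/2}\psi(\bm{0})<\infty$ by Fourier inversion at the origin; continuity and square-integrability of $\psi$ (the latter from $|\psi|\leq\psi(\bm{0})$ together with $\psi\in L^1$) complete this check. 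For the reproducing property, since $\psi$ is real and symmetric its transform $\psi^{\wedge}$ is real, so
\begin{align*}
 \<f,\psi(\cdot-\bm{y})\>_{\rkhs}
 &=\frac{1}{(2\pi)^{D/2}}\int\frac{f^{\wedge}(\bm{\omega})\,
 e^{i\bm{\omega}^{\top}\bm{y}}\psi^{\wedge}(\bm{\omega})}
 {\psi^{\wedge}(\bm{\omega})}\intd\bm{\omega}\\
 &=\frac{1}{(2\pi)^{D/2}}\int f^{\wedge}(\bm{\omega})
 e^{i\bm{\omega}^{\top}\bm{y}}\intd\bm{\omega}
 =f(\bm{y}),
\end{align*}
the last equality being Fourier inversion~\eqref{inverseFourier}. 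This inversion is legitimate because $f^{\wedge}\in L^1(\R{D})$: writing $f^{\wedge}=(f^{\wedge}/\sqrt{\psi^{\wedge}})\cdot\sqrt{\psi^{\wedge}}$ as a product of two $L^2$ functions and applying Cauchy--Schwarz yields $f^{\wedge}\in L^1$, while $f\in L^2$ gives $f^{\wedge}\in L^2$ by Plancherel, establishing the ``in particular'' claim $f^{\wedge}\in L^1(\R{D})\cap L^2(\R{D})$ as a by-product.

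It remains to show $\rkhs$ is complete. Given a Cauchy sequence $(f_n)$ in $\rkhs$, the isometry makes $(f_n^{\wedge}/\sqrt{\psi^{\wedge}})$ Cauchy in $L^2$, with some limit $u$; I would set $f^{\wedge}:=u\sqrt{\psi^{\wedge}}$, recover $f$ by the inversion formula, and verify $f\in\rkhs$ with $\|f_n-f\|_{\rkhs}\to 0$. The delicate point---and the main obstacle---is confirming that the recovered limit lies in $C(\R{D})\cap L^2(\R{D})$ rather than merely converging in an $L^2$ sense. This is handled by upgrading $\rkhs$-convergence to uniform convergence through the reproducing property already proved: the bound $|f_n(\bm{x})-f_m(\bm{x})|=|\<f_n-f_m,\psi(\cdot-\bm{x})\>_{\rkhs}|\leq\sqrt{\psi(\bm{0})}\,\|f_n-f_m\|_{\rkhs}$ shows $(f_n)$ is uniformly Cauchy, so its limit is continuous, and Plancherel identifies it with $f$. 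A secondary caveat is the nondegeneracy of $\psi^{\wedge}$; should $\psi^{\wedge}$ vanish on a positive-measure set one restricts attention to transforms supported where $\psi^{\wedge}>0$, but under the assumptions relevant here (e.g.\ the Mat{\'e}rn kernel, whose transform $(1+\|\bm{\omega}\|^2)^{-s}$ is everywhere positive) this does not arise.
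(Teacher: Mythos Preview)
The paper does not supply its own proof of this lemma; it is quoted verbatim as Theorem~10.12 of \citet{wendland2004scattered} and used as a black box inside the proof of Proposition~\ref{prop:necesaryconditions}. So there is no ``paper's proof'' to compare against.

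That said, your argument is correct and is essentially the standard one (and indeed the one Wendland gives): verify the kernel sections lie in the candidate space, check the reproducing identity by Fourier inversion after showing $f^{\wedge}\in L^1$ via Cauchy--Schwarz on $f^{\wedge}=(f^{\wedge}/\sqrt{\psi^{\wedge}})\sqrt{\psi^{\wedge}}$, and close with completeness by passing the $L^2$ limit back through the isometry and upgrading to uniform convergence via the reproducing bound. Your handling of the nondegeneracy of $\psi^{\wedge}$ is honest: the general Wendland statement does allow $\psi^{\wedge}$ to vanish (one then restricts to transforms supported on $\{\psi^{\wedge}>0\}$), but for the only use made of the lemma in this paper---the Mat{\'e}rn kernel with $\psi^{\wedge}(\bm{\omega})=(1+\|\bm{\omega}\|^2)^{-s}>0$---your standing assumption is harmless.
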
  
   
   Let us express an RKHS $\mathcal{G}$ induced by another real-valued,
   symmetric and positive definite kernel $\phi(\bm{x}-\bm{y})$. We
   first show that $\mathcal{G}\subset\rkhs$ if
   $\left\|\frac{\phi^{\wedge}(\bm{\omega})}{\psi^{\wedge}(\bm{\omega})}\right\|_{\infty}<\infty$.
   From Lemma~\ref{RKHS-Shift-Invariant}, for $g\in\mathcal{G}$, the
   norm in $\rkhs$ is computed as
   \begin{align*}
    \|g\|_{\rkhs}^2=
    \frac{1}{(2\pi)^{D/2}}\int\frac{|g^{\wedge}(\bm{\omega})|^2}
    {\psi^{\wedge}(\bm{\omega})}\intd\bm{\omega}
    =\frac{1}{(2\pi)^{D/2}}\int\frac{|g^{\wedge}(\bm{\omega})|^2\phi^{\wedge}(\bm{\omega})}
    {\phi^{\wedge}(\bm{\omega})\psi^{\wedge}(\bm{\omega})}\intd\bm{\omega}
    \leq\|g\|_{\mathcal{G}}^2
    \left\|\frac{\phi^{\wedge}(\bm{\omega})}{\psi^{\wedge}(\bm{\omega})}\right\|_{\infty}<\infty.
   \end{align*}
   Thus, $g\in\rkhs$, which indicates that $\mathcal{G}\subset\rkhs$.
   
   Next, we show that $r_{\bm{j}}^*\in R(C)$ indicates
   $r_{\bm{j}}^*\in\mathcal{G}$. Since $r_{\bm{j}}^*\in R(C)$, there
   exists $f\in\rkhs$ such that $r_{\bm{j}}^* =Cf$, i.e.,
   \begin{align}
    r_{\bm{j}}^*(\bm{y})&=\int k(\bm{x},\bm{y})f(\bm{x})
    p(\bm{x})\intd\bm{x}\nonumber\\ &=\int\psi(\bm{x}-\bm{y})f(\bm{x})
    p(\bm{x})\intd\bm{x}\nonumber\\ &=\int\left[\frac{1}{(2\pi)^{D/2}}\int
    \psi^{\wedge}(\bm{\omega})e^{i(\bm{x}-\bm{y})^{\top}\bm{\omega}}
    \intd\bm{\omega}\right]f(\bm{x}) p(\bm{x})\intd\bm{x}\nonumber\\
    &=\int\left[\frac{1}{(2\pi)^{D/2}}\int f(\bm{x})
    p(\bm{x})e^{i\bm{x}^{\top}\bm{\omega}}
    \intd\bm{x}\right]\psi^{\wedge}(\bm{\omega})e^{-i\bm{y}^{\top}\bm{\omega}}\intd\bm{\omega}
    \nonumber\\&=\int (f^{\wedge}*p^{\wedge})(-\bm{\omega})\psi^{\wedge}(\bm{\omega})
    e^{-i\bm{y}^{\top}\bm{\omega}}\intd\bm{\omega},\label{Fourier-r*}
   \end{align}
   where we applied~\eqref{inverseFourier} to $\psi(\bm{x}-\bm{y})$
   on the third line and Fubini's theorem on the fourth line, and $*$
   denotes the convolution such that
   \begin{align*}
    (f*p)(\bm{x}):=\int f(\bm{y})p(\bm{x}-\bm{y})\intd\bm{y}.
   \end{align*}
   Eq.\eqref{Fourier-r*} indicates that the Fourier transform of
   $r_{\bm{j}}^{*}$ is given by
   \begin{align*}
    r_{\bm{j}}^{*\wedge}(\bm{\omega})
    =(f^{\wedge}*p^{\wedge})(-\bm{\omega})\psi^{\wedge}(\bm{\omega}).
   \end{align*}
   Computing the norm of $r^*_{\bm{j}}$ in $\mathcal{G}$ yields
   \begin{align*}
    \|r^*_{\bm{j}}\|_{\mathcal{G}}^2
    =\int |(f^{\wedge}*p^{\wedge})(-\bm{\omega})|^2
    \frac{\psi^{\wedge}(\bm{\omega})^2}{\phi^{\wedge}(\bm{\omega})}\intd\bm{\omega}
    &\leq\|(f^{\wedge}*p^{\wedge})^2\|_{r/2}
    \left\|\frac{\psi^{\wedge2}}{\phi^{\wedge}}\right\|_{\frac{r}{r-2}}
    =\|f^{\wedge}*p^{\wedge}\|_{r}^2
    \left\|\frac{\psi^{\wedge2}}{\phi^{\wedge}}\right\|_{\frac{r}{r-2}},
   \end{align*}
   where H{\"o}lder inequality was applied with $2\leq
   r\leq\infty$. Then, Young's convolution and Hausdorff-Young
   inequalities~\citep{beckner1975inequalities} yield
   \begin{align*}
    \|f^{\wedge}*p^{\wedge}\|_{r}
    \leq\|f^{\wedge}\|_{1}\cdot\|p^{\wedge}\|_{r}
    \leq\|f^{\wedge}\|_{1}\cdot\|p\|_{\frac{r}{r-1}}<\infty
   \end{align*}
   Thus, by Lemma~\ref{RKHS-Shift-Invariant}, $r_{\bm{j}}^{*}\in R(C)$
   indicates $r_{\bm{j}}^{*}\in\mathcal{G}$.
  \end{proof}
 \section{Proof of Theorem~\ref{theo:LSLDGCconv}}
 \label{app:LSLDGCconv}
 \begin{proof}
  Suppose that $\widehat{\alpha}_{j}^{(i)}=0$ and
  $\widetilde{\beta}_{j}^{(i)}\left(=-\widehat{\beta}_{j}^{(i)}\right)\geq0$
  for all $i$ and $j$. Computing the integral in~\eqref{mypathintegral}
  shows that
  \begin{align}
   \widehat{D}_{\widehat{\bm{g}}}[\bm{x}|\bm{y}]&=
   \sum_{j=1}^D \int_{y^{(j)}}^{x^{(j)}}
   \widehat{g}_{j}(x^{(1)},\dots,x^{(j-1)},
   z^{(j)},y^{(j+1)},\dots,y^{(D)})\intd
   z^{(j)}\nonumber\\
   &=\sum_{j=1}^D\sum_{i=1}^n \widetilde{\beta}^{(i)}_{j}\left[
   \phi\left(\frac{\|\bm{z}_x^{j}-\bm{x}_i\|^2}{2\sigma_j^2}\right)
   -\phi\left(\frac{\|\bm{z}_y^{j}-\bm{x}_i\|^2}{2\sigma_j^2}\right)\right],
   \label{one-integral}
  \end{align}
  where we used the relation
  $\partial_j'\phi\left(\|\bm{x}-\bm{x}'\|^2\right)=-\partial_j\phi\left(\|\bm{x}-\bm{x}'\|^2\right)$,
  and
  \begin{align}
   \bm{z}_y^j&=(x^{(1)},\dots,x^{(j-1)},y^{(j)},y^{(j+1)},\dots,y^{(D)})^{\top}
   \nonumber\\
   \bm{z}_x^j&=(x^{(1)},\dots,x^{(j-1)},x^{(j)},y^{(j+1)},\dots,y^{(D)})^{\top}.
   \label{twopoints}
  \end{align}
  Note that the $j$-th elements in $\bm{z}_y^{j}$ and $\bm{z}_x^{j}$
  only differ. To ensure that the right-hand side
  in~\eqref{one-integral} is non-negative, we need to show that for all
  $j$,
  \begin{align}
   \sum_{i=1}^n \widetilde{\beta}^{(i)}_{j}\left[
   \phi\left(\frac{\|\bm{z}_x^{j}-\bm{x}_i\|^2}{2\sigma_j^2}\right)
   -\phi\left(\frac{\|\bm{z}_y^{j}-\bm{x}_i\|^2}{2\sigma_j^2}\right)\right]
   \geq0.\label{conv-goal}   
  \end{align}

  To obtain a lower bound of the left-hand side in~\eqref{conv-goal}, we
  use the following inequality, which comes from the convexity of
  $\phi$:
  \begin{align}
   \phi\left(\frac{\|\bm{z}_x^{j}-\bm{x}_i\|^2}{2\sigma_j^2}\right)
   &-\phi\left(\frac{\|\bm{z}_y^{j}-\bm{x}_i\|^2}{2\sigma_j^2}\right)
   \nonumber\\
   &\geq\frac{1}{2\sigma_j^2}
   \varphi\left(\frac{\|\bm{z}_y^{j}-\bm{x}_i\|^2}{2\sigma_j^2}\right)
   \left[(y^{(j)}-x_i^{(j)})^2-(x^{(j)}-x_i^{(j)})^2\right].\label{convex}
  \end{align}
  Since all $\widetilde{\beta}^{(i)}_{j}$ are assumed to be
  non-negative, \eqref{convex} provides
  \begin{align}
   &\sum_{i=1}^n \widetilde{\beta}^{(i)}_{j} \left[
   \phi\left(\frac{\|\bm{z}_x^{j}-\bm{x}_i\|^2}{2\sigma_j^2}\right)
   -\phi\left(\frac{\|\bm{z}_y^{,j}-\bm{x}_i\|^2}{2\sigma_j^2}\right)   
   \right]\nonumber\\
   &\geq\frac{1}{2\sigma_j^2}\sum_{i=1}^n \widetilde{\beta}^{(i)}_{j}
   \varphi\left(\frac{\|\bm{z}_y^{j}-\bm{x}_i\|^2}{2\sigma_j^2}\right)
   \left[(y^{(j)}-x_i^{(j)})^2-(x^{(j)}-x_i^{(j)})^2\right]\nonumber\\
   &=\frac{1}{2\sigma_j^2}\sum_{i=1}^n\widetilde{\beta}^{(i)}_{j}
   \varphi\left(\frac{\|\bm{z}_y^{j}-\bm{x}_i\|^2}{2\sigma_j^2}\right)
   \left[(y^{(j)})^2-(x^{(j)})^2\right]
   -\underbrace{\sum_{i=1}^n\widetilde{\beta}^{(i)}_{j} x_i^{(j)}
   \varphi\left(\frac{\|\bm{z}_y^{j}-\bm{x}_i\|^2}{2\sigma_j^2}\right)}_{(\star)}
   \frac{(y^{(j)}-x^{(j)})}{\sigma_j^2}.\label{convex-lower-bound}
  \end{align}
  Finally, we set $\bm{y}=\bm{z}_k^{\tau}$ and
  $\bm{x}=\bm{z}_k^{\tau+1}$ in
  $\widehat{D}_{\widehat{\bm{g}}}[\bm{x}|\bm{y}]$, and therefore
  \begin{align*}
   \bm{z}_y^j&=(z_k^{(\tau+1,1)},\dots,z_k^{(\tau+1,j-1)},z_k^{(\tau,j)},z_k^{(\tau,j+1)},\dots,z_k^{(\tau,D)})^{\top}=\tilde{\bm{z}}_k^{\tau}
   \nonumber\\
   \bm{z}_x^j&=(z_k^{(\tau+1,1)},\dots,z_k^{(\tau+1,j-1)},z_k^{(\tau+1,j)},z_k^{(\tau,j+1)},\dots,z_k^{(\tau,D)})^{\top}.
  \end{align*}
  Applying the coordinate-wise update rule~\eqref{CWupdate} to
  $(\star)$, the right-hand side in~\eqref{convex-lower-bound} becomes
  \begin{align*}
   &\frac{1}{2\sigma_j^2}\sum_{i=1}^n\widetilde{\beta}^{(i)}_{j}
   \varphi\left(\frac{\|\tilde{\bm{z}}_k^{\tau}-\bm{x}_i\|^2}{2\sigma_j^2}\right)
   (z_k^{(\tau,j)}-z_k^{(\tau+1,j)})^2\geq0.
  \end{align*}
  This proves~\eqref{conv-goal}, and thus the proof was completed.
 \end{proof}

 \section{Proof of Theorem~\ref{theo:nonintegrable-field}}
 \label{app:proofnonintegrable-field}
 \begin{proof}
  Under the path~\eqref{path}, 
  \begin{align*}
   D_{\g}[\x|\y]-\widehat{D}_{\widehat{\g}}[\x|\y]=
   \sum_{j=1}^D\int_{j-1}^j
   \<\g(\bm{\gamma}_j(t))-\widehat{\g}(\bm{\gamma}_j(t)),\,\dot{\bm{\gamma}}_j(t)\>\intd t
  \end{align*}
  where the curve $\bm{\gamma}_j(t),\,t\in[j-1,j]$ connects $\bm{z}_y^j$
  and $\bm{z}_x^j$ by the line segment whose definition is given
  in~\eqref{twopoints}.  Then, we obtain
  \begin{align*}
   \left|
   \int_{j-1}^j 
   \<\g(\bm{\gamma}_j(t))-\widehat{\g}(\bm{\gamma}_j(t)),\,\dot{\bm{\gamma}}_j(t)\>\intd t
   \right|
   \leq
   \|\g-\widehat{\g}\|_\infty |y^{(j)}-x^{(j)}|. 
  \end{align*}
  Therefore,
  \begin{align*}
   |D_{\g}[\x|\y]-D_{\widehat{\g}}[\x|\y]|\leq \|\g-\widehat{\g}\|_\infty\|\y-\x\|_1.
  \end{align*}
  Finally, with Lemma~\ref{Epsiconv}, the theorem was proved.
 \end{proof}
  
 \section{Proof of Theorem~\ref{theo:Modeconv}}
 \label{app:proofModeconv}
 We modify the proof of Theorem~1 in~\citet{chen2016comprehensive}, and
 apply Lemma~\ref{Epsiconv}. 
 \begin{proof}
  Suppose that a mode point $\bm{\mu}_j\in\mathcal{M}$ is uniquely
  approximated by an estimated mode point
  $\widehat{\bm{\mu}}_j\in\widehat{\mathcal{M}}$. Then, the Taylor
  expansion gives
  \begin{align}
   \widehat{\bm{g}}(\bm{\mu}_j)&=\widehat{\bm{g}}(\widehat{\bm{\mu}}_j)
   +\nabla\widehat{\bm{g}}(\bm{\mu}_j)
   (\bm{\mu}_j-\widehat{\bm{\mu}}_j)+o(\|\bm{\mu}_j-\widehat{\bm{\mu}}_j\|)
   \nonumber\\&=\nabla\widehat{\bm{g}}(\bm{\mu}_j)
   (\bm{\mu}_j-\widehat{\bm{\mu}}_j)+o(\|\bm{\mu}_j-\widehat{\bm{\mu}}_j\|),
   \label{g-hat-at-mode-no1}
  \end{align}
  where $\widehat{\bm{g}}(\widehat{\bm{\mu}}_j)=\bm{0}$. On the other
  hand, from Lemma~\ref{Epsiconv},
  \begin{align}
   \widehat{g}_j(\bm{\mu}_j)=\widehat{g}_j(\bm{\mu}_j)-
   g_j(\bm{\mu}_j)=O_{\mathrm{P}}\left(n^{-\min\left\{\frac{1}{4},
   \frac{\gamma}{2(\gamma+1)}\right\}}\right),
   \label{g-hat-at-mode-no2}
  \end{align}
  where $g_j(\bm{\mu}_j)=0$. Since all eigenvalues of
  $\nabla\bm{g}(\bm{\mu}_j)$ are strictly negative by the definition
  in~\eqref{definition-of-mode-set}, the following relation and
  Lemma~\ref{Epsiconv} ensures that $\nabla\widehat{\bm{g}}(\bm{\mu}_j)$
  is invertible with a high probability: By the derivative reproducing
  property~\citep{zhou2008derivative},
  \begin{align*}
   |\partial_i g_j(\bm{\x})-\partial_i \widehat{g}_j(\bm{\x})|
   =|\<g_j-\widehat{g}_j,\partial_i k(\bm{x},\cdot)\>_{\rkhs}|
   \leq\|g_j-\widehat{g}_j\|_{\rkhs}
   |\partial_i^{\prime}\partial_ik(\bm{x}^{\prime},\bm{x})|_{\bm{x}^{\prime}=\bm{x}}|
   =O\left(\|g_j-\widehat{g}_j\|_{\rkhs}\right),
  \end{align*}
  where the Cauchy-Schwarz inequality was applied, $\partial_i^{\prime}$
  denotes the partial derivative with respect to the $i$-th element in
  $\bm{x}^{\prime}$, and $\partial_i^{\prime}\partial_ik$ is assumed to
  be uniformly bounded. Thus, combining~\eqref{g-hat-at-mode-no1}
  with~\eqref{g-hat-at-mode-no2} yields
  \begin{align*}
   \|\bm{\mu}_j-\widehat{\bm{\mu}}_j\|
   =O_{\mathrm{P}}\left(n^{-\min\left\{\frac{1}{4},
   \frac{\gamma}{2(\gamma+1)}\right\}}\right).
  \end{align*}
  The fact,
  \begin{align*}
   \text{Haus}(\widehat{\mathcal{M}},\mathcal{M})
   =\max_{j} \|\bm{\mu}_j-\widehat{\bm{\mu}}_j\|,
  \end{align*}
  proves the theorem.
 \end{proof}

 \section{Reducing the Kernel Centers}
 \label{app:kernelcenter}
 \begin{figure}[!t]
  \begin{center}
   \subfigure[ARI]{\includegraphics[width=0.32\textwidth]{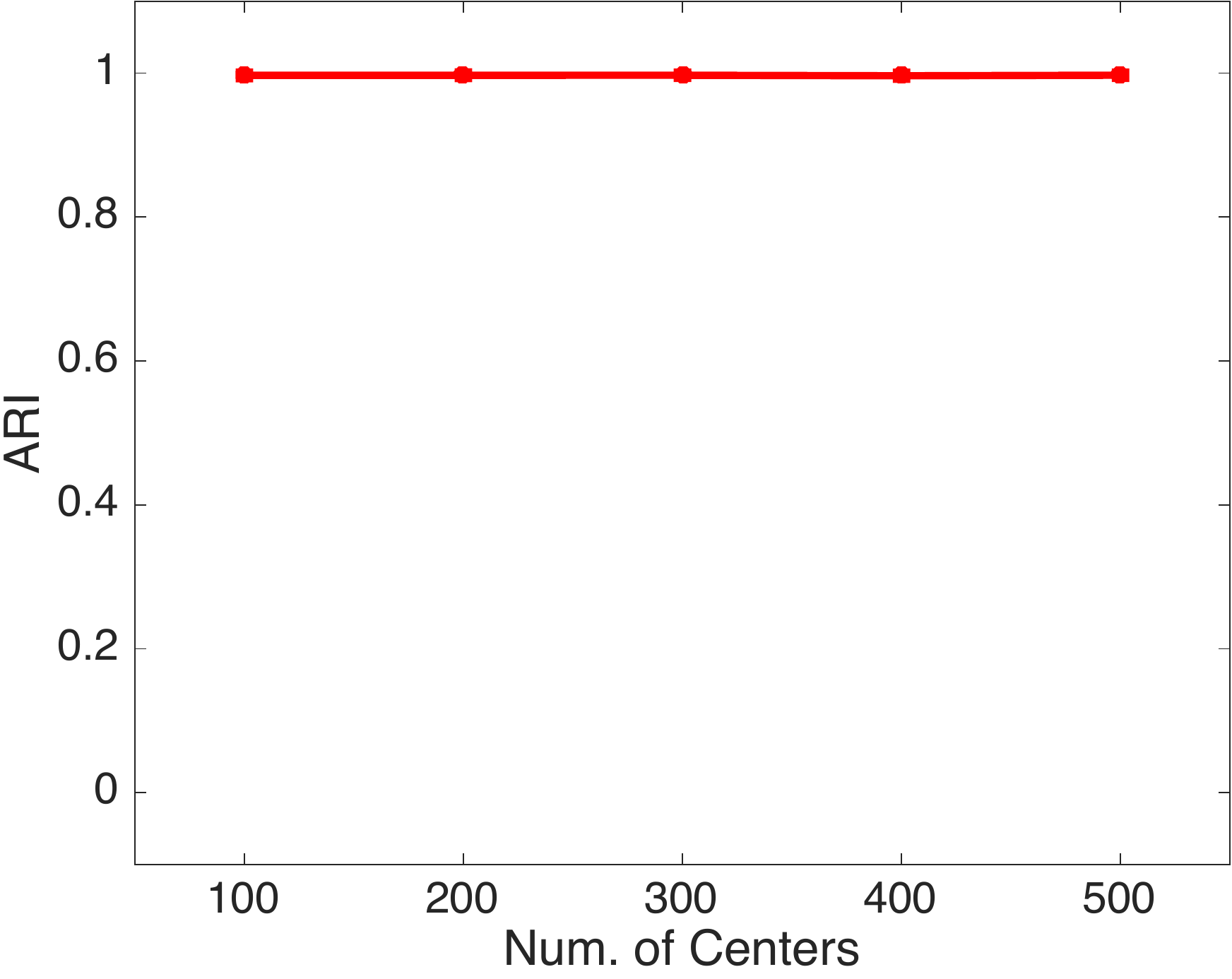}}
   \subfigure[CPU
   time]{\includegraphics[width=0.32\textwidth]{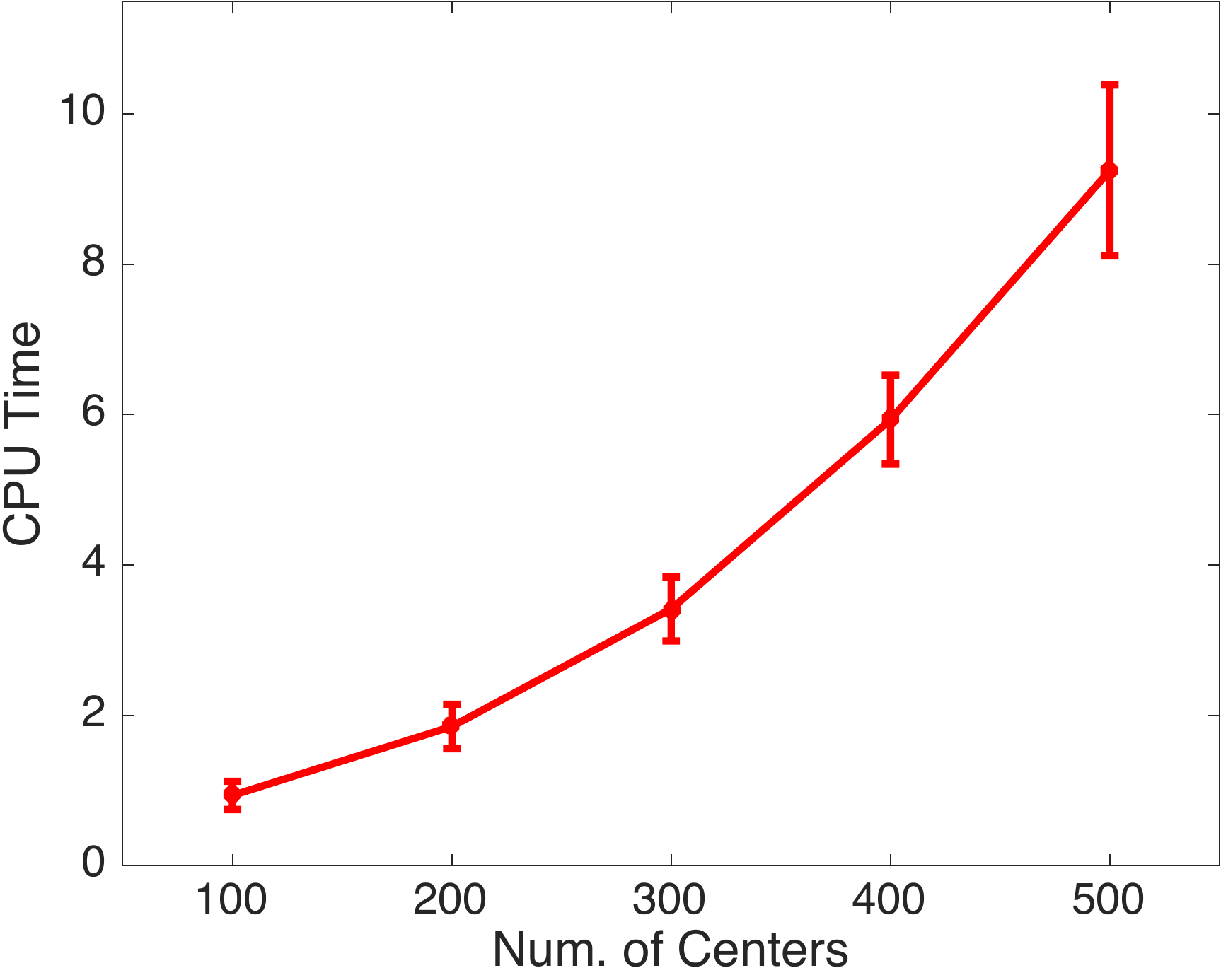}}
   \caption{\label{fig:kernelcenter} Clustering performance and CPU time
   against the number of kernel centers. Each point and error bar denote
   the average and standard deviation of (a) ARI and (b) CPU time over
   $50$ runs, respectively. The dataset used in this figure is the three
   Gaussian blobs in Section~\ref{ssec:Illust-Clust} when
   $(D,n)=(5,500)$}
  \end{center}
 \end{figure}
 This appendix investigates clustering performance and computational
 costs of LSLDGC when the number of kernel centers is changed. We
 performed similar experiments in Section~\ref{ssec:Illust-Clust}. In
 the experiments, datasets with the three Gaussian blobs
 (Fig.\ref{fig:ClustArt}(g)) were used.

 Fig.\ref{fig:kernelcenter} shows that LSLDGC with a small number of
 kernel centers significantly reduces the computation costs without
 scarifying the clustering performance.
 \section{Proof of Lemma~\ref{Epsiconv}}
 \label{app:Epsiconv}
 \begin{proof}
  For $\epsilon^{\prime}$, the Cauchy-Schwarz inequality gives
  \begin{align*}
   |\widehat{g}_j(\bm{x})-g_j(\bm{x})| =|\langle \widehat{g}_j-g_j,
   k(\cdot, \bm{x})\rangle_{\rkhs}| \leq \|\widehat{g}_j-g_j\|_{\rkhs}
   |k(\bm{x},\bm{x})|.
  \end{align*}
  Since $k(\bm{x},\bm{x})$ is assumed to be finite,
 \begin{align}
  \epsilon^{\prime}= \max_{j}
  \|\widehat{g}_j(\bm{x})-g_j(\bm{x})\|_{\infty} \leq
  O\left(\|\widehat{g}_j-g_j\|_{\rkhs}\right)
  &=O_{\mathrm{P}}\left(n^{-\min\left\{\frac{1}{4},
  \frac{\gamma}{2(\gamma+1)}\right\}}\right), \label{TMPepsi}
 \end{align}
 where we applied Theorem~\ref{theo:Hconvrate}.
 
 For $\epsilon^{\prime\prime}$, similar computation yields
 \begin{align*}
  &|[\widehat{\Sig}^{-1}(\bm{x})]_{ij}-[\Sig^{-1}(\bm{x})]_{ij}|\\
  &=|-(\widehat{g}_i(\bm{x})\widehat{g}_j(\bm{x})-g_i(\bm{x})g_j(\bm{x}))
  +[\widehat{\bm{H}}(\bm{x})]_{ij}-[{\bm{H}}(\bm{x})]_{ij}|\\
  &\leq
  |\widehat{g}_i(\bm{x})\widehat{g}_j(\bm{x})-g_i(\bm{x})g_j(\bm{x})|
  +|[\widehat{\bm{H}}(\bm{x})]_{ij}-[{\bm{H}}(\bm{x})]_{ij}|\\
  &\leq
  |\widehat{g}_i(\bm{x})|\cdot |\widehat{g}_j(\bm{x})-g_j(\bm{x})|
  +|g_j(\bm{x})|\cdot |\widehat{g}_i(\bm{x})-g_i(\bm{x})|
  +|[\widehat{\bm{H}}(\bm{x})]_{ij}-[{\bm{H}}(\bm{x})]_{ij}|\\
  &\leq \left\{|g_i(\bm{x})|\cdot \|\widehat{g}_j-{g}_j\|_{\rkhs}
  +|\widehat{g}_j(\bm{x})|\cdot \|\widehat{g}_i-{g}_i\|_{\rkhs}
  +\|[\widehat{\bm{H}}]_{ij}-[{\bm{H}}]_{ij}\|_{\rkhs}\right\}
  |k(\bm{x},\bm{x})|,   
 \end{align*}
  where we applied the following inequality on the fourth line:
  \begin{align*}
   |\widehat{g}_i(\bm{x})\widehat{g}_j(\bm{x})-g_i(\bm{x})g_j(\bm{x})|
   &=|\widehat{g}_i(\bm{x})\widehat{g}_j(\bm{x})
   -\widehat{g}_i(\bm{x})g_j(\bm{x})
   +\widehat{g}_i(\bm{x})g_j(\bm{x})-g_i(\bm{x})g_j(\bm{x})|\\
   &=|\widehat{g}_i(\bm{x})(\widehat{g}_j(\bm{x})-g_j(\bm{x}))
   +g_j(\bm{x})(\widehat{g}_i(\bm{x})-g_i(\bm{x}))|\\
   &\leq |\widehat{g}_i(\bm{x})|\cdot |\widehat{g}_j(\bm{x})-g_j(\bm{x})|
   +|g_j(\bm{x})|\cdot |\widehat{g}_i(\bm{x})-g_i(\bm{x})|.
  \end{align*}
 Thus, we obtain
 \begin{align}
  \epsilon^{\prime\prime}&=\max_{ij}\max_{\bm{x}}
  |[\widehat{\Sig}^{-1}(\bm{x})]_{ij}-[\Sig^{-1}(\bm{x})]_{ij}|
  \nonumber\\
  &\leq \max_{ij} O\left(\max(\|\widehat{g}_j-{g}_j\|_{\rkhs},
  \|\widehat{g}_i-{g}_i\|_{\rkhs},
  \|[\widehat{\bm{H}}]_{ij}-[{\bm{H}}]_{ij}\|_{\rkhs})\right)\nonumber\\
  &=O_{\mathrm{P}}\left(n^{-\min\left\{\frac{1}{4},
  \frac{\gamma}{2(\gamma+1)}\right\}}\right),
  \label{TMPepsi'}
 \end{align}
 where it follows from Theorem~\ref{theo:Hconvrate}.
 
 For $\epsilon^{\prime\prime\prime}$, we resort to the derivative
 reproducing property proved in~\citet{zhou2008derivative}: For all
 $f\in\rkhs$,
 \begin{align*}    
  \partial_j f(\bm{x})=\<f, \partial_j k(\cdot,\bm{x})\>_{\rkhs}.
 \end{align*}
 Using this relation, we obtain
 \begin{align*}
  |[\widehat{\Sig}^{-1\prime}(\bm{x})]_{ij}-[\Sig^{-1\prime}(\bm{x})]_{ij}|
  &=|\partial_j [\vecope(\widehat{\Sig}^{-1}(\bm{x}))]_{i} -\partial_j
  [\vecope(\Sig^{-1}(\bm{x}))]_{i}|\\ &= |\langle
  [\vecope(\widehat{\Sig}^{-1})]_{i} -[\vecope(\Sig^{-1})]_{i},
  \partial_j k(\cdot, \bm{x})\rangle_{\rkhs}|\\ &\leq
  \|\vecope(\widehat{\Sig}^{-1})]_{i}
  -[\vecope(\Sig^{-1})]_{i}\|_{\rkhs}
  |\partial_i^{\prime}\partial_ik(\bm{x}^{\prime},\bm{x})|_{\bm{x}^{\prime}=\bm{x}}|,
 \end{align*}
 where $\partial_j^{\prime}$ denote the derivative with respect to the
 $j$-th coordinate in $\bm{x}^{\prime}$.  Since
 $|\partial_j^{\prime}\partial_j k|$ is assumed to be finite,
  \eqref{TMPepsi'} provides
 \begin{align*}
  \epsilon^{\prime\prime\prime} &=\max_{ij}\max_{\bm{x}}
 |[\widehat{\Sig}^{-1\prime}(\bm{x})]_{ij}-[\Sig^{-1\prime}(\bm{x})]_{ij}|\\
 &\leq\max_{i}O\left(\|[\vecope(\widehat{\Sig}^{-1})]_{i}
  -[\vecope(\Sig^{-1})]_{i}\|_{\rkhs}\right)\\
  &=O_{\mathrm{P}}\left(n^{-\min\left\{\frac{1}{4},
  \frac{\gamma}{2(\gamma+1)}\right\}}\right), 
 \end{align*}
  where the last equation comes from~\eqref{TMPepsi'} because
  $[\vecope(\Sig^{-1})(\bm{x})]_{i}$ denotes a single element in
  $\Sig^{-1}(\bm{x})$.
 \end{proof}
 \bibliographystyle{abbrvnat} 
 \bibliography{../../../papers}
\end{document}